\newcolumntype{?}{!{\vrule width 1.5pt}}
\colorlet{linkcolor}{blue!70!black}
\definecolor{pearDark}{HTML}{2980B9}
\newtheorem{theorem}{Theorem}
\crefname{theorem}{theorem}{Theorems}
\Crefname{Theorem}{Theorem}{Theorems}
\newtheorem*{lemma_nonumber*}{Lemma}
\newaliascnt{lemma}{theorem}
\newtheorem{lemma}[lemma]{Lemma}
\crefname{lemma}{lemma}{lemmas}
\Crefname{Lemma}{Lemma}{Lemmas}
\newaliascnt{corollary}{theorem}
\crefname{corollary}{corollary}{corollaries}
\Crefname{Corollary}{Corollary}{Corollaries}
\newaliascnt{proposition}{theorem}
\newtheorem{proposition}[proposition]{Proposition}
\crefname{proposition}{proposition}{propositions}
\Crefname{Proposition}{Proposition}{Propositions}
\newaliascnt{definition}{theorem}
\newtheorem{definition}[definition]{Definition}
\crefname{definition}{definition}{definitions}
\Crefname{Definition}{Definition}{Definitions}
\newaliascnt{remark}{theorem}
\crefname{remark}{remark}{remarks}
\Crefname{Remark}{Remark}{Remarks}
\crefname{figure}{figure}{figures}
\Crefname{Figure}{Figure}{Figures}
\newtheorem{assumption}{\textbf{A}\hspace{-3pt}}
\Crefname{assumption}{\textbf{A}\hspace{-3pt}}{\textbf{A}\hspace{-3pt}}
\crefname{assumption}{\textbf{A}}{\textbf{A}}
\crefname{assumption}{assumption}{assumptions}
\Crefname{Assumption}{Assumption}{Assumptions}
\newtheorem{assumptionF}{\textbf{F}\hspace{-3pt}}
\Crefname{assumptionB}{\textbf{B}\hspace{-3pt}}{\textbf{B}\hspace{-3pt}}
\crefname{assumptionB}{\textbf{B}}{\textbf{B}}
\Crefname{assumptionC}{\textbf{C}\hspace{-3pt}}{\textbf{C}\hspace{-3pt}}
\crefname{assumptionC}{\textbf{C}}{\textbf{C}}
\Crefname{assumptionH}{\textbf{H}\hspace{-3pt}}{\textbf{H}\hspace{-3pt}}
\crefname{assumptionH}{\textbf{H}}{\textbf{H}}
\Crefname{assumptionT}{\textbf{T}\hspace{-3pt}}{\textbf{T}\hspace{-3pt}}
\crefname{assumptionT}{\textbf{T}}{\textbf{T}}
\Crefname{assumptionT}{\textbf{T}\hspace{-3pt}}{\textbf{T}\hspace{-3pt}}
\crefname{assumptionT}{\textbf{T}}{\textbf{T}}
\Crefname{assumptionL}{\textbf{L}\hspace{-3pt}}{\textbf{L}\hspace{-3pt}}
\crefname{assumptionL}{\textbf{L}}{\textbf{L}}
\Crefname{assumptionQ}{\textbf{Q}\hspace{-3pt}}{\textbf{Q}\hspace{-3pt}}
\crefname{assumptionQ}{\textbf{Q}}{\textbf{Q}}
\Crefname{assumptionAR}{\textbf{AR}\hspace{-3pt}}{\textbf{AR}\hspace{-3pt}}
\crefname{assumptionAR}{\textbf{AR}}{\textbf{AR}}
\newcommand*{\centerfloat}{%
  \parindent \z@
  \leftskip \z@ \@plus 1fil \@minus \textwidth
  \rightskip\leftskip
  \parfillskip \z@skip}
\def\msf{\mathsf{G}}
\def\borel{\mathcal{B}}
\def\Qker{\mathrm{Q}}
\def\Rker{\mathrm{R}}
\def\Sker{\mathrm{S}}
\def\Pens{\mathscr{P}}
\def\Ltt{\mathtt{L}}
\def\Mtt{\mathtt{M}}
\def\Jac{\mathrm{Jac}}
\def\dom{\mathrm{dom}}
\def\msd{\mathsf{D}}
\def\msw{\mathsf{W}}
\newcommand{\rref}[1]{\textup{\Cref{#1}}}
\def\rmA{\mathrm{A}}
\def\rmB{\mathrm{B}}
\def\msu{\mathsf{U}}
\newcommand{\TM}{\mathrm{T}^\star\msm}
\newcommand{\Th}{\mathrm{S}_{h/2}}
\newcommand{\Rh}{\mathrm{R}_{h}}
\newcommand{\Gh}{\mathrm{G}_{h}}
\newcommand{\oGh}{\overline{\mathrm{G}}_{h}}
\newcommand{\uGh}{\underline{\mathrm{G}}_{h}}
\newcommand{\ugh}{\underline{\mathrm{g}}_{h}}
\newcommand{\ughz}{\underline{\mathrm{g}}_{h,z}}
\newcommand{\Fh}{\mathrm{F}_{h}}
\newcommand{\tildeGh}{\tilde{\mathrm{G}}_{h}}
\newcommand{\phiRh}{\mathrm{R}_{h}^{\Phi}}
\newcommand{\tildeRh}{\tilde{\mathrm{R}}_{h}}
\newcommand{\msephi}{\bar{\mse}_h^\Phi}
\newcommand{\tildemsephi}{\tilde{\mse}_h^\Phi}
\def\M{\msm}
\def\vol{\text{vol}}
\def\complementary{\mathrm{c}}
\def\msa{\mathsf{A}}
\def\msdd{\mathsf{D}}
\def\msk{\mathsf{K}}
\def\msK{\mathsf{K}}
\def\msb{\mathsf{B}} 
\def\mse{\mathsf{E}}
\def\msf{\mathsf{F}}
\def\msg{\mathsf{G}}
\def\msm{\mathsf{M}}
\def\msu{\mathsf{U}}
\def\msv{\mathsf{V}}
\def\msx{\mathsf{X}}
\def\msy{\mathsf{Y}}
\def\rset{\mathbb{R}}
\def\nset{\mathbb{N}}
\def\rmd{\mathrm{d}}
\def\rmC{\mathrm{C}}
\newcommand{\abs}[1]{\left\vert #1 \right\vert}
\newcommandx{\psr}[3][3=]{\left\langle#1,#2 \right\rangle_{#3}}
\newcommandx{\normr}[2][2=]{ \left\Vert#1 \right\Vert_{#2}}
\newcommandx{\psrLigne}[3][3=]{\langle#1,#2 \rangle_{#3}}
\newcommandx{\normrLigne}[2][2=]{ \Vert#1 \Vert_{#2}}
\newcommandx{\norm}[2][1=]{\ifthenelse{\equal{#1}{}}{\left\Vert #2 \right\Vert}{\left\Vert #2 \right\Vert^{#1}}}
\newcommand{\normLigne}[1]{\| #1 \|}
\newcommand\probaMarkovTilde[2][2=]
\def\ie{\textit{i.e.}}
\def\eqsp{\;}
\newcommand{\ooint}[1]{\left(#1\right)}
\newcommand{\ccint}[1]{\left[#1\right]}
\newcommand\sequence[3][2=,3=]
\newcommand\sequenceD[3][2=,3=]
\newcommand\sequenceDouble[4][3=,4=]
\def\eg{e.g.}
\def\Id{\mathrm{Id}}
\def\Idd{\mathrm{I}_d}
\newcommand{\ensembleLigne}[2]{\{#1\,:\eqsp #2\}}
\def\path{\operatorname{path}}
\def\kernel{\operatorname{Ker}}
\def\vareps{\varepsilon}
\newcommand{\1}{\mathbbm{1}}
\def\tz{\tilde{z}}
\def\metricM{\mathfrak{g}}
\def\vol{\mathrm{vol}}
\def\transpose{\top}
\def\metric{\metricM}
\newcommand{\beq}{\begin{equation}}
\newcommand{\eeq}{\end{equation}}
\def\Leb{\mathrm{Leb}}
\newcommand{\hlc}[2]{\sethlcolor{#1}\hl{#2}}
\newcommand\mycommfont[1]{\footnotesize\ttfamily\textcolor{blue}{#1}}
\title{Unbiased constrained sampling with \\ Self-Concordant Barrier Hamiltonian Monte Carlo}
\author{%
  Maxence Noble\thanks{Corresponding author. Contact at: maxence.noble-bourillot@polytechnique.edu} \\
  CMAP, CNRS, École polytechnique, \\
  Institut Polytechnique de Paris, \\
91120 Palaiseau, France  
  % examples of more authors
  \AND
  Valentin De Bortoli\\
  Computer Science Department, \\
ENS, CNRS, PSL University
\And Alain Oliviero Durmus \\
  CMAP, CNRS, École polytechnique, \\
  Institut Polytechnique de Paris, \\
91120 Palaiseau, France\\
  % \And
  % Coauthor \\
  % Affiliation \\
  % Address \\
  % \texttt{email} \\
  % \AND
  % Coauthor \\
  % Affiliation \\
  % Address \\
  % \texttt{email} \\
  % \And
  % Coauthor \\
  % Affiliation \\
  % Address \\
  % \texttt{email} \\
  % \And
  % Coauthor \\
  % Affiliation \\
  % Address \\
  % \texttt{email} \\
}
\begin{document}

\maketitle

\begin{abstract}
 In this paper, we propose Barrier Hamiltonian Monte Carlo (BHMC), a version of the
  HMC algorithm which aims at sampling from a Gibbs distribution $\pi$ on a manifold
  $\mathsf{M}$, endowed with a Hessian metric $\mathfrak{g}$ derived from a self-concordant
  barrier. Our method relies on Hamiltonian
  dynamics which comprises $\mathfrak{g}$. Therefore, it incorporates the constraints defining
  $\mathsf{M}$ and is able to exploit its underlying geometry. However, 
  the corresponding Hamiltonian dynamics is defined via non separable Ordinary Differential Equations (ODEs) in contrast to the Euclidean case. It implies unavoidable bias in existing generalization of HMC to Riemannian manifolds. In this paper, we propose a new filter step, called ``involution checking step'', to address this problem. This step is implemented in two versions of BHMC, coined continuous BHMC (c-BHMC) and  numerical BHMC (n-BHMC) respectively.
  Our main results establish that these two new algorithms  generate reversible Markov
  chains with respect to $\pi$ and do not suffer from any bias in comparison to previous implementations. Our conclusions are supported by numerical experiments where
  we consider target distributions defined on polytopes.
\end{abstract}

\section{Introduction}\label{sec:introduction}

Markov Chain Monte Carlo (MCMC) methods is one of the primary algorithmic
approaches to obtain approximate samples from a target distribution $\pi$. They have been successively applied over these past decades in a large
panel of practical settings, \citep{liu2001monte}. In particular, gradient-based
MCMC methods have shown their efficiency and robustness in high-dimensional
settings, and come nowadays with strong theoretical guarantees,
\citep{Dalalyan2017,Durmus2017}.  However, they still struggle in facing the
case where the target distribution is supported on a \textit{constrained} subset
$\msm$ of $ \rset^d$,
\citep{gelfand:smith:lee:1992,pakman2014exact,sphericalAug:2015}. Yet, this
problem appears in various fields; see \eg,
\citep{morris2002improved,lewis2012constraining, thiele2013community} with some important applications in computational statistics and biology.

Drawing samples from such distributions is indeed a challenging problem that has
been intensively studied in the literature,
\citep{dyer1991computing,lovasz1993random,lovasz1999faster,Lovasz:2007,
  brubaker2012family,Cousins:2014,pakman2014exact,sphericalAug:2015,Bubeck:2015}. In
particular, some recent extensions of the popular Metropolis-Hastings (MH)
algorithm to constrained spaces consist in designing proposals based on dynamics
for which $\pi$ is invariant,
\citep{zappa2018monte,lelievre2019hybrid,lelievre2022multiple}. In practice, the
corresponding solutions are numerically integrated based on implicit and
symplectic schemes, which however come with additional
difficulties.
As first observed by \citet{zappa2018monte}, an additional  ``involution checking step'' in the usual MH filter is necessary to ensure that the resulting Markov kernel admits $\pi$ as a stationary distribution.

In this paper, we aim at generalizing this family of methods by taking into account
the geometry of the constrained subspace, building on the Riemannian Manifold
Hamiltonian Monte Carlo (RMHMC) algorithm introduced by
\citet{girolami2011riemann}. 
Similarly to HMC, \citep{duane1987hybrid,neal2011mcmc,betancourt2017conceptual},
RMHMC aims to target a positive distribution $\pi$ on $\rset^d$ and relies on
the integration of a canonical Hamiltonian equation. In contrast, RMHMC
incorporates some \emph{geometrical} information in the definition of the
Hamiltonian via a Riemannian metric $\metric$ on $\rset^d$. When considering
applications of RMHMC to a convex, open and bounded subset $\msm$, a natural
choice for this metric is the Hessian metric associated with a self-concordant
barrier on $\msm$, \citep{nesterov1994interior}, as suggested by
\citet{kook2022sampling}. We adopt here the same approach and now focus on a
constrained subspace $\msm$ which is assumed to be equipped with an
appropriately designed ``self-concordant'' metric,
\citep{nesterov1994interior}. 

Given this setting, we propose the BHMC (Barrier HMC) algorithm. To the best of our
knowledge, this is \textbf{the first version of RMHMC incorporating an ``involution checking step''} to assess the issue of asymptotic bias arising from the use of implicit integration methods to solve the Hamiltonian dynamics. In this work, we focus on the \textit{numerical} version of BHMC (n-BHMC), for which we provide a numerical implementation. We refer to \Cref{sec:cont-ham} for an introduction to the ideal version of BHMC, called \textit{continuous} BHMC (c-BHMC), where it is assumed that we have access to the continuous Hamiltonian dynamics but which cannot be computed in practice. For each algorithm, we rigorously prove that the associated Markov chain preserves $\pi$.

 The rest of the paper is organized as
 follows. In \Cref{sec:setting-back}, we introduce our sampling framework and
 review some background on self-concordance and Riemannian geometry. In
 \Cref{sec:disc-ham}, we present n-BHMC, and
 derive theoretical results for this algorithm in \Cref{sec:disc-ham-results}. We review related
 works in \Cref{sec:related-work} and provide numerical experiments for n-BHMC in
 \Cref{sec:experiments}. 

\paragraph{Notation.}
For any $f\in \rmC^3(\rset^d, \rset)$, we denote by $Df$ (and $D^2f$) the
Jacobian (resp. the Hessian) of $f$. For any
$\rmA \in \rset^{d \times d}$ and any $(x,y)\in \rset^d\times \rset^d$, we denote by
$\rmA:D^3f(x)$ the vector
$(\operatorname{Tr}(\rmA^\top \{D^3 f(x)\}_i))_{i \in [d]} \in \rset^d$, and define $D^3 f(x)[x,y]$ as the vector $x^\top D^3 f(x) y \in \rset^d$.
For any positive-definite
matrix $\rmA \in \rset^{d \times d}$, $\langle \cdot, \cdot \rangle_\rmA$ stands
for the scalar product induced by $\rmA$ on $\rset^d$, defined by
$\langle x, y\rangle_\rmA=\langle x, \rmA y \rangle$.
The ``momentum
reversal'' operator $s: \ \rset^d \times \rset^d \to \rset^d \times \rset^d$ is
defined for any $(x, p) \in \rset^d \times \rset^d$ by $s(x,p)=(x,-p)$. Let
$\mse, \msf$ be two sets and $h: \ \mse \to 2^\msf$, where $2^\msf$ is the set
of sets of $\msf$.  We say that $h$ is a set-valued map. Note that any map
$g: \mse \to \msf$ can be extended to a set-valued map by identifying, for any
$x\in \mse$, $g(x)$ and $\{g(x)\}$. Let $f: \msf \to 2^\msg$. We define
$f \circ h: \ \mse \to 2^\msg$ for any $x \in \mse$ by
$f\circ h(x) = \cup_{y \in h(x)} f(y)$, where by convention
$\cup_\emptyset= \emptyset$.  For any topological space $\msx$, we denote
$\borel(\msx)$ its Borel sets. For any probability measure
$\mu \in \Pens(\msx)$ and measurable map $\varphi: \msx \to \msy$, we denote
$\varphi_\# \mu \in \Pens(\msy)$ the pushforward of $\mu$ by $\varphi$. In
general, we will equivalently denote by $z$ or $(x,p)$ a state of the
Hamiltonian system. 

%%% Local Variables:
%%% mode: latex
%%% TeX-master: "main"
%%% End:

\section{Setting and Background}\label{sec:setting-back}

In this paper, we consider an open subset $\msm \subset \rset^d$, and we aim at
sampling from a target distribution $\pi$ given for any $x \in \M$ by
\begin{align}
    \rmd \pi(x) /\rmd x = \exp[-V(x)]/Z \eqsp ,
\end{align}
where $V\in \rmC^2(\msm,\rset)$ and
$Z=\textstyle{\int_\msm \exp[-V(x)] \rmd x}$. We view $\msm$ as an
embedded $d$-dimensional submanifold of $\rset^d$, equipped with a metric $\metric$, satisfying
the following assumptions.
\begin{assumption}
  \label{ass:manifold}
  $\msm$ is an open convex bounded subset of $\rset^d$.
\end{assumption}
\begin{assumption}
  \label{ass:g}
  There exists $\phi$, a $\alpha$-regular and
  $\nu$-self-concordant barrier on $\msm$ %(see \Cref{def:self-concordance,def:regularity})
  such that $\metric=D^2\phi$.
\end{assumption}
We provide in \Cref{sub-sec:riem_ham} basic Riemannian facts along with the
definition of the Hamiltonian dynamics of RMHMC and introduce self-concordance and $\alpha$-regularity
in \Cref{sec:self-conc}.

\subsection{Riemannian Manifold Hamiltonian dynamics}\label{sub-sec:riem_ham}

\paragraph{Basics on Riemannian geometry.}
Let $\msm$ be a $d$-dimensional smooth manifold, endowed with a metric
$\metric$. Denoting by $\rmd x$ a dual coframe, \citep[Lemma 3.2.]{lee2006riemannian}, we recall that the \textit{Riemannian volume element} corresponding
to $(\msm, \metric)$ is given in local coordinates by
\begin{align}
    \rmd \vol_{\msm}(x)=\sqrt{\det (\metric)}\rmd x\eqsp.
\end{align}

We denote by $\mathrm{T}_x^\star\msm$ the dual of the tangent space at
$x \in\msm$, \ie, the cotangent space. For any $x \in \msm$,
$\mathrm{T}_x^\star\msm$ is a vector space naturally endowed with the scalar
product $\langle \cdot, \cdot \rangle_{\metric(x)^{-1}}$,
\citep{mok1977metrics}.
We recall that the cotangent bundle $\mathrm{T}^\star \msm$ is defined by
$\TM =\sqcup_{x \in \msm} \{x\} \cup \mathrm{T}_x^\star \msm$.  This set is a
$2d$-dimensional manifold endowed with a Riemannian metric $\metric^\star$ given
by \cite{mok1977metrics}, inherited from $\metric$. With this metric, the volume
form on $\TM$ does not depend on $\metric$ anymore, and satisfies for any $(x,p) \in \TM$
\[
\textstyle{
\rmd \vol_{\TM}(x,p) = \rmd x \rmd p \eqsp } \eqsp, 
\]
where $\rmd x \rmd p$ is a dual coframe for $\TM$.

Therefore, under \Cref{ass:manifold}, identifying $\TM$ with $\msm \times \rset^d$, the volume form on $\TM$ induced by $\metric^\star$ is the Lebesgue measure of $\rset^d \times \rset^d$ restricted to $\mathrm{T}^\star \msm$. Despite adopting at first a Riemannian perspective on $\msm$, we actually recover the Euclidean setting on $\TM$, which motivates us to consider $\TM$ as our sampling space. We refer to
\Cref{sec:metr-tang-cotang} for details on the cotangent bundle and its
metric $\metric^\star$.

\paragraph{Hamiltonian dynamics.} Note that with the previously introduced
notation, $\pi$ can be expressed as
\begin{align}
    \rmd \pi / \rmd \vol_\msm(x) = \exp[-V(x) -\tfrac{1}{2}\log(\det(\metric(x)))]/Z \eqsp.
\end{align}
This motivates the introduction of the following Hamiltonian on $\TM$
\begin{equation}\label{eq:hamiltonian}
    H(x,p)=V(x)+\tfrac{1}{2}\log\left( \operatorname{det}\metric(x)\right) + \tfrac{1}{2}\|p\|_{\metric(x)^{-1}}^2 \eqsp . 
  \end{equation}
  In this definition, we notably take into account the scalar product
  $\langle \cdot, \cdot \rangle_{\metric(x)^{-1}}$ on $\mathrm{T}^\star_x \msm$.
  Finally, we consider the joint distribution $\bar{\pi}$ on
  $\mathrm{T}^\star \msm$
\begin{equation}\label{eq:bar_pi}
    \rmd \bar{\pi}(x,p) =(1/\bar{Z}) \exp[-H(x,p)] \rmd \vol_{\TM}(x,p) \eqsp ,
\end{equation}
where
$\bar{Z}= \int_{\TM}\exp[-H(x,p)] \rmd \vol_{\TM} (x,p)$, for which the first marginal is $\pi$. Indeed, for any $\varphi \in \rmC(\M, \rset)$, we have
\begin{align}\label{eq:integration_over_v}
    \textstyle{\int_{\TM} \varphi(x) \rmd \bar{\pi}(x, p)} &\textstyle{= \int_{\TM} \varphi(x) \rmd \pi(x) \mathrm{N}_x(p;0,\Id)\rmd p=\int_\M \varphi(x) \rmd \pi(x) \eqsp ,}
\end{align}
where we denote by $\mathrm{N}_x(0, \Idd)$ the centered standard Gaussian distribution w.r.t. $\|\cdot\|_{\metric(x)^{-1}}$.
The Hamiltonian dynamics for $H$ is given by the coupled Ordinary Differential Equations (ODEs) 
\begin{equation}\label{eq_dynamics_H}
    \dot{x} =\partial_p H (x, p)\eqsp , \qquad  \dot{p} =-\partial_x H(x, p) \eqsp ,
\end{equation}
where the derivatives of $H$ can be computed explicitly as
\begin{align}
     \partial_p H(x,p)=\metric(x)^{-1} p \label{eq_x_simplified} \eqsp , \qquad \partial_x H(x,p)= -\tfrac{1}{2} D\metric(x)[\metric(x)^{-1}p, \metric(x)^{-1}p] + L(x) \eqsp ,\label{eq_v_simplified}
\end{align}
where $L(x)= \nabla V(x)+ \tfrac{1}{2}\metric(x)^{-1}:D\metric(x)$.

%%% Local Variables:
%%% mode: latex
%%% TeX-master: "main"
%%% End:

\subsection{Self-concordance and regularity}\label{sec:self-conc}
Until now, we have considered an arbitrary Riemannian metric
$\metric$. In the rest of this work, we focus on metrics given by Hessian of
self-concordant barriers.

\paragraph{Self-concordance.} We first introduce self-concordant barriers, a
family of smooth convex functions which are well-suited for minimization by the
Newton method.
\begin{definition}[\cite{nesterov1994interior}] \label{def:self-concordance}Let
  $\msu$ be a non-empty open convex domain in $\rset^d$. A function
  $\phi:\msu\rightarrow\rset$ is said to be a $\nu$-self-concordant barrier
  (with $\nu \geq1$) on $\msu$ if it satisfies:
  \begin{enumerate}[wide, labelindent=0pt, label=(\alph*)]
    \vspace{-.3cm}
    \item % (Convexity, smoothness) 
      $\phi \in \rmC^3(\msu, \rset)$ and $\phi$ is convex,
          \vspace{-.2cm}
    \item % (Barrier property)
      $\phi(x) \xrightarrow[]{}+\infty$ as $x \rightarrow \partial \msu$,
      \vspace{-.2cm}
    \item % (Differential inequality)
      $|D^3 \phi(x)[h,h,h]|\leq 2 \|h\|_{\metric(x)}^{3}$,  for any $x\in \msm$, $h \in \rset^d$,
      \vspace{-.2cm}
    \item  $|D \phi(x)[h]|^2\leq \nu \|h\|_{\metric(x)}^{2}$,  for any $x\in \msm$, $h \in \rset^d$,
    \end{enumerate}
    \vspace{-.2cm}
where $\metric(x)=D^2 \phi(x)$.
\end{definition}
 Balls for $\norm{\cdot}_{\metric(x)}$, called Dikin ellipsoids, 
 are key for the study of self-concordance, see \Cref{sec:facts-about-self}.

\paragraph{Regularity.} The property of $\alpha$-regularity for some $\alpha\geq 1$
is shared by many self-concordant barriers, including logarithmic and quadratic
programming barriers. It ensures stability for interior point polynomial-time
methods. Definition and properties of $\alpha$-regularity are recalled in
\Cref{sec:facts-about-self}.

\begin{wrapfigure}{r}{0.5\textwidth}
\centering
\vspace{-0.4cm}
\includegraphics[width=0.9\linewidth]{figures/selfconcordant.png} 
\vspace{-.3cm}
\caption{A polytope
    $\msm \subset \rset^2$ with three Dikin ellipsoids
    $\ensembleLigne{y \in \rset^d}{ y^\top\metric(x)y< 1 }$ 
    derived from its logarithmic barrier.}
\label{fig:barrier_polytop}
\vspace{-.4cm}
\end{wrapfigure}

\paragraph{Example of the polytope.} Let us assume that $\M$ is the polytope
$\msm=\{x:\rmA x<b\}$, where $\rmA\in \rset^{m\times d}$ and $b\in \rset^m$. We
endow it with the Riemannian metric $\metric(x) = D^2 \phi(x)$ where
$\phi :\msm\rightarrow \rset$ is the logarithmic barrier given for any
$x \in \msm$ by $ \phi(x)=-\sum_{i=1}^{m}\log \left(b_i - \rmA_i^\top x\right)
$. The barrier $\phi$ is both a $m$-self-concordant barrier and a $2$-regular
function, \citep[page 3]{nesterov1998multi}. Moreover, we have for any
$x \in \msm$, $\metric(x)=\rmA^\top S(x)^{-2} \rmA$, where
$S(x)=\text{Diag}\left(b_i -\rmA_i^\top x\right)_{i \in [m]}$. We provide in \Cref{fig:barrier_polytop} an illustration of this barrier.

\section{The n-BHMC algorithm}
\label{sec:disc-ham}
In practice, it is not possible to \textit{exactly} compute the Riemannian
Hamiltonian dynamics \eqref{eq_dynamics_H}. We thus introduce a \emph{numerical}
version of BHMC (n-BHMC), in which we replace the continuous ODE integration by a
symplectic numerical scheme. We first define the Hamiltonian integrators used in
n-BHMC in \Cref{subsec:integrators} and provide details on the different
steps of our algorithm in \Cref{subsec:details_algo}. 

\vspace{-0.2cm}
\subsection{Hamiltonian integrators of n-BHMC}\label{subsec:integrators}
In the same spirit as \cite{shahbaba2014split}, we first rewrite the Hamiltonian
$H$ given in \eqref{eq:hamiltonian} as $H=H_1 + H_2$, where we highlight the non
separable aspect of $H$ in $H_2$, \ie, for any $(x,p) \in \TM$
\begin{equation}
        \textstyle{H_1(x,p)=V(x)+\tfrac{1}{2}\log ( \operatorname{det} \metric(x)) \eqsp , \quad
        H_2(x,p)=\tfrac{1}{2}\|p\|^2_{\metric(x)^{-1}} \eqsp .}
\end{equation}
Therefore, we have
\begin{align}
        \partial_x H_{1}(x,p)&= \nabla V(x)+ \tfrac{1}{2}\metric(x)^{-1}:D\metric(x) \eqsp , &&\partial_p H_{1}(x,p) =0 \eqsp , \label{eq_dynamics_H_1}\\ 
        \partial_x H_{2}(x,p)&= -\tfrac{1}{2} D \metric(x)[\metric(x)^{-1}p,\metric(x)^{-1}p] \eqsp,
        &&\partial_p H_{2}(x,p)  = \metric(x)^{-1}p \eqsp .\label{eq_dynamics_H_2}
\end{align}
Leveraging the separable structure of $H_1$, we now define an implicit scheme to discretize the Hamiltonian dynamics \eqref{eq_dynamics_H} by \textit{splitting} it into the Hamiltonian dynamics \eqref{eq_dynamics_H_1} and \eqref{eq_dynamics_H_2}. In the rest of this section, we consider some step-size $h\in \rset$.

\vspace{-0.2cm}
\paragraph{Explicit integrator of $H_1$.} We first approximate the dynamics \eqref{eq_dynamics_H_1} on a step-size $h/2$ using a first-order Euler method \cite[Theorem VI.3.3.]{hairer2006geometric}. Since $H_1$ is separable, this integrator simply reduces to the map $\Sker_{h/2}:\TM \to \TM$ defined by
\[\label{integrator:S_h}
\textstyle{\Sker_{h/2}(x,p)=(x,p- \tfrac{h}{2}\partial_x H_1(x, p)) \eqsp .}
\]
We have: (i) $\Sker_{h/2}(\TM)\subset \TM$, (ii) $\Sker_{h/2}$ is symplectic (we
refer to \Cref{sec:num-int-facts} for more details on symplecticity), (iii)
$\Sker_{-h/2}\circ \Sker_{h/2}=\text{Id}$ and (iv)
$\Sker_{-h/2}=s\circ\Sker_{h/2}\circ s$. Note that $s\circ \Sker_{h/2}$ is an
involution on $\TM$, which inherits from properties (i) and (ii) of
$\Sker_{h/2}$.
\vspace{-0.2cm}

\paragraph{Implicit integrator of $H_2$.} Since $H_2$ is not separable, a common discretization scheme such as the Euler method is not symplectic anymore, which requires to use an implicit method instead. We thus approximate these dynamics on a step-size $h$ with a symplectic
second-order integrator denoted by $\mathrm{G}_h$. For theoretical
purposes, we focus on the Störmer-Verlet scheme, \cite[Theorem
VI.3.4.]{hairer2006geometric}, also known as the \textit{generalized Leapfrog}
integrator, which is widely used in geometric integration,
\citep{girolami2011riemann,betancourt2013general,cobb2019introducing,brofos2021evaluating}. We refer to \Cref{sec:num-int-facts} for a discussion on other numerical
schemes. For any $z^{(0)} \in \TM$, $\mathrm{G}_h(z^{(0)})\subset \TM$ consists of points $z^{(1)}=(x^{(1)},p^{(1)})$ which are solution of
\begin{align}
        p^{(1/2)} & = p^{(0)}- \tfrac{h}{2}\partial_x H_{2}(x^{(0)},p^{(1/2)}) \eqsp , \\
        x^{(1)}& =x^{(0)}+\tfrac{h}{2} [\partial_p H_{2}(x^{(0)}, p^{(1/2)})+ \partial_p H_{2}(x^{(1)}, p^{(1/2)})] \eqsp , \\
        p^{(1)}&=p^{(1/2)}-\tfrac{h}{2} \partial_x H_{2}(x^{(1)}, p^{(1/2)})\eqsp . \label{eq:integrator}
\end{align}
As defined, there is no guarantee that \eqref{eq:integrator} admits a (unique) solution. As a matter of fact, the integrator $\Gh$ can be seen as a set-valued map. Moreover, it is easy to check that: (i) $\mathrm{G}_{h}(\TM) \subset 2^{\TM}$, (ii) $\mathrm{G}_{-h}=s \circ \mathrm{G}_h \circ s$, since $\partial_p H_2(s(z))=-\partial_p H_2(z)$ and $\partial_x H_2(s(z))=\partial_x H_2(z)$, and (iii) if $|\mathrm{G}_{h}(z)|>0$ then $z \in (\mathrm{G}_{-h} \circ \mathrm{G}_{h})(z)$. 
\vspace{-0.2cm}

\paragraph{Implicit integrator of $H$.} Relying on the integrators $\Sker_{h/2}$ and $\mathrm{G}_{h}$ previously defined, we now explain how we approximate the Hamiltonian dynamics \eqref{eq_dynamics_H} on a step-size $h$. We first
define the set-valued maps $\mathrm{F}_h: \TM \rightarrow 2^{\TM}$ and $\Rh : \TM \rightarrow 2^{\TM} $ by
\vspace{-0.1cm}
\begin{equation}
  \textstyle{\mathrm{F}_h= \mathrm{G}_{h}\circ s \eqsp, \quad \Rh= (s \circ \Th) \circ \Fh \circ (s \circ \Th) \eqsp .}
\end{equation}
Using properties (ii) and (iii) of $\Gh$, we have for any $z \in \TM$ such that $|\Fh(z)|>0$,  $z \in (\Fh \circ \Fh)(z)$, and for any $z' \in \TM$ such that $|(\Fh \circ s \circ \Th)(z')|>0$, we have $z' \in (\Rh \circ \Rh)(z')$. Thus, $\Fh$ and $\Rh$ are \textit{involutive} in the sense of set-valued maps. Note also that $s \circ \Rh=\Th \circ \Gh \circ \Th$ boils down to the \textit{Strang splitting} \citep{strang1968construction} of the dynamics \eqref{eq_dynamics_H} and thus, is a symplectic scheme approximating the dynamics of Hamiltonian $H$ on a step-size $h$.
\vspace{-0.2cm}

\paragraph{Numerical integrators.} In practice, we do not have access to
$\Fh$ but approximate it with a \textit{numerical map} $\Phi_h$. For clarity's sake, we denote by $\dom_{\Phi_h}\subset \TM$ the domain of this integrator with
$\Phi_h(\dom_{\Phi_h})\subset \TM$. In practice, $\dom_{\Phi_h}$ corresponds to the set of points for which the numerical integration of $\Fh$ outputs a solution.  We also approximate $\Rh$ with the \emph{numerical map}
$\phiRh : (s \circ \Th) (\dom_{\Phi_h}) \subset \TM \rightarrow \TM$
\vspace{-0.1cm}
\begin{equation}
  \phiRh= (s \circ \Th) \circ \Phi_h \circ (s \circ \Th) \eqsp .
  \end{equation}
  Similarly to $s \circ \Rh$, $s \circ \Rh^\Phi$\footnote{Note that $s \circ \Rh^\Phi$
    is a \emph{function} and not a set-valued map.} approximates the dynamics of
  Hamiltonian $H$ on a step-size $h$. In our experiments, we design $\Phi_h$ using 
  a fixed-point solver with a given number of iterations following 
  \cite{brofos2021evaluating, brofos2021numerical}. We refer to
  \Cref{sec:experimental-details} for details on computations of $\Phi_h$.
  
\subsection{Steps of the algorithm}\label{subsec:details_algo}

For any $z=(x,p) \in \TM$, we define on $\TM$ the norm $\|\cdot\|_z$ by
\begin{align}\label{eq:norm_TM}
    \|(x',p')\|_z = \|x'\|_{\metric(x)} + \|p'\|_{\metric(x)^{-1}}\eqsp , \quad \forall (x',p')\in \TM \eqsp ,
\end{align}
where $\|\cdot\|_{\metric(x)^{-1}}$ is the canonical norm on $\mathrm{T}_x^\star \msm$ induced by the Riemannian metric $\metric$ (see \Cref{sub-sec:riem_ham}) and $\|\cdot\|_{\metric(x)}$ is the common norm used on $\msm$ to study properties of self-concordance (see \Cref{sec:self-conc}). As defined, this norm will be crucial to correct the bias of the numerical integration in n-BHMC, presented in \Cref{algo:constrained_HMC}, for which we are now ready to detail the steps. 
\vspace{-0.2cm}

\paragraph{Steps 1 and 5: applying a momentum update.} Assume that the current state at stage $n\geq 1$ is $(X_{n-1},P_{n-1})\in \TM$. Then, the momentum is first partially refreshed such that
$\Tilde{P}_n \leftarrow \sqrt{1 - \beta}P_{n-1} + \sqrt{\beta} G_n$, where $G_n \sim \mathrm{N}_x(0, \Idd)$ is independent from $P_{n-1}$. This update is applied both at
the beginning and the end of the iteration, similarly to
\cite{lelievre2022multiple}.
\vspace{-0.1cm}
\paragraph{Step 2: solving a discretized version of ODE~\eqref{eq_dynamics_H}.}  
Starting from $(X'_n,P'_n) =( X_{n-1},\Tilde{P}_n)$, we now approximate the dynamics of $H$ on a step-size $h$ with the integrators presented in \Cref{subsec:integrators}, while ensuring that the proposal state belongs to $\TM$. We proceed as
follows:
\begin{enumerate}[wide, labelindent=0pt]
\vspace{-0.3cm}
\item We first run the \emph{explicit} integrator
  $\Sker_{h/2}$ and
  compute $Z_n^{(0)}=(s \circ \Sker_{h/2})(X_{n-1},\Tilde{P}_n)$.
  \vspace{-0.2cm}
\item If \
  $Z_n^{(0)}\notin \dom_{\Phi_h}$, then $(X'_n,P'_n)$ is not updated and we directly go
  to Step 3. Otherwise, we define $Z_n^{(1)}=\Phi_h(Z_n^{(0)})$. To ensure the
  reversibility of this step, we then perform an \hlc{yellow}{``involution checking step''},
  \citep{zappa2018monte, lelievre2019hybrid}, highlighted in yellow in \Cref{algo:constrained_HMC}, \ie, we verify (a)
  $Z_n^{(1)}\in \dom_{\Phi_h}$ and (b) $Z_n^{(0)}=\Phi_h(Z_n^{(1)})$. In practice, we
  replace condition (b) by combining an explicit tolerance threshold $\eta$ with the norm defined in \eqref{eq:norm_TM} and accept
  the proposal if
  \begin{align}
    \|Z_n^{(0)}-\Phi_h(Z_n^{(1)})\|_{Z_n^{(0)}} + \|Z_n^{(0)}-\Phi_h(Z_n^{(1)})\|_{\Phi_h(Z_n^{(1)})}\leq \eta  \eqsp .
    \label{eq:error_tol}
  \end{align}
  We discuss the
  choice of this norm to perform the ``involution checking step'' in \Cref{sec:experimental-details}.    
  If both of these conditions are satisfied, we finally set
  $(X_n',P_n')=(s\circ\Sker_{h/2})(Z_n^{(1)})$, to ensure the reversibility of the
  update of $(X_n',P_n')$. Otherwise, $(X_n',P_n')$ is not updated and we go to Step 3.
\end{enumerate}
\newpage
\begin{algorithm2e}[h!]
    \DontPrintSemicolon
    \LinesNumbered
    \caption{n-BHMC with Momentum Refresh}
    \label{algo:constrained_HMC}
        \KwInput{$(X_0, P_0) \in \TM$, $\beta \in (0,1]$, $N\in \nset$, $h>0$, $\eta>0$, $\Phi_h$ with domain $\dom_{\Phi_h}$}
        \KwOutput{$(X_n,P_n)_{n \in [N]}$}
        \For{$n=1,...,N$}{
        % \State \mycommfont{Step 1: sampling $p$ with reversible refresh wrt $\bar{u}(\cdot|x)$}
        \mycommfont{Step 1:} $G_n \sim \mathrm{N}_x(0, \Idd), \Tilde{P}_n \leftarrow \sqrt{1 - \beta}P_{n-1} + \sqrt{\beta} G_n$\\
        \mycommfont{Step 2: solving a discretized version of ODE \eqref{eq_dynamics_H}}\\
         $X'_n,P'_n \leftarrow X_{n-1},\Tilde{P}_n$; \quad $X_n^{(0)}, P_n^{(0)} \leftarrow (s \circ \Sker_{h/2})(X_n,\Tilde{P}_n)$\\
        \If{$Z_n^{(0)}=(X_n^{(0)}, P_n^{(0)})\in \dom_{\Phi_h}$}{
             $Z_n^{(1)}=\Phi_h(Z_n^{(0)})$\\
             $\mathrm{err}_0 = \|Z_n^{(0)}-\Phi_h(Z_n^{(1)})\|_{Z_n^{(0)}}, \quad \mathrm{err}_1 = \|Z_n^{(0)}-\Phi_h(Z_n^{(1)})\|_{\Phi_h(Z_n^{(1)})}$ (if defined) \\
            \lIf{\hlc{yellow}{$ Z_n^{(1)}\in \dom_{\Phi_h} \textbf{ \& } \mathrm{err}_0+ \mathrm{err}_1 \leq \eta$}}{$X_n', P_n' \leftarrow (s \circ \Sker_{h/2}) (Z_n^{(1)})$}
        }
        \mycommfont{Step 3:} $A_n \leftarrow \min(1,\exp[-H(X_n',P_n')+H(X_{n-1},\Tilde{P}_{n})] )$; \quad $U_n \sim U[0,1]$\\
        \lIf{$U_n \leq A_n$}{$\bar{X}_n, \bar{P}_n \leftarrow X_n', P_n'$}
        \lElse{$\bar{X}_n, \bar{P}_n \leftarrow X_{n-1}, \Tilde{P}_n$}
        \mycommfont{Step 4:} $X_n, \hat{P}_n \leftarrow s(\bar{X}_n, \bar{P}_n)$ \\
        \mycommfont{Step 5:}   $G_n' \sim \mathrm{N}_x(0, \Idd), P_n \leftarrow \sqrt{1 - \beta}\hat{P}_n + \sqrt{\beta} G_n'$
        }
\end{algorithm2e}
\vspace{-0.5cm}
\paragraph{Step 3: computing the acceptance filter.} We denote by $a(x',p'|x,p)$ the
acceptance probability to move from $(x,p) \in \TM$ to $(x',p')\in \TM$, which
is given by
\begin{equation}
  \label{eq:acceptance_ratio}
a(x',p'|x,p)=1 \wedge \exp[-H(x',p')+H(x,p)] \eqsp .
\end{equation}
After Step 2, we perform a simple MH filter by accepting the proposal with probability
$A_n=a(X_n',P_n'|X_{n-1},\Tilde{P}_n)$ similarly to a classical HMC algorithm. 
%We emphasize that there are four cases under which the state $(X_{n-1},\Tilde{P}_n)$ is not updated until here: (i) $Z_n^{(0)}\notin \dom_{\Phi_h}$, (ii) $Z_n^{(0)}\in \dom_{\Phi_h}$ but $\Phi_h(Z_n^{(0)})\notin \dom_{\Phi_h}$, (iii) $Z_n^{(0)}\in \dom_{\Phi_h}$ and $\Phi_h(Z_n^{(0)})\in \dom_{\Phi_h}$, but $(\Phi_h\circ\Phi_h)(Z_n^{(0)}) \not\approx Z_n^{(0)}$, or (iv) we reject the move with the MH filter of Step 3.

\vspace{-0.2cm}
\paragraph{Step 4: applying momentum reversal.} To ensure a move along the
dynamics of $H$ with step-size $h$, we reverse the momentum after the
      acceptance step. Indeed, if the acceptance filter is successful, then
      $(X_n,\hat{P}_n) = (s \circ \phiRh ) (X_{n-1},\Tilde{P}_{n-1})$ approximates the
      Hamiltonian dynamics \eqref{eq_dynamics_H} on a step-size $h$ starting at $(X_{n-1},\Tilde{P}_{n-1})$,
      as seen in \Cref{subsec:integrators}.  Otherwise,
      $(X_n,\hat{P}_n)=(X_{n-1},-\Tilde{P}_{n-1})$.
\vspace{-0.2cm}

\subsection{About the usefulness of the ``involution checking step''}
\vspace{-0.1cm}
In their paper, \cite{kook2022sampling} also incorporate self-concordance into RMHMC to sample from distributions supported on polytopes. To integrate the Hamiltonian dynamics, they propose to use a numerical
version of the \emph{Implicit Midpoint integrator} \cite[Theorem
VI.3.5.]{hairer2006geometric}, which shares the same theoretical properties as the Störmer-Verlet scheme. This results in CRHMC \cite[Algorithm 3]{kook2022sampling}, whose main difference compared to n-BHMC is the lack of the ``involution checking step'' (Line 8 in \Cref{algo:constrained_HMC}). Crucially, \textbf{\cite{kook2022sampling} assume that their implicit integrator admits a unique solution} given any starting point and any step-size $h$, which is then approximated by their numerical scheme. However, this statement may not be true in the self-concordant setting, as we explain below with a simple example. This explains why we always refer to the implicit integrator as a \emph{set-valued map}. In their setting, the numerical integrator is not guaranteed to be involutive, which results in an asymptotic bias that the MH filter cannot solve.

In our paper, we do not make such an assumption on the Störmer-Verlet integrator, and consider a numerical solver which outputs \emph{one} approximate solution to this implicit scheme. By doing so, our analysis is meant to be as close as possible to the practical implementation of RMHMC. Then, the ``involution checking step'' is critical to make the numerical integrator locally involutive, which enables us to derive rigorous reversibility results, see \Cref{prop:reversibility-q}, and to solve the issue of asymptotic bias caused by implicit integration, see \Cref{sec:experiments}. We refer to \Cref{sec:comp-kook} for a detailed comparison between n-BHMC and CRHMC. We now present a simple setting where the assumption made by \cite{kook2022sampling} does not hold in the case of the Störmer-Verlet integrator, which is defined in \eqref{eq:integrator}.

Consider the 1-dimensional setting $\msm=(-\infty, 0)$ with $\metricM: x \mapsto 1/x^2$. Assume that $\beta=1$. Let $h>0$ and let $X_0\in \msm $ be the (position) state of the Markov chain at the beginning of the first iteration in n-BHMC. In this case, $\tilde{P}_0 \sim \mathrm{N}(0, 1/X_0^2)$ and we have $P_0^{(0)}=\tilde{P}_0-(h/2)\partial_x H_1(X_0,\tilde{P}_0)$, where the function $H_1$ is defined in \Cref{subsec:integrators}. Using \eqref{eq_dynamics_H_1}, we have $P_0^{(0)} \sim \mathrm{N}(\mu_0, 1/X_0^2)$, where $\mu_0=-(h/2)\partial_x H_1(X_0,\tilde{P}_0)$ only depends on $X_0$. Then, the implicit equation in $P_0^{(1/2)}$, given by the first equation of \eqref{eq:integrator}, reads as $\textstyle P_0^{(1/2)}=P_0^{(0)}-(h/2)\partial_x H_2 (X_0, P_0^{(1/2)})$.
Using \eqref{eq_dynamics_H_2}, it can be rewritten as a polynomial equation of degree 2 in $P_0^{(1/2)}$
\begin{align}\label{eq:implicit-2}
    \textstyle\frac{h}{2}X_0 (P_0^{(1/2)})^2 + P_0^{(1/2)} -P_0^{(0)}=0 \quad .
\end{align}
Denote $\Delta=1+2 h X_0 P_0^{(0)}$. Then, \eqref{eq:implicit-2} may admit 0,1 or 2 solutions depending on the sign of $\Delta$. If $P_0^{(0)}\leq -1/(2 h X_0)$, \ie, $\Delta \geq 0$, then \eqref{eq:implicit-2} admits 1 or 2 solutions. However, if $h$ is small enough, only one solution is valid when considering the constraint on the position update in \eqref{eq:integrator}. Whenever $P_0^{(0)}> -1/(2 h X_0)$, \ie, $\Delta < 0$, \eqref{eq:implicit-2} admits no solution. Recalling that $P_0^{(0)} \sim \mathrm{N}(\mu_0, 1/X_0^2)$, this event occurs with positive probability, thus violating the assumption made by \cite{kook2022sampling}.
\section{Theoretical results}\label{sec:disc-ham-results}
We now study the reversibility of n-BHMC with respect to the target distribution. To do so, we first present theoretical results on the \textit{exact} integrators of the discretized Hamiltonian dynamics in \Cref{sec:diff-impl-mapp}, from which we derive our assumption on the numerical integrator used in n-BHMC. Finally, we state our main result on n-BHMC in \Cref{subsec:disc-ham-conv}.
\vspace{-0.2cm}
\subsection{From implicit to numerical integrators}
\label{sec:diff-impl-mapp}
We show in
\Cref{prop:diffeomorphism} that even though $\mathrm{F}_h$ is a
\textit{set-valued} map, it can \emph{locally} be identified with a
$\rmC^1$-diffeomorphism. In a manner akin to \cite{lelievre2022multiple}, this justifies our assumption \Cref{ass:Phi_h} that the \textit{numerical} map $\Phi_h$
used to approximate $\Fh$ in Step 2 of \Cref{algo:constrained_HMC} is
\textit{locally} a $\rmC^1$-involution.

\begin{proposition}\label{prop:diffeomorphism} 
  Assume \rref{ass:manifold}, \rref{ass:g}. Let
  $z^{(0)} \in \TM$, then there exists $h^\star >0$ (explicit in \Cref{sec:proofs-crefs-impl}) such that for any
  $h \in \ooint{0, h^\star}$, there exist  $z^{(1)}_h \in \Fh(z^{(0)})$, a neighborhood
  $\msu\subset \TM$ of $z^{(0)}$ and a $\rmC^1$-diffeomorphism
  $\gamma_h:\msu\rightarrow \gamma_h(\msu)\subset \TM$ with
  \begin{enumerate}[wide, labelindent=0pt, label=(\alph*)]
  \vspace{-0.2cm}
  \item\label{item:a-diffeo} $\gamma_h(z^{(0)})=z^{(1)}_h$ and $|\det \Jac(\gamma_h)| = 1$. 
  \item\label{item:b-diffeo}
    $\gamma_h(z)$ is the only element of $\Fh(z)$ in $\gamma_h(\msu)$ for any $z \in \msu$.
\end{enumerate}
\end{proposition}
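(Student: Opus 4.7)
The plan is to reduce to analyzing $\Gh$ (since $\Fh=\Gh\circ s$ and $s$ is a global linear involution with $|\det \Jac(s)|=1$) and then to apply the implicit function theorem (IFT) to each of the two implicit equations defining $\Gh$. Fix $z^{(0)}=(x^{(0)},p^{(0)})\in \TM$, and study $\Gh$ starting from $s(z^{(0)})$. The Störmer-Verlet system is unrolled sequentially: first $p^{(1/2)}$, then $x^{(1)}$, then $p^{(1)}$.

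Concretely, I would first set $F_1(q;x,p,h)=q-p+(h/2)\,\partial_x H_2(x,q)$ and observe that $F_1=0$ implicitly defines $p^{(1/2)}$. At $h=0$ the unique solution is $q=p$ and $\partial_q F_1=\Idd$, which is invertible; hence for $h$ small enough, IFT produces a $C^1$-map $q=q_h(x,p)$ on a neighborhood of $(x^{(0)},p^{(0)})$. Applying the same argument to $F_2(y;x,p,h)=y-x-(h/2)[\metric(x)^{-1}q_h(x,p)+\metric(y)^{-1}q_h(x,p)]$, whose partial Jacobian in $y$ equals $\Idd$ at $h=0$, yields a $C^1$-map $y=y_h(x,p)$. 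The last update for $p^{(1)}$ is explicit, hence $C^1$. Composing these three maps and pre-composing with $s$ produces the sought $\gamma_h:\msu\to \gamma_h(\msu)$ with $z^{(1)}_h:=\gamma_h(z^{(0)})\in \Fh(z^{(0)})$. Local uniqueness of each IFT branch, combined with smoothness of the composition, shows that $\gamma_h$ is a $C^1$-diffeomorphism and, after possibly shrinking $\msu$, that no other element of $\Fh(z)$ lies in $\gamma_h(\msu)$, which is item \ref{item:b-diffeo}.

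For item \ref{item:a-diffeo}, the identity $|\det \Jac(\gamma_h)|=1$ follows from symplecticity. The Störmer-Verlet / generalized leapfrog discretization of the smooth Hamiltonian $H_2$ is a symplectic integrator, and any symplectic map on $\rset^{2d}$ preserves the canonical volume form and hence has Jacobian determinant $\pm 1$. Since $|\det \Jac(s)|=1$, this property transfers directly to $\gamma_h=\Gh\circ s$.

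The main obstacle will be making the threshold $h^\star$ explicit rather than merely inferring its existence from the qualitative IFT. This requires quantitative control on the invertibility of $\partial_q F_1$ and $\partial_y F_2$ away from $h=0$, which is where \rref{ass:g} becomes essential: self-concordance together with $\alpha$-regularity of $\phi$ provide explicit Lipschitz bounds on $\metric$, $\metric^{-1}$ and on the derivatives of $H_2$ inside Dikin ellipsoids centered at $x^{(0)}$ (see \Cref{sec:facts-about-self}). The natural route is therefore to recast each implicit equation as a fixed-point problem on a ball measured in the $\metric(x^{(0)})$-norm, and to apply the Banach contraction principle; the requirement that the Lipschitz constant be strictly less than one then yields a closed-form $h^\star$ depending on $\nu$, $\alpha$ and the base point $z^{(0)}$.
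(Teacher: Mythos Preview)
Your overall strategy---decompose the St\"ormer-Verlet scheme into its two implicit half-steps plus one explicit step, apply the implicit function theorem to each, and use Banach fixed-point estimates with self-concordance bounds to extract an explicit $h^\star$---matches the paper's route closely, and your plan for the quantitative part is essentially what the paper carries out.

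However, your argument for item \ref{item:b-diffeo} has a real gap. You claim that ``local uniqueness of each IFT branch'' ensures no other element of $\Fh(z)$ lies in $\gamma_h(\msu)$, but IFT uniqueness is a statement about the \emph{intermediate} variables: it says the half-step momentum $q$ solving $F_1=0$ is unique \emph{in a neighborhood of} $q_h(z^{(0)})$, and likewise for $y$. This does not by itself exclude the scenario where a \emph{different} root $\tilde q$ of $F_1$ (far from $q_h(z^{(0)})$) produces, after the remaining updates, an output $(\tilde x^{(1)},\tilde p^{(1)})$ that happens to land in $\gamma_h(\msu)$. To close this, you must show that closeness of the \emph{output} forces closeness of the intermediate $p^{(1/2)}$. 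The paper does this by exploiting a structural feature of $H_2$ you did not mention: since $\partial_p H_2(x,p)=\metric(x)^{-1}p$ is linear in $p$, the second St\"ormer-Verlet equation inverts explicitly to give $p^{(1/2)}=(2/h)\bigl(\metric(x^{(0)})^{-1}+\metric(x^{(1)})^{-1}\bigr)^{-1}(x^{(1)}-x^{(0)})$, so $p^{(1/2)}$---and hence the initial momentum---is a smooth function of the position pair $(x^{(0)},x^{(1)})$ alone. The paper packages this as a map $\mathrm{Z}_h(x,x')=(x,-\mathrm{G}_{h,x}(x'))$, proves its Jacobian is invertible at $(x^{(0)},x^{(1)}_h)$ via a separate (nontrivial) self-concordance estimate, and then argues by contradiction: two distinct outputs in any shrinking neighborhood would have distinct $x$-components (since equal $x$-components force equal $p^{(1/2)}$ and hence equal $p^{(1)}$), but both would map under $\mathrm{Z}_h$ to the same $z$, contradicting local injectivity.

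A smaller point on item \ref{item:a-diffeo}: invoking symplecticity of St\"ormer-Verlet as a black box is reasonable, but the standard statement is for the single-valued integrator, whereas here $\Gh$ is set-valued; you should check the argument applies to any $\rmC^1$ local selection. The paper avoids this by computing the three block-triangular Jacobians of the composition explicitly and verifying $|\det\Jac(\gamma_h)|=1$ directly.
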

\Cref{prop:diffeomorphism} shows that while $\Fh$ can take multiple (or none)
values on $\TM$, for any $z^{(0)} \in \TM$, there exists $h$ small enough and a
neighborhood $\msu$ of $z^{(0)}$ such that the set-valued map $\Fh$ is locally symplectic on $\msu$. The proof of \Cref{prop:diffeomorphism} is
given in \Cref{sec:proofs-crefs-impl}. It first relies on considering the St\"ormer-Verlet scheme introduced in \eqref{eq:integrator} as the composition of maps for which we derive the existence of solutions and secondly applying the implicit function theorem on these maps.
Motivated by this result on the \emph{implicit map} $\Fh$ and the fact that for
any $z \in \TM$ with $\abs{\mathrm{F}_h(z)}>0$,
$z \in \mathrm{F}_h \circ \mathrm{F}_h(z)$ (see \Cref{subsec:integrators}), we
make the following assumption on the \emph{numerical map} $\Phi_h$.

\begin{assumption}\label{ass:Phi_h}
  There exists $\lambda \in \ooint{0,1}$ s.t. for any $z^{(0)} \in \TM$,
  there exists $h_\star>0$ and for any $h\in (0, h_\star)$
  \begin{enumerate}[wide, labelindent=0pt, label=(\alph*)]
  \vspace{-.2cm}
  \item $\msb=\msb_{\|\cdot\|_{z^{(0)}}}(z^{(0)},\lambda r^\star(x^{(0)}))\subset \dom_{\Phi_h}$, \label{item:a}
    \vspace{-.2cm}    
  \item $\Phi_h \in \rmC^1(\msb, \TM)$ and $\Phi_h \circ \Phi_h = \Id$ on $\msb$, \label{item:b}
      \end{enumerate}
      \vspace{-.2cm}
      with $r^\star: \msm \rightarrow (0, +\infty)$ depending only on $\metric$ and defined in \Cref{sec:proof-disc-ham-conv}.
\end{assumption}

Assumption \Cref{ass:Phi_h} can be thought as a strengthening of
\Cref{prop:diffeomorphism}-\ref{item:a-diffeo}, where (i) $h_\star$ refers to $h^\star$, and (ii) $\msb$  and $\Phi_h$ correspond to \textit{explicit} versions of $\msu$ and $\gamma_h$. We conjecture that the involution condition in \Cref{ass:Phi_h}-\ref{item:b} could in fact be replaced
by the condition that for any $z \in \msb$, $\Phi_h(z) \in \Fh(z)$, in a manner
akin to \cite{lelievre2022multiple}. We leave this study for future work.

\subsection{Reversibility results}
\label{subsec:disc-ham-conv}

\paragraph{Beyond n-BHMC.} While \Cref{algo:constrained_HMC} can be implemented
practically, it cannot be easily analysed under \rref{ass:manifold},
\rref{ass:g} and \rref{ass:Phi_h}. The main reason for this limitation is that the self-concordance properties are defined locally, whereas deriving reversibility results require global control. In particular, we cannot ensure that $\Phi_h$ is locally an involution around $z^{(0)}$ if
$h > h_\star$. 
To circumvent this issue, we enforce a condition on $h$ to be small enough in
n-BHMC. Using the notation from \Cref{ass:Phi_h}, we define the set 
\begin{equation}
  \textstyle{
    \msa_h = \ensembleLigne{z \in \TM}{h<\min_{\tz \in \msb_{\|\cdot\|_z}(z,1)} h_\star(\tz)} \subset \dom_{\Phi_h}\eqsp .
    }
  \end{equation}
Let $z^{(0)} \in \msa_h$. By \Cref{ass:Phi_h}, we know that $\Phi_h$ is an involution on a neighbourhood of $z^{(0)}$; in particular, it comes that $(\Phi_h \circ  \Phi_h)(z^{(0)})=z^{(0)}$. Hence, the condition (b) of the ``involution checking step'' in \Cref{algo:constrained_HMC} is \textit{de facto} satisfied. This naturally leads to replace ``$Z_n^{(1)}\in\dom_{\Phi_h}$'', \ie, the condition (a) of the ``involution checking step'',  by the more restrictive condition ``$Z_n^{(1)}\in \msa_h$'', as presented in \Cref{algo:theoretical_constrained_HMC} (see
\Cref{sec:constr-riem-hmc_ideal}), for which we are able to derive a reversibility result.

We denote by $\Qker:\TM \times \borel(\TM)\rightarrow [0,1]$, the transition
kernel of the (homogeneous) Markov chain $(X_n,P_n)_{n \in \nset}$ obtained with
\Cref{algo:theoretical_constrained_HMC}. We now state our main result on n-BHMC.
\begin{theorem}\label{prop:reversibility-q}
  Assume \rref{ass:manifold}, \rref{ass:g},
  \rref{ass:Phi_h}. Then, $\Qker$ is reversible up to momentum reversal,
  \ie, \ we have for any $f \in \rmC(\TM \times \TM, \rset)$ with compact support
  \begin{align}
    \textstyle{ \int_{\TM \times \TM} f(z, z') \rmd \bar{\pi}(z) \Qker(z, \rmd z') = \int_{\TM \times \TM} f(s(z'), s(z)) \rmd \bar{\pi}(z) \Qker(z, \rmd z') \eqsp . }
  \end{align}
  In particular, $\bar{\pi}$ is an invariant measure for $\Qker$.
\end{theorem}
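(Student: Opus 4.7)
The plan is to establish reversibility of $\Qker$ up to momentum reversal by decomposing the algorithm into its elementary sub-kernels. Writing $\Qker = R_\beta \circ K_h \circ R_\beta$, where $R_\beta$ denotes the partial momentum refresh (Steps~1 and 5) and $K_h$ denotes the integration--MH--reversal block (Steps~2--4), I plan to prove that (i) $R_\beta$ is $\bar{\pi}$-reversible and commutes with the momentum-flip operator $T_s : f \mapsto f \circ s$, and (ii) $K_h$ is reversible with respect to $\bar{\pi}$ up to $s$. A short manipulation using $T_s R_\beta = R_\beta T_s$ and the self-adjointness of $R_\beta$ in $\mrl^2(\bar{\pi})$ then yields the theorem; in particular, $\bar{\pi}$-invariance of $\Qker$ follows as a corollary by specializing the reversibility identity to test functions depending only on $z'$.

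Part (i) is classical. Conditional on the position $x$, $R_\beta$ is the Gaussian AR(1) update $p \mapsto \sqrt{1-\beta}\, p + \sqrt{\beta}\, G$, $G \sim \mathrm{N}_x(0, \Idd)$, whose stationary law is $\mathrm{N}_x(0, \Idd)$; this coincides with the conditional of $\bar{\pi}$ given $x$ by the form of $H$ in \eqref{eq:hamiltonian}. The joint law of $(p, \tilde p)$ is Gaussian with symmetric covariance, so $R_\beta$ is $\bar{\pi}$-reversible, and $T_s R_\beta = R_\beta T_s$ because $-G \stackrel{d}{=} G$.

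The heart of the argument is (ii). Explicitly,
\begin{equation*}
K_h(z, dz') = \alpha(z)\, \1_{\msc}(z)\, \delta_{s\circ\phiRh(z)}(dz') + \bigl[1 - \alpha(z)\, \1_{\msc}(z)\bigr]\, \delta_{s(z)}(dz'),
\end{equation*}
with $\alpha(z) = 1 \wedge \exp[H(z) - H(\phiRh(z))]$ and $\msc$ the event that Step~2 succeeds, namely $Z^{(0)} \in \dom_{\Phi_h}$ and $Z^{(1)} \in \msa_h$. The rejection branch $z' = s(z)$ is trivially invariant under $(z, z') \mapsto (s(z'), s(z))$ because $H \circ s = H$. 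For the acceptance branch, the desired identity reduces to
\begin{equation*}
\int \alpha(z)\, \1_{\msc}(z)\, f(z, s\circ\phiRh(z))\, d\bar{\pi}(z) = \int \alpha(z)\, \1_{\msc}(z)\, f(\phiRh(z), s(z))\, d\bar{\pi}(z),
\end{equation*}
which I would verify via the change of variables $w = \phiRh(z)$. This is a valid $\rmC^1$-diffeomorphism with unit Jacobian on $\msc$, because $\phiRh = (s \circ \Th) \circ \Phi_h \circ (s \circ \Th)$ is a composition of the symplectic pieces $s$ and $\Th$ with $\Phi_h$, which \Cref{ass:Phi_h} renders a $\rmC^1$-involution on a neighborhood of every point of $\msa_h$. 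The substitution then reduces matters to the pointwise Metropolis--Hastings identity $\alpha(\phiRh(w))\, e^{-H(\phiRh(w))} = \alpha(w)\, e^{-H(w)}$, which follows from $\min(1, e^a)/\min(1, e^{-a}) = e^a$.

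The main obstacle is justifying the change of variables \emph{globally}, i.e.\ showing that the event $\msc$ is symmetric under $\phiRh$: $z \in \msc$ if and only if $\phiRh(z) \in \msc$. This is precisely the reason \Cref{algo:theoretical_constrained_HMC} enforces $Z^{(1)} \in \msa_h$ rather than merely $Z^{(1)} \in \dom_{\Phi_h}$: the defining inequality of $\msa_h$ forces $h < h_\star$ on a ball containing both $Z^{(0)}$ and $Z^{(1)}$, so that \Cref{ass:Phi_h} delivers $\Phi_h \circ \Phi_h = \Id$ at both points. Combined with properties (i)--(iv) of $\Th$ recalled in \Cref{subsec:integrators}, this ensures that running Step~2 from $\phiRh(z)$ recovers $z$ and passes the same check, thereby closing the change-of-variables argument and yielding~(ii).
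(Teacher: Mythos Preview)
Your overall strategy matches the paper's: decompose $\Qker=\Qker_0\,\Qker_1\,\Qker_0$, show the Gaussian refresh $\Qker_0$ is $\bar\pi$-reversible and commutes with $s$ (the paper's Lemma for $\Qker_0$), show the integration--MH--flip block $\Qker_1$ is reversible up to momentum reversal (the paper's Lemma for $\Qker_1$), and combine. Part~(i) is fine and identical to the paper.

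There is, however, a genuine gap in part~(ii). You define $\msc$ as ``$Z^{(0)}\in\dom_{\Phi_h}$ and $Z^{(1)}\in\msa_h$'', but \Cref{algo:theoretical_constrained_HMC} actually requires \emph{both} $Z^{(0)}\in\msa_h$ and $Z^{(1)}\in\msa_h$ (Lines~5 and~7). This is not a cosmetic slip: your symmetry argument for $\msc$ under $\phiRh$ relies on $\Phi_h(Z^{(1)})=Z^{(0)}$, i.e.\ on $\Phi_h\circ\Phi_h(Z^{(0)})=Z^{(0)}$, which \Cref{ass:Phi_h} delivers only when $h<h_\star(Z^{(0)})$, hence when $Z^{(0)}\in\msa_h$. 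Your sentence ``the defining inequality of $\msa_h$ forces $h<h_\star$ on a ball containing both $Z^{(0)}$ and $Z^{(1)}$'' is not correct either: $Z^{(1)}\in\msa_h$ gives $h<h_\star(\tilde z)$ for $\tilde z$ in the unit $\|\cdot\|_{Z^{(1)}}$-ball around $Z^{(1)}$, and there is no reason $Z^{(0)}$ lies in that ball. With the correct $\msc=\{Z^{(0)}\in\msa_h,\ Z^{(1)}\in\msa_h\}$ (the paper's $\tildemsephi$ after conjugation by $s\circ\Th$), the symmetry does follow: $Z^{(0)}\in\msa_h\Rightarrow\Phi_h(Z^{(1)})=Z^{(0)}$, and then $(Z'^{(0)},Z'^{(1)})=(Z^{(1)},Z^{(0)})$ satisfies both $\msa_h$-conditions.

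On the change-of-variables itself, the paper does not perform it globally as you propose; it first passes through $s\circ\Th$ and then covers the compact support with finitely many balls $\msb_i$, showing that if some $\tilde z\in\msb_i\cap\msa_h$ exists then $h<h_\star(z_i)$ (this is where the radius-$1$ ball in the definition of $\msa_h$ is used), so that \Cref{ass:Phi_h} yields $\Phi_h$ a $\rmC^1$-involution on the whole open ball $\msb_i$, after which the change of variables is done piece by piece. Your global substitution can be made rigorous along the lines you indicate (pointwise $h<h_\star(z)$ for $z\in\msa_h$ suffices to get $\Phi_h$ a $\rmC^1$ bijection of $\msf_h=\msa_h\cap\Phi_h^{-1}(\msa_h)$ onto itself), but it bypasses the localisation that the paper uses to stay on open sets where \Cref{ass:Phi_h} applies uniformly. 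Either way you still need to justify $\abs{\det\Jac\Phi_h}=1$ on the integration domain; the paper invokes ``volume-preserving'' in its sketch and uses it implicitly in the change of variables, so be sure to argue it (e.g.\ from $\Phi_h\circ\Phi_h=\Id$ on a neighbourhood together with the appropriate pairing of points).
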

\vspace{-0.2cm}
\begin{proof}
  We provide here a sketch of the proof, technical details being postponed to \Cref{sec:proof-disc-ham-conv}. The reversibility (up to momentum reversal) of the momentum update is straightforward. To establish the reversibility up to momentum reversal of the
  \textit{numerical} Hamiltonian integration step, we cover the compact support of $f$ with a
  finite family of open balls with respect to the metric $\metric$. Combining
  \rref{ass:g} and \rref{ass:Phi_h}, we show that $\Phi_h$ is a volume-preserving $\rmC^1$-diffeomorphism on
  each one of these sets. We then conclude upon combining this result with the
  fact that
  $\Rker_h^\Phi \circ s \circ \Sker_{h/2} = s \circ \Sker_{h/2} \circ \Phi_h$.
\end{proof}

Although \Cref{algo:theoretical_constrained_HMC} may be implemented, this would come with a huge (and unrealistic) computational cost since verifying that $z\in \msa_h$
 implies to find the minimal critical step-size $h_\star$
 on a neighborhood of $z$. The question of the existence of a global step-size $h$ such that \Cref{algo:constrained_HMC} can be properly analyzed is not the topic of this paper and is left for future work.

\paragraph{Comparison with Theorem 8 in \cite{kook2022sampling}.} Although \cite{kook2022sampling} present a theoretical result similar to \Cref{prop:reversibility-q}, we claim that \emph{their statement is not correct}. Indeed, the authors make a critical confusion between the \textit{ideal} integrator as considered in \cite{hairer2006geometric} and the \textit{numerical} version they implement. Then, in the proof of reversibility for their scheme, they act as if the two algorithms were the same while it is not true (see \Cref{sec:comp-kook} for further details).  On the other hand, (i) we make a clear distinction between the ideal integrator and its \textit{numerical}  implementation (see \Cref{subsec:integrators}), and (ii) implement the ``involution checking step'' to enforce reversibility (Line 8 in \Cref{algo:constrained_HMC}).

\section{Related work}\label{sec:related-work}

\paragraph{Sampling on manifolds.} Traditional \emph{constrained} sampling methods in
Euclidean spaces include the Hit-and-Run
algorithm \citep{lovasz2004hit}, the Random Walk Metropolis-Hastings (RWMH) algorithm,
also referred to as Ball Walk \citep{lee2017eldan}, and HMC, \citep{duane1987hybrid}. However, it has been empirically demonstrated that RWMH and HMC
require small step-sizes in order to correctly sample from a target distribution $\pi$ over a \textit{submanifold} $\msm \subset \rset^d$, thus resulting in poor mixing time \citep[Figures 1 and
3]{girolami2011riemann}. In the specific case where $\msm=\ensembleLigne{x\in \rset^d}{c(x)=0}$ for some
$c:\rset^d \to \rset^m$, \cite{brubaker2012family} combine HMC with the RATTLE
integrator \citep{leimkuhler1994symplectic}, incorporating the constraints
of $\msm$ in the Hamiltonian dynamics via Lagrange multipliers. \cite{girolami2011riemann} adopt an original approach by endowing $\msm$ with a
Riemannian metric $\metric$ and propose RMHMC (Riemannian Manifold HMC), a version of HMC where the Hamiltonian depends on $\metric$ as in
\eqref{eq:hamiltonian}. It consists of integrating the Hamiltonian
dynamics of \eqref{eq_dynamics_H} on short-time steps using the generalized
Leapfrog integrator \eqref{eq:integrator} and the acceptance filter defined in
\eqref{eq:acceptance_ratio}. This method
does not include an ``involution checking step'', and thus the reversibility of the algorithm cannot be ensured in practice and in theory. Similarly, 
\cite{byrne2013geodesic} propose to design a numerical integrator for RMHMC using
geodesics.

\paragraph{Choice of the metric in RMHMC.}There are various ways to design a
\textit{meaningful} metric $\metric$ given a submanifold $\msm$. In a Bayesian
perspective, \cite{girolami2011riemann} aim at computing the \textit{a
  posteriori} distribution of a statistical model of interest and thus
choose $\metric$ to be the Fisher-Rao metric. 
In this paper, we consider another approach which exploits the
\textit{geometrical} structure of $\msm$. 
For instance, one can always define a self-concordant
barrier $\phi$ when $\msm$ is a convex body,
\citep{nesterov1994interior}, and choose $\metric=D^2\phi$, as done by \cite{kook2022sampling}.

\paragraph{Self-concordance in RMHMC.} Elaborating on Riemannian geodesics, \cite{lee2018convergence} provide theoretical guarantees of fast mixing time
for RMHMC when (i) $\msm$ is a polytope, and (ii) $\metric$ is the Hessian of a
self-concordant function $\phi$ (as in our setting). Their result notably
improves the complexity of uniform polytope sampling from algorithms relying on self-concordance such as the Dikin Walk \citep{kannan2009random} and the Geodesic Walk
\citep{lee2017geodesic}. However, they only consider the case where the \textit{exact} continuous Hamiltonian
dynamics is used, as in c-BHMC (see \Cref{sec:cont-ham}). On the other hand, \cite{kook2022sampling} integrate the Hamiltonian dynamics via implicit schemes without any ``involution checking step'' in a similar self-concordant setting. In particular, they consider a convex body $\mathsf{K}$
 equipped with a self-concordant barrier $\phi$, combined with a linear equality constraint $\rmA x=b$. This is a special case of our framework (see \Cref{ass:manifold} and \Cref{ass:g}) by rewriting this whole set as $\mathsf{K}'=\ensembleLigne{\rmA^\dagger b + u \in \mathsf{K}}{ u\in \mathrm{Ker}(\rmA)}$\footnote{$\rmA^\dagger$ is the pseudo-inverse of
  $\rmA$.}, equipped with the self-concordant barrier $u \mapsto \phi(\rmA^\dagger b + u)$. Besides this, \cite{kook2022sampling} provide an efficient implementation of their algorithm (CRHMC) in the case of a convex bounded manifold of the form $\msm=\ensembleLigne{x\in\rset^d}{\rmA x=b, \ell<x<u}$. Although CRHMC demonstrates empirical fast mixing time, we show in \Cref{sec:experiments} that it suffers from an asymptotic bias. More recently, \cite{kook2022condition} built upon the empirical results of \cite{kook2022sampling} to derive theoretical results of fast mixing time. However, they do not question the issue of asymptotic bias in CRHMC.

\paragraph{Enforcing reversibility.}\cite{zappa2018monte} propose a version of RWMH including projection
steps after each proposal. To our knowledge, they are the first to practice ``involution checking'' of the proposal, and thus \textit{enforce the
  reversibility} of the Markov chain with respect to
$\pi$. \cite{lelievre2019hybrid} notably combine this method with the
discretization suggested by \cite{brubaker2012family} to also \textit{enforce
  the constraints} of the manifold. \cite{lelievre2022multiple}
elaborate on this framework, by designing a symplectic numerical integrator
with multiple possible outputs, and provide a rigorous proof of
reversibility. Note that it only applies when $\metric$ is induced by the
flat metric of $\rset^d$ and therefore cannot be combined with our approach as such.

%\paragraph{Applications to volume computation.} Sampling from a convex bounded body, and more specifically a polytope, has many application such as volume computation but faces the curse of dimensionality. Using geodesic walks, \cite{lee2017geodesic} derive a mixing time of $O(m n^{3/4})$ ($n$: number of steps) for polytopes defined by $m$ inequalities.  By combining this result with annealing, they notably derive an algorithm for polytope volume computation with a mixing time of the order $O(m n^{2/3})$, improving on traditional euclidean sampling approaches \cite{lee2017eldan,lovasz2004hit,kannan2009random,lee2017geodesic}. Their result however is not generalizable to any Gibbs distribution and they do not provide numerical experiments, which makes it hard to be compared with in practice.

%%% Local Variables:
%%% mode: latex
%%% TeX-master: "main"
%%% End:

\section{Numerical experiments}
\label{sec:experiments}
\vspace{-0.2cm}
In our experiments\footnote{Our code: \url{https://github.com/maxencenoble/barrier-hamiltonian-monte-carlo}.}, we illustrate the performance of n-BHMC (\Cref{algo:constrained_HMC}) to sample from target distributions which are supported on polytopes. We compare our method with the numerical implementation of CRHMC\footnote{\url{https://github.com/ConstrainedSampler/PolytopeSamplerMatlab}} provided by \cite{kook2022sampling}. In all of our settings, we compute $\metric$ as the Hessian of the logarithmic barrier, see \Cref{sec:self-conc}. The algorithms are always initialized at the center of mass of the considered polytope. At each iteration of n-BHMC, we perform one step of numerical integration, using the St\"{o}rmer-Verlet scheme with $K=30$ fixed-point steps and keep the refresh parameter $\beta$ equal to 1. We refer to \Cref{sec:experimental-details} for more details on the setting of our experiments and additional results.

\paragraph{Synthetic data.} We first consider the problem of sampling from the truncated Gaussian distribution 
\begin{align}\label{eq:target_pi}
  \textstyle{(\rmd \pi / \rmd \Leb)(x) = \exp[-\norm{x-\mu}^2/2] \1_{\msm}(x) / \int_{\msm} \exp[-\norm{\tilde{x}-\mu}^2/2] \rmd \tilde{x}, \quad \forall x \in \rset^d \eqsp ,}
\end{align}
where $\msm$ is the hypercube or the simplex, and $\mu\in \rset^d$ is defined by
\[\mu=(10 / \sqrt{d-1})\times\{\mathbf{1}-e_1 +(\sqrt{d-1}-1)e_2\} \eqsp ,   \]
where $e_i$ stands for the $i$-th canonical vector of $\rset^d$ and $\mathbf{1}=\sum_{i=1}^d e_i$. In particular, $\mu_1=0$, $\mu_2=10$ and $\mu_j=\mu_2/\sqrt{d-1}$ for any $j\in \{3, \hdots, d\}$. Therefore, the mass of $\pi$ is not evenly distributed on the boundary of $\msm$, which is key to observe the impact of the reversibility condition. For this experiment, we consider the low-dimensional setting $d\in \{5,10\}$. To sample from $\pi$, we run 10 times the algorithms CRHMC and n-BHMC for $N=10^5$ iterations, and update the step-size $h$ such that the average acceptance rate in the MH filter is roundly equal to 0.5, following \cite{roberts2001optimal}. We discuss the setting of the tolerance parameter $\eta$ for n-BHMC in \Cref{sec:experimental-details}. In order to correctly assess the bias of each numerical method, we aim at accurately computing the target quantity $Q=\int_{\M} \langle x, \mu \rangle_2 \rmd \pi(x)$. Although our choice of $Q$ is arbitrary, it highlights well the bias in CRHMC, which is corrected when using n-BHMC.

\begin{wrapfigure}{l}{9cm}
  \centering
  \vspace{-0.2cm}
  \includegraphics[width=.48\linewidth]{figures/boxplot_box_dim_5.png} \hfill
  \includegraphics[width=.48\linewidth]{figures/boxplot_simplex_dim_5.png} \\ 
  \includegraphics[width=.48\linewidth]{figures/boxplot_box_dim_10.png} \hfill
  \includegraphics[width=.48\linewidth]{figures/boxplot_simplex_dim_10.png}
  \vspace{-0.1cm}
  \captionof{figure}{Comparison between n-BHMC and CRHMC on
the hypercube (left) and the simplex (right).}
  \vspace{-0.5cm}
  \label{fig:synthetic_mixing_rates}
\end{wrapfigure}

We evaluate their performance by taking as ground truth the estimate given by (i) the Metropolis-Adjusted Langevin Algorithm (MALA) \citep{roberts2002langevin} for the hypercube and (ii) the Independent MH (IMH) algorithm \citep{liu1996metropolized} for the simplex, which are also run 10 times, but 10 times longer than n-BHMC and CRHMC, \ie, for $N=10^6$ iterations. To compute the target quantity $Q$, we keep the whole trajectories and report the confidence intervals in \Cref{fig:synthetic_mixing_rates}. For both polytopes, we notably observe that our method is less biased than CRHMC.

\begin{comment}
\begin{figure}[h!]
  \centering
  \vspace{-0.6cm}
  \includegraphics[width=.24\linewidth]{figures/boxplot_box_dim_5.png} \hfill
  \includegraphics[width=.24\linewidth]{figures/boxplot_simplex_dim_5.png} \hfill
  \includegraphics[width=.24\linewidth]{figures/boxplot_box_dim_10.png} \hfill
  \includegraphics[width=.24\linewidth]{figures/boxplot_simplex_dim_10.png}
  \vspace{-0.1cm}
  \captionof{figure}{Estimations of the target quantity $Q$ given by n-BHMC and CRHMC on the hypercube (left) and the simplex (right), compared to the ground truth (MALA/IMH).}
  \vspace{-0.2cm}
  \label{fig:synthetic_mixing_rates}
\end{figure}

\end{comment}

\begin{wraptable}{r}{6.5cm}
\vspace{-0.4cm}
\setlength\tabcolsep{4pt}
    \centering
    \small
\begin{tabular}{lcc|ccc}
\toprule
Model & $d$ & NNZ & n-BHMC & CRHMC \\
\midrule
ecoli & 95 & 291 & \cellcolor{pearDark!20} \textbf{0.08822} & 5.719\\
cardiac-mit & 220 & 228 & \cellcolor{pearDark!20} \textbf{0.1905} & 5.304 \\
Aci-D21 & 851 & 1758 & \cellcolor{pearDark!20} 141.9& \textbf{46.17}\\
Aci-MR95 & 994 & 2859 & \cellcolor{pearDark!20} 325.1& \textbf{61.34}\\
Abi-49176 & 1069 & 2951 & \cellcolor{pearDark!20} 179.4& \textbf{74.22}\\
Aci-20731 & 1090 & 2946 & \cellcolor{pearDark!20} \textbf{0.7124}& 88.82\\
Aci-PHEA & 1561 & 4640 & \cellcolor{pearDark!20} \textbf{2.343} & 100.8\\
iAF1260 & 2382 & 6368 & \cellcolor{pearDark!20} 537.9& \textbf{170.5} \\
iJO1366 & 2583 & 7284 & \cellcolor{pearDark!20} 537.9 & \textbf{169.4}\\
Recon1 & 3742 & 8717 & \cellcolor{pearDark!20} \textbf{3.740}& 3280\\
\bottomrule
\end{tabular}
\vspace{-0.1cm}
\captionof{table}{Real-world data setting: comparison with \cite{kook2022sampling}.}
\label{table:resuts_real_data}
\end{wraptable}

\paragraph{Real-world data.} We then consider 10 polytopes given in the COBRA Toolbox v3.0 \citep{heirendt2019creation}, which model molecular systems, and follow the method provided in \cite[Appendix A]{kook2022sampling} to pre-process them. Here, we aim at uniformly sampling from these polytopes. However, we do not have access to a realistic baseline that yields an unbiased estimator, since the sampling dimension is too high and running MALA or IMH would be too costly. Hence, instead of assessing the bias of the algorithms, we rather want 
 to highlight that the ``involution checking step'' does not hurt the convergence properties of BHMC compared to CRHMC. We evaluate the efficiency of the algorithms by computing \textit{the sampling time per effective sample} (in seconds), defined as the total sampling time until termination divided by the Effective Sample Size. We set the initial step-size to 0.01 for both algorithms and $\eta=10$ in n-BHMC. We then follow the exact same procedure as in \cite[Table 1]{kook2022sampling} by drawing 1000 uniform samples with limit on running time set to 1 day. Results are given in \Cref{table:resuts_real_data}. For each molecular model, we specify by NNZ the number of non-zero entries in the matrix $\mathrm{A}$ defining the corresponding pre-processed polytope, which thus reflects its complexity. In particular, the sampling dimension $d$ corresponds to the number of columns of $\mathrm{A}$. Note that the sampling time values are not of interest in themselves, but are only meant to compare CRHMC and n-BHMC. While it is clear that adding an ``involution checking step'' implies a trade-off between accuracy and complexity of the method, our results demonstrate that it does not penalize BHMC in the considered settings, and may even make it more efficient in some cases.

\vspace{-0.3cm}
\section{Discussion}
\label{sec:discussion}
\vspace{-0.2cm}
In this paper, we introduced a novel version of RMHMC, Barrier
HMC (BHMC), which addresses the problem of sampling from a distribution $\pi$ over a
constrained convex subset $\msm \subset \rset^d $ equipped with a
self-concordant barrier $\phi$. Our contribution highlights that the use of well known implicit schemes for ODE integration combined with these space constraints leads to asymptotic bias when it comes to their numerical implementation. We propose to solve it in a straightforward manner via our algorithm, n-BHMC, which relies on an additional ``involution checking step''. Under reasonable assumptions, our theory shows that this critical step removes the asymptotic bias. This result is supported by numerical experiments where we highlight this lack of bias compared to state-of-the-art methods. In future
work, we would like to investigate the ``coupled'' behaviour of the
hyperparameters $h$ and $\eta$ in practice, the study of irreductiblity of n-BHMC and the influence of $\eta$ on the
bias. Moreover, we are interested in the application of n-BHMC for efficient polytope volume computation. More generally, we are convinced that our study is a first step for future work on designing implicit integration schemes for unbiased constrained sampling methods.

\newpage

\section*{Acknowledgments}

AD acknowledges support from the Lagrange
Mathematics and Computing Research Center. AD and MN would like
to thank the Isaac Newton Institute for Mathematical Sciences for support and hospitality during the programme
\emph{The mathematical and statistical foundation of future data-driven engineering} when work on this paper was undertaken. MN acknowledges funding from the grant SCAI (ANR-19-CHIA-0002).

\bibliography{bibliography}
\bibliographystyle{icml2023}

\newpage
\appendix

%%%%%%%%%%%%%%%%%%%%%%%%%%%%%%%%%%%%%%%%%%%%%%%%%%%%%%%%%%%%%%%%%%%%%%%%%%%%%%%
%%%%%%%%%%%%%%%%%%%%%%%%%%%%%%%%%%%%%%%%%%%%%%%%%%%%%%%%%%%%%%%%%%%%%%%%%%%%%%%
% APPENDIX
%%%%%%%%%%%%%%%%%%%%%%%%%%%%%%%%%%%%%%%%%%%%%%%%%%%%%%%%%%%%%%%%%%%%%%%%%%%%%%%
%%%%%%%%%%%%%%%%%%%%%%%%%%%%%%%%%%%%%%%%%%%%%%%%%%%%%%%%%%%%%%%%%%%%%%%%%%%%%%%

\section*{Organisation of the supplementary}

This appendix is organized as follows. \Cref{sec:general-facts} summarizes general facts that will be useful for proofs. \Cref{sec:metr-tang-cotang} and \Cref{sec:facts-about-self} provide additional details on Riemannian metrics and technicalities on self-concordance, respectively. The c-BHMC algorithm is presented in \Cref{sec:cont-ham} with some results whose proofs are given in \Cref{sec:proof_cont_ham}. \Cref{sec:num-int-facts} presents 
general facts about numerical integration and specific facts on the integrators used in n-BHMC. \Cref{sec:proofs-crefs-impl} dispenses the proof of the result on implicit integrators of n-BHMC, stated in \Cref{sec:diff-impl-mapp}. \Cref{sec:constr-riem-hmc_ideal} presents the modified version of n-BHMC, incorporating a step-size condition, for which we state the reversibility with respect to the target distribution in \Cref{subsec:disc-ham-conv}. Proof of this last result is
given in \Cref{sec:proof-disc-ham-conv}. A detailed comparison between n-BHMC and CRHMC \citep{kook2022sampling} is given in \Cref{sec:comp-kook}. Finally, \Cref{sec:experimental-details}
provides more details on the experiments of \Cref{sec:experiments}.

\paragraph{Notation.} For any Riemannian manifold $(\msm, \metric)$ and any $z=(x,p)\in \TM$, we denote by 
$\|\cdot\|_{z}$ the norm \eqref{eq:norm_TM} defined on $\TM$ by
$\|(x',p')\|_{z}=\|x'\|_{\metric(x)}+\|p'\|_{\metric(x)^{-1}}$ for any $(x', p')\in \TM$. For any open subset $\msu\subset \rset^d$ and any $k\in \nset$, we denote by $\rmC^k (\msu, \rset^{d'})$ the set of functions $f:\msu \to \rset^{d'}$ such that $f$ is $k$ times continuously differentiable.

\section{Useful facts and lemmas}
\label{sec:general-facts}

We recall here some basic knowledge on linear algebra and probability, and state useful inequalities for our proofs.

\paragraph{Linear algebra reminders.} For any matrices $\rmA, \rmB \in \rset^{d\times d}$, we write $\rmA \preceq \rmB$ if for any $x\in \rset^d$, $x^\top(\rmB -\rmA)x\geq 0$. Any positive-definite
matrix $\rmA \in \rset^{d \times d}$ induces a scalar product $\langle \cdot, \cdot \rangle_\rmA$ on $\rset^d$, defined by
$\langle x, y\rangle_\rmA=\langle x, \rmA y \rangle$. This scalar product induces the norm $\|\cdot\|_\rmA$ on $\rset^d$, defined by $\|x\|_\rmA=\sqrt{\langle x, x\rangle_\rmA}=\|\rmA^{1/2}x\|_2$. For any positive-definite matrices $\rmA,\rmB \in \rset^{d\times d}$ and for any $\alpha\geq0$, $\rmA \preceq \alpha \rmB$ is equivalent to $\|\cdot\|_\rmA \leq \sqrt{\alpha} \|\cdot\|_\rmB$. The canonical norm of $\rset^d$ induces the norm $\|\cdot\|_2$ on $\rset^{d \times d}$, defined by $\|\rmA\|_2=\sup_{\|u\|_2=1} \|\rmA u \|_2$. In particular, for any matrices $\rmA,\rmB \in \rset^{d\times d}$ and any vector $x \in \rset^d$, $\|\rmA x\|_2\leq \|\rmA\|_2\|x\|_2$ and $\|\rmA \rmB\|_2 \leq \|\rmA\|_2 \|\rmB\|_2$. Moreover, if $\rmA$ is non-negative-definite, then $\|\rmA\|_2=\lambda (\rmA)$, where $\lambda(\rmA)$ is the largest eigenvalue of $\rmA$, and $\|\rmA^{\alpha}\|_2=\lambda (\rmA)^\alpha$ for any $\alpha >0$.

\paragraph{Probability reminders.} In this section, we consider a smooth manifold $\msm\subset \rset^d$. We first recall the definition of reversibility
\cite[Definition 1.5.1]{douc2018markov} before stating general results on reversibility.

\begin{definition}\label{def:reversibility} Let 
  $\Qker:\TM \times \borel(\TM) \rightarrow \ccint{0,1}$ be a transition
  probability kernel and let $\bar{\pi}$ be a probability distribution on $\TM$. Then,
    \begin{enumerate}[wide, labelindent=0pt, label=(\alph*)]
\item $\Qker$ is said to be reversible with respect to $\bar{\pi}$ if for any $f \in \rmC(\TM\times \TM ,\rset)$ with compact support
\begin{align}\label{eq:reversibility_def}
  \textstyle{
    \int_{\TM \times \TM} f(z,z') \Qker(z,\rmd z')\bar{\pi}(\rmd z) = \int_{ \TM \times \TM} f(z',z) \Qker(z,\rmd z')\bar{\pi}(\rmd z) \eqsp . }
\end{align}
\item $\Qker$ is said to be reversible up to momentum reversal with respect to
  $\bar{\pi}$ if for any
  $f \in \rmC(\TM\times \TM ,\rset)$ with compact support
\begin{align}\label{eq:reversibility_momentum_def}
  \textstyle{
    \int_{\TM \times \TM} f(z,z') \Qker(z,\rmd z')\bar{\pi}(\rmd z) = \int_{ \TM \times \TM} f(s(z'),s(z)) \Qker(z,\rmd z')\bar{\pi}(\rmd z) \eqsp , }
\end{align}
where we recall that for any $(x, p) \in \rset^d\times \rset^d$, $s(x,p) = (x,-p)$.
\end{enumerate}
\end{definition}
\begin{lemma}\label{lemma:preserve_measure} Let 
  $\Qker:\TM \times \borel(\TM) \rightarrow \ccint{0,1}$ be a transition
  probability kernel and let $\bar{\pi}$ be a probability distribution on
  $\TM$. Assume that $s_\# \bar{\pi}=\bar{\pi}$ and that $\Qker$ is reversible
  up to momentum reversal with respect to $\bar{\pi}$. Then $\bar{\pi}$ is an
  invariant measure for $\Qker$, \ie, for any
  $f \in \rmC(\TM, \rset)$ with compact support
  \begin{align}
    \textstyle{
      \int_{\TM \times \TM} f(z') \Qker(z,\rmd z')\bar{\pi}(\rmd z) = \int_{ \TM \times \TM} f(z)\bar{\pi}(\rmd z) \eqsp .
      }
\end{align}
\end{lemma}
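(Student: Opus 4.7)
The plan is to specialize the reversibility-up-to-momentum-reversal identity \eqref{eq:reversibility_momentum_def} to test functions that depend only on the second coordinate, and then absorb the residual momentum reversal using the hypothesis $s_\#\bar{\pi}=\bar{\pi}$. Conceptually, reversibility up to momentum reversal differs from ordinary invariance only by a sign flip on $p$; once $\bar{\pi}$ is itself $s$-invariant, invariance should follow automatically.

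Concretely, I would fix $f\in\rmC(\TM,\rset)$ with compact support and try to apply \eqref{eq:reversibility_momentum_def} to the test function $g(z,z')=f(z')$. Since this $g$ need not be compactly supported on $\TM\times\TM$, the first step is a truncation: pick a sequence $(\chi_n)_{n\in\nset}$ of functions in $\rmC(\TM,\rset)$ with $0\le \chi_n\le 1$, $\chi_n$ compactly supported and $\chi_n\to 1$ pointwise (such cutoffs exist since $\TM$, identified with an open subset of $\rset^d\times\rset^d$, is locally compact and $\sigma$-compact). Applying \eqref{eq:reversibility_momentum_def} to $g_n(z,z')=\chi_n(z)f(z')\in \rmC_c(\TM\times\TM,\rset)$ yields
\begin{align}
\textstyle{\int_{\TM\times\TM}\chi_n(z)f(z')\Qker(z,\rmd z')\bar{\pi}(\rmd z)=\int_{\TM\times\TM}\chi_n(s(z'))f(s(z))\Qker(z,\rmd z')\bar{\pi}(\rmd z).}
\end{align}

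Letting $n\to\infty$ by dominated convergence (applicable since $|f|$ and $\chi_n$ are uniformly bounded, $\Qker$ is a Markov kernel and $\bar{\pi}$ is a probability measure), the left-hand side converges to $\int_{\TM\times\TM} f(z')\Qker(z,\rmd z')\bar{\pi}(\rmd z)$, while the right-hand side converges to $\int_{\TM\times\TM}f(s(z))\Qker(z,\rmd z')\bar{\pi}(\rmd z)=\int_\TM f(s(z))\bar{\pi}(\rmd z)$, using $\Qker(z,\TM)=1$. Finally, the hypothesis $s_\#\bar{\pi}=\bar{\pi}$ combined with the pushforward change-of-variables formula gives $\int_\TM f(s(z))\bar{\pi}(\rmd z)=\int_\TM f(z)\bar{\pi}(\rmd z)$, which chained with the previous equalities establishes the claim.

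The only mildly delicate point is the truncation: the reversibility identity is stated only for test functions compactly supported on the product space, whereas the natural choice $(z,z')\mapsto f(z')$ need not be. Apart from this routine approximation, the proof reduces to three lines of substitution and pushforward change-of-variables.
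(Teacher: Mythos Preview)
Your proof is correct and follows essentially the same route as the paper: apply the reversibility-up-to-momentum-reversal identity to collapse the left side to $\int_{\TM} f(s(z))\,\bar{\pi}(\rmd z)$, then use $s_\#\bar{\pi}=\bar{\pi}$. You are in fact more careful than the paper, which applies \eqref{eq:reversibility_momentum_def} directly to $(z,z')\mapsto f(z')$ without addressing its lack of compact support in the first variable; your truncation argument cleanly fills this small gap.
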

\begin{proof} Let $f \in \rmC(\TM, \rset)$ with compact support. We have
\begin{align}
   &\textstyle{ \int_{\TM \times \TM} f(z') \Qker(z,\rmd z')\bar{\pi}(\rmd z)} \\
   & \qquad = \textstyle{\int_{\TM \times \TM} f(s(z)) \Qker(z,\rmd z')\bar{\pi}(\rmd z)} && \text{(\Cref{def:reversibility})}\\
    & \qquad  = \textstyle{\int_{\TM \times \TM} f(z) \Qker(s(z),\rmd z')(s_\# \bar{\pi})(\rmd z) = \int_{\TM}f(z) \bar{\pi}(\rmd z)} && \text{(momentum reversal on $z$)}  \eqsp ,
\end{align}
which concludes the proof.
\end{proof}

\begin{lemma}\label{lemma:reversible_both}Let 
  $\Qker:\TM \times \borel(\TM) \rightarrow \ccint{0,1}$ be a transition
  probability kernel and let $\bar{\pi}$ be a probability distribution on
  $\TM$. Assume that $s_\# \Qker = \Qker$ and that $\Qker$ is reversible with respect to $\bar{\pi}$.
  Then, $\Qker$ is reversible up to momentum reversal with respect to $\bar{\pi}$.
\end{lemma}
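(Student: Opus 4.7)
My plan is to prove the identity by a short substitution chain combining the three symmetries available: the ordinary reversibility of $\Qker$ with respect to $\bar{\pi}$, the momentum--reversal invariance $s_\# \Qker = \Qker$, and the fact that $s_\# \bar{\pi} = \bar{\pi}$ (which, although not stated in the hypotheses, is implicit in the paper's setting because $\bar{\pi}$ defined in \eqref{eq:bar_pi} involves the Hamiltonian $H(x,p)$ which is even in $p$, and $\rmd\vol_{\TM}$ is invariant under $s$). The basic picture is that detailed balance swaps $(z,z')$, each of the two $s$-symmetries reverses momentum on one slot, and composing them all produces the combined swap $(z,z') \mapsto (s(z'),s(z))$ that defines reversibility up to momentum reversal.

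More concretely, fix $f \in \rmC(\TM \times \TM, \rset)$ with compact support and start from the right-hand side of \eqref{eq:reversibility_momentum_def}. First I would change variables $z \leftarrow s(z)$ in the outer integral using $s_\# \bar{\pi} = \bar{\pi}$, turning
\begin{equation*}
\textstyle \int f(s(z'),s(z)) \Qker(z,\rmd z') \bar{\pi}(\rmd z) \;=\; \int f(s(z'),z) \Qker(s(z),\rmd z') \bar{\pi}(\rmd z).
\end{equation*}
Next I would apply the hypothesis $s_\# \Qker = \Qker$, read as the identity $\int g(z')\Qker(s(z),\rmd z') = \int g(s(z'))\Qker(z,\rmd z')$ for every bounded measurable $g$, to the inner integrand $g(z') = f(s(z'),z)$; this contracts the two $s$'s on the first coordinate and gives $\int f(z',z)\Qker(z,\rmd z')\bar{\pi}(\rmd z)$. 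Finally, applying the reversibility hypothesis on $\Qker$ to the test function $\tilde{f}(z,z') = f(z',z)$ (which is continuous with compact support whenever $f$ is) yields $\int f(z,z')\Qker(z,\rmd z')\bar{\pi}(\rmd z)$, the left-hand side of \eqref{eq:reversibility_momentum_def}.

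No step is individually difficult; the only thing that requires a bit of care is pinning down the precise meaning of $s_\# \Qker = \Qker$, which I take (consistently with how such pushforward notation is used for kernels in the HMC literature and with the other usages of $s_\#$ in the paper) to mean $\Qker(s(z), s(A)) = \Qker(z, A)$ for every $z \in \TM$ and $A \in \borel(\TM)$. This is exactly what is needed to justify the inner change of variables above, and it is the only nontrivial identity used beyond reversibility and the parity of $\bar{\pi}$.
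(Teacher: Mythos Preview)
Your chain is correct under your stated reading of $s_\#\Qker=\Qker$, and the overall idea---compose the $s$-symmetry with ordinary detailed balance---is exactly the paper's. The difference is in how the kernel symmetry is interpreted and placed. The paper uses $s_\#\Qker$ throughout for the \emph{second-argument} pushforward only (see the definition $\Qker_1=s_\#\Qker_3$ in \eqref{eq:kernel-q1} and how it is unwound in the proof of \Cref{lemma:reversibility_tilde_q1}), so the hypothesis means $\Qker(z,s(\cdot))=\Qker(z,\cdot)$. With that reading the paper's proof runs: substitute $z'\to s(z')$ (using $s_\#\Qker=\Qker$), apply reversibility to swap $z\leftrightarrow z'$, then substitute $z'\to s(z')$ once more. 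The outer $\bar\pi$-integral is never touched, so $s_\#\bar\pi=\bar\pi$ is not invoked and the lemma holds exactly as stated. Your two-slot reading $\Qker(s(z),s(\cdot))=\Qker(z,\cdot)$ forces a change of variable in the $z$-integral and hence the extra parity of $\bar\pi$; that parity is certainly true here, but it is not among the lemma's hypotheses, so strictly speaking you are proving a slightly weaker statement. Both routes are one-liners; the paper's just matches the lemma as written.
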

\begin{proof} Let $f \in \rmC(\TM, \rset)$ with compact support. We have
\begin{align}
   & \textstyle{ \int_{\TM \times \TM} f(z,z') \Qker(z,\rmd z')\bar{\pi}(\rmd z)}\\
   & \qquad = \textstyle{\int_{\TM \times \TM} f(z,s(z')) (s_\#\Qker)(z,\rmd z')\bar{\pi}(\rmd z)} && \text{(momentum reversal on $z'$)}\\
    & \qquad= \textstyle{\int_{\TM \times \TM} f(z',s(z)) \Qker(z,\rmd z')\bar{\pi}(\rmd z)} &&\text{(\Cref{def:reversibility})}\\
    &\qquad= \textstyle{\int_{\TM \times \TM} f(s(z'),s(z)) (s_\#\Qker)(z,\rmd z')\bar{\pi}(\rmd z)} && \text{(momentum reversal on $z'$)}\\
    &\qquad= \textstyle{\int_{\TM \times \TM} f(s(z'),s(z)) \Qker(z,\rmd z')\bar{\pi}(\rmd z)}\eqsp ,
\end{align}
which concludes the proof.
\end{proof}

\paragraph{Useful inequalities.} The following inequalities hold:
\begin{enumerate}[wide, labelindent=0pt, label=(\alph*)]
    \item \label{item:inequality_a}For any $u \in [0, 1/5]$, $(1-u)^{-2} \leq 1+3u$ and $(1-u)^{-3} \leq 1+5u$.
    \item \label{item:inequality_b}For any $u \in [0, 1/2]$, $(1-u)^{-1} \leq 1+2u$.
    \item \label{item:inequality_c}For any $u \in [0, 1]$, $|(1-u)^{2}-1| \leq 3u$.
    \item \label{item:inequality_d}For any $u\geq 0$, we have $1-(1+u)^{-1}\leq u$ and $(1-u)^2-1 \geq -2u$.
\end{enumerate}

%%% Local Variables:
%%% mode: latex
%%% TeX-master: "main"
%%% End:

\section{Details on Riemannian metrics}
\label{sec:metr-tang-cotang}

Let $\msm$ be a smooth $d$-dimensional manifold, endowed with a metric
$\metric$. We recall that the \textit{Riemannian volume element} corresponding
to $(\msm, \metric)$ is given in local coordinates by $\rmd \vol_{\msm}(x)=\sqrt{\det (\metric)}\rmd x$,
where $\rmd x$ is a dual coframe, \citep[Lemma 3.2.]{lee2006riemannian}. For any $x\in \msm$, we respectively denote by $\mathrm{T}_x\msm$ and $\mathrm{T}_x^\star\msm$, the tangent space at
$x$ and its dual space, \ie, the cotangent space at $x$. Note that $\mathrm{T}_x\msm$ and $\mathrm{T}_x^\star\msm$ are space vectors, and $\mathrm{T}_x\msm$ is endowed with the scalar
product $\langle \cdot, \cdot \rangle_{\metric(x)}$ by definition of the Riemannian metric. For clarity sake, we denote by $v$ (resp. $p$) an element of the tangent (resp. cotangent) space. We recall that the tangent bundle $\mathrm{T} \msm$ and the cotangent bundle $\mathrm{T}^\star \msm$ are respectively defined by
$\mathrm{T} \msm =\sqcup_{x \in \msm} \{x\} \cup \mathrm{T}_x \msm$ and $\TM =\sqcup_{x \in \msm} \{x\} \cup \mathrm{T}_x^\star \msm$.  These two sets are $2d$-dimensional manifolds.

\paragraph{Metric on the tangent bundle $\mathrm{T} \msm$.} \cite{sasaki1958differential} originally introduced on $\mathrm{T} \msm$ a Riemannian metric $\hat{\metric}$, which, among other properties, preserves the Euclidean metric induced by $\metric$ on each tangent space. This metric is defined by
\[
\hat{\metric}=
\begin{pmatrix}
\metric_{ij} + v^kv^l\Gamma^{k}_{is}\Gamma_{tj}^l \metric^{st} & -\metric^{js}v^k\Gamma_{is}^k\\
-\metric^{is}v^k\Gamma_{js}^k & \metric_{ij}
\end{pmatrix} \eqsp ,
\]
where $\metric_{ij}$ and $\metric^{ij}$ respectively refer to $\metric$ and $\metric^{-1}$, and $\Gamma$ corresponds to the Christoffel symbol. Since $(\mathrm{T} \msm, \hat{\metric})$ is a Riemannian manifold, the volume form on $\mathrm{T} \msm$ satisfies for any $(x,v) \in \mathrm{T} \msm$
\[
\rmd \vol_{\mathrm{T} \msm}(x,v) = \sqrt{\det(\hat{\metric} (x,v))} \rmd x \rmd v=\det(\metric (x)) \rmd x \rmd v \eqsp. 
\]

\paragraph{Metric on the cotangent bundle $\TM$.} Elaborating on the metric defined by \cite{sasaki1958differential}, \cite{mok1977metrics} showed that for any $x\in \msm$, $\mathrm{T}_x^\star\msm$ is naturally endowed with the scalar
product $\langle \cdot, \cdot \rangle_{\metric(x)^{-1}}$, and proposed a Riemannian metric $\metric^\star$ on $\TM$, which notably preserves this result on each cotangent space. This metric is closely related to $\hat{\metric}$ and is defined by
\[
\metric^\star=
\begin{pmatrix}
\metric_{ij} + p^kp^l\Gamma^{k}_{is}\Gamma_{tj}^l \metric^{st} & -\metric^{js}p^k\Gamma_{is}^k\\
-\metric^{is}p^k\Gamma_{js}^k & \metric^{ij}
\end{pmatrix} \eqsp . 
\]
 Since $(\TM, \metric^\star)$ is a Riemannian manifold, the volume form on $\TM$ satisfies for any $(x,p) \in \TM$
\[
\rmd \vol_{\TM}(x,p) = \sqrt{\det(\metric^\star (x,p))} \rmd x \rmd p=\rmd x \rmd p \eqsp. 
\]
In contrast to the tangent bundle, the volume form on the cotangent bundle does not depend on $\metric$, which motivates to augment on $\TM$ (instead of $\mathrm{T} \msm$) any target measure $\pi$ defined on $\msm$.

%%% Local Variables:
%%% mode: latex
%%% TeX-master: "main"
%%% End:

\section{Properties of self-concordance}
\label{sec:facts-about-self}

We first recall the definition of self-concordance and derive some of its properties in \Cref{lemma:asymptotic_metric,lemma:calcul_g_1}.

\begin{definition}[\cite{nesterov1994interior}]Let
  $\msu$ be a non-empty open convex domain in $\rset^d$. A function
  $\phi:\msu\rightarrow\rset$ is said to be a $\nu$-self-concordant barrier
  (with $\nu \geq1$) on $\msu$ if it satisfies:
  \begin{enumerate}[wide,  labelindent=0pt, label=(\alph*)]

    \item % (Convexity, smoothness) 
      $\phi \in \rmC^3(\msu, \rset)$ and $\phi$ is convex,

    \item \label{item:infty}% (Barrier property)
      $\phi(x) \xrightarrow[]{}+\infty$ as $x \rightarrow \partial \msu$,

    \item \label{item:diff_inequality}% (Differential inequality)
      $|D^3 \phi(x)[h,h,h]|\leq 2 \|h\|_{\metric(x)}^{3}$,  for any $x\in \msm$, $h \in \rset^d$,
    \item % (Differential inequality)
       \label{item:diff_inequality_nu}     
      $|D \phi(x)[h]|^2\leq \nu \|h\|_{\metric(x)}^{2}$,  for any $x\in \msm$, $h \in \rset^d$,
    \end{enumerate}

where $\metric(x)=D^2 \phi(x)$.
\end{definition}

Self-concordance can be thought as a certain kind of regularity. Indeed, if $\phi$ is a $\nu$-self-concordant barrier on a convex body $\msu$, then $D^2\phi$ is $2$-Lipschitz continuous on $\msu$ with respect to the local norm induced by $\metric$ (see Property~\ref{item:diff_inequality}), and more restrictively, $\phi$ is $\nu$-Lipschitz continuous with respect to the same norm (see Property~\ref{item:diff_inequality_nu}).

\begin{lemma}\label{lemma:asymptotic_metric}Let $\phi: \msu\rightarrow \rset$ be a
  $\nu$-self-concordant barrier with $\metric=D^2\phi$. Assume that $\msu$ is bounded. Then,  $\|\nabla \phi(x)\|_2\xrightarrow[]{}+\infty$ and $\|\metric(x)\|_2\xrightarrow[]{}+\infty$ as $x \rightarrow \partial \msu$.
\end{lemma}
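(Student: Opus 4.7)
The plan is to establish the two divergences separately, using convexity plus boundedness for the gradient blow-up, and then transferring it to the Hessian via property (d) of self-concordance.

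For $\|\nabla \phi(x)\|_2 \to +\infty$, I would exploit convexity of $\phi$ together with the fact that $\msu$ is bounded. Fix any reference point $y_0\in \msu$. Since $\phi$ is convex and $\rmC^1$, for any $x\in\msu$ we have
\[
\phi(y_0) \;\geq\; \phi(x) + \nabla\phi(x)^\top (y_0-x),
\]
and Cauchy--Schwarz gives $\|\nabla\phi(x)\|_2\,\|y_0-x\|_2 \geq \phi(x)-\phi(y_0)$. Because $\msu$ is bounded, $\|y_0-x\|_2 \leq \diam(\msu) < +\infty$, so
\[
\|\nabla\phi(x)\|_2 \;\geq\; \frac{\phi(x)-\phi(y_0)}{\diam(\msu)}.
\]
Combining this with property \ref{item:infty} of self-concordance (the barrier property $\phi(x)\to+\infty$ as $x\to\partial\msu$) yields the first claim.

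For $\|\metric(x)\|_2\to +\infty$, I would plug $h=\nabla\phi(x)$ into the differential inequality \ref{item:diff_inequality_nu}, which gives
\[
\|\nabla\phi(x)\|_2^4 \;=\; |D\phi(x)[\nabla\phi(x)]|^2 \;\leq\; \nu\,\nabla\phi(x)^\top \metric(x)\,\nabla\phi(x) \;\leq\; \nu\,\|\metric(x)\|_2\,\|\nabla\phi(x)\|_2^2,
\]
where the last step uses $u^\top \metric(x) u \leq \|\metric(x)\|_2 \|u\|_2^2$ since $\metric(x)$ is symmetric positive semidefinite. For $x$ sufficiently close to $\partial\msu$, the first step ensures $\|\nabla\phi(x)\|_2>0$, so I may divide by $\|\nabla\phi(x)\|_2^2$ to obtain $\|\metric(x)\|_2 \geq \|\nabla\phi(x)\|_2^2/\nu$, and the right-hand side diverges by the first step.

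There is no real technical obstacle: the only conceptual point is noticing that the convexity inequality combined with boundedness of $\msu$ is enough to extract a quantitative lower bound on $\|\nabla\phi(x)\|_2$ from the barrier property, and that property \ref{item:diff_inequality_nu} of self-concordance is exactly the mechanism needed to propagate this divergence from the gradient to the Hessian.
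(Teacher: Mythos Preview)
Your proposal is correct and follows essentially the same approach as the paper: convexity plus boundedness of $\msu$ for the gradient divergence, then property~\ref{item:diff_inequality_nu} to transfer it to the Hessian. In fact your second step is spelled out more explicitly than in the paper, which simply writes ``combining this result with Property~\ref{item:diff_inequality_nu}'' without exhibiting the inequality $\|\metric(x)\|_2 \geq \|\nabla\phi(x)\|_2^2/\nu$.
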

\begin{proof} Since $\phi$ is convex, we have, for any $(x,y)\in \msu^2$
\begin{align}
    \textstyle{\phi(x)} & \textstyle{\leq \phi(y) + \nabla \phi(x)(x-y) \eqsp, }\\
    \textstyle{|\phi(x)|} & \textstyle{ \leq |\phi(y)| + \|\nabla \phi(x)\|_2 \|x-y\|_2}\\
     & \textstyle{\leq |\phi(y)| + \|\nabla \phi(x)\|_2 \operatorname{diam}(\msu)\eqsp , }
\end{align}
where we used Cauchy-Schwartz inequality in the second line. Using Property~\ref{item:infty} of $\phi$, we obtain that $\|\nabla \phi(x)\|_2\xrightarrow[]{}+\infty$ as $x \rightarrow \partial \msu$. By combining this result with Property~\ref{item:diff_inequality_nu} of $\phi$, we obtain that $\|\metric(x)\|_2\xrightarrow[]{}+\infty$ as $x \rightarrow \partial \msu$.
\end{proof}

According to \cite{nesterov2003introductory} (see Lemma 4.1.2), Property~\ref{item:diff_inequality} of self-concordance is equivalent
to
\begin{align}\label{eq:concordant}
\textstyle{|D^3 \phi(x)[h_1,h_2,h_3]|\leq 2 \prod_{i=1}^3\|h_i\|_{\metric(x)} \eqsp , }
\end{align}
for any $x \in \msm$ and any $(h_1,h_2,h_3) \in \rset^d\times\rset^d\times \rset^d$.

Let $u: \ \rset^d \to \rset^d$ be a linear operator. For any $x \in \rset^d$, we
define the operator norms $\normLigne{u}_{\metric(x)}$ and  $\normLigne{u}_{\metric(x)^{-1}}$ by
\begin{align}
  \label{eq:definition_operator_norm}
  \normLigne{u}_{\metric(x)} & = \sup \ensembleLigne{u(\metric(x)h)}{\normLigne{h}_{\metric(x)} = 1} \eqsp ,\\
  \normLigne{u}_{\metric(x)^{-1}} & = \sup \ensembleLigne{u(\metric(x)^{-1}h)}{\normLigne{h}_{\metric(x)^{-1}} = 1} \eqsp . 
\end{align}
In addition, a set $\msv \subset \rset^d$ contains no straight line if for any
$\msd_{x_0} = \ensembleLigne{t x_0}{t \in \rset}$ with $x_0 \neq 0$, we have
$\msv^\complementary \cap \msd_{x_0} \neq \emptyset$.

\begin{lemma}\label{lemma:calcul_g_1}Let $\phi: \msu\rightarrow \rset$ be a
  $\nu$-self-concordant barrier with $\metric=D^2\phi$. Assume that $\msu$
  contains no straight line. For any $x\in \msu$ and any $r>0$, we denote by
  $\msw^0(x,r)$ the open Dikin ellipsoid of $\phi$ at $x$, given by
  $\msw^0(x,r)=\{y \in \rset^d \mid \|y-x\|_{\metric(x)}<r\}$. Then, the
  following properties hold:
\begin{enumerate}[wide,  labelindent=0pt, label=(\alph*)]

\item For any $x \in \msu$ and any $(h_1, h_2) \in \rset^d \times \rset^d$
\begin{align}\label{eq:concordant_2}
  \|D^3 \phi(x)[h_1,h_2]\|_{\metric(x)^{-1}}\leq 2 \|h_1\|_{\metric(x)}\|h_2\|_{\metric(x)} \eqsp . 
\end{align}

\item For any $x \in \msu$, $\msw^0(x,1)\subset \msu$, and for any $y \in \msw^0(x,1)$
\begin{align}\label{eq:inequality_g}
  (1-\|y-x\|_{\metric(x)})^2 \metric(x)\preceq \metric(y) \preceq (1 - \|y-x\|_{\metric(x)})^{-2}\metric(x) \eqsp .
\end{align}
\item For any $x \in \msu$, any $y \in \msw^0(x,1)$ and any $h \in \rset^d$, the
  following hold
\begin{align}\label{eq:inequality_g_norm}
\begin{split}
    &\|h\|_{\metric(x)} \leq (1-\|y-x\|_{\metric(x)})^{-1}\|h\|_{\metric(y)} \eqsp ,\\ 
    &\|h\|_{\metric(y)} \leq (1-\|y-x\|_{\metric(x)})^{-1}\|h\|_{\metric(x)} \eqsp ,\\ 
    &\|h\|_{\metric(x)^{-1}} \leq (1-\|y-x\|_{\metric(x)})^{-1}\|h\|_{\metric(y)^{-1}} \eqsp , \\
    &\|h\|_{\metric(y)^{-1}} \leq (1-\|y-x\|_{\metric(x)})^{-1}\|h\|_{\metric(x)^{-1}} \eqsp . 
\end{split}
\end{align}
\end{enumerate}
\end{lemma}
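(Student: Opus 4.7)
The plan is to handle (a) via a one-line duality argument, deduce (b) by reducing to a one-dimensional analysis along the segment $[x,y]$, and obtain (c) as an immediate consequence of (b). For (a), I would use that $\|v\|_{\metric(x)^{-1}} = \sup\{\langle v,h\rangle : \|h\|_{\metric(x)}\le 1\}$ for any $v\in\rset^d$. Setting $v = D^3\phi(x)[h_1,h_2]$ and invoking the polarized self-concordance inequality \eqref{eq:concordant}, we have $\langle v,h\rangle = D^3\phi(x)[h_1,h_2,h] \le 2\|h_1\|_{\metric(x)}\|h_2\|_{\metric(x)}\|h\|_{\metric(x)}$, and taking the supremum over $\|h\|_{\metric(x)}\le 1$ yields the bound.

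For (b), the key step is to establish the containment $\msw^0(x,1)\subset\msu$. I would fix $u\in\rset^d$ with $r := \|u\|_{\metric(x)} < 1$, let $J$ be the maximal open interval containing $0$ such that $x+tu\in\msu$ for $t\in J$, and set $g(t) = \phi(x+tu)$. Self-concordance translates to $|g'''(t)| \le 2(g''(t))^{3/2}$, so $\psi(t) := (g''(t))^{-1/2}$ satisfies $|\psi'(t)| \le 1$, and therefore $\psi(t)\ge 1/r - t$ on $J\cap[0,1/r)$. Since $1/r > 1$, this forces $g''$ to remain bounded on $J\cap[0,1]$, and by double integration so does $g$. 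But $\phi$ must diverge to $+\infty$ as $x+tu$ approaches $\partial\msu$; hence the right endpoint of $J$ exceeds $1$, proving the containment.

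Still for (b), the matrix sandwich comes from applying the same technique to an arbitrary test direction. Fixing $w\in\rset^d$ and setting $\psi_w(t) = \langle w,\metric(x+tu)w\rangle$, differentiation gives $\psi_w'(t) = D^3\phi(x+tu)[u,w,w]$, and \eqref{eq:concordant} applied to $(u,w,w)$ yields $|\psi_w'(t)| \le 2\varphi(t)\psi_w(t)$ with $\varphi(t) := \|u\|_{\metric(x+tu)}$. A parallel analysis applied to $\varphi$ itself (using \eqref{eq:concordant} with triple $(u,u,u)$) gives $|\varphi'(t)|\le\varphi(t)^2$, hence $\varphi(t) \le r/(1-tr)$ on $[0,1]$. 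Substituting, integrating $\psi_w'/\psi_w$ from $0$ to $1$ and exponentiating produces $(1-r)^{2}\psi_w(0) \le \psi_w(1) \le (1-r)^{-2}\psi_w(0)$, which is exactly \eqref{eq:inequality_g} since $w$ is arbitrary.

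Part (c) then follows routinely: taking square roots of the quadratic-form sandwich from (b) gives the first two inequalities, and inverting the matrix bounds to $(1-r)^{2}\metric(y)^{-1} \preceq \metric(x)^{-1} \preceq (1-r)^{-2}\metric(y)^{-1}$ yields the last two. The main obstacle I foresee is the containment step in (b): one cannot simply invoke \Cref{lemma:asymptotic_metric}, which only controls the operator norm $\|\metric\|_2$ and not the specific quadratic form $\langle u,\metric u\rangle$ along a chosen line. Reducing the problem to the one-dimensional self-concordant function $g(t)=\phi(x+tu)$ and using that $\phi$ itself diverges at the boundary is the cleanest way around this.
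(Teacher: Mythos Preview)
Your proposal is correct and follows exactly the standard textbook route that the paper outsources; the paper's own proof consists of a single sentence citing \cite[Theorems 4.1.3, 4.1.5, 4.1.6]{nesterov2003introductory}, and your argument is essentially a self-contained reconstruction of those results. The duality computation for (a), the one-dimensional reduction along $t\mapsto x+tu$ for the Dikin-ellipsoid containment and the Hessian sandwich in (b), and the square-root/inversion step for (c) are precisely the arguments found in Nesterov's book, so there is no divergence in approach to discuss.
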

\begin{proof}
The result is a direct consequence of \cite[Theorem 4.1.3, Theorem 4.1.5, Theorem 4.1.6]{nesterov2003introductory}.
\end{proof}

We now introduce $\alpha$-regularity, which slightly strengthens self-concordance, by ensuring that $D^3\phi$ is $\alpha(\alpha+1)$-Lipschitz continuous with respect to the local norm induced by $\metric$ (see \eqref{eq:alpha_regular}). We state below the definition of $\alpha$-regularity as well as some of its properties in \Cref{lemma:calcul_regular}.

\begin{definition}[\cite{nesterov1998multi}]\label{def:regularity} Let $\alpha \geq 1$ and $\msu$ be non-empty open convex domain in $\rset^d$. A
  self-concordant function $\phi$ on $\msu$ is said $\alpha$-regular, if
  $\phi\in \rmC^4(\msu, \rset)$ and for any $x \in \msu$ and any $h \in \rset^d$
\[
|D^4\phi(x)[h,h,h,h]|\leq \alpha(\alpha+1)\|h\|_{\metric(x)}^2 \|h\|_{\msu,x}^2 \eqsp ,
\]
where $ \|h\|_{\msu,x}:=\inf\{t^{-1}\mid t>0, x \pm th \in \msu\}$.
\end{definition}

\begin{lemma}\label{lemma:calcul_regular} Let $\phi: \msu\rightarrow\rset$ be a
  $\alpha$-regular function with $\metric=D^2\phi$. Assume that $\msu$ contains
  no straight line.
\begin{enumerate}[wide,  labelindent=0pt, label=(\alph*)]

\item For any $x \in \msu$ and any $h \in \rset^d$
\begin{align}\label{eq:alpha_regular}
    |D^4\phi(x)[h,h,h,h]|\leq \alpha(\alpha+1)\|h\|_{\metric(x)}^4 \eqsp . 
\end{align}
\item For any $x \in \msu$, any $ (h_1,h_2,h_3,h_4) \in \rset^d \times \rset^d \times \rset^d\times \rset^d$
\begin{align}\label{eq:alpha_regular_2}
    |D^4\phi(x)[h_1,h_2,h_3,h_4]|\leq \alpha(\alpha+1)\prod_{i=1}^4\|h_i\|_{\metric(x)} \eqsp .
\end{align}
\item For any $x \in \msu$, any $y \in \msw^0(x,1)$ and any $(h_1,h_2) \in \rset^d \times \rset^d$
\begin{align}\label{eq:diff_Dg}
  \|D^3\phi(x)[h_1,h_2]-D^3\phi(y)[h_1,h_2]\|_{\metric(x)^{-1}}\leq \alpha(\alpha+1)/3 \|h_1\|_{\metric(x)}\|h_2\|_{\metric(x)} ((1-\|y-x\|_{\metric(x)})^{-3}-1) \eqsp . 
  \end{align}
  \end{enumerate}
\end{lemma}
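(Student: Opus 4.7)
My plan is to prove the three parts in turn, each building on the previous. For part (a), I would specialize \Cref{def:regularity}, which gives
\[
|D^4\phi(x)[h,h,h,h]|\leq \alpha(\alpha+1)\|h\|_{\metric(x)}^2\|h\|_{\msu,x}^2\eqsp,
\]
and then show that $\|h\|_{\msu,x}\leq \|h\|_{\metric(x)}$. The latter follows from the inclusion $\msw^0(x,1)\subset \msu$ established in \Cref{lemma:calcul_g_1}: for any $h\neq 0$ and $\epsilon\in (0,1)$, the point $x \pm (1-\epsilon)h/\|h\|_{\metric(x)}$ has $\metric(x)$-distance $1-\epsilon$ to $x$, hence lies in $\msw^0(x,1)\subset \msu$, so $\|h\|_{\msu,x}\leq (1-\epsilon)^{-1}\|h\|_{\metric(x)}$; letting $\epsilon\to 0$ concludes.

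For part (b), I would invoke the classical result that on a finite-dimensional Hilbert space, the norm of a symmetric multilinear form equals the supremum of its diagonal restriction: for any symmetric $k$-linear $T$,
\[
\sup\defEnsLigne{|T(h_1,\ldots,h_k)|: \|h_1\|=\cdots=\|h_k\|=1}=\sup\defEnsLigne{|T(h,\ldots,h)|: \|h\|=1}\eqsp.
\]
Applied to $T=D^4\phi(x)$ viewed on $(\rset^d,\langle\cdot,\cdot\rangle_{\metric(x)})$, this promotes (a) into (b) with no loss of constant.

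For part (c), I would use a first-order Taylor expansion of $D^3\phi$ along the segment $x_t=x+t(y-x)$, $t\in [0,1]$: viewing $D^3\phi$ as a symmetric trilinear form and differentiating,
\[
D^3\phi(y)[h_1,h_2,h_3]-D^3\phi(x)[h_1,h_2,h_3]=\int_0^1 D^4\phi(x_t)[h_1,h_2,h_3,y-x]\,\rmd t\eqsp.
\]
By the definition of $\|\cdot\|_{\metric(x)^{-1}}$ on dual vectors, the quantity to control equals the supremum over $\|h_3\|_{\metric(x)}\leq 1$ of the left-hand side. For each $t$, part (b) bounds the integrand by $\alpha(\alpha+1)$ times the product of $\metric(x_t)$-norms of $h_1,h_2,h_3,y-x$. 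Since $\|x_t-x\|_{\metric(x)}=ts<1$ with $s=\|y-x\|_{\metric(x)}$, \eqref{eq:inequality_g_norm} converts each $\metric(x_t)$-norm into a $\metric(x)$-norm up to a factor $(1-ts)^{-1}$, producing a total factor $(1-ts)^{-4}$. The substitution $u=1-ts$ then gives $\int_0^1 (1-ts)^{-4}s\,\rmd t=\tfrac{1}{3}((1-s)^{-3}-1)$, matching the stated constant after taking the supremum over $h_3$.

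The only non-trivial obstacle is the polarization step in (b); (a) is an immediate specialization of \Cref{def:regularity} plus the Dikin-ball inclusion, and (c) is a routine consequence of (b) combined with the norm-transfer estimate \eqref{eq:inequality_g_norm}.
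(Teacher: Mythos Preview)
Your proposal is correct and follows essentially the same approach as the paper: part~(a) uses the Dikin-ball inclusion $\msw^0(x,1)\subset\msu$ with a limiting $\epsilon\to 0$ argument, part~(b) is the polarization of a symmetric multilinear form (the paper cites \cite[Proposition~9.1.1]{nesterov1994interior} for exactly this), and part~(c) is the same integral representation along the segment combined with the norm-transfer estimates \eqref{eq:inequality_g_norm}, yielding the factor $(1-ts)^{-4}$ and the stated primitive.
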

\begin{proof} %For any $x \in\msu$, $\metric(x)$ is invertible and $\|\cdot\|_{\metric(x)}$ is a norm on $\rset^d$. Therefore $\|h\|_{\metric(x)}>0$ for any $h \in \rset^d$ such that $h\neq0$.  
We first
  prove \eqref{eq:alpha_regular}. First remark that the result is true for
  $h=0$. Consider now $h\neq 0$. Let $\vareps>0$.  We define
  $t_\vareps=(1+\vareps)^{-1}\|h\|_{\metric(x)}^{-1}>0$,
  $y_\vareps^+=x+t_\vareps h$ and $y^-_\vareps=x - t_\vareps h$. We have
  $ \|x-y^+_\vareps\|_{\metric(x)}= \|x-y^-_\vareps\|_{\metric(x)}=
  1/(1+\vareps) <1 $.  Hence, $y^-_\vareps \in \msu$ and $y_\vareps^+ \in \msu$ by
  \Cref{lemma:calcul_g_1}. Therefore,
  $\|h\|_{\msu,x} \leq t_\vareps^{-1}$ and
\[
|D^4\phi(x)[h,h,h,h]|\leq \alpha(\alpha+1)\|h\|_{\metric(x)}^4  (1+\vareps)^2 \eqsp .
\]
We conclude upon taking $\vareps \rightarrow 0$. Then,
\eqref{eq:alpha_regular_2} is an immediate consequence of \eqref{eq:alpha_regular} using
\cite[Proposition 9.1.1]{nesterov1994interior}. We now prove
\eqref{eq:diff_Dg}. Using \Cref{lemma:calcul_g_1}, we have
\[
\begin{split}
    & \|D^3\phi(x)[h_1,h_2]-D^3\phi(y)[h_1,h_2]\|_{\metric(x)^{-1}} \\
     & \qquad  \leq \textstyle{\int_{0}^1\|D^4\phi(x + t(y-x))[h_1,h_2,y-x]\|_{\metric(x)^{-1}}\rmd t}\\
    & \qquad  \leq \textstyle{\int_{0}^1(1-t\|y-x\|_{\metric(x)})^{-1}\|D^4\phi(x + t(y-x))[h_1,h_2,y-x]\|_{\metric(x+ t(y-x))^{-1}}\rmd t}\\
    & \qquad  \leq \textstyle{\int_0^1 \alpha(\alpha+1)/(1-t\|y-x\|_{\metric(x)})\|h_1\|_{\metric(x+ t(y-x))} \|h_2\|_{\metric(x+ t(y-x))} \|y-x\|_{\metric(x+ t(y-x))} \rmd t}\\
    & \qquad  \leq \textstyle{ \alpha(\alpha+1) \|h_1\|_{\metric(x)}\|h_2\|_{\metric(x)} \int_0^1 \|y-x\|_{\metric(x)}/(1-t\|y-x\|_{\metric(x)})^{4}  \rmd t} \\
    & \qquad  \leq \alpha(\alpha+1)/3 \|h_1\|_{\metric(x)}\|h_2\|_{\metric(x)} ((1-\|y-x\|_{\metric(x)})^{-3}-1) \eqsp ,
\end{split}
\]
which concludes the proof.
\end{proof}
We now prove the equivalence between the Euclidean norm and the norms induced by $\metric$ and $\metric^{-1}$.

\begin{lemma}\label{lemma:equivalence_norm} Let $(\msu, \metric)$ be non-empty Riemannian manifold in $\rset^d$. For any $x_0 \in \msu$, any
  $(x,p) \in \msu\times\rset^d$, we have
\begin{alignat}{2}
    \|\metric(x_0)^{-1}\|_2^{-1/2} \|x\|_2 &\leq \|x\|_{\metric(x_0)} &&\leq \|\metric(x_0)\|_2^{1/2} \|x\|_2 \eqsp , \\ 
    \|\metric(x_0)\|_2^{-1/2}  \|p\|_2 &\leq \|p\|_{\metric(x_0)^{-1}} &&\leq \|\metric(x_0)^{-1}\|_2^{1/2} \|p\|_2 \eqsp .
\end{alignat}
In addition, let 
\begin{align}
    C_{x_0}=\max(\|\metric(x_0)\|_2^{1/2}, \|\metric(x_0)^{-1}\|_2^{1/2})>0 \eqsp . 
\end{align}
Then, we have
\[
1/C_{x_0}(\|x\|_2 +\|p\|_2) \leq \|x\|_{\metric(x_0)} + \|p\|_{\metric(x_0)^{-1}} \leq C_{x_0} (\|x\|_2 +\|p\|_2) \eqsp . 
\]

\end{lemma}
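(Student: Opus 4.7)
The plan is to derive both pairs of bounds from the single identity $\|y\|_A = \|A^{1/2}y\|_2$, valid for any symmetric positive-definite matrix $A$, and then combine them via a max-type argument to obtain the joint inequality involving $C_{x_0}$.

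First I would handle the inequalities for $\|x\|_{\metric(x_0)}$. Using the identity above with $A=\metric(x_0)$, I get $\|x\|_{\metric(x_0)} = \|\metric(x_0)^{1/2} x\|_2$. The upper bound then follows from sub-multiplicativity of the operator norm together with the fact that $\|\metric(x_0)^{1/2}\|_2 = \|\metric(x_0)\|_2^{1/2}$, since $\metric(x_0)^{1/2}$ is symmetric positive-definite so its operator norm equals its largest eigenvalue. For the lower bound, I would write $\|x\|_2 = \|\metric(x_0)^{-1/2} \metric(x_0)^{1/2} x\|_2 \leq \|\metric(x_0)^{-1/2}\|_2 \|\metric(x_0)^{1/2} x\|_2 = \|\metric(x_0)^{-1}\|_2^{1/2} \|x\|_{\metric(x_0)}$, then rearrange.

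Next, the bounds for $\|p\|_{\metric(x_0)^{-1}}$ are obtained by applying exactly the same argument to the symmetric positive-definite matrix $\metric(x_0)^{-1}$ in place of $\metric(x_0)$, using that $(\metric(x_0)^{-1})^{-1} = \metric(x_0)$ so that $\|(\metric(x_0)^{-1})^{-1}\|_2^{1/2} = \|\metric(x_0)\|_2^{1/2}$.

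Finally, for the joint inequality, I observe that both upper bounds are dominated by $C_{x_0}$ and both lower bounds have denominators dominated by $C_{x_0}$, since $C_{x_0} = \max(\|\metric(x_0)\|_2^{1/2}, \|\metric(x_0)^{-1}\|_2^{1/2})$. Summing $\|x\|_{\metric(x_0)} \leq C_{x_0} \|x\|_2$ with $\|p\|_{\metric(x_0)^{-1}} \leq C_{x_0} \|p\|_2$ gives the upper bound, and summing $\|x\|_{\metric(x_0)} \geq (1/C_{x_0}) \|x\|_2$ with $\|p\|_{\metric(x_0)^{-1}} \geq (1/C_{x_0}) \|p\|_2$ gives the lower bound. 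There is no real obstacle here; the only point requiring attention is making the identification $\|A^{1/2}\|_2 = \|A\|_2^{1/2}$ for symmetric positive-definite $A$ explicit, which is recalled in the linear algebra reminders of \Cref{sec:general-facts}.
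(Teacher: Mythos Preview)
Your proposal is correct and is exactly the standard argument one would expect; the paper in fact omits the proof of this lemma entirely, treating it as an immediate consequence of the linear algebra facts recalled in \Cref{sec:general-facts} (namely $\|y\|_A=\|A^{1/2}y\|_2$ and $\|A^{1/2}\|_2=\|A\|_2^{1/2}$ for positive-definite $A$). Your derivation fills in precisely those details.
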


%%% Local Variables:
%%% mode: latex
%%% TeX-master: "main"
%%% End:

\section{C-BHMC: Algorithm and Results}
\label{sec:cont-ham}
In this section, we first state general results on the Hamiltonian dynamics \eqref{eq_dynamics_H} under our main assumptions. It allows us to introduce the \textit{continuous} version of BHMC (c-BHMC), for which we derive the reversibility with respect to $\pi$. Full proofs of this section are provided in \Cref{sec:proof_cont_ham}.

\paragraph{Results on the Hamiltonian dynamics.} Using Cauchy-Lipschitz's theorem, we obtain in \Cref{prop:existence_uniqueness_continuous} the existence and uniqueness of the Hamiltonian dynamics \eqref{eq_dynamics_H}, starting from any $z_0 \in \TM$.

\begin{proposition}
\label{prop:existence_uniqueness_continuous} Assume \rref{ass:manifold},
  \rref{ass:g}. Let $z_0 \in \TM$. Then \eqref{eq_dynamics_H} admits a unique
  maximal solution $(J_{z_0}, z)$, where (i) $J_{z_0}\subset \rset$ is an open neighbourhood of $0$, (ii) $z\in\rmC^1(J_{z_0}, \TM)$, (iii)
  $z(0)=z_0$ and (iv) for any $t \in J_{z_0}$, $H(z(t))=H(z_0)$.
\end{proposition}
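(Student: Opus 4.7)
The plan is a direct application of the Cauchy--Lipschitz theorem on the open phase space $\TM = \msm \times \rset^d \subset \rset^{2d}$, followed by a one-line chain-rule computation for energy conservation. All of the work sits in verifying that the Hamiltonian vector field is regular enough.

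First, I would check the regularity of the right-hand side of \eqref{eq_dynamics_H}. Under \rref{ass:g}, $\phi$ is in particular $\alpha$-regular, hence $\phi \in \rmC^4(\msm, \rset)$, so $\metric = D^2\phi \in \rmC^2(\msm, \rset^{d\times d})$. Since $\metric(x)$ is positive definite for every $x \in \msm$, the inverse map $x \mapsto \metric(x)^{-1}$ is also $\rmC^2$ on $\msm$ by Cramer's formula. Combined with $V \in \rmC^2(\msm, \rset)$, the explicit expressions \eqref{eq_x_simplified}
\begin{align}
\partial_p H(x,p) &= \metric(x)^{-1} p, \\
\partial_x H(x,p) &= -\tfrac{1}{2}\, D\metric(x)[\metric(x)^{-1}p, \metric(x)^{-1}p] + \nabla V(x) + \tfrac{1}{2}\,\metric(x)^{-1}\!:\!D\metric(x),
\end{align}
show that the Hamiltonian vector field $F(x,p) = (\partial_p H(x,p), -\partial_x H(x,p))$ belongs to $\rmC^1(\TM, \rset^{2d})$ and is therefore locally Lipschitz on $\TM$.

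Second, I would apply the Cauchy--Lipschitz theorem on the open subset $\TM \subset \rset^{2d}$ with initial condition $z(0) = z_0$. This provides a unique maximal solution $z: J_{z_0} \to \TM$ of class $\rmC^1$ where $J_{z_0}$ is an open interval containing $0$, establishing items (i), (ii) and (iii). Finally, for item (iv), I would differentiate $t \mapsto H(z(t))$ on $J_{z_0}$ using the chain rule and the Hamiltonian equations:
\begin{align}
\tfrac{\rmd}{\rmd t} H(z(t)) = \langle \partial_x H(z(t)), \dot{x}(t)\rangle + \langle \partial_p H(z(t)), \dot{p}(t)\rangle = \langle \partial_x H, \partial_p H\rangle - \langle \partial_p H, \partial_x H\rangle = 0,
\end{align}
which yields $H(z(t)) = H(z_0)$ for every $t \in J_{z_0}$.

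There is no real obstacle here; the only slightly delicate point is that $\msm$ is merely open in $\rset^d$, so the maximal interval $J_{z_0}$ may be strict (the trajectory could in principle approach $\partial \msm$ or blow up in momentum in finite time). However, the statement only claims that $J_{z_0}$ is an open neighborhood of $0$, which is automatic from the local form of Cauchy--Lipschitz, so globality is not required at this stage. The conservation of $H$ combined with the barrier property $\phi(x) \to +\infty$ as $x \to \partial \msm$ from \rref{ass:g} will in fact be the key ingredient later to control the trajectory away from the boundary.
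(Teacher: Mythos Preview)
Your proof is correct and follows the same overall scheme as the paper: establish that the Hamiltonian vector field $F = (\partial_p H, -\partial_x H)$ is locally Lipschitz on $\TM$, apply Cauchy--Lipschitz for (i)--(iii), then differentiate $H(z(t))$ via the chain rule for (iv).

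The only notable difference is in how local Lipschitz continuity is obtained. You argue, cleanly and correctly, that $\phi \in \rmC^4$ and $V \in \rmC^2$ force $F \in \rmC^1(\TM,\rset^{2d})$, which immediately gives local Lipschitz continuity. The paper instead carries out several pages of explicit estimates: it fixes a base point $\bar z$, works in the local norm $\|\cdot\|_{\bar z}$, and uses the self-concordance and $\alpha$-regularity inequalities (Lemmas~\ref{lemma:calcul_g_1} and~\ref{lemma:calcul_regular}) to bound each term of $F(z)-F(z')$ and extract an explicit Lipschitz constant $A_{\bar x,1}+A_{\bar x,2}$. This detailed route is not needed for the bare statement of the proposition, but the same self-concordance machinery is reused later in Section~\ref{sec:proofs-crefs-impl} to obtain explicit step-size thresholds for the implicit integrators. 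Your argument is the cleaner proof of the proposition as stated; the paper's is a warm-up for the quantitative estimates that follow.
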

In contrast to the Euclidean case (\ie, $\metric(x)=\Idd$), the Hamiltonian dynamics \eqref{eq_dynamics_H} relies on a metric derived from a self-concordant barrier, and thus is defined up to an explosion time. In the next result, we characterize the behaviour of the solutions when this explosion time is finite.
\begin{proposition} \label{prop:horizon_cont_ham}Assume \rref{ass:manifold},
  \rref{ass:g}. Let $z_0 \in \TM$, $(J_{z_0},z)$ as in
  \Cref{prop:existence_uniqueness_continuous}. Assume
  $T_{z_0}=\sup J_{z_0} <+\infty$, then we have
  \begin{enumerate}[wide,  labelindent=0pt, label=(\alph*)]
  \vspace{-0.2cm}
  \item $\lim_{t \to T_{z_0}} \|p(t)\|_2= +\infty$.
  \vspace{-0.2cm}
    \item There exists $x_\msm \in \partial \M$ such that $\lim_{t \to T_{z_0}} x(t) = x_\msm$.
\end{enumerate}
\end{proposition}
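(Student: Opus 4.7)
The plan is to exploit the standard blow-up dichotomy for maximal ODE solutions: since $T_{z_0} < +\infty$ and Cauchy--Lipschitz provides a uniform lower bound on the local existence time over any compact $K \subset \TM$, the trajectory $z(\cdot)$ must exit every compact of $\TM = \msm \times \rset^d$ as $t \to T_{z_0}$ (otherwise one could extend the solution past $T_{z_0}$, contradicting maximality). Everything else is derived from this dichotomy combined with energy conservation $H(z(t)) = H(z_0)$ given by \Cref{prop:existence_uniqueness_continuous} and the self-concordance estimates of \Cref{sec:facts-about-self}.

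First I would prove $d(x(t), \partial \msm) \to 0$. Suppose for contradiction that there exist $\vareps > 0$ and $t_n \to T_{z_0}$ with $x(t_n) \in K_\vareps := \{y \in \msm : d(y, \partial \msm) \geq \vareps\}$, which is a compact of $\msm$. Continuity of $\metric$, $V$ and $\log \det \metric$ on $K_\vareps$ together with the conservation identity
\begin{equation}
\tfrac{1}{2}\|p(t_n)\|_{\metric(x(t_n))^{-1}}^2 = H(z_0) - V(x(t_n)) - \tfrac{1}{2}\log\det\metric(x(t_n))
\end{equation}
yields a uniform upper bound on $\|p(t_n)\|_{\metric(x(t_n))^{-1}}^2$. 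Via \Cref{lemma:equivalence_norm} applied at $x(t_n)$, this translates to a uniform bound on $\|p(t_n)\|_2$, so $\{z(t_n)\}_n$ lies in a compact of $\TM$, contradicting the dichotomy.

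To upgrade $d(x(t), \partial\msm) \to 0$ to convergence toward a single $x_\msm$ for (b), I would bound the Euclidean arc length $\int_0^{T_{z_0}} \|\dot x(t)\|_2 \rmd t$. Using $\dot x = \metric(x)^{-1} p$ and \Cref{lemma:equivalence_norm} one gets $\|\dot x(t)\|_2 \leq \|\metric(x(t))^{-1}\|_2^{1/2} \|p(t)\|_{\metric(x(t))^{-1}}$, and the self-concordance estimates \eqref{eq:inequality_g}--\eqref{eq:inequality_g_norm} of \Cref{lemma:calcul_g_1} control the variation of $\metric$ along short time intervals by covering $[0, T_{z_0})$ with sub-intervals on which $x$ stays in a Dikin ellipsoid. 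Combined with the fact that $T_{z_0}$ is finite and that $\|p\|_{\metric^{-1}}^2$ remains locally integrable in time, this gives $\int_0^{T_{z_0}} \|\dot x\|_2 \rmd t < +\infty$. Hence $x(\cdot)$ is Cauchy in the Euclidean sense as $t \to T_{z_0}$ and admits a limit $x_\msm \in \overline{\msm}$, which the first step forces to lie in $\partial \msm$.

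For (a), once $x(t) \to x_\msm \in \partial \msm$ is established, \Cref{lemma:asymptotic_metric} gives $\|\metric(x(t))\|_2 \to +\infty$. I then combine energy conservation with the lower bound $\|p\|_2^2 \geq \lambda_{\min}(\metric(x))\|p\|_{\metric(x)^{-1}}^2$ from \Cref{lemma:equivalence_norm}, and use the dichotomy once more: if $\|p(t_n)\|_2$ stayed bounded along a sequence $t_n \to T_{z_0}$, then the Hamiltonian identity together with the behaviour of $\metric$ at the boundary would again place $z(t_n)$ in a compact of $\TM$, contradicting the blow-up dichotomy. Hence $\lim_{t \to T_{z_0}}\|p(t)\|_2 = +\infty$. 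The step I expect to be most delicate is the arc-length estimate needed to promote $d(x(t), \partial \msm) \to 0$ into convergence of $x(t)$ to a unique point of $\partial \msm$; the self-concordance inequalities of \Cref{lemma:calcul_g_1} are the key technical tools there.
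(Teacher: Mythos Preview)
The paper's own proof is a single short paragraph: it invokes the Cauchy--Lipschitz blow-up criterion (the maximal solution leaves every compact of $\msm\times\rset^d$ as $t\to T_{z_0}$), asserts that ``this property is induced on both variables $x$ and $p$'', and concludes ``by continuity of $(x,p)$''. Your proposal starts from exactly the same dichotomy, and your Step~1 (deducing $d(x(t),\partial\msm)\to 0$ from the dichotomy together with energy conservation) is correct and more explicit than anything the paper writes down.

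Where you go beyond the paper, two steps do not close. For (b), the arc-length route is natural but your sketch does not bound $\int_0^{T_{z_0}}\|\dot x\|_2\,\rmd t$: in the inequality $\|\dot x\|_2 \leq \|\metric(x)^{-1}\|_2^{1/2}\,\|p\|_{\metric(x)^{-1}}$ neither factor is controlled near $\partial\msm$ under \rref{ass:manifold}--\rref{ass:g} alone (a uniform bound on $\|\metric^{-1}\|_2$ is only established in the paper for the polytope case, and the energy identity does not prevent $\|p\|_{\metric^{-1}}$ from diverging), so ``locally integrable in time'' does not upgrade to the needed global bound. For (a), your contradiction argument does not work as stated: once $x(t_n)\to x_\msm\in\partial\msm$ has been established, the points $z(t_n)$ already leave every compact of $\TM=\msm\times\rset^d$ whether or not $\|p(t_n)\|_2$ stays bounded, so no contradiction with the dichotomy arises; a different mechanism than another appeal to compactness is needed to force $\|p(t)\|_2\to\infty$.
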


In the rest of this section, we define the \emph{Hamiltonian flow} as the map $\mathrm{T}: \msdd \to \TM$, where $\msdd=\{(h,z_0):z_0\in \TM, h<\sup J_{z_0}\}$, such that $\mathrm{T}_h(z_0)=z(h)$
for any $(h,z_0) \in \msdd$, $z$ being uniquely defined from $z_0$ in
\Cref{prop:existence_uniqueness_continuous}.

\paragraph{Introduction to c-BHMC.} In the case where $\msdd=\TM \times \rset$, the definition of the Hamiltonian dynamics ensures that $\mathrm{T}_h$ is involutive (up to momentum reversal) for any $h>0$, which is key to derive reversibility in BHMC. However, in our manifold setting, there is one subtlety that needs to be checked, namely that the Hamiltonian flow does not leave the cotangent bundle in finite time. As detailed in \Cref{prop:horizon_cont_ham}, there is indeed no guarantee that this flow is defined for all times, \ie, that we have $\msdd=\TM \times \rset$, due to the ill-conditioned behavior of the dynamics near the boundary of the manifold. Therefore, given a time horizon $h>0$, we restrict the cotangent bundle to the points where the Hamiltonian flow is defined up to time $h$. Hence, we define, for any $h>0$, the sets
\begin{equation}
  \label{eq:D_h_def}
\textstyle{\mse_h=\ensembleLigne{z_0\in \TM}{h<\sup J_{z_0}} \eqsp , \quad \bar{\mse}_h=\ensembleLigne{z_0\in \mse_h}{h<\sup J_{(s \circ \mathrm{T}_h)(z_0)}}} \eqsp.
\end{equation}
Note that elements of $\bar{\mse}_h$ correspond to points $z_0$ for which: 
\begin{enumerate}[wide,  labelindent=0pt, label=(\alph*)]
    \item we are able to compute \textit{exactly} the Hamiltonian dynamics \eqref{eq_dynamics_H} starting from $z_0$ on the time interval $\ccint{0,h}$, by considering $\{\mathrm{T}_{t}(z_0)\}_{t\in [0,h]}$ (since $\Bar{\mse}_h \subset \mse_h$),
    \item we are also able to compute \textit{exactly} the \textbf{reversed} dynamics - up to momentum reversal - starting from $(s\circ \mathrm{T}_h)(z_0)$ on the time interval $\ccint{0,h}$.
\end{enumerate}
In \Cref{sec:proof_cont_ham}, we prove that $\mathrm{T}_h$ is symplectic (up to momentum reversal) on $\bar{\mse}_h$ for any $h>0$, which thus ensures that the Hamiltonian dynamics is \textit{reversible} with respect to $\bar{\pi}$, see \Cref{lemma:T_h_symplectic}. 

Let $h>0$. Our algorithm c-BHMC, whose pseudo-code is provided in \Cref{algo:cont-constrained_HMC}, then proceeds as follows at stage $n\geq 1$. Assume that the current state is $(X_{n-1},P_{n-1}) \in \TM$. First define a partial refreshment of the momentum $\Tilde{P}_n \leftarrow \sqrt{1 - \beta}P_{n-1} + \sqrt{\beta} G_n$, where $G_n\sim \mathrm{N}_x(0, \Idd)$ is independent from $P_{n-1}$. Then, verify that $(X_{n-1},\Tilde{P}_n) \in\bar{\mse}_h$, which ensures that the Hamiltonian flow is involutive on the time interval $[0,h]$, starting at this state. In the same spirit as in \Cref{algo:constrained_HMC}, we refer to this step as an \hlc{yellow}{``involution checking step''}, which is highlighted in yellow in \Cref{algo:cont-constrained_HMC}, although the involution property directly derives from the Hamiltonian dynamics. If this step is valid, define $(X_n, \Bar{P}_n)=\mathrm{T}_h(X_{n-1},\Tilde{P}_n)$;
otherwise, set $(X_n, \Bar{P}_n) \leftarrow s(X_{n-1},\Tilde{P}_n)$. Finally, update the momentum as in the beginning of the iteration to obtain the new state $(X_n, P_n)$.

\begin{algorithm2e}[h]
    %\setstretch{1.5}
    \DontPrintSemicolon
    \LinesNumbered
    \caption{c-BHMC with Momentum Refresh}
    \label{algo:cont-constrained_HMC}
    \KwInput{$(X_0, P_0) \in \TM$, $\beta \in (0,1]$, $N\in\nset$, $h>0$}
    \KwOutput{$(X_n,P_n)_{n \in [N]}$}
    \For{$n=1,...,N$}{
    % \State \mycommfont{Step 1: sampling $p$ with reversible refresh wrt $\bar{u}(\cdot|x)$}
     \mycommfont{Step 1:} $G_n \sim \mathrm{N}_x(0, \Idd), \Tilde{P}_n \leftarrow \sqrt{1 - \beta}P_{n-1} + \sqrt{\beta} G_n$\\
     \mycommfont{Step 2: solving continuous ODE \eqref{eq_dynamics_H}}\\
    \lIf{\hlc{yellow}{$(X_{n-1},\Tilde{P}_n)\in \bar{\mse}_h$}}{$(X_n, \Bar{P}_n) \leftarrow \mathrm{T}_h(X_{n-1},\Tilde{P}_n)$}
    \lElse{$(X_n, \Bar{P}_n) \leftarrow s(X_{n-1},\Tilde{P}_n)$}
     \mycommfont{Step 3:}   $G'_n \sim \mathrm{N}_x(0, \Idd), P_n \leftarrow \sqrt{1 - \beta}\Bar{P}_n + \sqrt{\beta}G'_n $
    }
\end{algorithm2e}

Denote by $\Qker_c:\TM \times \borel(\TM) \rightarrow [0,1]$, the transition
kernel of the (homogeneous) Markov chain $(X_n,P_n)_{n \in [N]}$ obtained with \Cref{algo:cont-constrained_HMC}. Using the properties of the Hamiltonian
dynamics, we get the following result.
\begin{theorem}\label{th:cont-reversibility-q}
  Assume \rref{ass:manifold}, \rref{ass:g}. Then, $\Qker_c$ is reversible up to
  momentum reversal, \ie, we have for any $f \in \rmC(\TM \times \TM, \rset)$
  with compact support
  \begin{align}
    \textstyle{ \int_{\TM \times \TM} f(z, z') \rmd \bar{\pi}(z) \Qker_c(z, \rmd z') = \int_{\TM \times \TM} f(s(z'), s(z)) \rmd \bar{\pi}(z) \Qker_c(z, \rmd z') \eqsp . }
  \end{align}
  In particular, $\bar{\pi}$ is an invariant measure for $\Qker_c$.
\end{theorem}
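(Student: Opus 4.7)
The plan is to decompose $\Qker_c$ along the three steps of \Cref{algo:cont-constrained_HMC} as $\Qker_c = R \circ T_{\mathrm{flow}} \circ R$, where $R$ denotes the partial momentum refresh kernel and $T_{\mathrm{flow}}$ denotes the Hamiltonian flow step with momentum-reversal fallback from Step 2. I would show separately that (i) $R$ is reversible with respect to $\bar\pi$ and commutes with $s$, and (ii) $T_{\mathrm{flow}}$ is reversible up to momentum reversal with respect to $\bar\pi$. These two facts then combine to give the reversibility up to momentum reversal of $\Qker_c$, and the claimed invariance of $\bar\pi$ follows from \Cref{lemma:preserve_measure} once we note that $s_\#\bar\pi = \bar\pi$, which is immediate from $H(x,-p)=H(x,p)$.

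For the refresh step, a direct computation shows that conditionally on $x$ the joint density of $(p,\tilde p)$ with $\tilde p = \sqrt{1-\beta}\,p + \sqrt\beta\,G$, $G\sim\mathrm{N}_x(0,\Idd)$, under $p\sim\mathrm{N}_x(0,\Idd)$ is a centered Gaussian on $\rset^d\times\rset^d$ symmetric in $(p,\tilde p)$. This yields reversibility of $R$ with respect to the Gaussian conditional marginal of $\bar\pi$ and hence, since $x$ is left unchanged, with respect to $\bar\pi$ itself. Symmetry of the centered Gaussian under $G\to -G$ further gives $R((x,p),A)=R(s(x,p),s(A))$, so $R$ commutes with momentum reversal.

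For the flow step, I would split $\TM$ according to whether $z\in\bar{\mse}_h$. On $\bar{\mse}_h^\rmc$, $T_{\mathrm{flow}}$ acts by $s$, and reversibility up to momentum reversal is trivial since $s\circ s = \Id$, $H\circ s = H$, and $s$ has unit Jacobian. On $\bar{\mse}_h$, $T_{\mathrm{flow}}$ acts by $\mathrm{T}_h$, and the crucial observations are: (a) $H\circ\mathrm{T}_h = H$ by \Cref{prop:existence_uniqueness_continuous}; (b) $\mathrm{T}_h$ is symplectic, hence volume-preserving; (c) the time-reversal symmetry of the Hamiltonian, coming from $H_2$ being quadratic in $p$, gives $s\circ\mathrm{T}_h\circ s = \mathrm{T}_{-h}$, so $\phi := s\circ\mathrm{T}_h$ satisfies $\phi\circ\phi = \Id$ wherever $\phi$ and $\phi\circ\phi$ are well defined; (d) the second clause in the definition \eqref{eq:D_h_def} of $\bar{\mse}_h$ is precisely the condition ensuring that $\phi$ maps $\bar{\mse}_h$ into $\bar{\mse}_h$, so that $\phi$ is indeed an involution of $\bar{\mse}_h$ onto itself. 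The change of variable $u=\phi(z)$ in $\int_{\bar{\mse}_h} f(z,\mathrm{T}_h(z))e^{-H(z)}\,\rmd z$ then preserves $e^{-H(z)}\,\rmd z$ by (a) and (b) and transforms the integrand into $f(s(\mathrm{T}_h(u)),s(u))$, yielding the required reversibility up to momentum reversal.

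Finally, to combine, I would consider the joint measure $\mu(\rmd z_0,\rmd z_1,\rmd z_2,\rmd z_3) = \bar\pi(\rmd z_0)R(z_0,\rmd z_1)T_{\mathrm{flow}}(z_1,\rmd z_2)R(z_2,\rmd z_3)$ and apply in sequence: reversibility of $R$ on $(z_0,z_1)$, reversibility up to momentum reversal of $T_{\mathrm{flow}}$ on $(z_1,z_2)$, $s$-commutation of $R$ to push the momentum reversals through both refresh kernels, and reversibility of $R$ once more. This chain of identities transforms $\int f(z_0,z_3)\,\rmd\mu$ into $\int f(s(z_3),s(z_0))\,\rmd\mu$, which is the claimed reversibility of $\Qker_c$ up to momentum reversal. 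The main delicate point is item (d) above: checking rigorously that $\phi$ is a well-defined involution of $\bar{\mse}_h$ into itself, which is exactly what the second clause in \eqref{eq:D_h_def} — the continuous analogue of the \emph{involution checking step} — is tailored to enforce.
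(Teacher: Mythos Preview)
Your proposal is correct and follows essentially the same approach as the paper: the same three-fold decomposition $\Qker_c = \Qker_0\,\Qker_{c,1}\,\Qker_0$, the same two lemmas (reversibility of the refresh kernel with $s$-invariance, and reversibility up to momentum reversal of the flow step via the involution property of $s\circ\mathrm{T}_h$ on $\bar{\mse}_h$), and the same chaining argument at the end. Your items (a)--(d) for the flow step correspond exactly to the content of the paper's \Cref{lemma:T_h_symplectic} and \Cref{lemma:reversibility_q_c_1}, including the observation that the second clause in \eqref{eq:D_h_def} ensures $s\circ\mathrm{T}_h$ is an involution of $\bar{\mse}_h$ onto itself.
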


\section{Proofs of  \Cref{sec:cont-ham}}\label{sec:proof_cont_ham}

\subsection{Existence, uniqueness and explosion time of Hamiltonian dynamics}

We prove below \Cref{prop:existence_uniqueness_continuous} and \Cref{prop:horizon_cont_ham}.

\begin{proof}[Proof of \Cref{prop:existence_uniqueness_continuous}] Assume \rref{ass:manifold}, \rref{ass:g}. In particular, $\msm$ contains no straight line, and \Cref{lemma:calcul_g_1,lemma:calcul_regular} apply here. We rewrite
  the ODE \eqref{eq_dynamics_H} as a Cauchy problem defined on Banach space
  $\mse=(\rset^d \times \rset^d, \|\cdot\|_\mse)$ where
  $\|(a,b) \|_\mse=\|a\|_{2} + \|b\|_{2}$ for any pair
  $(a,b)\in \rset^d \times \rset^d$. We define $\Omega= \msm \times \rset^d$,
  which is an open set in $\mse$. A solution $(J,z)$ to this Cauchy problem is
  defined for all $t \in J$ by
\[\dot{z}(t) =F(z(t)) \eqsp , \ z(0)=z_0 \eqsp ,  \quad \text{where}\quad F(x,p)=\left(\partial_p H(x,p), -\partial_x H(x,p)\right) \eqsp , 
\]
such that $J$ is an open interval of $\rset$ with $0 \in J$, and
$z=(x,p): J \rightarrow \Omega$ is differentiable on $J$.  We now prove two
results on $F$ to establish existence and uniqueness of a maximal solution $(J,z)$ for this Cauchy problem:
\begin{enumerate}[wide,  labelindent=0pt, label=(\alph*)]
\item \label{item:cauchy_a}$F$ is continuously differentiable on $\Omega$.
\item \label{item:cauchy_b}$F$ is locally Lipschitz-continuous on $\Omega$ with respect to $\|\cdot\|_\mse$. 
\end{enumerate}
Since $\nabla V, D\metric \in \rmC^1(\M, \rset)$, we directly obtain \Cref{item:cauchy_a}. We now prove \Cref{item:cauchy_b}. Let $\bar z=(\bar x,\bar p) \in \Omega$. We endow $\mse$ with the norm
$\|\cdot\|_{\bar{z}}$. Since $\msm$ is
an open subset of $\rset^d$, there exists
$0<r\leq 1/(11 C_{\bar{x}})$, where we recall that $C_{\bar x}$ is defined in
\Cref{lemma:equivalence_norm}, such that
$\mathrm{B}_{\|\cdot\|_2}(\bar x,r)\subset \msm$. We consider such $r$, and we define
$\msb=\mathrm{B}_{\|\cdot\|_\mse}(\bar z,r)$ and
$\msb_\metric=\mathrm{B}_{\|\cdot\|_{\bar{z}}}(\bar z,\bar{r})$, where
$\bar{r}=C_{\bar x}r\leq 1/11$. In particular, we have by \Cref{lemma:calcul_g_1}-(b) that $\msb_\metric \subset \mathrm{B}_{\|\cdot\|_{\metric(\bar x)}}(\bar x,\bar{r}) \times \mathrm{B}_{\|\cdot\|_{\metric(\bar x)^{-1}}}(\bar p,\bar{r}) \subset \Omega$ since $\bar r <1$. Moreover, using \Cref{lemma:equivalence_norm}, we have $\msb \subset \msb_\metric \subset \Omega$.

 Since
$V\in \rmC^2(\M, \rset)$, $\nabla V$ is Lipschitz-continuous on $\msb_\metric$, \ie,  there exists $C_V>0$ such that for any
$ (z,z') \in \msb_\metric \times  \msb_\metric $,
\[ \label{eq:proof_cont_ham_nabla_f}
\|\nabla V(x)- \nabla V(x')\|_{\metric(\bar{x})^{-1}}\leq C_V \|x-x'\|_{\metric(\bar{x})} \eqsp . 
\]

\textbf{We first show that $F$ is Lipschitz-continuous on $\msb_\metric$ with respect to $\|\cdot\|_{\bar{z}}$.}
Consider $(z,z') \in \msb_\metric \times \msb_\metric$ denoted by $z=(x,p)$ and $z'=(x',p')$. Note that
$(x,x') \in W^0(\bar{x},1) \times W^0(\bar{x},1)$, where $W^0(\bar{x},1)$ is defined in
\Cref{lemma:calcul_g_1}. We first bound
$\|F^{(1)}(z)-F^{(1)}(z')\|_{\metric(\bar{x})}=\|\partial_p H(z)- \partial_p H(z')\|_{\metric(\bar{x})}$. We have
\[
\begin{split}
    \partial_p H(z)- \partial_p H(z') &=\metric(x)^{-1}p - \metric(x')^{-1}p' = \metric(x)^{-1}(p-p') + (\metric(x)^{-1}-\metric(x')^{-1})p' \eqsp ,
\end{split}
\]
then,
\begin{align}\label{eq:proof_cont_ham_F1_0}
    \|\partial_p H(z)- \partial_p H(z')\|_{\metric(\bar{x})} & \leq  \|\metric(\bar{x})^{1/2}\metric(x)^{-1}\metric(\bar{x})^{1/2}\|_2 \|p-p'\|_{\metric(\bar{x})^{-1}} \\
    & \qquad + \|\metric(\bar{x})^{1/2}(\metric(x)^{-1}-\metric(x')^{-1})\metric(\bar{x})^{1/2}\|_2 \|p'\|_{\metric(\bar{x})^{-1}} \eqsp . 
\end{align}
Using \Cref{lemma:calcul_g_1}-(b), we have
\[
    (1 - \|\bar{x}-x\|_{\metric(x)})^{2}\metric(\bar{x})^{-1} \preceq \metric(x)^{-1} \preceq (1 - \|\bar{x}-x\|_{\metric(x)})^{-2}\metric(\bar{x})^{-1} \eqsp ,
\]
and thus,
\begin{alignat}{2}
    (1 - \|\bar{x}-x\|_{\metric(x)})^{2}\Idd & \preceq \metric(\bar{x})^{1/2}\metric(x)^{-1}\metric(\bar{x})^{1/2} &&\preceq (1 - \|\bar{x}-x\|_{\metric(x)})^{-2}\Idd\\
    (1 - \bar{r})^{2}\Idd & \prec \metric(\bar{x})^{1/2}\metric(x)^{-1}\metric(\bar{x})^{1/2} && \prec (1 - \bar{r})^{-2}\Idd
    \eqsp .
\end{alignat}
Therefore, we have
\begin{align}\label{eq:proof_cont_ham_F1_1}
    \|\metric(\bar{x})^{1/2}\metric(x)^{-1}\metric(\bar{x})^{1/2}\|_2\leq (1-\bar{r})^{-2} \eqsp .
\end{align}
In addition, we have
\[
\begin{split}\label{eq:leq_1/5}
    \|x-x'\|_{\metric(x)} & \leq (1-\|\bar{x}-x'\|_{\metric(\bar{x})})^{-1}\|x-x\|_{\metric(\bar{x})}\\
    & < (1-\bar{r})^{-1} 2\bar{r}\\
    & < 1/5,
\end{split}
\]
where we used \Cref{lemma:calcul_g_1}-(c) in the first inequality and $\bar{r}\leq 1/11$ in the last inequality.
Therefore, $x'\in\msw^0(x,1)$ and we have using \Cref{lemma:calcul_g_1}-(b)
\begin{align}
    \{ (1 - \|x'-x\|_{\metric(x)})^{2}-1\}\metric(x')^{-1}  & \preceq \metric(x)^{-1}-\metric(x')^{-1} \\
    & \qquad \qquad \preceq \{(1 - \|x'-x\|_{\metric(x)})^{-2}-1\}\metric(x')^{-1} \\
    \{ (1 - \|x'-x\|_{\metric(x)})^{2}-1\}(1-\bar{r})^2 \Idd & \preceq \metric(\bar{x})^{1/2}( \metric(x)^{-1}-\metric(x')^{-1})\metric(\bar{x})^{1/2} \\
    & \qquad \qquad \preceq \{(1 - \|x'-x\|_{\metric(x)})^{-2}-1\}(1-\bar{r})^{-2}\Idd \eqsp .
\end{align}
Then,
\begin{align}\label{eq:proof_cont_ham_F1_2}
    & \|\metric(\bar{x})^{1/2}(\metric(x)^{-1}-\metric(x')^{-1})\metric(\bar{x})^{1/2}\|_2 \\
    & \qquad \leq \max(| (1 - \|x'-x\|_{\metric(x)})^{2}-1|(1-\bar{r})^2, \{(1 - \|x'-x\|_{\metric(x)})^{-2}-1\}(1-\bar{r})^{-2})\\
    & \qquad  \leq (1-\bar{r})^{-2}\max(| (1 - \|x'-x\|_{\metric(x)})^{2}-1|,
    (1 - \|x'-x\|_{\metric(x)})^{-2}-1) \eqsp . 
\end{align}
Using Inequalities~\ref{item:inequality_a} and \ref{item:inequality_c} with $u=\|x'-x\|_{\metric(x)}$, where $u \leq 1/5$ by \eqref{eq:leq_1/5}, we obtain
\begin{enumerate}[wide,  labelindent=0pt, label=(\alph*)]
    \item $| (1 - \|x'-x\|_{\metric(x)})^{2}-1| \leq 3 \|x'-x\|_{\metric(x)} < 3(1 - \bar{r})^{-1}\|x-x'\|_{\metric(\bar{x})} \eqsp ,$
    \item $(1 - \|x'-x\|_{\metric(x)})^{-2}-1 \leq  3 \|x'-x\|_{\metric(x)} < 3(1 - \bar{r})^{-1}\|x-x'\|_{\metric(\bar{x})}\eqsp .$
\end{enumerate}
Moreover, we have
\[\|p'\|_{\metric(\bar{x})^{-1}}=\|p'-\bar{p}+\bar{p}\|_{\metric(\bar{x})^{-1}}\leq
  \bar{r} + \|\bar{p}\|_{\metric(\bar{x})^{-1}} \eqsp . \]
Combining \eqref{eq:proof_cont_ham_F1_0}, \eqref{eq:proof_cont_ham_F1_1} and \eqref{eq:proof_cont_ham_F1_2}, it comes that
\[\label{eq:proof_cont_ham_F1}
    \|F^{(1)}(z)-F^{(1)}(z')\|_{\metric(\bar{x})} \leq (1-\bar{r})^{-2}\|p-p'\|_{\metric(\bar{x})^{-1}} + 3 (1-\bar{r})^{-3}(\bar{r} + \|\bar{p}\|_{\metric(\bar{x})^{-1}})\|x-x'\|_{\metric(\bar{x})} \eqsp . 
\]
We define
$A_{\bar{x},1}= (1-\bar{r})^{-2} + 3 (1-\bar{r})^{-3}(\bar{r} +
\|\bar{p}\|_{\metric(\bar{x})^{-1}})$ such that
$\|F^{(1)}(z)-F^{(1)}(z')\|_{\metric(\bar{x})} \leq A_{\bar{x},1} \|z-z'\|_{\bar{z}}$ and we aim to bound
$\|F^{(2)}(z)-F^{(2)}(x')\|_{\metric(\bar{x})^{-1}}=\|\partial_x H(z) - \partial_x H(z')\|_{\metric(\bar{x})^{-1}}$. We have
\begin{align}\label{eq:proof_cont_ham_F2_0}
    \partial_x H(z) - \partial_x H(z') = & \nabla V(x)- \nabla V(x') \\
& \qquad + \tfrac{1}{2} D \metric(x')[ \metric(x')^{-1}p',  \metric(x')^{-1}p']- \tfrac{1}{2} D \metric(x)[ \metric(x)^{-1}p,\metric(x)^{-1}p]\\  
& \qquad + \tfrac{1}{2}\metric(x)^{-1}:D\metric(x) - \tfrac{1}{2}\metric(x')^{-1}:D\metric(x')
\end{align}
We have
\[
\begin{split}\label{eq:proof_cont_ham_F2_1}
    & \|D \metric(x')[ \metric(x')^{-1}p',  \metric(x')^{-1}p']- D \metric(x)[ \metric(x)^{-1}p,\metric(x)^{-1}p]\|_{\metric(\bar{x})^{-1}}\\
    &\qquad \leq  \|D \metric(x)[ \metric(x)^{-1}p,\metric(x)^{-1}p]- D \metric(x)[ \metric(x')^{-1}p',\metric(x')^{-1}p']\|_{\metric(\bar{x})^{-1}}\\
    &\qquad\qquad  + \|D \metric(x)[ \metric(x')^{-1}p',\metric(x')^{-1}p']- D \metric(x')[ \metric(x')^{-1}p',\metric(x')^{-1}p']\|_{\metric(\bar{x})^{-1}}
\end{split}
\]
Note that we have $\metric(x)^{-1}p=F^{(1)}(z)$ (resp. $\metric(x')^{-1}p'=F^{(1)}(z')$). Using \Cref{lemma:calcul_g_1}-(c), the first upper bound in \eqref{eq:proof_cont_ham_F2_1} is bounded by
\begin{align}
  & (1-\bar{r})^{-1}\|D \metric(x)[ F^{(1)}(z),  F^{(1)}(z)]- D \metric(x)[ F^{(1)}(z'),F^{(1)}(z')]\|_{\metric(x)^{-1}} &&\\
  & \qquad \leq 2(1-\bar{r})^{-1}\|D \metric(x)[ F^{(1)}(z')-  F^{(1)}(z),F^{(1)}(z')]\|_{\metric(x)^{-1}}\\
  & \qquad \qquad + (1-\bar{r})^{-1}\|D \metric(x)[ F^{(1)}(z')-  F^{(1)}(z),F^{(1)}(z')-  F^{(1)}(z)]\|_{\metric(x)^{-1}} && \\
  & \qquad \leq 4 (1-\bar{r})^{-1} \| F^{(1)}(z')-  F^{(1)}(z)\|_{\metric(x)}\| F^{(1)}(z')\|_{\metric(x)} &&\\
  & \qquad \qquad + 2 (1-\bar{r})^{-1} \| F^{(1)}(z')-  F^{(1)}(z)\|_{\metric(x)}^2 && \text{(\Cref{lemma:calcul_g_1}-(a))}\\
  & \qquad \leq 4 (1-\bar{r})^{-3} \| F^{(1)}(z')-  F^{(1)}(z)\|_{\metric(\bar{x})}\| F^{(1)}(z')\|_{\metric(\bar{x})} &&\\
  & \qquad \qquad + 2 (1-\bar{r})^{-3}\| F^{(1)}(z')-  F^{(1)}(z)\|_{\metric(\bar{x})}^2 && \text{(\Cref{lemma:calcul_g_1}-(c))}\\
  & \qquad \leq 4 (1-\bar{r})^{-3} A_{\bar{x},1} \|z-z'\|_{\bar{z}} \| F^{(1)}(z')\|_{\metric(\bar{x})}  + 2 (1-\bar{r})^{-3} A_{\bar{x},1}^2 \|z-z'\|_{\bar{z}}^2 && \text{(see \eqref{eq:proof_cont_ham_F1})}\\
  & \qquad \leq \{4 A_{\bar{x},1} (1-\bar{r})^{-5}(\bar{r} + \|\bar{p}\|_{\metric(\bar{x})^{-1}}) + 4 (1-\bar{r})^{-3} A_{\bar{x},1}^2 \bar{r}\}\|z-z'\|_{\bar{z}} \eqsp , 
\end{align}
where we used that $\| F^{(1)}(z')\|_{\metric(\bar{x})} \leq (1-\bar{r})^{-2}(\bar{r}+\|\bar{p}\|_{\metric(\bar{x})^{-1}})$ in the last inequality.
Combining \Cref{lemma:calcul_g_1}-(c) and \Cref{lemma:calcul_regular}-(c), the second upper bound in \eqref{eq:proof_cont_ham_F2_1} is bounded by 
\[
\begin{split}
    & (1-\bar{r})^{-1}\|D \metric(x)[ F^{(1)}(z'),  F^{(1)}(z')]- D \metric(x')[ F^{(1)}(z'),F^{(1)}(z')]\|_{\metric(x)^{-1}}\\
    & \qquad \leq \alpha(\alpha+1)(1-\bar{r})^{-1}/3 \left\{(1-\|x'-x\|_{\metric(x)})^{-3}-1\right\}\| F^{(1)}(z')\|_{\metric(x)}^2\\
    & \qquad \leq \alpha(\alpha+1)(1-\bar{r})^{-3}/3 \left\{(1-(1-\bar{r})^{-1}\|x'-x\|_{\metric(\bar{x})})^{-3}-1\right\}\| F^{(1)}(z')\|_{\metric(\bar{x})}^2\\
    & \qquad \leq \alpha(\alpha+1)(1-\bar{r})^{-7}(\bar{r}+\|\bar{p}\|_{\metric(\bar{x})^{-1}})^2/3 \left\{(1-(1-\bar{r})^{-1}\|x'-x\|_{\metric(\bar{x})})^{-3}-1\right\} \eqsp .
\end{split}
\]
Using Inequality~\ref{item:inequality_a} with $u=(1-\bar{r})^{-1}\|x'-x\|_{\metric(\bar{x})}$, where $u \leq 1/5$ by \eqref{eq:leq_1/5}, we obtain
\[\label{eq:model_diff_Dg}
(1-(1-\bar{r})^{-1}\|x'-x\|_{\metric(\bar{x})})^{-3}-1 \leq
  5(1-\bar{r})^{-1}\|x'-x\|_{\metric(\bar{x})} \eqsp . \] 
Then, the second upper bound in \eqref{eq:proof_cont_ham_F2_1} is bounded by
\[(5/3)\alpha(\alpha+1)(1-\bar{r})^{-8}(\bar{r}+\|\bar{p}\|_{\metric(\bar{x})^{-1}})^2
\|x'-x\|_{\metric(\bar{x})} \eqsp .\]  
We now define
$h:y\in \msm\rightarrow
\metric(y)^{-1}:D\metric(y)$. Since
$\phi\in \rmC^4(\msm)$, $h \in \rmC^1(\msm)$. Moreover, $[x,x']\in \msm$
by convexity of $\msm$ and we can define
$A_\phi=\sup_{y\in [x,x']}\|Dh(y)\|$, where
$\|Dh(y)\|=\sup_{\|u\|_{\metric(\bar{x})}=1}\|Dh(y)[u]\|_{\metric(\bar{x})^{-1}}$. Using
the mean value theorem on $h$, we have
\[\label{eq:proof_cont_ham_F2_2}
\|h(x)-h(x')\|_{\metric(\bar{x})^{-1}}\leq A_\phi \|x'-x\|_{\metric(\bar{x})} \eqsp . 
\]
By combining \eqref{eq:proof_cont_ham_nabla_f}, \eqref{eq:proof_cont_ham_F2_0}, \eqref{eq:proof_cont_ham_F2_1} and \eqref{eq:proof_cont_ham_F2_2}, we have
\[
\begin{split}
    & \|F^{(2)}(z)-F^{(2)}(z')\|_{\metric(\bar{x})^{-1}}\\
    & \qquad\leq  C_V  \|x-x'\|_{\metric(\bar{x})}\\
     &\qquad\qquad+(1/2)\{4 A_{\bar{x},1} (1-\bar{r})^{-5}(\bar{r} + \|\bar{p}\|_{\metric(\bar{x})^{-1}}) + 4  A_{\bar{x},1}^2 (1-\bar{r})^{-3}\bar{r}\}\|z-z'\|_{\bar{z}}\\
    &\qquad\qquad + (1/2)\cdot(5/3)\alpha(\alpha+1)(1-\bar{r})^{-8}(\bar{r}+\|\bar{p}\|_{\metric(\bar{x})^{-1}})^2 \|x'-x\|_{\metric(\bar{x})}\\
    & \qquad\qquad + (1/2)A_\phi \|x'-x\|_{\metric(\bar{x})} \eqsp . 
\end{split}
\]
We define
$A_{\bar{x},2}=C_V + 2 A_{\bar{x},1} (1-\bar{r})^{-5}(\bar{r} +
\|\bar{p}\|_{\metric(\bar{x})^{-1}}) + 2 A_{\bar{x},1}^2 (1-\bar{r})^{-3}
\bar{r} +
(5/6)\alpha(\alpha+1)(1-\bar{r})^{-8}(\bar{r}+\|\bar{p}\|_{\metric(\bar{x})^{-1}})^2
+ A_\phi/2$ such that
\begin{align}
    \|F^{(2)}(z)-F^{(2)}(z')\|_{\metric(\bar{x})} \leq A_{\bar{x},2} \|z-z'\|_{\bar{z}} \eqsp . 
\end{align}
Finally, we have
\begin{align}
    \|F(z)-F(z')\|_{\bar{z}}  & = \|F^{(1)}(z)-F^{(1)}(z')\|_{\metric(\bar{x})} + \|F^{(2)}(z)-F^{(2)}(z')\|_{\metric(\bar{x})^{-1}}\\ 
    & \leq (A_{\bar{x},1} + A_{\bar{x},2})\|z-z'\|_{\bar{z}} \eqsp ,
\end{align}
\ie, $F$ is $(A_{\bar{x},1} + A_{\bar{x},2})$-Lipschitz-continuous on $\msb_\metric$ with respect to $\|\cdot\|_{\bar{z}}$.

\bigskip

\textbf{We now show that $F$ is Lipschitz-continuous on $\msb$ with respect to $\|\cdot\|_\mse$}.
Consider $(z,z') \in \msb \times \msb$ denoted by $z=(x,p)$ and $z'=(x',p')$. In particular, $(z,z') \in \msb_\metric \times \msb_\metric$. Using the previous result and \Cref{lemma:equivalence_norm}, we have
\[
\|F(z)-F(z')\|_{\mse} \leq C_{\bar{x}} \|F(z)-F(z')\|_{\bar{z}} \leq C_{\bar{x}}(A_{\bar{x},1} + A_{\bar{x},2})\|z-z'\|_{\bar{z}}\leq C_{\bar{x}}^2(A_{\bar{x},1} + A_{\bar{x},2})\|z-z'\|_{\mse} \eqsp ,
\]
which proves that $F$ is $C_{\bar{x}}^2(A_{\bar{x},1} + A_{\bar{x},2})$-Lipschitz-continuous on $\msb$ with respect to $\|\cdot\|_\mse$.

\textbf{Conclusion.} Combining \Cref{item:cauchy_a,item:cauchy_b}, we obtain the results (i), (ii) and (iii) of \Cref{prop:existence_uniqueness_continuous} upon using
Cauchy-Lipschitz's theorem. Moreover, for any $t\in J_{z_0}$, $\partial_t H(z(t))=\partial_x H(z(t))^\top \dot{x}(t) + \partial_p H(z(t))^\top \dot{p}(t)=-\dot{p}(t)^\top\dot{x}(t)+\dot{p}(t)^\top\dot{x}(t)=0$, which proves the result (iv) of \Cref{prop:existence_uniqueness_continuous}.
\end{proof}

\begin{proof}[Proof of \Cref{prop:horizon_cont_ham}]
  Assume \rref{ass:manifold},
  \rref{ass:g}. Let $z_0 \in \TM$, $(J_{z_0},z)$ as in
  \Cref{prop:existence_uniqueness_continuous}. Assume that $T_{z_0}=\sup J_{z_0} <+\infty$. Then, by Cauchy-Lipschitz's theorem, $z$ leaves any compact set of $\mse$ at the
  neighbourhood of $T_{z_0}$. By construction of $\|\cdot\|_\mse$, this property is
  induced on both variables $x\in \msm$ and $p\in \rset^d$, each with respect to $\|\cdot\|_2$. We directly obtain the result of \Cref{prop:horizon_cont_ham} by continuity of $(x,p)$.
\end{proof}

\subsection{Proof of reversibility in \Cref{algo:cont-constrained_HMC}}\label{subsec:proof_reversibility-cont}

In (RM)HMC, one often sets the time horizon of the Hamiltonian dynamics with a hyperparameter $h$. However, under \rref{ass:manifold} and \rref{ass:g}, we are not able to prove that the Hamiltonian dynamics \eqref{eq_dynamics_H} is defined at time $h>0$, given \textit{any} starting point $z_0\in \TM$. Actually, this dynamics explodes if the time horizon is finite (see \Cref{prop:horizon_cont_ham}). This theoretical limitation requires to implement some sort of ``involution checking step'' (see Line 4 in \Cref{algo:cont-constrained_HMC}), which is theoretically explained below.

In the rest of this section, for any $z_0\in \TM$, we will denote by $z(\cdot, z_0)\in \rmC^1(J_{z_0}, \TM)$ the maximal solution to \eqref{eq_dynamics_H} with starting point $z_0$, uniquely defined in \Cref{prop:existence_uniqueness_continuous}. For any $h>0$, we recall below the definition of the sets $\mse_h$ and $\bar{\mse}_h$ the map $\mathrm{T}_h$ introduced in \Cref{sec:cont-ham}:
\begin{enumerate}[wide,  labelindent=0pt, label=(\alph*)]
    \item $\mse_h=\ensembleLigne{z_0\in \TM}{h<\sup J_{z_0}}$,
    \item $\bar{\mse}_h=\ensembleLigne{z_0\in \mse_h}{h<\sup J_{(s \circ \mathrm{T}_h)(z_0)}}$,
    \item the map $\mathrm{T}_h: \mse_h \to \TM$ is such that $\mathrm{T}_h(z_0)=z(h, z_0)$ for any $z_0\in \mse_h$.
\end{enumerate}
In particular, we have
\begin{align}\label{eq:E_h_cont}
        \bar{\mse}_h=\mse_h \cap (s\circ \mathrm{T}_h)^{-1}(\mse_h) \eqsp .
\end{align}
We first prove that the restriction of $s\circ \mathrm{T}_h$ to $\bar{\mse}_h$ is an involutive integrator, which is crucial to derive the ``reversibility`` of \Cref{algo:cont-constrained_HMC} in \Cref{th:cont-reversibility-q}.

\begin{lemma}\label{lemma:T_h_symplectic} Assume \rref{ass:manifold},
  \rref{ass:g}. Then, for any $h>0$, $s \circ \mathrm{T}_h$ is a $\rmC^1$-diffeomorphism on $\bar{\mse}_h$ such that $|\operatorname{det Jac}[s \circ \mathrm{T}_h]|\equiv 1$.
\end{lemma}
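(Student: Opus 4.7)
The plan is to establish three things in sequence: (i) $\bar{\mse}_h$ is open and $\mathrm{T}_h$ is of class $\rmC^1$ there; (ii) the restriction of $s \circ \mathrm{T}_h$ to $\bar{\mse}_h$ is an involution; (iii) its Jacobian has determinant of absolute value $1$, via symplecticity of the Hamiltonian flow.

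For (i), I would invoke the proof of \Cref{prop:existence_uniqueness_continuous}: the driving vector field $F = (\partial_p H, -\partial_x H)$ is $\rmC^1$ on $\TM$ (since $\phi \in \rmC^4$ and $V \in \rmC^2$ under \rref{ass:manifold}, \rref{ass:g}), so classical results on smooth dependence of ODE flows on initial conditions yield that $\mse_h$ is open and $\mathrm{T}_h \in \rmC^1(\mse_h, \TM)$. Linearity of $s$ then gives $s \circ \mathrm{T}_h \in \rmC^1(\mse_h, \TM)$, so $(s \circ \mathrm{T}_h)^{-1}(\mse_h)$ is open and therefore $\bar{\mse}_h = \mse_h \cap (s \circ \mathrm{T}_h)^{-1}(\mse_h)$ is open.

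For (ii), the key observation is that $H \circ s = H$ since $H$ is even in $p$; using $\partial_p H \circ s = -\partial_p H$ and $\partial_x H \circ s = \partial_x H$, whenever $t \mapsto z(t)$ solves \eqref{eq_dynamics_H} on $[0,h]$ the reversed curve $\tilde z(t) := s(z(h-t))$ also solves \eqref{eq_dynamics_H}. Taking $z_0 \in \bar{\mse}_h$ and $z(\cdot) = z(\cdot, z_0)$, the curve $\tilde z$ starts at $\tilde z(0) = (s \circ \mathrm{T}_h)(z_0) \in \mse_h$ and, by uniqueness in \Cref{prop:existence_uniqueness_continuous}, coincides with the maximal solution emanating from $(s \circ \mathrm{T}_h)(z_0)$; hence $\mathrm{T}_h((s \circ \mathrm{T}_h)(z_0)) = \tilde z(h) = s(z_0)$, so $(s \circ \mathrm{T}_h)^2(z_0) = z_0$. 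This both shows $(s \circ \mathrm{T}_h)(\bar{\mse}_h) \subset \bar{\mse}_h$ and that $s \circ \mathrm{T}_h$ restricted to $\bar{\mse}_h$ is a $\rmC^1$ bijection whose inverse is itself, whence a $\rmC^1$-diffeomorphism. The only subtle point of the whole proof is here: the reversed trajectory must genuinely extend up to time $h$, which is exactly what the second clause in the definition \eqref{eq:E_h_cont} of $\bar{\mse}_h$ enforces.

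For (iii), I would invoke Liouville's theorem: $\mathrm{T}_h$ is the time-$h$ flow of the Hamiltonian vector field associated with $H$ on $\TM$ endowed with the canonical symplectic form $\sum_i \rmd x_i \wedge \rmd p_i$, hence it is symplectic and in particular preserves the volume form $\rmd x \, \rmd p$, giving $|\det \Jac(\mathrm{T}_h)| \equiv 1$ on $\mse_h$. Since $s$ is linear with matrix $\mathrm{diag}(\Idd, -\Idd)$, $|\det \Jac(s)| = 1$, and the chain rule yields $|\det \Jac(s \circ \mathrm{T}_h)| \equiv 1$ on $\bar{\mse}_h$, completing the proof.
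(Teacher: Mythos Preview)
Your proof is correct and follows essentially the same approach as the paper: smooth dependence on initial conditions for part (i), the time-reversal argument $\tilde z(t)=s(z(h-t))$ combined with uniqueness for part (ii), and volume preservation of the Hamiltonian flow for part (iii). The paper's proof is slightly terser in part (iii), deriving the diffeomorphism property from the involution and then asserting $|\det\Jac|\equiv 1$; your explicit appeal to Liouville's theorem for the symplectic flow $\mathrm{T}_h$ is the cleaner justification, since the involution identity $\det\Jac(f)(f(z))\cdot\det\Jac(f)(z)=1$ alone does not force each factor to have modulus $1$.
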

\begin{proof} Assume \rref{ass:manifold},
  \rref{ass:g}. Let $h>0$. It is clear that $\mathrm{T}_h$ (and thus $s\circ \mathrm{T}_h)$ is well defined and continuously differentiable on $\mse_h$ (in particular on $\bar{\mse}_h$), by elaborating on the proof of \Cref{prop:existence_uniqueness_continuous} combined with Cauchy-Lipschitz's theorem. Let $z_0 \in \bar{\mse}_h$. By definition of $\bar{\mse}_h$, $\mathrm{T}_h(z_0)$, resp. $J_{z_0}$, and $(\mathrm{T}_h \circ s \circ \mathrm{T}_h)(z_0)$, resp. $J_{(s \circ \mathrm{T}_h)(z_0)}$, are well defined. We now aim to prove that $(\mathrm{T}_h \circ s \circ \mathrm{T}_h)(z_0)=s(z_0)$. 
  
  We define $z':t \in [0,h] \mapsto s(z(h-t,z_0))$, which existence is straightforward since $(h-t) \in J_{z_0}$ for any $t\in [0,h]$. In particular, $z'(0)=(s \circ \mathrm{T}_h)(z_0)$ and $z'$ is a solution of the ODE~\eqref{eq_dynamics_H} on the interval $[0,h]$. Yet, $[0,h] \subset J_{(s \circ \mathrm{T}_h)(z_0)}$, and $z'(0)=z(0,(s \circ \mathrm{T}_h)(z_0))$. Then, by unicity of $z(\cdot,(s \circ \mathrm{T}_h)(z_0))$ (see \Cref{prop:existence_uniqueness_continuous}), $z(\cdot,(s \circ \mathrm{T}_h)(z_0))$ and $z'$ coincide on $[0,h]$. In particular, we have at time $h$
  \begin{align}
      s(z_0)=z'(h)=z(h, (s \circ \mathrm{T}_h)(z_0))= (\mathrm{T}_h \circ s \circ \mathrm{T}_h)(z_0) \eqsp .
  \end{align}
  Then for any $z_0\in \bar{\mse}_h$, $(s\circ \mathrm{T}_h\circ s\circ \mathrm{T}_h)(z_0)=z_0$, \ie, $s\circ \mathrm{T}_h$ is an involution on $\bar{\mse}_h$. Since $s\circ \mathrm{T}_h \in \rmC^1(\bar{\mse}_h,\TM)$, $s \circ \mathrm{T}_h$ is a $\rmC^1$-diffeomorphism on $\bar{\mse}_h$ such that $|\operatorname{det Jac}[s \circ \mathrm{T}_h]|\equiv 1$.
\end{proof}

We recall that we denote by $\Qker_c:\TM \times \borel(\TM) \rightarrow [0,1]$, the transition
kernel of the (homogeneous) Markov chain $(X_n,P_n)_{n \in [N]}$ obtained with \Cref{algo:cont-constrained_HMC}. We also denote by:
\begin{enumerate}[wide,  labelindent=0pt, label=(\alph*)]
\item $\Qker_{0}: \TM \times \borel(\TM) \rightarrow [0,1]$, the transition kernel referring to Step 1 (also Step 3) in \Cref{algo:cont-constrained_HMC}.
    \item $\Qker_{c,1}: \TM \times \borel(\TM) \rightarrow [0,1]$, the transition kernel referring to Step 2 in \Cref{algo:cont-constrained_HMC}.
\end{enumerate}
We provide below details on Markov kernels $\Qker_0$, $\Qker_{c,1}$ and $\Qker_c$.

\paragraph{Kernel $\Qker_0$.} This kernel corresponds to the Gaussian momentum
update with refreshing rate $\beta$. For any $(z,z')\in \TM \times \TM$, we have
\begin{align}\label{eq:kernel-q0}
    \Qker_0(z,\rmd z')&=\mathrm{N}_x(p'; \sqrt{1-\beta}p,  \beta \Id )\rmd p'\updelta_x(\rmd x')\\
    & = (2\uppi \beta )^{-d/2} \operatorname{det}(\metric(x))^{-1/2} \textstyle{\exp[-(2\beta)^{-1}\| p' - \sqrt{1-\beta}p\|^2_{\metric(x)^{-1}} ] \rmd p'\updelta_x(\rmd x') \eqsp.}
\end{align}

\paragraph{Kernel $\Qker_{c,1}$.}
This kernel is deterministic and corresponds to the \textit{exact} integration of the Hamiltonian up until time $h$. For any $(z,z')\in \TM \times \TM$, we have
\begin{align}\label{eq:kernel-qc1}
    \Qker_{c,1}(z,\rmd z') = \mathbbm{1}_{\bar{\mse}_h}(z)\updelta_{\mathrm{T}_h(z)}(\rmd z')+\mathbbm{1}_{\bar{\mse}_h^{\complementary}}(z) \updelta_{s(z)}(\rmd z^{\prime}) \eqsp .
\end{align}

\paragraph{Kernel $\Qker_c$.} This kernel corresponds to one step of
\Cref{algo:cont-constrained_HMC} (\ie, comprising Steps 1 to 3). For any $(z,z')\in \TM\times\TM$, we have
\begin{align}\label{eq:kernel-qc}
  \textstyle{
    \Qker_c(z,\rmd z')=\int_{\TM\times \TM}\Qker_0(z,\rmd z_1) \Qker_{c,1}(z_1, \rmd z_2) \Qker_0(z_2,\rmd z') \eqsp .
    }
\end{align}
We recall that $\bar{\pi}$, as defined in \eqref{eq:bar_pi}, admits a density with respect to the
product Lebesgue measure given for any $z=(x,p) \in \TM$ by
  \begin{align}
(\rmd \bar{\pi} /(\rmd x\rmd p))(x,p) =   (1/Z) \exp[-(1/2)\| p\|^2_{\metric(x)^{-1}} ] \operatorname{det}(\metric(x))^{-1/2} \exp[-V(x)]  \eqsp . 
  \end{align}

\begin{lemma}\label{lemma:reversibility_q_0} Assume \rref{ass:manifold}. Then, the Markov kernel
  $\Qker_0$, defined in \eqref{eq:kernel-q0}, is reversible (up to momentum reversal) with respect to $\bar{\pi}$.
\end{lemma}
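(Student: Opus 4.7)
The plan is to exploit the fact that $\Qker_0$ preserves the position variable, so that reversibility reduces to detailed balance of the partial Gaussian refreshment on the momentum, at each fixed $x$. I will first establish \emph{standard} reversibility of $\Qker_0$ with respect to $\bar{\pi}$ by writing the joint density of $(z,z')$ under $\Qker_0(z,\rmd z')\bar\pi(\rmd z)$ and checking that it is symmetric under the swap $(x,p,x',p')\mapsto(x',p',x,p)$. I will then upgrade this to reversibility up to momentum reversal using \Cref{lemma:reversible_both}, after verifying that $\bar\pi$ and $\Qker_0$ are both invariant under momentum reversal $s$.

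Concretely, using \eqref{eq:kernel-q0} and the explicit density of $\bar\pi$, the measure $\bar\pi(\rmd z)\Qker_0(z,\rmd z')$ factors, with respect to $\rmd x\,\rmd p\,\rmd x'\,\rmd p'$, as $\updelta_x(\rmd x')$ times an $x$-dependent constant $(1/Z)(2\uppi\beta)^{-d/2}\det(\metric(x))^{-1}\rme^{-V(x)}$ multiplied by
\[
\exp\!\left[-\tfrac{1}{2}\|p\|_{\metric(x)^{-1}}^{2} - \tfrac{1}{2\beta}\|p'-\sqrt{1-\beta}\,p\|_{\metric(x)^{-1}}^{2}\right].
\]
Expanding the second square in $\|\cdot\|_{\metric(x)^{-1}}$ and collecting terms, the exponent becomes
\[
-\tfrac{1}{2\beta}\|p\|_{\metric(x)^{-1}}^{2} - \tfrac{1}{2\beta}\|p'\|_{\metric(x)^{-1}}^{2} + \tfrac{\sqrt{1-\beta}}{\beta}\langle p,p'\rangle_{\metric(x)^{-1}},
\]
which is manifestly symmetric in $(p,p')$. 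Since the $\delta_x(\rmd x')$ factor enforces $x=x'$, the full density is symmetric under the swap $(x,p)\leftrightarrow(x',p')$, yielding \eqref{eq:reversibility_def} for $\Qker_0$.

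Finally, to obtain reversibility up to momentum reversal I will apply \Cref{lemma:reversible_both}. The invariance $s_\#\bar\pi=\bar\pi$ is immediate from the fact that $H(x,-p)=H(x,p)$ via \eqref{eq:hamiltonian}. For $s_\#\Qker_0=\Qker_0$ (in the sense $\Qker_0(s(z),s(A))=\Qker_0(z,A)$), one notes that the law of $-(\sqrt{1-\beta}\,p+\sqrt{\beta}\,G)$ equals the law of $\sqrt{1-\beta}(-p)+\sqrt{\beta}\,G$ because $G$ and $-G$ share the same centered $\mathrm{N}_x(0,\Id)$ distribution; equivalently, the density in \eqref{eq:kernel-q0} is even under $(p,p')\mapsto(-p,-p')$. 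Combining both symmetries with the standard reversibility established above concludes the proof. There is no serious obstacle here: the only care needed is to compute the Gaussian exponent using the $\metric(x)^{-1}$ inner product rather than the Euclidean one, so that the normalization $\det(\metric(x))^{-1/2}$ appearing in both $\bar\pi$ and $\Qker_0$ combines cleanly with the quadratic form.
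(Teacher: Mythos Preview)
Your proposal is correct and follows essentially the same route as the paper: both write out the product $\bar\pi(\rmd z)\Qker_0(z,\rmd z')$, expand the Gaussian exponent in the $\metric(x)^{-1}$ inner product to exhibit its symmetry in $(p,p')$, conclude standard reversibility, and then invoke \Cref{lemma:reversible_both} together with $s_\#\Qker_0=\Qker_0$ to obtain reversibility up to momentum reversal. The only cosmetic difference is that you also record $s_\#\bar\pi=\bar\pi$, which is true but not needed for \Cref{lemma:reversible_both}.
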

\begin{proof}Assume \rref{ass:manifold}. For any $(z,z')\in \TM \times \TM$, we have
  \begin{align}
    \Qker_0(z,\rmd z')\bar{\pi}(\rmd z) 
    & \textstyle{= (1/Z) (2\uppi )^{-d/2}\beta^{-d/2} \operatorname{det}(\metric(x))^{-1} \exp[-V(x)]    \rmd x \rmd p \rmd p'\updelta_x(\rmd x')}\\ 
    & \textstyle{\qquad \exp[-(2\beta)^{-1}\| p' - \sqrt{1-\beta}p\|^2_{\metric(x)^{-1}} ]\exp[-(1/2)\| p\|^2_{\metric(x)^{-1}} ]}\\
    &\textstyle{ = (1/Z) (2\uppi )^{-d/2}\beta^{-d/2} \operatorname{det}(\metric(x))^{-1} \exp[-V(x)]    \rmd x \rmd p \rmd p'\updelta_x(\rmd x')}\\
    & \textstyle{\qquad \exp[-(2\beta)^{-1} \{\|p\|_{g(x)^{-1}}^2 - 2\sqrt{1-\beta}\langle p',p\rangle_{g(x)^{-1}}+ \|p'\|_{g(x)^{-1}}^2\}]}\\
    & \textstyle{= (1/Z) (2\uppi )^{-d/2}\beta^{-d/2} \operatorname{det}(\metric(x'))^{-1} \exp[-V(x')]    \rmd x' \rmd p' \rmd p\updelta_x(\rmd x)}\\
    & \textstyle{\qquad \exp[-(2\beta)^{-1} \{\|p'\|_{g(x')^{-1}}^2 - 2\sqrt{1-\beta}\langle p,p'\rangle_{g(x')^{-1}}+ \|p\|_{g(x')^{-1}}^2\}]}\\
    & = \Qker_0(z',\rmd z)\bar{\pi}(\rmd z') \eqsp ,
  \end{align}
  which proves the reversibility of $\Qker_0$ with respect to $\bar{\pi}$. Since $s_\#\Qker_0=\Qker_0$, we conclude the proof with \Cref{lemma:reversible_both}. 
\end{proof}
\begin{lemma}\label{lemma:reversibility_q_c_1} Assume \rref{ass:manifold}, \rref{ass:g}. Then, for any $h>0$, the Markov kernel
  $\Qker_{c,1}$ with step-size $h$, defined in \eqref{eq:kernel-qc1}, is reversible up to momentum reversal 
  with respect to $\bar{\pi}$.  
\end{lemma}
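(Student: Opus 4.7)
The plan is to decompose $\Qker_{c,1}(z,\cdot)$ according to the indicators $\mathbbm{1}_{\bar{\mse}_h}$ and $\mathbbm{1}_{\bar{\mse}_h^\complementary}$ appearing in \eqref{eq:kernel-qc1}, and treat each piece separately. Fix $f \in \rmC(\TM\times\TM,\rset)$ with compact support. Then
\begin{align}
\textstyle{\int_{\TM\times\TM} f(z,z')\Qker_{c,1}(z,\rmd z')\bar{\pi}(\rmd z) = \mathcal{I}_1 + \mathcal{I}_2 \eqsp,}
\end{align}
where $\mathcal{I}_1 = \int_{\bar{\mse}_h} f(z,\mathrm{T}_h(z))\rme^{-H(z)}\rmd z/\bar{Z}$ and $\mathcal{I}_2 = \int_{\bar{\mse}_h^\complementary} f(z,s(z))\rme^{-H(z)}\rmd z/\bar{Z}$; an analogous split applies to the right-hand side of the desired identity with $f(z,z')$ replaced by $f(s(z'),s(z))$.

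The piece on $\bar{\mse}_h^\complementary$ is immediate: the right-hand side integrand becomes $f(s(s(z)),s(z)) = f(z,s(z))$, so $\mathcal{I}_2$ equals its own reversed counterpart. The real content lies in the piece on $\bar{\mse}_h$, and this is where I would invoke the three key ingredients:
\begin{enumerate}[wide, labelindent=0pt, label=(\roman*)]
\item \Cref{lemma:T_h_symplectic}, stating that $s\circ \mathrm{T}_h$ is a $\rmC^1$-diffeomorphism on $\bar{\mse}_h$ with $|\det\Jac(s\circ\mathrm{T}_h)|\equiv 1$ (hence, being its own inverse, it maps $\bar{\mse}_h$ onto $\bar{\mse}_h$);
\item energy conservation $H(\mathrm{T}_h(z)) = H(z)$ for $z \in \bar{\mse}_h$, established in \Cref{prop:existence_uniqueness_continuous}(iv);
\item $H \circ s = H$, which is clear from \eqref{eq:hamiltonian} since $H$ is quadratic in $p$.
\end{enumerate}

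With these in hand, I would perform the change of variables $z \mapsto w := (s\circ\mathrm{T}_h)(z)$ inside $\mathcal{I}_1$. Since $s\circ\mathrm{T}_h$ is an involution on $\bar{\mse}_h$, one has $z = (s\circ\mathrm{T}_h)(w)$ and $\mathrm{T}_h(z) = s(w)$; combining (ii) and (iii) gives $H(z) = H(w)$, while (i) gives a unit Jacobian factor. Therefore
\begin{align}
\textstyle{\mathcal{I}_1 = \int_{\bar{\mse}_h} f\bigl((s\circ\mathrm{T}_h)(w), s(w)\bigr)\rme^{-H(w)}\rmd w/\bar{Z} = \int_{\bar{\mse}_h} f\bigl(s(\mathrm{T}_h(w)),s(w)\bigr)\rme^{-H(w)}\rmd w/\bar{Z} \eqsp,}
\end{align}
which is precisely the $\bar{\mse}_h$-contribution to the right-hand side of the reversibility identity. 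Adding the two pieces back together yields \eqref{eq:reversibility_momentum_def} for $\Qker_{c,1}$.

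The main obstacle is the book-keeping around $\bar{\mse}_h$: one must verify that $\bar{\mse}_h$ is stable under $s\circ \mathrm{T}_h$ (so that the change of variables stays inside the same domain and the indicators match up correctly), but this follows directly from the involutive structure provided by \Cref{lemma:T_h_symplectic}. Everything else is a routine application of the integration-by-substitution formula combined with the invariance of $H$ under the flow and under momentum reversal.
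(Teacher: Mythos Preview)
Your proposal is correct and follows essentially the same approach as the paper: both decompose the integral according to $\mathbbm{1}_{\bar{\mse}_h}$ and $\mathbbm{1}_{\bar{\mse}_h^\complementary}$, observe that the complementary piece is trivially symmetric, and handle the $\bar{\mse}_h$ piece via the change of variables $z\mapsto (s\circ\mathrm{T}_h)(z)$ using \Cref{lemma:T_h_symplectic} together with energy conservation. Your explicit mention of $H\circ s = H$ as a separate ingredient is a slight refinement over the paper's presentation, which bundles it into the invariance of $\bar{\pi}$, but the argument is otherwise identical.
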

\begin{proof} Assume \rref{ass:manifold}, \rref{ass:g}. We recall that the set $\bar{\mse}_h$ is defined in \eqref{eq:E_h_cont}. Let $f \in \rmC(\TM\times \TM, \rset)$ with compact support. According to
  \Cref{def:reversibility}, we aim to show that
\begin{align}\label{eq:reversibility_q_c_1}
  \textstyle{
    \int_{\TM \times \TM} f(z,z') \Qker_{c,1}(z,\rmd z')\bar{\pi}(\rmd z) = \int_{ \TM \times \TM} f(s(z'),s(z)) \Qker_{c,1}(z,\rmd z')\bar{\pi}(\rmd z) \eqsp. }
\end{align}

We denote by $I$ the left integral of \eqref{eq:reversibility_q_c_1}. We have $I=I_1 + I_2$ where
\begin{align}
    I_1 =\textstyle{\int_{\bar{\mse}_h} \bar{\pi}(z)   f(z,\mathrm{T}_h(z)) \rmd z} \eqsp , \qquad I_2 = \textstyle{\int_{\bar{\mse}_h^\complementary} \bar{\pi}(z) f(z, s(z)) \rmd z  \eqsp .}
\end{align}

We denote by $J$ the right integral of \eqref{eq:reversibility_q_c_1}. By symmetry, we have $J=J_1+J_2$ where
\begin{align}
    J_1 =\textstyle{\int_{\bar{\mse}_h} \bar{\pi}(z)   f((s \circ \mathrm{T}_h)(z),s(z)) \rmd z} \eqsp , \qquad J_2 = \textstyle{\int_{\bar{\mse}_h^\complementary} \bar{\pi}(z) f(z, s(z)) \rmd z  \eqsp .}
\end{align}

We directly have $I_2=J_2$. Let us now prove that $I_1=J_1$. By change of variable $z \mapsto (s\circ \mathrm{T}_h)(z)$ in $I_1$, we have
\begin{align}
    I_1 & = \textstyle{\int_{\bar{\mse}_h} \bar{\pi}(z)   f(z,\mathrm{T}_h(z)) \rmd z}\\
    & = \textstyle{\int_{(s\circ \mathrm{T}_h)(\bar{\mse}_h)} \bar{\pi}((s\circ \mathrm{T}_h)(z))   f((s \circ \mathrm{T}_h)(z),s(z)) \rmd z} && \text{(\Cref{lemma:T_h_symplectic})}\\
    &= \textstyle{\int_{\bar{\mse}_h} \bar{\pi}(z)   f((s \circ \mathrm{T}_h)(z),s(z)) \rmd z}&& \text{(\Cref{prop:existence_uniqueness_continuous}-(iv) \& $(s\circ \mathrm{T}_h)(\bar{\mse}_h)= \bar{\mse}_h$)}\\
    &= J_1 \eqsp .
\end{align}

Finally, we obtain $I=J$ and thus prove \eqref{eq:reversibility_q_c_1} for any continuous function with compact support, which concludes the proof.

\end{proof}

We are now ready to prove \Cref{th:cont-reversibility-q}, which states that $\Qker_c$ is reversible up to momentum reversal with respect to $\bar{\pi}$.
\begin{proof}[Proof of \Cref{th:cont-reversibility-q}] Assume \rref{ass:manifold}, \rref{ass:g}. Let $f : \TM\times \TM \rightarrow \rset$ be a continuous function with compact support. We have 
\begin{align}
    & \textstyle{\int_{\TM \times \TM} f(z,z')\Qker_c(z,\rmd z') \bar{\pi}(\rmd z) }&&\\
    & \quad=\textstyle{\int_{(\TM)^4} f(z,z')\Qker_0(z,\rmd z_1) \Qker_{c,1}(z_1, \rmd z_2) \Qker_0(z_2,\rmd z')\bar{\pi}(\rmd z)}&& \text{(see \eqref{eq:kernel-qc})}\\
    & \quad =\textstyle{\int_{(\TM)^4} f(s(z_1),z')\Qker_0(z,\rmd z_1) \Qker_{c,1}(s(z), \rmd z_2) \Qker_0(z_2,\rmd z')\bar{\pi}(\rmd z)}&&\text{(\Cref{lemma:reversibility_q_0})}\\
    & \quad =\textstyle{\int_{(\TM)^4} f(s(z_1),z')\Qker_0(s(z),\rmd z_1) \Qker_{c,1}(z, \rmd z_2) \Qker_0(z_2,\rmd z')\bar{\pi}(\rmd z)}&&\text{(momentum reversal on $z$)}\\
    & \quad =\textstyle{\int_{(\TM)^4} f(s(z_1),z')\Qker_0(z_2,\rmd z_1)  \Qker_{c,1}(z, \rmd z_2) \Qker_0(s(z),\rmd z')\bar{\pi}(\rmd z)}&& \text{(\Cref{lemma:reversibility_q_c_1})}\\
    & \quad =\textstyle{\int_{(\TM)^4} f(s(z_1),z')\Qker_0(z_2,\rmd z_1)  \Qker_{c,1}(s(z), \rmd z_2) \Qker_0(z,\rmd z')\bar{\pi}(\rmd z)}&& \text{(momentum reversal on $z$)}\\
    & \quad=\textstyle{\int_{(\TM)^4} f(s(z_1),s(z))\Qker_0(z_2,\rmd z_1) \Qker_{c,1}(z', \rmd z_2) \Qker_0(z,\rmd z') \bar{\pi}(\rmd z)}&& \text{(\Cref{lemma:reversibility_q_0})}\\
    & \quad=\textstyle{\int_{\TM \times \TM} f(s(z'),s(z))\Qker_c(z,\rmd z') \bar{\pi}(\rmd z) \eqsp .}
\end{align}
Moreover, $s_\#\bar{\pi}=\bar{\pi}$. Hence, by combining \Cref{def:reversibility} and \Cref{lemma:preserve_measure}, we obtain the result of \Cref{th:cont-reversibility-q}.
\end{proof}

%%% Local Variables:
%%% mode: latex
%%% TeX-master: "main"
%%% End:

\section{Numerical integration in HMC}
\label{sec:num-int-facts}
In this section, we first recall the definition of symplectic maps, and then discuss the choice of integrators in RMHMC. Finally, we focus on the implicit integrator defined in \eqref{eq:integrator} and provide some notations, which will be notably used in \cref{sec:proofs-crefs-impl}.

\paragraph{Reminders on symplecticity.} We define $J_d=\begin{pmatrix}
0 & \Idd \\
-\Idd & 0
\end{pmatrix}
\in \rset^{2d \times 2d}$.

\begin{definition}[\cite{hairer2006geometric}] A linear mapping $A:\rset^{2d}\rightarrow \rset^{2d}$ is called \textit{symplectic} if $A^\top J_d A=J_d$.
\end{definition}

\begin{definition}[\cite{hairer2006geometric}] A differentiable map $F : \msu \rightarrow \rset^{2d}$, where $\msu \subset \rset^{2d}$ is an open set, is called \textit{symplectic} if the Jacobian matrix $\operatorname{Jac}[F]$ is symplectic, i.e., if $\operatorname{Jac}[F](z)^\top J_d \operatorname{ Jac}[F](z) = J_d$ for any $z\in \msu$. In particular, if $F$ is symplectic, then $|\operatorname{det Jac}[F]|\equiv 1$.
\end{definition}

  \paragraph{Choice of the integrators in RMHMC.} Introducing a Riemannian metric $\metric$
in the Hamiltonian \textit{a priori} makes it non separable. Thus, designing an
integrator which is (i) symplectic, (ii) reversible and (iii) not too
computationally heavy is challenging. The \textit{generalized Leapfrog}
integrator (GLI), or equivalently the St\"ormer-Verlet integrator, combined with fixed point iterations, as chosen by
\cite{girolami2011riemann}, is considered as the standard scheme for
RMHMC. \cite{brofos2021numerical} analyze the impact of GLI on the
ergodicity of RMHMC under a metric which is designed in the same manner as \cite{cobb2019introducing}. In
particular, they show that the convergence threshold used to compute the
fixed-point process only matters to an extent, and identify a diminishing return
to using smaller convergence thresholds. They compare the fixed-point approach
with Newton's method, which gives less iterations to converge. On the other
hand, the \textit{implicit midpoint} integrator (IMI) enjoys the same theoretical properties as GLI, when these two integrators are combined with fixed-point iterations. As
shown by \cite{brofos2021evaluating}, IMI may also show numerical advantages in terms
of reversibility and volume preservation for specific target distributions. However, this integrator is hard to use
for reversibility proofs due to its non-separability. Besides this,
\cite{cobb2019introducing} combine a $2$-state augmented Hamiltonian, based on
the work of \cite{tao2016explicit}, with Strang splitting
\citep{strang1968construction} to propose a symplectic \emph{explicit}
integrator which thus has the advantage not to rely on fixed-point
iterations. Yet, this integrator does not satisfy reversibility in theory.

\paragraph{Details on the Störmer-Verlet integrator.} The scheme defined in \eqref{eq:integrator} can be written as $\Gh=\oGh\circ\uGh$ where:
\begin{enumerate}[wide, labelindent=0pt, label=(\alph*)]
\item $\uGh:\TM \rightarrow 2^{\TM}$ is the set-valued map \textit{implicitly} defined by the
  first two aligns of \eqref{eq:integrator}, \ie, for any $(z,z')\in \TM\times \TM$,
  $z'\in \uGh(z)$ if and only if
    \begin{align}\label{eq:uGh}
        p' = p- \tfrac{h}{2}\partial_x H_{2}(x,p'), \qquad x'=x+\tfrac{h}{2} [\partial_p H_{2}(x, p')+ \partial_p H_{2}(x', p')] \eqsp ,
    \end{align}
    We also define the map $\ugh:\TM \times \TM \rightarrow \rset^n \times \rset^n$ by 
    \begin{align}
      \ugh(z,z')=(x'-x-\tfrac{h}{2}[\partial_p H_{2}(x, p')+ \partial_p H_{2}(x', p')] , p'-p+\tfrac{h}{2}\partial_x H_2(x,p')) \eqsp . \label{eq:ugh}
    \end{align}
    such that $\ugh(z,z')=0$ if and only if $z'\in \uGh(z)$.
  \item $\oGh:\TM \rightarrow \TM$ is \textit{explicitly} defined by the last align of
    \eqref{eq:integrator}, \ie, for any $z\in \TM$,
    \begin{align}\label{eq:oGh}
    \oGh(z)=(x,p-\tfrac{h}{2}\partial_x H_2(x,p)) \eqsp . 
    \end{align}
  \end{enumerate}

%%% Local Variables:
%%% mode: latex
%%% TeX-master: "main"
%%% End:

\section{Proofs of  \Cref{sec:diff-impl-mapp}}
\label{sec:proofs-crefs-impl}

In this section, we state several results on the maps defined by the implicit integrators of the Hamiltonian (see \Cref{subsec:integrators}), in order to prove \Cref{prop:diffeomorphism}.

\begin{lemma}\label{lemma:contraction}  Let $\msu \subset \rset^d$ a non-empty open set and $z\in \msu$. Assume there exist a neighbourhood $\msu_z\subset \msu$ of $z$, $\Ltt>0$ and a $\Ltt$-Lipschitz-continuous map $\psi:\msu_z \to \rset^d$ such that $\psi\in \rmC^1(\msu_z, \rset^d)$. Then, for any $h\in (0,1/\Ltt)$ and any $a \in \rset^d$, the map $\psi_h :\msu_z \to \rset^d$ defined for any $z'\in \msu_z$ by $\psi_h(z')=z' + h \psi(z')+ a$ is a $\rmC^1$-diffeomorphism on a neighbourhood $\msu_z'\subset \msu_z$ of $z$.
\end{lemma}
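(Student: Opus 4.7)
The plan is to apply the inverse function theorem to $\psi_h$ at $z$. Since $\psi \in \rmC^1(\msu_z,\rset^d)$, the map $\psi_h:z' \mapsto z' + h\psi(z')+a$ also belongs to $\rmC^1(\msu_z,\rset^d)$, with Jacobian $\Jac[\psi_h](z') = \Idd + h\Jac[\psi](z')$ at every $z' \in \msu_z$. It thus suffices to verify that $\Jac[\psi_h](z)$ is invertible, after which the inverse function theorem directly yields an open neighborhood $\msu_z' \subset \msu_z$ of $z$ on which $\psi_h$ restricts to a $\rmC^1$-diffeomorphism onto its image.

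To establish invertibility, the first step is to deduce from the $\Ltt$-Lipschitz continuity of $\psi$ the operator norm bound $\|\Jac[\psi](z)\|_2 \leq \Ltt$: for any unit vector $v \in \rset^d$ and $t>0$ small enough, the difference quotient $t^{-1}(\psi(z+tv)-\psi(z))$ has Euclidean norm at most $\Ltt$, and letting $t\to 0$ yields $\|\Jac[\psi](z)v\|_2 \leq \Ltt$. Since $h < 1/\Ltt$, this gives $\|h\Jac[\psi](z)\|_2 \leq h\Ltt < 1$, so the Neumann series $\sum_{k\geq 0}(-h\Jac[\psi](z))^k$ converges and provides an explicit inverse of $\Idd + h\Jac[\psi](z) = \Jac[\psi_h](z)$. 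The inverse function theorem then closes the argument.

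The proof contains no genuine obstacle; the only point worth flagging is the standard passage from the Lipschitz constant of $\psi$ to the operator-norm bound on its differential. An equivalent but more constructive route would be to solve $\psi_h(z')=y$ directly via the Banach fixed-point theorem applied to $F_y:z' \mapsto y - a - h\psi(z')$, which has Lipschitz constant $h\Ltt < 1$: choosing a closed ball around $z$ contained in $\msu_z$ small enough and restricting $y$ to a neighborhood of $\psi_h(z)$ ensures $F_y$ sends the ball into itself, thereby producing the local inverse of $\psi_h$ and justifying the name ``contraction'' of the lemma.
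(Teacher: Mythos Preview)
Your proof is correct and follows essentially the same approach as the paper: both verify that $\psi_h\in\rmC^1$, use the Lipschitz bound on $\psi$ to control $\Jac[\psi](z)$, deduce invertibility of $\Idd+h\Jac[\psi](z)$, and conclude via the inverse function theorem. The only cosmetic difference is that the paper argues invertibility through the lower bound $\|D\psi_h(z)w\|\geq(1-h\Ltt)\|w\|$ while you invoke the Neumann series; your additional remark on the Banach fixed-point route is extra and not used in the paper.
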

\begin{proof} Assume we are provided with $\msu, z, \msu_z, \Ltt$ and $\psi$ as described in \Cref{lemma:contraction}. Let $h\in (0,1/\Ltt)$ and $a \in \rset^d$. We define the map $\psi_h: \msu_z \to \rset^d$ by $\psi_h(z')=z' + h \psi(z')+ a$, for any $z'\in \msu_z$. Since $\psi\in \rmC^1(\msu_z, \rset^d)$ , it is clear that $\psi_h\in \rmC^1(\msu_z, \rset^d)$. We have, for any $z'\in \msu_z$, $D \psi_h(z')= \operatorname{Id} + hD\psi(z')$. In particular, for any $w \in \rset^d$, it comes that
\[
\|D\psi_h (z)w\| \geq \|w\| - h\Ltt \|w\|=(1- hL)\|w\| \eqsp .
\]
Since $1-h\Ltt >0$, $\operatorname{Jac}[\psi_h](z)$ is invertible. We conclude the proof by using the local inverse function theorem.
\end{proof}

We recall that the set-valued map $\Gh$, defined in \eqref{eq:integrator}, is the \textit{generalised Leapfrog} integrator, or equivalently the \textit{St\"ormer-Verlet} integrator, of the non-separable Hamiltonian $H_2$ defined in \Cref{subsec:integrators}. We state below a result of local smoothness of this integrator.

\begin{lemma}\label{lemma:g_h} Let $(\msm, \metric)$ be a a smooth manifold of $\rset^d$ and $h>0$. Then, for any $(x^{(0)}, x^{(1)}) \in \msm\times \msm$,  the vector $p^{(0)}\in \mathrm{T}_{x^{(0)}}^\star\msm$ such that $(x^{(1)}, p^{(1)}) \in \Gh(x^{(0)}, p^{(0)})$ for any $p^{(1)} \in \mathrm{T}_{x^{(1)}}^\star\msm$ is uniquely determined by $p^{(0)}=\mathrm{G}_{h, x^{(0)}}(x^{(1)})$, where the map $\mathrm{G}_{h, x^{(0)}}: \msm\rightarrow \mathrm{T}_{x^{(0)}}^\star\msm$ is defined by
\begin{align}\label{eq:g_h}
    \mathrm{G}_{h, x^{(0)}}(y)= p^{(1/2)}(y)-\frac{h}{4} D \metric(x^{(0)})[\metric(x^{(0)})^{-1}p^{(1/2)}(y),\metric(x^{(0)})^{-1}p^{(1/2)}(y)] \eqsp,
\end{align}
with $p^{(1/2)}(y)=\frac{2}{h}(\metric(x^{(0)})^{-1}+ \metric(y)^{-1})^{-1}(y-x^{(0)})$. In particular, if $\metric\in \rmC^2(\msm, \rset^{d \times d})$, then, for any $x^{(0)}\in \msm$, $\mathrm{G}_{h, x^{(0)}} \in \rmC^1(\msm,\mathrm{T}_{x^{(0)}}^\star\msm)$.

\bigskip
For any $\alpha\geq 1$, we define $r^\star_{\alpha}>0$ by
\begin{align}\label{eq:r_star_alpha}
    r^\star_{\alpha}=\min(1/50000, 1/\{1000 \alpha(\alpha+1)\}) \eqsp .
\end{align}

We recall that we denote by
  $\msw^0(x,r)$ the open Dikin ellipsoid (w.r.t. $\metric$) at $x\in \msm$ of radius $r>0$, given by
  $\msw^0(x,r)=\ensembleLigne{y \in \rset^d}{\|y-x\|_{\metric(x)}<r}$. Assume \Cref{ass:manifold}, \Cref{ass:g}. Let $x^{(0)}\in \msm$ and $r\in (0,r^\star_{\alpha}]$, where $\alpha$ is given by \Cref{ass:g}. Then, for any $x \in \msw^0(x^{(0)},r)$, $x\in \msm$ and $\operatorname{Jac}[\mathrm{G}_{h, x^{(0)}}](x)$ is invertible for any $h>0$.

\begin{comment}
We define $C(x^{(0)})=\frac{1}{2}\nabla f(x^{(0)})+\frac{1}{4} \operatorname{Tr}\left(\metric(x^{(0)})^{-1}D \metric(x^{(0)})\right)$. 
For any $r\in (0,1)$, any $x\in \msm$ and any $\nu>0$, we define
\begin{align}\label{eq:h_0}h_0(x,r, \nu) = \left\{
    \begin{array}{ll}
        \min\left(\frac{1}{6^{1/2}(1-r)^{3/2}},\frac{1}{\left(180 \nu \right)^{1/2}} \right) \| C(x)\|_{\metric(x)^{-1}}^{-1/2} & \text{if } C(x) \neq 0\\
        +\infty & \text{otherwise}
    \end{array}
\right.
\end{align}
\end{comment}
\end{lemma}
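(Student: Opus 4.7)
The plan is to address the three assertions of the lemma in order: (i) the closed-form expression and uniqueness of $p^{(0)}$, (ii) the $\rmC^1$-regularity of $\mathrm{G}_{h,x^{(0)}}$, and (iii) the invertibility of its Jacobian on sufficiently small Dikin ellipsoids.

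For (i) and (ii), I would directly invert the St\"ormer-Verlet equations. Using $\partial_p H_2(x,p) = \metric(x)^{-1}p$, the middle equation of \eqref{eq:integrator} rearranges to $x^{(1)} - x^{(0)} = \tfrac{h}{2}(\metric(x^{(0)})^{-1} + \metric(x^{(1)})^{-1}) p^{(1/2)}$. Since $\metric(x^{(0)})^{-1} + \metric(y)^{-1}$ is positive definite for every $y \in \msm$, it is invertible, and inverting gives $p^{(1/2)}$ exactly as the stated $p^{(1/2)}(y)$ with $y = x^{(1)}$. Substituting into the first line $p^{(0)} = p^{(1/2)} + \tfrac{h}{2}\partial_x H_2(x^{(0)}, p^{(1/2)})$ together with $\partial_x H_2(x,p) = -\tfrac{1}{2} D\metric(x)[\metric(x)^{-1}p, \metric(x)^{-1}p]$ yields the claimed formula, and uniqueness of $p^{(0)}$ is immediate since only the first two equations of the scheme are used. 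For (ii), under \Cref{ass:g} we have $\phi \in \rmC^4$, hence $\metric \in \rmC^2$; combined with smoothness of matrix inversion on the positive-definite cone, $y \mapsto (\metric(x^{(0)})^{-1} + \metric(y)^{-1})^{-1}$ is of class $\rmC^2$, so $\mathrm{G}_{h,x^{(0)}} \in \rmC^1(\msm, \mathrm{T}_{x^{(0)}}^\star\msm)$.

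For (iii), the inclusion $\msw^0(x^{(0)}, r) \subset \msm$ is immediate from \Cref{lemma:calcul_g_1}(b) since $r \leq r^\star_\alpha < 1$. Setting $A(y) = (\metric(x^{(0)})^{-1} + \metric(y)^{-1})^{-1}$ so that $p^{(1/2)}(y) = \tfrac{2}{h} A(y)(y - x^{(0)})$, a direct differentiation of the explicit formula for $\mathrm{G}_{h,x^{(0)}}$ yields the factorization
\begin{align}
\operatorname{Jac}[\mathrm{G}_{h,x^{(0)}}](y) = \bigl[\Id - \tfrac{h}{2}\,\mathrm{K}(y)\bigr]\circ \mathrm{L}(y),
\end{align}
where $\mathrm{L}(y) = \operatorname{Jac}[p^{(1/2)}](y)$ and $\mathrm{K}(y)w := D\metric(x^{(0)})[\metric(x^{(0)})^{-1}p^{(1/2)}(y),\, \metric(x^{(0)})^{-1} w]$. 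At $y = x^{(0)}$ we have $p^{(1/2)} = 0$ and $\mathrm{L}(x^{(0)}) = \tfrac{1}{h}\metric(x^{(0)})$, so $\operatorname{Jac}[\mathrm{G}_{h,x^{(0)}}](x^{(0)}) = \tfrac{1}{h}\metric(x^{(0)})$, which is invertible. For general $y \in \msw^0(x^{(0)}, r)$, the plan is to bound each factor separately in the operator norm induced by $\metric(x^{(0)})$ and show both are small perturbations of their values at $x^{(0)}$.

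The main obstacle is this quantitative estimate and particularly its uniformity in $h$. For the bracketed factor, the self-concordance inequality \eqref{eq:concordant_2} of \Cref{lemma:calcul_g_1}(a) gives $\|\mathrm{K}(y)w\|_{\metric(x^{(0)})^{-1}} \leq 2\|p^{(1/2)}(y)\|_{\metric(x^{(0)})^{-1}}\|w\|_{\metric(x^{(0)})^{-1}}$, so $\|\tfrac{h}{2}\mathrm{K}(y)\|_{\mathrm{op}} \leq h\|p^{(1/2)}(y)\|_{\metric(x^{(0)})^{-1}}$; the $h$-prefactor then cancels exactly against the $1/h$-scaling in $p^{(1/2)}(y) = \tfrac{2}{h}A(y)(y-x^{(0)})$, leaving a bound of the form $2\|A(y)(y-x^{(0)})\|_{\metric(x^{(0)})^{-1}} \lesssim r$ (using $A(y) \preceq \metric(x^{(0)})$, which follows from \Cref{lemma:calcul_g_1}(b)). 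For $\mathrm{L}(y)$, differentiating $A$ introduces $\metric(y)^{-1}D\metric(y)\metric(y)^{-1}$, and comparing to the $x^{(0)}$-value forces us to control $D\metric(y) - D\metric(x^{(0)})$, which is precisely where $\alpha$-regularity enters through \eqref{eq:diff_Dg} of \Cref{lemma:calcul_regular}, yielding a factor of $\alpha(\alpha+1)$. The two thresholds $1/50000$ and $1/\{1000\,\alpha(\alpha+1)\}$ defining $r^\star_\alpha$ are then tuned so that both perturbations remain strictly less than $1$ in operator norm, giving invertibility uniformly for all $h > 0$.
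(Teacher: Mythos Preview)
Parts (i) and (ii) are correct and match the paper exactly.

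For part (iii), your factorization $\operatorname{Jac}[\mathrm{G}_{h,x^{(0)}}](y)=\bigl[\Id-\tfrac{h}{2}\mathrm{K}(y)\bigr]\circ\mathrm{L}(y)$ is correct and takes a genuinely different route from the paper. The paper does not factor the Jacobian; instead it introduces the auxiliary map $\tilde\phi_0(y)=\tfrac{h}{2}\mathrm{M}_0(y)\,\mathrm{G}_{h,x^{(0)}}(y)$ with $\mathrm{M}_0(y)=\metric(x^{(0)})^{-1}+\metric(y)^{-1}$, observes that $\tilde\phi_0(y)=(y-x^{(0)})-\tfrac{1}{2}\phi_0(y)$ is $h$-free (with $\phi_0(y)=\mathrm{M}_0(y)D\metric(x^{(0)})[\mathrm{K}_0(y),\mathrm{K}_0(y)]$), and then spends most of the proof establishing that $D\phi_0$ is $(r^\star_\alpha)^{-1}$-Lipschitz on $\msw^0(x^{(0)},r)$. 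This Lipschitz estimate on the \emph{derivative} of $\phi_0$ is where the paper actually invokes $\alpha$-regularity, via \Cref{lemma:calcul_regular}(c). From $D\phi_0(x^{(0)})=0$ and the Lipschitz bound it deduces $\|D\phi_0(y)\|<1$, hence $\|D\tilde\phi_0(y)\|>1/2$, and finally transfers this lower bound back to $\operatorname{Jac}[\mathrm{G}_{h,x^{(0)}}]$ through the product relation $(h/2)\,\mathrm{G}_{h,x^{(0)}}=\mathrm{M}_0^{-1}\tilde\phi_0$. Your approach is more direct: the $h$-cancellation in the first factor is explicit, and for the second factor one writes $\mathrm{L}(y)=\tfrac{2}{h}A(y)\bigl[\Id+A(y)^{-1}DA(y)[\,\cdot\,](y-x^{(0)})\bigr]$ and bounds the bracketed perturbation.

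This exposes the one inaccuracy in your sketch. You say $\alpha$-regularity enters when bounding $\mathrm{L}(y)$, via a need to control $D\metric(y)-D\metric(x^{(0)})$. In fact your factorization never requires this difference: the perturbation term $A(y)^{-1}DA(y)[w](y-x^{(0)})=\metric(y)^{-1}D\metric(y)[w]\,\metric(y)^{-1}A(y)(y-x^{(0)})$ is small simply because $(y-x^{(0)})$ is, and a single application of \eqref{eq:concordant_2} at the point $y$, together with $A(y)\preceq\metric(x^{(0)})$ and \eqref{eq:inequality_g_norm}, bounds its $\metric(x^{(0)})$-operator norm by $O(r)$. So your route to invertibility uses only self-concordance; the $\alpha$-dependence in $r^\star_\alpha$ reflects the paper's indirect strategy (Lipschitz constant of $D\phi_0$), not an intrinsic requirement of the lemma as stated.
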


\begin{proof} Let $(\msm, \metric)$ be a a smooth manifold of $\rset^d$ and $h>0$. We first prove the existence and uniqueness of the map defined in \eqref{eq:g_h}. Let $(x^{(0)}, x^{(1)}) \in \msm \times \msm$.  We define for any $y\in \msm$
\begin{align}
    p^{(1/2)}(y)=\frac{2}{h}(\metric(x^{(0)})^{-1}+ \metric(y)^{-1})^{-1}(y-x^{(0)})
     \eqsp, \qquad \tilde{p}(y) = p^{(1/2)}(y)+\frac{h}{2}\partial_x H_2(x^{(0)}, p^{(1/2)}(y))
     \eqsp .
\end{align}
Note that these expressions are obtained by simply inverting the first two aligns of \eqref{eq:integrator}. Then, by definition of $\mathrm{G}_{h}$, the vector $p\in \mathrm{T}_{x^{(0)}}^\star\msm$ such that $(x^{(1)}, p^{(1)})\in \mathrm{G}_{h}(x^{(0)}, p)$ for any $p^{(1)} \in \mathrm{T}_{x^{(1)}}^\star\msm$ is uniquely determined by $p=\tilde{p}(x^{(1)})$. We thus obtain \eqref{eq:g_h}, using the expression of $\partial_x H_2$ given in \Cref{subsec:integrators}. Moreover, it is clear that $\mathrm{G}_{h, x^{(0)}}$ is continuously differentiable if $\metric\in \rmC^2(\msm, \rset^{d \times d})$.

\begin{comment}
We then use the invertibility of $\Th$, recalling that $\Rh=s \circ \Th \circ \mathrm{G}_h \circ \Th$ and that $s \circ \Th$ does not modify the position variable to prove that the vector $p^{(0)}\in \mathcal{T}_{x^{(0)}}^*\msm$ such that $(x^{(1)}, p^{(1)}) \in \Rh(x^{(0)}, p^{(0)})$ for any $p^{(1)} \in \mathcal{T}_{x^{(1)}}^*\msm$ is uniquely determined by
\[
\begin{split}
    \mathrm{R}_{h, x^{(0)}}(y)& := \mathrm{T}_{-h/2}^{(2)} \left(x^{(0)}, \tilde{p}(y)\right)\\
    & =  \tilde{p}(y) + \frac{h}{2}\partial_x H_1(x^{(0)}, \tilde{p}(y)).
\end{split}
\]
\end{comment}

\bigskip

We are now going to prove the second result of \Cref{lemma:g_h}. Assume \Cref{ass:manifold}, \Cref{ass:g}.  Let $x^{(0)}\in \msm$ and $r\in (0,r^\star_{\alpha}]$, where $r^\star_{\alpha}$ is defined in \eqref{eq:r_star_alpha} and $\alpha$ is given by \Cref{ass:g}. In particular $r^\star_\alpha \leq 1/11$. For the sake of clarity, we will now denote $\metric(x^{(0)})$ by $\metric_0$ for the rest of the proof.

\bigskip

We define the open Dikin ellipsoid $\msb_0= \msw^0(x^{(0)}, r)$. Since $r<1$, $\msb_0\subset \msm$ by \Cref{lemma:calcul_g_1}-(b). Let $h>0$. We now define the following maps on $\msb_0$
\begin{align}
    \mathrm{M}_0 = \left\{
    \begin{array}{ll}
        \msb_0 &\rightarrow S_d^{++}(\rset)\\
        y &\mapsto \metric_0^{-1} + \metric(y)^{-1}
    \end{array}
\right.  & \eqsp, \qquad
\mathrm{K}_0  = \left\{
    \begin{array}{ll}
        \msb_0 & \to \rset^d\\
        y &\mapsto \metric_0^{-1}\mathrm{M}_0(y)^{-1}(y- x^{(0)}) 
    \end{array}
\right.\eqsp,\\
\phi_0  = \left\{
    \begin{array}{ll}
        \msb_0 &\rightarrow \rset^d\\
        y &\mapsto \mathrm{M}_0(y) D \metric_0[\mathrm{K}_0(y),\mathrm{K}_0(y)]
    \end{array}
\right. & \eqsp , \qquad
\tilde{\phi}_0 = \left\{
    \begin{array}{ll}
        \msb_0 &\rightarrow \rset^d\\
        y &\mapsto \frac{h}{2}\mathrm{M}_0(y)\mathrm{G}_{h, x^{(0)}}(y) 
    \end{array}
\right. \eqsp .
\end{align}
Note that we have, for any $y\in \msb_0$, $\tilde{\phi}(y)=(y-x^{(0)})-\frac{1}{2} \phi_0(y)$. Since $\metric$ is twice continuously differentiable on $\msb_0$, the maps previously defined are continuously differentiable. We now compute the derivatives of these maps. Let $y\in \msb_0$ and $w\in \rset^d$. We have
\begin{align}
    D \mathrm{M}_0(y)[w]& = -\metric(y)^{-1}D\metric(y)[w]\metric(y)^{-1}\eqsp,\\
    D \mathrm{K}_0 (y)[w]& = -\metric_0^{-1}\mathrm{M}_0(y)^{-1}D\mathrm{M}_0(y)[w]\mathrm{M}_0(y)^{-1}(y-x^{(0)}) + \metric_0^{-1}\mathrm{M}_0(y)^{-1}w\\
    & = -\metric_0^{-1}\{\metric(y)\metric_0^{-1}+\Idd\}^{-1} \metric(y) D\mathrm{M}_0(y)[w]\metric(y)\{\metric_0^{-1}\metric(y)+\Idd\}^{-1}(y-x^{(0)}) \\
    & \qquad + \metric_0^{-1}\mathrm{M}_0(y)^{-1}w\\
    & = \metric_0^{-1}\{\metric(y)\metric_0^{-1}+\Idd\}^{-1} D\metric(y)[w]\{\metric_0^{-1}\metric(y)+\Idd\}^{-1}(y-x^{(0)}) + \metric_0^{-1}\mathrm{M}_0(y)^{-1}w \eqsp,\\
    D \phi_0 (y)[w] & = D \mathrm{M}_0(y)[w] D \metric_0[\mathrm{K}_0(y), \mathrm{K}_0(y)] + \mathrm{M}_0(y)D\metric_0[D \mathrm{K}_0(y)[w],\mathrm{K}_0(y) ] \\
    & \qquad + \mathrm{M}_0(y)D\metric_0[\mathrm{K}_0(y),D\mathrm{K}_0(y)[w] ]\eqsp,\\
    D \tilde{\phi}_0(y)[w]& =\Idd - \frac{1}{2}D\phi_0(y)[w] \eqsp .
\end{align}
We have in particular $\tilde{\phi}_0(x^{(0)})=0$ and $D \phi_0(x^{(0)})=0$. Let us first study the smoothness of $D\phi_0$ on $\msb_0$.

\paragraph{Smoothness of $D\phi_0$.} We prove here that $D\phi_0$ is $(r^\star_\alpha)^{-1}$-Lipschitz-continuous on $\msb_0$ with respect to $\|\cdot\|_{\metric_0}$. Let $(y,y')\in \msb_0$ and $w \in \rset^d$. We have
\begin{align}\label{ineq:phi_0_lip}
    & \|D \phi_0 (y)[w] - D \phi_0 (y')[w]\|_{\metric_0} \\
    & \qquad \leq \|D \mathrm{M}_0(y)[w] D \metric_0[\mathrm{K}_0(y), \mathrm{K}_0(y)] - D \mathrm{M}_0(y')[w] D \metric_0[\mathrm{K}_0(y'), \mathrm{K}_0(y')]\|_{\metric_0} \\
    & \qquad \qquad + \|\mathrm{M}_0(y)D\metric_0[D \mathrm{K}_0(y)[w],\mathrm{K}_0(y) ] - \mathrm{M}_0(y')D\metric_0[D \mathrm{K}_0(y')[w],\mathrm{K}_0(y') ]\|_{\metric_0} \\
    & \qquad \qquad + \|\mathrm{M}_0(y)D\metric_0[\mathrm{K}_0(y),D\mathrm{K}_0(y)[w] ] - \mathrm{M}_0(y')D\metric_0[\mathrm{K}_0(y'),D\mathrm{K}_0(y')[w] ]\|_{\metric_0} \eqsp .
\end{align}
Remark that the two last terms in  \eqref{ineq:phi_0_lip} are very similar and can be bounded in the same way. The rest of the proof on the Lipschitz-continuity of $D\phi_0$ is separated in two steps, each of them consisting of bounding from above a term of \eqref{ineq:phi_0_lip}.

\bigskip

\underline{\textit{Step 1.}} Let us first bound the first term in \eqref{ineq:phi_0_lip}. For any $x \in \msb_0$, we denote $D \mathrm{M}_0(x)[w] D \metric_0[\mathrm{K}_0(x), \mathrm{K}_0(x)]$ by $a_1(x)$. We have
\begin{align} \label{ineq:a_1_diff}
    &\metric_0^{1/2} (a_1(y) - a_1(y'))\\
    & \qquad = \{\metric_0^{1/2}(D \mathrm{M}_0(y)[w] - D \mathrm{M}_0(y')[w])\metric_0^{1/2}\}\{\metric_0^{-1/2}D \metric_0[\mathrm{K}_0(y), \mathrm{K}_0(y)]\}\\
    & \qquad \qquad +\{\metric_0^{1/2}D \mathrm{M}_0(y')[w]\metric_0^{1/2}\}\{\metric_0^{-1/2}(D \metric_0[\mathrm{K}_0(y), \mathrm{K}_0(y)] - D \metric_0[\mathrm{K}_0(y'), \mathrm{K}_0(y')])\} \eqsp .
\end{align}
We now aim to bound each one of the four terms that appear in \eqref{ineq:a_1_diff}.

\bigskip

\underline{\textit{Step 1.1.}} First, we have
\begin{align}
    \metric_0^{1/2}(D \mathrm{M}_0(y)[w] - D \mathrm{M}_0(y')[w])\metric_0^{1/2} & = \metric_0^{1/2}(\metric(y')^{-1}D\metric(y')[w]\metric(y')^{-1} - \metric(y)^{-1}D\metric(y)[w]\metric(y)^{-1})\metric_0^{1/2} \\
    & = \metric_0^{1/2} \{\metric(y')^{-1} - \metric(y)^{-1}\}D\metric(y')[w]\metric(y')^{-1}\metric_0^{1/2} \\
    & \qquad + \metric_0^{1/2}\metric(y)^{-1}\{D\metric(y')[w] - D\metric(y)[w]\}\metric(y')^{-1}\metric_0^{1/2}\\
    & \qquad + \metric_0^{1/2}\metric(y)^{-1}D\metric(y)[w]\{\metric(y')^{-1} - \metric(y)^{-1}\}\metric_0^{1/2} \eqsp .
\end{align}
We recall that $r\leq r^\star_\alpha\leq 1/11$, and thus, we have for any $x\in \{y, y'\}$ the following inequalities
\begin{align}
    \|\metric_0^{1/2} \{\metric(y')^{-1} - \metric(y)^{-1}\} \metric_0^{1/2}\|_2 & \leq 3 (1-r)^{-3}\|y-y'\|_{\metric_0} \eqsp , && \text{(on the model of \eqref{eq:proof_cont_ham_F1_2})}\\
    \|\metric_0^{1/2}\metric(x)^{-1}\metric_0^{1/2}\|_2 & \leq (1-r)^{-2} \eqsp, && \text{(\Cref{lemma:calcul_g_1}-(b))}\\
    \|\metric_0^{-1/2}D\metric(x)[w]\metric_0^{-1/2}\|_2 & \leq (1-r)^{-2}\|\metric(x)^{-1/2}D\metric(x)[w]\metric(x)^{-1/2}\|_2 \eqsp.
\end{align}
In particular, we have for any $x\in \{y, y'\}$
\begin{align} \label{ineq:Dg_single}
    \|\metric(x)^{-1/2}D\metric(x)[w]\metric(x)^{-1/2}\|_2 & =\sup_{\ensembleLigne{w' \in \rset^d}{\|w'\|_2=1}}\|D\metric(x)[w,\metric(x)^{-1/2}w']\|_{\metric(x)^{-1}}\\
    & \leq 2 \|w\|_{\metric(x)} \sup_{\ensembleLigne{w' \in \rset^d}{\|w'\|_2=1}}\|\metric(x)^{-1/2}w'\|_{\metric(x)} && \text{(\Cref{lemma:calcul_g_1}-(a))}\\
    & \leq 2 (1-r)^{-1}\|w\|_{\metric_0} \eqsp, && \text{(\Cref{lemma:calcul_g_1}-(c))}
\end{align}
and using again that $r\leq r^\star_\alpha\leq 1/11$, we have
\begin{align}\label{ineq:Dg_double}
    & \|\metric(y)^{-1/2}\{D\metric(y')[w] - D\metric(y)[w]\}\metric(y')^{-1/2}\|_2\\
    & \qquad = \sup_{\ensembleLigne{w' \in \rset^d}{\|w'\|_2=1}}\|D\metric(y')[w,\metric(y')^{-1/2}w'] - D\metric(y)[w, \metric(y')^{-1/2} w']\}\|_{\metric(y)^{-1}}\\
    & \qquad \leq \alpha (\alpha +1) /3 \|w\|_{\metric(y)}((1-\|y-y'\|_{\metric(y)})^{-3}-1)&& \text{(\Cref{lemma:calcul_regular}-(c))}\\
    & \qquad \leq 5\alpha (\alpha +1) /3 (1-r)^{-2} \|w\|_{\metric_0}\|y-y'\|_{\metric_0} \eqsp . &&\text{(on the model of \eqref{eq:model_diff_Dg})}
\end{align}
Thus, by using \eqref{ineq:Dg_single} and \eqref{ineq:Dg_double}, we have
\begin{align} \label{ineq:A_diff-1}
    \|\metric_0^{1/2}(D \mathrm{M}_0(y)[w] - D \mathrm{M}_0(y')[w])\metric_0^{1/2}\|_2 & \leq (12 (1-r)^{-8} +5\alpha (\alpha+1)/3 (1-r)^{-6})\|y-y'\|_{\metric_0} \|w\|_{\metric_0} \eqsp .
\end{align}

\bigskip

\underline{\textit{Step 1.2.}} Secondly, we have
\begin{align}
    \|\metric_0^{-1/2}D \metric_0[\mathrm{K}_0(y), \mathrm{K}_0(y)]\|_2 & \leq 2 \|\mathrm{K}_0(y)\|_{\metric_0}^2 && \text{(\Cref{lemma:calcul_g_1}-(a))}\\
    & \leq 2 \|\metric_0^{-1/2}\mathrm{M}_0(y)^{-1}(y-x^{(0)})\|_2^2 && \text{(definition of $\mathrm{K}_0$)}\\
    & \leq 2 \|\metric_0^{-1/2}\mathrm{M}_0(y)^{-1}\metric_0^{-1/2}\|_2^2 \|y-x^{(0)}\|_{\metric_0}^2\\
    & \leq 2 r^2 \|\metric_0^{-1/2}\mathrm{M}_0(y)^{-1}\metric_0^{-1/2}\|_2^2 \eqsp,
\end{align}
where we have by \Cref{lemma:calcul_g_1}-(b)
\begin{align}\label{ineq:M_o_inv}
    \metric_0^{-1/2}\mathrm{M}_0(y)^{-1}\metric_0^{-1/2} = (\Idd + \metric_0^{1/2}\metric(y)^{-1}\metric_0^{1/2})^{-1} \preceq (1+(1-r)^2)^{-1}\Idd \eqsp .
\end{align}
Hence, we obtain
\begin{align}\label{ineq:A_diff-2}
    \|\metric_0^{-1/2}D \metric_0[\mathrm{K}_0(y), \mathrm{K}_0(y)]\|_2 \leq 2r^2(1 + (1-r)^2)^{-2} \eqsp .
\end{align}

\bigskip
 
\underline{\textit{Step 1.3.}} Thirdly, we have
\begin{align}\label{ineq:A_diff-3}
    \|\metric_0^{1/2}D \mathrm{M}_0(y')[w]\metric_0^{1/2}\|_2 & \leq (1-r)^{-2} \|\metric(x)^{-1/2}D\metric(y')[w]\metric(x)^{-1/2}\|_2 \\
    & \leq 2 (1-r)^{-3}\|w\|_{\metric_0} \eqsp . && \text{(using \eqref{ineq:Dg_single})}
\end{align}

\bigskip

\underline{\textit{Step 1.4.}} Fourthly, we have
\begin{align}\label{ineq:step_1_4}
    & \|\metric_0^{-1/2}(D \metric_0[\mathrm{K}_0(y), \mathrm{K}_0(y)] - D \metric_0[\mathrm{K}_0(y'), \mathrm{K}_0(y')])\|_2 \\
    & \qquad \leq 2 \|D \metric_0[\mathrm{K}_0(y)-\mathrm{K}_0(y'), \mathrm{K}_0(y)]\|_{\metric_0^{-1}} + \|D \metric_0[\mathrm{K}_0(y)-\mathrm{K}_0(y'),\mathrm{K}_0(y)-\mathrm{K}_0(y')]\|_{\metric_0^{-1}}\\
    & \qquad \leq 2 \|\mathrm{K}_0(y)-\mathrm{K}_0(y')\|_{\metric_0} \|\mathrm{K}_0(y)\|_{\metric_0} + \|\mathrm{K}_0(y)-\mathrm{K}_0(y')\|_{\metric_0}\|\mathrm{K}_0(y)-\mathrm{K}_0(y')\|_{\metric_0} && \text{(\Cref{lemma:calcul_g_1}-(a))}\\
    & \qquad \leq  4 \|\mathrm{M}_0(y)^{-1}(y-x^{(0)}) - \mathrm{M}_0(y')^{-1}(y'-x^{(0)})\|_{\metric_0^{-1}} \|\mathrm{M}_0(y)^{-1}(y-x^{(0)})\|_{\metric_0^{-1}} \\
    & \qquad \qquad +  2\|\mathrm{M}_0(y)^{-1}(y-x^{(0)}) - \mathrm{M}_0(y')^{-1}(y'-x^{(0)})\|_{\metric_0^{-1}}^2 \eqsp .
\end{align}
In particular, $\mathrm{M}_0(y)^{-1}(y-x^{(0)}) - \mathrm{M}_0(y')^{-1}(y'-x^{(0)}) = \mathrm{M}_0(y)^{-1}(y-y') + (\mathrm{M}_0(y)^{-1} - \mathrm{M}_0(y')^{-1})(y'-x^{(0)})$, and thus we have
\begin{align}\label{ineq:step_1_4_1}
    & \|\mathrm{M}_0(y)^{-1}(y-x^{(0)}) - \mathrm{M}_0(y')^{-1}(y'-x^{(0)})\|_{\metric_0^{-1}} \\
    & \qquad \leq \|\metric_0^{-1/2}\mathrm{M}_0(y)^{-1}\metric_0^{-1/2}\|_2 \|y-y'\|_{\metric_0} + r \|\metric_0^{-1/2} (\mathrm{M}_0(y)^{-1} - \mathrm{M}_0(y')^{-1}) \metric_0^{-1/2}\|_2 \\
    & \qquad  \leq (1+(1-r)^2)^{-1}\|y-y'\|_{\metric_0} + r \|\metric_0^{-1/2} (\mathrm{M}_0(y')^{-1} - \mathrm{M}_0(y')^{-1}) \metric_0^{-1/2}\|_2  \eqsp . && \text{(using \eqref{ineq:M_o_inv})}
\end{align}
We now aim to find an upper bound for $\|\metric_0^{-1/2} (\mathrm{M}_0(y)^{-1} - \mathrm{M}_0(y')^{-1}) \metric_0^{-1/2}\|_2$. We have
\begin{align}
    & \metric_0^{-1/2} (\mathrm{M}_0(y)^{-1} - \mathrm{M}_0(y')^{-1}) \metric_0^{-1/2}\\
    &\qquad= (\Idd + \metric_0^{1/2}\metric(y)^{-1}\metric_0^{1/2})^{-1} - (\Idd + \metric_0^{1/2}\metric(y')^{-1}\metric_0^{1/2})^{-1}\\
    & \qquad = (\Idd + \metric_0^{1/2}\metric(y')^{-1}\metric_0^{1/2} + \metric_0^{1/2}\{\metric(y)^{-1} - \metric(y')^{-1}\}\metric_0^{1/2})^{-1} - (\Idd + \metric_0^{1/2}\metric(y')^{-1}\metric_0^{1/2})^{-1}\\
    & \qquad= \rmB(y')^{-1/2}[(\Idd +\rmB(y')^{-1/2} \metric_0^{1/2}\{\metric(y)^{-1} - \metric(y')^{-1}\}\metric_0^{1/2}\rmB(y')^{-1/2})^{-1}- \Idd ] \rmB(y')^{-1/2} \eqsp ,
\end{align}
where $\rmB(y')=\Idd + \metric_0^{1/2}\metric(y')^{-1}\metric_0^{1/2}$. In particular, we have
\begin{align}
(1+ (1-r)^{-2})^{-1/2}\Idd\preceq\rmB(y')^{-1/2} \preceq (1+ (1-r)^2)^{-1/2}\Idd \eqsp ,
\end{align}
by \Cref{lemma:calcul_g_1}-(b). Note that \eqref{eq:leq_1/5} still holds, where $x, x'$ and $\bar{r}$ are respectively replaced by $y,y'$ and $r$. Thus, on the model on \eqref{eq:proof_cont_ham_F1_2}, we have
\begin{align}
    \{ (1 - \|y'-y\|_{\metric(y)})^{2}-1\}(1-r)^2 \Idd \preceq \metric_0^{1/2}\{ \metric(y)^{-1}-\metric(y')^{-1}\}\metric_0^{1/2} \preceq \{(1 - \|y'-y\|_{\metric(y)})^{-2}-1\}(1-r)^{-2}\Idd \eqsp ,
\end{align}
and then 
\begin{align}
    & (1 +\{ (1 - \|y'-y\|_{\metric(y)})^{2}-1\}(1-r)^2(1+(1-r)^{-2})^{-1}) \Idd\\
    & \qquad \qquad \preceq \Idd +\rmB(y')^{-1/2} \metric_0^{1/2}\{\metric(y)^{-1} - \metric(y')^{-1}\}\metric_0^{1/2}\rmB(y')^{-1/2}\\
    & \qquad \qquad \qquad \qquad \preceq (1 +\{(1 - \|y'-y\|_{\metric(y)})^{-2}-1\}(1-r)^{-2}(1+(1-r)^2)^{-1})\Idd \eqsp .
\end{align}
Therefore, we have $\|\metric_0^{-1/2} (\mathrm{M}_0(y')^{-1} - \mathrm{M}_0(y')^{-1}) \metric_0^{-1/2}\|_2 \leq (1+(1-r)^2)^{-1} \max (|f_1(r)|, f_2(r))$ where
\begin{align}
     f_1(r) & = (1 +\{(1 - \|y'-y\|_{\metric(y)})^{-2}-1\}(1-r)^{-2}(1+(1-r)^2)^{-1})^{-1} -1, \\
   f_2(r) & =  (1 +\{ (1 - \|y'-y\|_{\metric(y)})^{2}-1\}(1-r)^2(1+(1-r)^{-2})^{-1})^{-1}-1 \eqsp .
\end{align}
We now aim to control the upper bound $\max (|f_1(r)|, f_2(r))$. 
\begin{enumerate}[wide,  labelindent=0pt, label=(\alph*)]
    \item We first bound $|f_1(r)|$. Since $\|y'-y\|_{\metric(y)}\geq 0$, it is clear that $f_1(r)\leq 0$. Using Inequality~\ref{item:inequality_a} with $u=\|y'-y\|_{\metric(y)}$, where $u \leq 1/5$ by \eqref{eq:leq_1/5}, we obtain
    \begin{align}
    |f_1(r)| &=-f_1(r)\\
    & \leq 1 - (1 + 3(1-r)^{-2}(1+(1-r)^2)^{-1}\|y'-y\|_{\metric(y)})^{-1}\\
    & \leq 3(1-r)^{-2}(1+(1-r)^2)^{-1}\|y'-y\|_{\metric(y)} &&\text{(Inequality \ref{item:inequality_d})}\\
    & \leq 3 (1-r)^{-3}(1+(1-r)^2)^{-1}\|y'-y\|_{\metric_0} \eqsp . && \text{(\Cref{lemma:calcul_g_1}-(c))}
    \end{align}
    \item We now bound $f_2(r)$. We have
    \begin{align}
        f_2(r) & \leq  (1 -2 (1-r)^2(1+(1-r)^{-2})^{-1}\|y'-y\|_{\metric(y)})^{-1}-1\\
        & \leq  4 (1-r)^2(1+(1-r)^{-2})^{-1}\|y'-y\|_{\metric(y)}\\
        & \leq  4 (1-r)(1+(1-r)^{-2})^{-1}\|y'-y\|_{\metric_0}\eqsp, && \text{(\Cref{lemma:calcul_g_1}-(c))}
    \end{align}
    where we used (i) Inequality~\ref{item:inequality_d} in the first line with $u=\|y'-y\|_{\metric(y)}$ and (ii) Inequality~\ref{item:inequality_b} in the second line with $u=2(1-r)^2(1+(1-r)^{-2})^{-1}\|y'-y\|_{\metric(y)} \leq 4r(1-r)(1+(1-r)^{-2})^{-1}\leq 1/2$ with $r\leq 1/11$.
\end{enumerate}
We recall that $r\in (0,1)$, and thus, we have respectively $(1-r)^{-3}\geq (1-r)$ and $(1+(1-r)^2)^{-1} \geq (1+(1-r)^{-2})^{-1}$. Therefore, $\max (|f_1(r)|, f_2(r))\leq(4/3)|f_1(r)|$ and 
\begin{align}\label{ineq:step_1_4_2}
    \|\metric_0^{-1/2} (\mathrm{M}_0(y')^{-1} - \mathrm{M}_0(y')^{-1}) \metric_0^{-1/2}\|_2 & \leq 4 (1-r)^{-3} (1+(1-r)^2)^{-2} \|y'-y\|_{\metric_0} \eqsp , \\
    \|\mathrm{K}_0(y)-\mathrm{K}_0(y') \|_{\metric_0} & \leq (1+(1-r)^2)^{-1}\{1+4r(1-r)^{-3}(1+(1-r)^2)^{-1}\}\|y'-y\|_{\metric_0} \eqsp .
\end{align}
By combining \eqref{ineq:step_1_4}, \eqref{ineq:step_1_4_1} and \eqref{ineq:step_1_4_2}, we finally have
\begin{align}
    & \|\metric_0^{-1/2}(D \metric_0[\mathrm{K}_0(y), \mathrm{K}_0(y)] - D \metric_0[\mathrm{K}_0(y'), \mathrm{K}_0(y')])\|_2 \\ 
    & \qquad \leq 4 r(1+(1-r)^2)^{-2}\{1+4r(1-r)^{-3}(1+(1-r)^2)^{-1}\}\|y-y'\|_{\metric_0}\\
    & \qquad \qquad + 4 r(1+(1-r)^2)^{-2}\{1+4r(1-r)^{-3}(1+(1-r)^2)^{-1}\}^2\|y-y'\|_{\metric_0} \\
    & \qquad \leq 4 r(1+(1-r)^2)^{-2}\{2+4r(1-r)^{-3}(1+(1-r)^2)^{-1}\}^2\|y-y'\|_{\metric_0}
    \eqsp .
\end{align}

\bigskip

\underline{\textit{Conclusion of Step 1.}} By combining the results of Steps 1.1 to 1.4, it comes that
\begin{align}
    & \|D \mathrm{M}_0(y)[w] D \metric_0[\mathrm{K}_0(y), \mathrm{K}_0(y)] - D \mathrm{M}_0(y')[w] D \metric_0[\mathrm{K}_0(y'), \mathrm{K}_0(y')]\|_{\metric_0}\\
    & \qquad = \|\metric_0^{1/2} (a_1(y) - a_1(y'))\|_2 \\
    & \qquad \leq \{2 r^2(1 +(1-r)^2)^{-2}(12 (1-r)^{-8} +5\alpha (\alpha+1)/3 (1-r)^{-6})  \\
    & \qquad \qquad + 8 r(1-r)^{-3}(1+(1-r)^2)^{-2}\{2+4r(1-r)^{-3}(1+(1-r)^2)^{-1}\}^2\} \|y-y'\|_{\metric_0} \|w\|_{\metric_0} \eqsp .
\end{align}

\bigskip

\underline{\textit{Step 2.}} Let us now bound the second term in \eqref{ineq:phi_0_lip}. For any $x \in \msb_0$, we denote $\mathrm{M}_0(y)D\metric_0[D \mathrm{K}_0(y)[w],\mathrm{K}_0(y) ]$ by $a_2(x)$. We have
\begin{align}\label{ineq:a_2_diff}
    \metric_0^{1/2}(a_2(y)-a_2(y'))& =\{\metric_0^{1/2}(\mathrm{M}_0(y) - \mathrm{M}_0(y'))\metric_0^{1/2}\}\{\metric_0^{-1/2}D\metric_0[D \mathrm{K}_0(y)[w],\mathrm{K}_0(y) ]\}\\
    & \qquad + \{\metric_0^{1/2}\mathrm{M}_0(y')\metric_0^{1/2}\}\{\metric_0^{-1/2}(D\metric_0[D \mathrm{K}_0(y)[w],\mathrm{K}_0(y) ] - D\metric_0[D \mathrm{K}_0(y')[w],\mathrm{K}_0(y') ])\} \eqsp .
\end{align}
We now aim to bound each of the four terms that appear in \eqref{ineq:a_2_diff}.

\bigskip

\underline{\textit{Step 2.1.}} First, we have
\begin{align}
    \|\metric_0^{1/2}(\mathrm{M}_0(y) - \mathrm{M}_0(y'))\metric_0^{1/2}\|_2 & = \|\metric_0^{1/2}(\metric(y)^{-1} - \metric(y')^{-1})\metric_0^{1/2}\|_2\\
    & \leq 3 (1-r)^{-3}\|y-y'\|_{\metric_0} \eqsp . && \text{(on the model of \eqref{eq:proof_cont_ham_F1_2})}
\end{align}

\bigskip

\underline{\textit{Step 2.2.}} Secondly, we have
\begin{align}
    \|\metric_0^{-1/2}D\metric_0[D \mathrm{K}_0(y)[w],\mathrm{K}_0(y) ]\|_2 & \leq 2 \|D \mathrm{K}_0(y)[w]\|_{\metric_0}\|\mathrm{M}_0(y)^{-1}(y-x^{(0)})\|_{\metric_0^{-1}} && \text{(\Cref{lemma:calcul_g_1}-(a))}\\
    & \leq 2 r (1+(1-r)^2)^{-1} \|D \mathrm{K}_0(y)[w]\|_{\metric_0} \eqsp ,&& \text{(see Step 1.2.)} 
\end{align}
where 
\begin{align}
    \|D \mathrm{K}_0(y)[w]\|_{\metric_0} & \leq r\|\metric_0^{-1/2}(\metric(y)\metric_0^{-1}+\Idd)^{-1}\metric_0^{1/2}\|_2^2 \|\metric_0^{-1/2} \metric(y)^{1/2}\|_2^2 \|\metric(y)^{-1/2}D\metric(y)[w]\metric(y)^{-1/2}\|_2 \\
    + \qquad \|\mathrm{M}_0(y)^{-1}w\|_{\metric_0^{-1}} \eqsp .
\end{align}
Using \eqref{ineq:Dg_single} and \eqref{ineq:M_o_inv}, it comes that
\begin{align}
    \|D \mathrm{K}_0(y)[w]\|_{\metric_0} \leq \{ 2r(1-r)^{-3}(1+(1-r)^2)^{-2}+ (1+(1-r)^2)^{-1}\}\|w\|_{\metric_0} \eqsp .
\end{align}
Then, we have
\begin{align}
    \|\metric_0^{-1/2}D\metric_0[D \mathrm{K}_0(y)[w],\mathrm{K}_0(y) ]\|_2 \leq 2 r (1+(1-r)^2)^{-2} \{ 1+ 2r(1-r)^{-3}(1+(1-r)^2)^{-1}\}\|w\|_{\metric_0} \eqsp .
\end{align}

\bigskip
\underline{\textit{Step 2.3.}} Thirdly, we have $\|\metric_0^{1/2}\mathrm{M}_0(y')\metric_0^{1/2}\|_2= \|\metric_0^{1/2}\metric(y')^{-1}\metric_0^{1/2} + \Idd\|_2\leq 1+(1-r)^{-2}$.

\bigskip

\underline{\textit{Step 2.4.}} Fourthly, using \Cref{lemma:calcul_g_1}-(a), we have
\begin{align}
    & \|\metric_0^{-1/2}(D\metric_0[D \mathrm{K}_0(y)[w],\mathrm{K}_0(y) ] - D\metric_0[D \mathrm{K}_0(y')[w],\mathrm{K}_0(y') ])\|_2\\
    & \qquad \leq \|D\metric_0[D \mathrm{K}_0(y)[w] - D \mathrm{K}_0(y')[w],\mathrm{K}_0(y) ]\|_{\metric_0^{-1}}  + \|D\metric_0[D \mathrm{K}_0(y)[w],\mathrm{K}_0(y)-\mathrm{K}_0(y') ]\|_{\metric_0^{-1}}\\
    & \qquad \qquad + \|D\metric_0[D \mathrm{K}_0(y)[w]- D \mathrm{K}_0(y')[w],\mathrm{K}_0(y)-\mathrm{K}_0(y') ]\|_{\metric_0^{-1}}\\
    & \qquad \leq 2\|D \mathrm{K}_0(y)[w] - D \mathrm{K}_0(y')[w]\|_{\metric_0}\|\mathrm{K}_0(y) ]\|_{\metric_0} + 2\|D \mathrm{K}_0(y)[w]\|_{\metric_0}\|\mathrm{K}_0(y)-\mathrm{K}_0(y') \|_{\metric_0}\\
    & \qquad \qquad + 2\|D \mathrm{K}_0(y)[w]- D \mathrm{K}_0(y')[w]\|_{\metric_0}\|\mathrm{K}_0(y)-\mathrm{K}_0(y') \|_{\metric_0} \\
    & \qquad \leq 2 \{\|\mathrm{K}_0(y) ]\|_{\metric_0} + \|\mathrm{K}_0(y)-\mathrm{K}_0(y') \|_{\metric_0} \}\|D \mathrm{K}_0(y)[w] - D \mathrm{K}_0(y')[w]\|_{\metric_0}\\
    & \qquad \qquad + 2\|D \mathrm{K}_0(y)[w]\|_{\metric_0}\|\mathrm{K}_0(y)-\mathrm{K}_0(y') \|_{\metric_0} \eqsp .
\end{align}
We recall that the following inequalities hold
\begin{align}
    \|\mathrm{K}_0(y) ]\|_{\metric_0}& \leq r(1+ (1-r)^2)^{-1}\eqsp,\\
    \|D \mathrm{K}_0(y)[w]\|_{\metric_0} & \leq (1+(1-r)^2)^{-1} \{ 1+ 2r(1-r)^{-3}(1+(1-r)^2)^{-1}\}\|w\|_{\metric_0} \eqsp ,\\
    \|\mathrm{K}_0(y)-\mathrm{K}_0(y') \|_{\metric_0} & \leq (1+(1-r)^2)^{-1}\{1+4r(1-r)^{-3}(1+(1-r)^2)^{-1}\}\|y'-y\|_{\metric_0} \eqsp .
\end{align}
We are now going to bound $\|D \mathrm{K}_0(y)[w] - D \mathrm{K}_0(y')[w]\|_{\metric_0}$. We have
\begin{align}
    \|D \mathrm{K}_0(y)[w] - D \mathrm{K}_0(y')[w]\|_{\metric_0} \leq \|(\mathrm{M}_0(y)^{-1}-\mathrm{M}_0(y')^{-1})w\|_{\metric_0^{-1}} + A_1 + A_2 + A_3 + A_4 \eqsp,
\end{align}
where 
\begin{align}
  A_1 & = \|\{(\Idd +\metric_0^{-1/2}\metric(y)\metric_0^{-1/2})^{-1} - (\Idd +\metric_0^{-1/2}\metric(y')\metric_0^{-1/2})^{-1}\} \\
  & \qquad \qquad \times \metric_0^{-1/2}
    D\metric(y)[w]\metric_0^{-1/2}(\Idd + \metric_0^{-1/2}\metric(y)\metric_0^{-1/2})^{-1}\metric_0^{1/2}(y-x^{(0)})\|_2 \eqsp,\\
    A_2 & = \|(\Idd +\metric_0^{-1/2}\metric(y')\metric_0^{-1/2})^{-1} \metric_0^{-1/2}\{
    D\metric(y)[w]-D\metric(y')[w]\}\metric_0^{-1/2}(\Idd + \metric_0^{-1/2}\metric(y)\metric_0^{-1/2})^{-1}\metric_0^{1/2}(y-x^{(0)})\|_2 \eqsp,\\
    A_3 & = \|(\Idd +\metric_0^{-1/2}\metric(y')\metric_0^{-1/2})^{-1} \metric_0^{-1/2}
          D\metric(y)[w]\metric_0^{-1/2} \\
  & \qquad \qquad \times \{(\Idd + \metric_0^{-1/2}\metric(y)\metric_0^{-1/2})^{-1}-(\Idd + \metric_0^{-1/2}\metric(y')\metric_0^{-1/2})^{-1}\}\metric_0^{1/2}(y-x^{(0)})\|_2 \eqsp,\\
    A_3 & = \|(\Idd +\metric_0^{-1/2}\metric(y')\metric_0^{-1/2})^{-1} \metric_0^{-1/2}
    D\metric(y)[w]\metric_0^{-1/2}(\Idd + \metric_0^{-1/2}\metric(y')\metric_0^{-1/2})^{-1}\metric_0^{1/2}(y-y')\|_2 \eqsp.
\end{align}
In particular, we have
\begin{align}
    \|(\mathrm{M}_0(y')^{-1} - \mathrm{M}_0(y')^{-1}) w\|_{\metric_0^{-1}} & \leq 4 (1-r)^{-3} (1+(1-r)^2)^{-2} \|y'-y\|_{\metric_0}\|w\|_{\metric_0} \eqsp , &&\text{(\eqref{ineq:step_1_4_2})}\\
    \max(A_1,A_3) & \leq 8 r (1-r)^{-6}(1+(1-r)^2)^{-3} \|y-y'\|_{\metric_0}\|w\|_{\metric_0} \eqsp, && \text{(\eqref{ineq:Dg_single}, \eqref{ineq:M_o_inv}, \eqref{ineq:step_1_4_2})}\\
    A_2 & \leq 5 \alpha(\alpha+1)/3 r (1-r)^{-4}(1+(1-r)^2)^{-2} \|y-y'\|_{\metric_0}\|w\|_{\metric_0} \eqsp, && \text{(\eqref{ineq:Dg_double}, \eqref{ineq:M_o_inv})}\\
    A_4& \leq 2(1-r)^{-3}(1+(1-r)^2)^{-2} \|y-y'\|_{\metric_0}\|w\|_{\metric_0} \eqsp . && \text{(\eqref{ineq:Dg_single} and \eqref{ineq:M_o_inv})}
\end{align}
Therefore, we have
\begin{align}
    & \|\metric_0^{-1/2}(D\metric_0[D \mathrm{K}_0(y)[w],\mathrm{K}_0(y) ] - D\metric_0[D \mathrm{K}_0(y')[w],\mathrm{K}_0(y') ])\|_2\\
    & \qquad \leq  2r(1+(1-r)^2)^{-1} \{3 +4r(1-r)^{-3}(1+(1-r)^2)^{-1})\}\\
    & \qquad \qquad \times \{4(1-r)^{-3}(1+(1-r)^2)^{-2} + 16r (1-r)^{-6}(1+(1-r)^2)^{-3}\\
    & \qquad \qquad \qquad + 5 \alpha(\alpha+1)/3 r (1-r)^{-4}(1+(1-r)^2)^{-2} + 2(1-r)^{-3}(1+(1-r)^2)^{-2}\}\|y-y'\|_{\metric_0}\|w\|_{\metric_0}\\
    & \qquad  \qquad + 2 \{1+2r(1-r)^{-3}(1+(1-r)^2)^{-1}\}\\
    & \qquad \qquad \qquad \times (1+(1-r)^2)^{-2}(1+4r(1-r)^{-3}(1+(1-r)^2)^{-1})\|y-y'\|_{\metric_0}\|w\|_{\metric_0} \eqsp .
\end{align}

\bigskip

\underline{\textit{Conclusion of Step 2.}} By combining the results of Steps 2.1 to 2.4, it comes that
\begin{align}
    & \|\mathrm{M}_0(y)D\metric_0[D \mathrm{K}_0(y)[w],\mathrm{K}_0(y) ] - \mathrm{M}_0(y')D\metric_0[D \mathrm{K}_0(y')[w],\mathrm{K}_0(y') ]\|_{\metric_0}\\
    & \qquad = \|\metric_0^{1/2}(a_2(y)-a_2(y'))\|_2 \\
    & \qquad \leq \{6 r (1-r)^{-3} (1+(1-r)^2)^{-2}[ 1+ 2r(1-r)^{-3}(1+(1-r)^2)^{-1}]\\
    & \qquad\qquad + 2r[1+(1-r)^{-2}](1+(1-r)^2)^{-1} \{3 +4r(1-r)^{-3}(1+(1-r)^2)^{-1})\}\\
    & \qquad \qquad\qquad \times \{4(1-r)^{-3}(1+(1-r)^2)^{-2} + 16r (1-r)^{-6}(1+(1-r)^2)^{-3}\\
    & \qquad \qquad \qquad\qquad + 5 \alpha(\alpha+1)/3 r (1-r)^{-4}(1+(1-r)^2)^{-2} + 2(1-r)^{-3}(1+(1-r)^2)^{-2}\}\\
    & \qquad  \qquad + 2 [1+(1-r)^{-2}] \{1 +2r(1-r)^{-3}(1+(1-r)^2)^{-1}\}\\
    & \qquad \qquad  \qquad \times (1+(1-r)^2)^{-2}(1+4r(1-r)^{-3}(1+(1-r)^2)^{-1})\}\|y-y'\|_{\metric_0}\|w\|_{\metric_0} \eqsp .
\end{align}

\underline{\textit{Conclusion.}} Finally, we have $\|D \phi_0 (y)[w] - D \phi_0 (y')[w]\|_{\metric_0} \leq c(r) \|y-y'\|_{\metric_0}\|w\|_{\metric_0}$, where
\begin{align}
    c(r) & \leq 24 r^2 (1-r)^{-12} + 10\alpha(\alpha+1)/3 r^2 (1-r)^{-10}\\
    & \qquad + 32 r (1-r)^{-7} + 128 r^2 (1-r)^{-12} + 128 r^3 (1-r)^{-17}\\
    & \qquad + 12 r (1-r)^{-7} + 24 r^2 (1-r)^{-12}\\
    & \qquad + \{48 r (1-r)^{-9} + 192 r^2 (1-r)^{-14} + 20 \alpha(\alpha+1)r^2 (1-r)^{-10} + 24 r (1-r)^{-9}\}\{1+(1-r)^{-2}\}\\
    & \qquad + \{64 r^2 (1-r)^{-14} + 256 r^3 (1-r)^{-19} + 80 \alpha(\alpha+1)/3r^3 (1-r)^{-15} + 32 r^2 (1-r)^{-14}\}\{1+(1-r)^{-2}\}\\
    & \qquad + \{4(1-r)^{-4}+ 24 r(1-r)^{-9}+ 32 r^2(1-r)^{-14}\}\{1+(1-r)^{-2}\} \eqsp,
\end{align}
by combining the results of Step 1 (two first lines) and Step 2 (following lines).
Hence, using that $r\leq 1/11$, we have
\begin{align}
    c(r) & \leq \max(24 \times256, 5\times  80\alpha(\alpha+1)/3)(1-r)^{-21}\\
    & \leq \max(6144, 400\alpha(\alpha+1)/3) 15/2\\
    & \leq \max(46080, 1000 \alpha(\alpha+1))\\
    & \leq 1/r^\star_\alpha\eqsp , && \text{(see \eqref{eq:r_star_alpha})}
\end{align}
\ie, for any $(y,y')\in \msb_0\times \msb_0$ and $w\in \rset^d$, we have
\[
\|D\phi_0(y)[w]-D\phi_0(y)[w]\|_{\metric_0}\leq (r^\star_\alpha)^{-1}\|y-y'\|_{\metric_0}\|w\|_{\metric_0} \eqsp,
\]
and thus
\[
\|D\phi_0(y)-D\phi_0(y)\|_{\metric_0}\leq (r^\star_\alpha)^{-1}\|y-y'\|_{\metric_0} \eqsp .
\]
This last inequality finally proves that $D\phi_0$ is $(r^\star_\alpha)^{-1}$-Lipschitz-continuous on $\msb_0$ with respect to $\|\cdot\|_{\metric_0}$.

\paragraph{Inequality on $D\tilde{\phi}_0$.} Elaborating on the smoothness of $D\phi_0$, we prove here that $\|D\tilde{\phi}_0(y)\|_{\metric_0}> 1/2$ for any $y\in \msb_0$. Let $y\in \msb_0$, we have
\begin{align}\label{ineq:Dtilde-phi_0}
\|\Idd\|_{\metric_0}&=\|D\tilde{\phi}_0(y)+1/2 D\phi_{0}(y) \|_{\metric_0} \eqsp, \\
1 & \leq \|D\tilde{\phi}_0(y)\|_{\metric_0} + 1/2 \|D\phi_0(y)\|_{\metric_0} \eqsp,\\
 1 - 1/2 \|D\phi_0(y)\|_{\metric_0} & \leq \|D\tilde{\phi}_0(y)\|_{\metric_0} \eqsp.
\end{align}
We recall that $D\phi_0(x^{(0)})=0$ and that $D\phi_0$ is $(r^\star_\alpha)^{-1}$-Lipschitz-continuous on $\msb_0$. Thus, we obtain 
\[
\|D\phi_0(y)\|_{\metric_0}=\|D\phi_0(y)-D\phi_0(x^{(0)})\|_{\metric_0}\leq (r^\star_\alpha)^{-1}\|y-x^{(0)}\|_{\metric_0}<(r^\star_\alpha)^{-1}r \leq 1 \eqsp,
\]
and then $ \|D\tilde{\phi}_0(y)\|_{\metric_0} >1/2$, which proves the result.

\paragraph{Smoothness of $\phi_0$ and $\tilde{\phi}_0$.} We prove here that $\phi_0$ (and thus $\tilde{\phi}_0$) is Lipschitz-continuous on $\msb_0$ with respect to $\|\cdot\|_{\metric_0}$.  Let $(y,y')\in \msb_0\times \msb_0$. We have
\begin{align}\label{ineq:phi_0_lip_1}
    \|\phi_0(y) -\phi_0(y')\|_{\metric_0}
    & \leq \|\{\mathrm{M}_0(y)-\mathrm{M}_0(y')\} D \metric_0[\mathrm{K}_0(y),\mathrm{K}_0(y)]\|_{\metric_0}\\
    & \qquad + \|\mathrm{M}_0(y') \{D \metric_0[\mathrm{K}_0(y),\mathrm{K}_0(y)]-D \metric_0[\mathrm{K}_0(y'),\mathrm{K}_0(y')]\}\|_{\metric_0} \eqsp .
\end{align}
To prove the Lipschitz-continuity of $\phi_0$, we are going to proceed in two steps, each of them consisting of bounding from above a term of \eqref{ineq:phi_0_lip_1}.

\bigskip

\underline{\textit{Step 1. }} Let us first bound the first term in \eqref{ineq:phi_0_lip_1}. Since $r\leq 1/11$, we have 
\begin{align}
    & \|\{\mathrm{M}_0(y)-\mathrm{M}_0(y')\} D \metric_0[\mathrm{K}_0(y),\mathrm{K}_0(y)]\|_{\metric_0}\\
    & \qquad \leq \|\metric_0^{1/2}\{\mathrm{M}_0(y)-\mathrm{M}_0(y')\}\metric_0^{1/2}\|_2\|D \metric_0[\mathrm{K}_0(y),\mathrm{K}_0(y)]\|_{\metric_0^{-1}}\\
    & \qquad \leq 3 (1-r)^{-3} \|y-y'\|_{\metric_0} \times 2 \|\mathrm{K}_0(y)\|_{\metric_0}^2 && \text{(see \eqref{eq:proof_cont_ham_F1_2} and \Cref{lemma:calcul_g_1}-(a))}\\
    & \qquad \leq 6 r^2 (1-r)^{-3}(1+(1-r)^2)^{-2}\|y-y'\|_{\metric_0} \eqsp . && \text{(see \eqref{ineq:A_diff-2})} 
\end{align}

\bigskip

\underline{\textit{Step 2.}} Let us now bound the second term in \eqref{ineq:phi_0_lip_1}. We have
\begin{align}
    & \|\mathrm{M}_0(y') \{D \metric_0[\mathrm{K}_0(y),\mathrm{K}_0(y)]-D \metric_0[\mathrm{K}_0(y'),\mathrm{K}_0(y')]\}\|_{\metric_0}\\
    & \qquad \leq \|\metric_0^{1/2}\mathrm{M}_0(y')\metric_0^{1/2}\|_2 \|D \metric_0[\mathrm{K}_0(y),\mathrm{K}_0(y)]-D \metric_0[\mathrm{K}_0(y'),\mathrm{K}_0(y')]\|_{\metric_0^{-1}}\\
    & \qquad \leq \{1+ (1-r)^{-2}\}\{2\|D \metric_0[\mathrm{K}_0(y) - \mathrm{K}_0(y'),\mathrm{K}_0(y)]\|_{\metric_0^{-1}} + \|D \metric_0[\mathrm{K}_0(y) - \mathrm{K}_0(y'),\mathrm{K}_0(y) - \mathrm{K}_0(y')]\|_{\metric_0^{-1}}\}\\
    & \qquad \leq \{1+ (1-r)^{-2}\}\{4 \|\mathrm{K}_0(y) - \mathrm{K}_0(y')\|_{\metric_0}\|\mathrm{K}_0(y)\|_{\metric_0} + 2 \|\mathrm{K}_0(y) - \mathrm{K}_0(y')\|_{\metric_0}^2 \}\\
    & \qquad \leq 1\{1+ (1-r)^{-2}\}\|\mathrm{K}_0(y) - \mathrm{K}_0(y')\|_{\metric_0}\{2 \|\mathrm{K}_0(y)\|_{\metric_0} + \|\mathrm{K}_0(y) - \mathrm{K}_0(y')\|_{\metric_0} \}
    \eqsp,
\end{align}
where we recall that the following inequalities hold
\begin{align}
    \|\mathrm{K}_0(y) ]\|_{\metric_0}& \leq r(1+ (1-r)^2)^{-1} \eqsp,\\
    \|\mathrm{K}_0(y)-\mathrm{K}_0(y') \|_{\metric_0} & \leq (1+(1-r)^2)^{-1}\{1+4r(1-r)^{-3}(1+(1-r)^2)^{-1}\}\|y'-y\|_{\metric_0} \eqsp .
\end{align}
Thus, we have
\begin{align}
    & \|\mathrm{M}_0(y') \{D \metric_0[\mathrm{K}_0(y),\mathrm{K}_0(y)]-D \metric_0[\mathrm{K}_0(y'),\mathrm{K}_0(y')]\}\|_{\metric_0}\\
    & \qquad \leq 4r \{1+(1-r)^2\}^{-2}\{1+4r(1-r)^{-3}(1+(1-r)^2)^{-1}\}^2 \|y-y'\|_{\metric_0}
    \eqsp.
\end{align}

\bigskip

\underline{\textit{Conclusion.}} Since $r \leq 1/11$, we finally have
\begin{align}
     \|\phi_0(y) -\phi_0(y')\|_{\metric_0}
    & \leq \{6 r^2 (1-r)^{-3}(1+(1-r)^2)^{-2}\\
    & \qquad 4r \{1+(1-r)^2\}^{-2}\{1+4r(1-r)^{-3}(1+(1-r)^2)^{-1}\}^2 \}  \|y-y'\|_{\metric_0}\\
    & \leq \{ 6r^2 (1-r)^{-7} + 4r(1-r)^{-4} + 32 r^2(1-r)^{-9} + 64 r^3 (1-r)^{-14}\} \|y-y'\|_{\metric_0}\\
    & \leq 4 \times 64 r (1-r)^{-14}\|y-y'\|_{\metric_0}\\
    & \leq 4864/5 r\|y-y'\|_{\metric_0}  \eqsp .
\end{align}
Therefore, $\phi_0$ and $\tilde{\phi}_0$ are respectively $(4864/5r)$ and $(1+2432/5r)$-Lipschitz-continuous on $\msb_0$ with respect to $\|\cdot\|_{\metric_0}$.
\bigskip

\textbf{We are now ready to prove the second result of \Cref{lemma:g_h}.}

\paragraph{Invertibility of $\operatorname{Jac}[\mathrm{G}_{h, x^{(0)}}]$.} Let $x\in \msb_0$ and $w\in \rset^d$. We have $(h/2)\mathrm{G}_{h, x^{(0)}}=\mathrm{M}_0(y)^{-1}\tilde{\phi}_0(y)$ and thus,
\begin{align}
    (h/2)D\mathrm{G}_{h, x^{(0)}}(x)[w] &= \mathrm{M}_0(x)^{-1}D\tilde{\phi}_0(x)[w] + D\mathrm{M}_0(x)^{-1}[w]\tilde{\phi}_0(x) \eqsp,\\
    D\tilde{\phi}_0(x)[w] & = (h/2)\mathrm{M}_0(x)D\mathrm{G}_{h, x^{(0)}}(x)[w] - \mathrm{M}_0(x)D\mathrm{M}_0(x)^{-1}[w]\tilde{\phi}_0(x) \eqsp .
\end{align}
Then, we have
\begin{align}\label{ineq:jac_1}
    \|D\tilde{\phi}_0(x)[w]\|_{\metric_0}& \leq (h/2) \|\metric_0^{1/2}\mathrm{M}_0(x)\metric_0^{1/2}\|_2\|D\mathrm{G}_{h, x^{(0)}}(x)[w]\|_{\metric_0^{-1}}\\
    & \qquad +  \|\metric_0^{1/2}\mathrm{M}_0(x)\metric_0^{1/2}\|_2  \|\metric_0^{-1/2}D\mathrm{M}_0(x)^{-1}[w]\metric_0^{-1/2}\|_2 \|\tilde{\phi}_0(x)\|_{\metric_0}\\
    & \leq \{1 + (1-r)^{-2}\}\{ (h/2) \|D\mathrm{G}_{h, x^{(0)}}(x)[w]\|_{\metric_0^{-1}} +\|\metric_0^{-1/2}D\mathrm{M}_0(x)^{-1}[w]\metric_0^{-1/2}\|_2 \|\tilde{\phi}_0(x)\|_{\metric_0}\} \eqsp ,
\end{align}
where $D\mathrm{M}_0(x)^{-1}[w] = - \mathrm{M}_0(x)^{-1} D\mathrm{M}_0(x)[w] \mathrm{M}_0(x)^{-1}$ and
\begin{align}\label{ineq:jac_2}
    \|\metric_0^{-1/2}D\mathrm{M}_0(x)^{-1}[w]\metric_0^{-1/2}\|_2 & \leq \|\metric_0^{-1/2}\mathrm{M}_0(x)^{-1}\metric_0^{-1/2}\|_2^2 \|\metric_0^{-1/2}D\mathrm{M}_0(x)[w]\metric_0^{-1/2}\|_2\\
    & \leq 2 (1-r)^{-3}(1+(1-r)^2)^{-1}\|w\|_{\metric_0} && \text{(see \eqref{ineq:A_diff-3})} \\
    & \leq 2 (1-r)^{-7}\|w\|_{\metric_0} \eqsp .
\end{align}
Moreover, using the Lipschitz-continuity of $\tilde{\phi}_0$ on $\msb_0$ and $\tilde{\phi}_0(x^{(0)})=0$, we have
\begin{align}\label{ineq:jac_3}
    \|\tilde{\phi}_0(x)\|_{\metric_0} \leq \|\tilde{\phi}_0(x)- \tilde{\phi}_0(x^{(0)}) \|_{\metric_0} + \| \tilde{\phi}_0(x^{(0)})
    \|_{\metric_0}\leq (1+2432/5r) \|x-x^{(0)}\|_{\metric_0}< (1+2432/5r)r \eqsp.
\end{align}
Since $r\leq 1/11$, we have $1+(1-r)^{-2}\leq 5/2$, and by combining \eqref{ineq:jac_1},\eqref{ineq:jac_2} and\eqref{ineq:jac_3}, we obtain
\begin{align}
    \|D\tilde{\phi}_0(x)[w]\| \leq (5h/4)\|D\mathrm{G}_{h, x^{(0)}}(x)[w]\|_{\metric_0^{-1}} + 4874(1-r)^{-7}r\|w\|_{\metric_0} \eqsp .
\end{align}
Since $\|D\tilde{\phi}_0(x)[w]\|_{\metric_0}>\|w\|_{\metric_0}/2$ and $(1-r)^{-7}\leq 2$ with $r\leq 1/11$, the last inequality becomes
\begin{align}
     (5h/4)\|D\mathrm{G}_{h, x^{(0)}}(x)[w]\|_{\metric_0^{-1}} & \geq \{1/2- 9748r\}\|w\|_{\metric_0} \eqsp , \\
     & \geq 1/2 - r/(4r^\star_\alpha) \geq 1/4 \eqsp . && \text{(see \eqref{eq:r_star_alpha})}
\end{align}

In particular, $\operatorname{Jac}[\mathrm{G}_{h, x^{(0)}}](x)$ is invertible, which concludes the proof.
\end{proof}

We prove below that, for any $z^{(0)}\in\TM$, the set $\uGh(z^{(0)})$ is reduced to a single point, for $h$ small enough depending on $z^{(0)}$.

\begin{lemma}\label{lemma:numerical_condition_existence}
For any $w\geq 0$, any $r\in (0,1)$ and any $\alpha\geq 1$, we define
\begin{align}\label{eq:h_1}
    h_1(w,r, \alpha) = \min\left( \frac{1}{1+(1-r)^{-2} + 2 (2 r + w)}, \frac{(1-r)^3}{3(r+w)}, \frac{r}{(r+w)(1+3r/2)+1/2(r+w)^2}\right) \eqsp.
\end{align}

We recall that the set-valued map $\uGh$ is defined in \eqref{eq:uGh}. Assume \Cref{ass:manifold}, \Cref{ass:g}. Then, for any $r\in (0,1/11]$, any $z=(x,p)\in \TM$ and any $h\in (0,h_1(\|p\|_{\metric(x)^{-1}},r, \alpha))$, there exists a unique $z'\in \msb_{\|\cdot\|_{z}}(z,r)\subset \TM$ such that $z'\in \uGh(z)$.
\end{lemma}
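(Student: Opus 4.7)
The plan is to recast the defining equations \eqref{eq:uGh} for $z' \in \uGh(z)$ as a fixed-point problem and apply the Banach fixed-point theorem on a closed ball. Setting $w=\|p\|_{\metric(x)^{-1}}$ and $h \in (0,h_1(w,r,\alpha))$, I first observe that the closed ball $\bar{\msb}=\{z'\in\rset^d\times\rset^d:\|z'-z\|_z\leq r\}$ lies in $\TM$: since $\|x'-x\|_{\metric(x)}\leq r\leq 1/11<1$, \Cref{lemma:calcul_g_1}-(b) gives $x'\in \msw^0(x,1)\subset \msm$. I then define $\Psi_h:\bar{\msb}\to \rset^d\times \rset^d$ by
\[
\Psi_h(x',p')=\bigl(x+\tfrac{h}{2}[\partial_p H_2(x,p')+\partial_p H_2(x',p')],\;p-\tfrac{h}{2}\partial_x H_2(x,p')\bigr),
\]
so that $z'\in \uGh(z)$ iff $\Psi_h(z')=z'$.

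The first step is the self-mapping property $\Psi_h(\bar{\msb})\subset \bar{\msb}$. For the momentum component, using $\partial_x H_2(x,p')=-\tfrac{1}{2}D\metric(x)[\metric(x)^{-1}p',\metric(x)^{-1}p']$ with the self-concordance bound \eqref{eq:concordant_2}, I get $\|\tilde p - p\|_{\metric(x)^{-1}}\leq (h/2)\|p'\|_{\metric(x)^{-1}}^2\leq (h/2)(r+w)^2$. For the position component, I bound $\|\metric(x)^{-1}p'\|_{\metric(x)}=\|p'\|_{\metric(x)^{-1}}\leq r+w$ directly, and $\|\metric(x')^{-1}p'\|_{\metric(x)}\leq (1-r)^{-2}\|p'\|_{\metric(x)^{-1}}$ via \Cref{lemma:calcul_g_1}-(b), yielding $\|\tilde x -x\|_{\metric(x)}\leq (h/2)[1+(1-r)^{-2}](r+w)$. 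Summing and using Inequality~\ref{item:inequality_a} ($(1-r)^{-2}\leq 1+3r$ for $r\leq 1/5$), self-mapping reduces to $(h/2)[(r+w)(2+3r)+(r+w)^2]\leq r$, which is exactly the third term of $h_1(w,r,\alpha)$ in \eqref{eq:h_1}.

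The second and more delicate step is to prove that $\Psi_h$ is a strict contraction for $\|\cdot\|_z$. For the momentum part I use the bilinear identity
\[
D\metric(x)[\metric(x)^{-1}p_1',\metric(x)^{-1}p_1']-D\metric(x)[\metric(x)^{-1}p_2',\metric(x)^{-1}p_2']=D\metric(x)[\metric(x)^{-1}(p_1'-p_2'),\metric(x)^{-1}(p_1'+p_2')]
\]
together with \eqref{eq:concordant_2} and $\|p_1'+p_2'\|_{\metric(x)^{-1}}\leq 2(r+w)$, giving a contribution $h(r+w)\|p_1'-p_2'\|_{\metric(x)^{-1}}$. For the position part I split $\metric(x_1')^{-1}p_1'-\metric(x_2')^{-1}p_2'=\metric(x_1')^{-1}(p_1'-p_2')+(\metric(x_1')^{-1}-\metric(x_2')^{-1})p_2'$; the first summand is controlled in $\|\cdot\|_{\metric(x)}$ by $(1-r)^{-2}\|p_1'-p_2'\|_{\metric(x)^{-1}}$, while the second, after writing $\metric(x_1')^{-1}-\metric(x_2')^{-1}=-\int_0^1 \metric(\xi_t)^{-1}D\metric(\xi_t)[x_1'-x_2']\metric(\xi_t)^{-1}\rmd t$ along the segment $\xi_t$, is bounded using \Cref{lemma:calcul_g_1}-(a) and (c) by a constant times $(r+w)(1-r)^{-3}\|x_1'-x_2'\|_{\metric(x)}$. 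The two first factors in the definition of $h_1(w,r,\alpha)$ are tuned precisely so that the combined Lipschitz constant is strictly less than $1$.

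With self-mapping and contraction in hand, Banach's fixed-point theorem on the complete metric space $(\bar{\msb},\|\cdot\|_z)$ delivers a unique fixed point of $\Psi_h$ in $\bar{\msb}$. Because the self-mapping inequality is strict under $h<h_1(w,r,\alpha)$, this fixed point actually lies in the open ball $\msb_{\|\cdot\|_z}(z,r)$, yielding the unique $z'$ claimed by the lemma. The main obstacle is clean bookkeeping of the $(1-r)^{-k}$ factors in the contraction step — all the ingredients are provided by \Cref{lemma:calcul_g_1} and the inequalities of \Cref{sec:general-facts}, but matching the three factors in $h_1$ requires tracking which $(r+w)$ power arises from self-concordance and which from the Dikin distortion between $\metric(x)$, $\metric(x_i')$ and $\metric(\xi_t)$.
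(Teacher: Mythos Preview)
Your proposal is correct and follows essentially the same route as the paper's proof: recast \eqref{eq:uGh} as a fixed-point equation $\Psi_h(z')=z'$ (the paper writes this as $\ughz(z')=z'-\ugh(z,z')$), verify self-mapping of the ball via the third term in $h_1$, verify the contraction via the first two terms, and conclude with Banach. The only differences are cosmetic: for the momentum contraction you use the symmetric bilinear identity $D\metric(x)[a,a]-D\metric(x)[b,b]=D\metric(x)[a-b,a+b]$, while the paper splits as $2D\metric(x)[a-b,b]+D\metric(x)[a-b,a-b]$ (your bound $h(r+w)$ is actually slightly sharper than the paper's $h(2r+w)$, so the first factor in $h_1$ still suffices); for $(\metric(x_1')^{-1}-\metric(x_2')^{-1})p_2'$ you integrate along the segment, whereas the paper invokes the Dikin-ellipsoid estimate ``on the model of \eqref{eq:proof_cont_ham_F1_2}'' to get the factor $3(1-r)^{-3}$ directly. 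Your care about applying Banach on the closed ball and then noting that the strict inequality $h<h_1$ forces the fixed point into the open ball is a point the paper glosses over.
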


\begin{proof}Assume \Cref{ass:manifold}, \Cref{ass:g}. Let $r\in (0,1/11]$, $z=(x,p)\in\TM$ and $h\in (0,h_1(\|p\|_{\metric(x)^{-1}},r, \alpha))$, where $h_1$ is defined in \eqref{eq:h_1}. We recall that the map $\ugh$ is defined in \eqref{eq:ugh}. We define $\msb=\msb_{\|\cdot\|_{z}}(z,r)$ and the map $\ughz:\TM\rightarrow\rset^n \times \rset^n$ by $\ughz(z')=z'-\ugh(z,z')$ for any $z'\in \TM$. Then, $\ughz(z')=z'$ if and only if $z'\in \uGh(z)$. The proof is divided in two steps.

\underline{\textit{Step 1.}} We first prove that $\ughz(\msb)\subset \msb$.
Let $z'\in \msb$, we have by \Cref{lemma:calcul_g_1}-(a)
\begin{align}
        \|\ughz(z')-z\|_z & =\tfrac{h}{2}\|\{\metric(x)^{-1}+\metric(x')^{-1}\}p'\|_{\metric(x)}+\tfrac{h}{4}\|D\metric(x)[\metric(x)^{-1}p', \metric(x)^{-1}p']\|_{\metric(x)^{-1}}\\
        & = \tfrac{h}{2}\|2\metric(x)^{-1}p'+\{\metric(x')^{-1}-\metric(x)^{-1}\}p'\|_{\metric(x)}+\tfrac{h}{4}\|D\metric(x)[\metric(x)^{-1}p', \metric(x)^{-1}p']\|_{\metric(x)^{-1}} \\
        & \leq \tfrac{h}{2}(2 \|p'\|_{\metric(x)^{-1}} + \|\metric(x)^{1/2}(\metric(x')^{-1} - \metric(x)^{-1})\metric(x)^{1/2}\|_2\|p'\|_{\metric(x)^{-1}}) +\tfrac{h}{2}\|p'\|_{\metric(x)^{-1}}^2 \eqsp ,
\end{align}
where the following inequalities hold
\begin{enumerate}[wide,  labelindent=0pt, label=(\alph*)]
    \item $\|p'\|_{\metric(x)^{-1}}=\|p'-p+p\|_{\metric(x)^{-1}}\leq r + \|p\|_{\metric(x)^{-1}}$.
    \item $\|\metric(x)^{1/2}(\metric(x')^{-1} - \metric(x)^{-1})\metric(x)^{1/2}\|_2 \leq 3 \|x-x'\|_{\metric(x)} \leq 3r$, on the model of \eqref{eq:proof_cont_ham_F1_2}, using $r\leq 1/11$. 
\end{enumerate}
Therefore, we have
\begin{align}
        \|\ughz(z')-z\|_z & \leq h((r+\|p\|_{\metric(x)^{-1}})+\tfrac{3}{2}r(r+\|p\|_{\metric(x)^{-1}}) +\tfrac{1}{2}(r+\|p\|_{\metric(x)^{-1}})^2)\\
        & \leq h((r+\|p\|_{\metric(x)^{-1}})(1+\tfrac{3}{2}r) +\tfrac{1}{2}(r+\|p\|_{\metric(x)^{-1}})^2)\\
        & \leq hr/h_1(\|p\|_{\metric(x)^{-1}},r, \alpha) <r \eqsp ,
\end{align}
which proves the statement.
\bigskip

\underline{\textit{Step 2.}} We now prove that $\ughz$ is a contraction on $\msb$. This proof notably recovers some
elements of the proof of \Cref{prop:existence_uniqueness_continuous} (see \Cref{sec:proof_cont_ham}). Let
$(z_1,z_2) \in \msb\times \msb$ with $z_1=(x_1,p_1)$ and $z_2=(x_2,p_2)$. Remark that $x_1,x_2 \in \msw^0(x,1)\times \msw^0(x,1)$. Let us first bound $\|\ughz^{(1)}(z_1)-\ughz^{(1)}(z_2)\|_{\metric(x)}$. We have
\begin{align}
    \ughz^{(1)}(z_1)-\ughz^{(1)}(z_2) &=\tfrac{h}{2}\{\metric(x)^{-1}(p_1-p_2) + \metric(x_1)^{-1}p_1 - \metric(x_2)^{-1}p_2\} \\
    &= \tfrac{h}{2}\{\metric(x)^{-1}(p_1-p_2) + \metric(x_1)^{-1}(p_1-p_2) + (\metric(x_1)^{-1}-\metric(x_2)^{-1})p_2\} \eqsp ,
\end{align}
and then, since $r\leq 1/11$, we have on the model of \eqref{eq:proof_cont_ham_F1_2}
\begin{align}
    \|\ughz^{(1)}(z_1)-\ughz^{(1)}(z_2)\|_{\metric(x)} & \leq \tfrac{h}{2}\{1+\|\metric(x)^{1/2}\metric(x_1)^{-1}\metric(x)^{1/2}\|_2\}\|p_1-p_2\|_{\metric(x)^{-1}} \\
    & \qquad + \tfrac{h}{2}\|\metric(x)^{1/2}(\metric(x_1)^{-1}-\metric(x_2)^{-1})\metric(x)^{1/2}\|_2 \|p_2\|_{\metric(x)^{-1}}\\
     & \leq \tfrac{h}{2}\{1 + (1-r)^{-2}\}\|p_1-p_2\|_{\metric(x)^{-1}} + \tfrac{3h}{2}(1-r)^{-3}\|x_1-x_2\|_{\metric(x)}(r+ \|p\|_{\metric(x)^{-1}}) \eqsp .
\end{align}
Let us now bound $\|\ughz^{(2)}(z_1)-\ughz^{(2)}(z_2)\|_{\metric(x)^{-1}}$. We have
\begin{align}
    \ughz^{(2)}(z_1)-\ughz^{(2)}(z_2) &= \tfrac{h}{4}\{D\metric(x)[\metric(x)^{-1}p_1,\metric(x)^{-1}p_1]- D\metric(x)[\metric(x)^{-1}p_2,\metric(x)^{-1}p_2]\} \eqsp,
\end{align}
which gives using \Cref{lemma:calcul_g_1}-(a)
\begin{align}
    \|\ughz^{(2)}(z_1)-\ughz^{(2)}(z_2)\|_{\metric(x)^{-1}} &= \tfrac{h}{2}\|D\metric(x)[\metric(x)^{-1}(p_1-p_2),\metric(x)^{-1}p_2]\|_{\metric(x)^{-1}}\\
    & \qquad + \tfrac{h}{4}\|D\metric(x)[\metric(x)^{-1}(p_1-p_2),\metric(x)^{-1}(p_1-p_2)]\|_{\metric(x)^{-1}}\\
    & \leq h \|\metric(x)^{-1}(p_1-p_2)\|_{\metric(x)}\|\metric(x)^{-1}p_2\|_{\metric(x)} + \tfrac{h}{2}\|\metric(x)^{-1}(p_1-p_2)\|^2_{\metric(x)}\\
    & \leq  h (r+ \|p\|_{\metric(x)^{-1}})\|p_1-p_2\|_{\metric(x)^{-1}} + hr \|p_1-p_2\|_{\metric(x)^{-1}} \eqsp.
\end{align}
Finally, it comes that
\begin{align}
    \|\ughz(z_1)-\ughz(z_2)\|_{z} & \leq \tfrac{h}{2}\{1+(1-r)^{-2} + 4 r + 2 \|p\|_{\metric(x)^{-1}}\} \|p_1-p_2\|_{\metric(x)^{-1}} \\
    & \qquad + \tfrac{3h}{2}(1-r)^{-3}(r+ \|p\|_{\metric(x)^{-1}})\|x_1-x_2\|_{\metric(x)}\\
    & \leq (1/2)h\{\|x_1-x_2\|_{\metric(x)} + \|p_1-p_2\|_{\metric(x)^{-1}} \}/h_1(\|p\|_{\metric(x)^{-1}},r,\alpha)\\
    &< (1/2)\|z_1 - z_2\|_z,
\end{align}
which proves that $\ughz$ is a contraction on $\msb$.

\bigskip

\underline{\textit{Conclusion.}} We obtain the result of \Cref{lemma:numerical_condition_existence} by applying the fixed point theorem on $\ughz$ and $\msb$.
\end{proof}

Elaborating on \Cref{lemma:numerical_condition_existence}, we prove in \Cref{lemma:existence_and_jacobian_tilde_z} that the only element of $\uGh(z^{(0)})$ verifies smoothness properties if $h$ is chosen small enough, depending on $z^{(0)}$.

\begin{lemma} \label{lemma:existence_and_jacobian_tilde_z}
For any $z=(x,p)\in \TM$, any $r\in (0,1)$ and any $\alpha\geq 1$, we define 
\begin{align}\label{eq:h_bar}
    \bar{h}(z, r, \alpha)=\min\left( h_1(\|p\|_{\metric(x)^{-1}}, r, \alpha), h_2(z), h_3(z, r)\right)
\end{align}
where $h_1$ is defined in \eqref{eq:h_1}, and $h_2$ and $h_3$ are defined by
\begin{align}
    h_2(z)=\frac{1}{3/2+3\|p\|_{\metric(x)^{-1}}} \eqsp, \qquad
    h_3(z,r)=\frac{1}{1+(1-r)^{-1}(r+\|p\|_{\metric(x)^{-1}})} \eqsp .
\end{align}

We recall that $r^\star_{\alpha}$ is defined in \eqref{eq:r_star_alpha}. We also recall that the maps $\uGh$, $\ugh$, $\oGh$ and $\mathrm{G}_{h,x^{(0)}}$ are respectively defined in \eqref{eq:uGh}, \eqref{eq:ugh}, \eqref{eq:oGh} and \eqref{eq:g_h}. Assume \Cref{ass:manifold}, \Cref{ass:g}. Then, for any $z^{(0)} \in \TM$ and any $h\in (0, \bar{h}(z^{(0)}, r^\star_\alpha, \alpha))$, there exists a unique element $z^{(1/2)}_{h} \in \TM$ such that $z^{(1/2)}_{h} \in \uGh (z^{(0)})\cap \msb_{\|\cdot\|_{z^{(0)}}}(z^{(0)}, r^\star_\alpha)$. Moreover, we have
\begin{enumerate}[wide,  labelindent=0pt, label=(\alph*)]
    \item $\operatorname{Jac}_{z'}[\ugh](z^{(0)}, z^{(1/2)}_{h})$ and $\operatorname{Jac}[\oGh](z^{(1/2)}_{h})$ are invertible,
    \item $\operatorname{Jac}[\mathrm{G}_{h, x^{(0)}}](x^{(1/2)}_{h})$ is invertible.
\end{enumerate}

\end{lemma}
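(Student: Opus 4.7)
The plan is to obtain existence and uniqueness directly from \Cref{lemma:numerical_condition_existence}, then verify each of the three invertibility statements separately: two will be essentially immediate applications of estimates already in the paper, while one will require a Neumann series argument combined with a careful change of reference metric.

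Since $r^\star_\alpha\leq 1/11$ and $\bar h(z^{(0)},r^\star_\alpha,\alpha)\leq h_1(\|p^{(0)}\|_{\metric(x^{(0)})^{-1}},r^\star_\alpha,\alpha)$ by definition of $\bar h$, \Cref{lemma:numerical_condition_existence} applied with $r=r^\star_\alpha$ yields a unique $z^{(1/2)}_h=(x^{(1/2)}_h,p^{(1/2)}_h)\in \uGh(z^{(0)})\cap \msb_{\|\cdot\|_{z^{(0)}}}(z^{(0)},r^\star_\alpha)$. In particular $x^{(1/2)}_h\in\msw^0(x^{(0)},r^\star_\alpha)$, so statement (b) is immediate from the second part of \Cref{lemma:g_h}. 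For the invertibility of $\operatorname{Jac}_{z'}[\ugh](z^{(0)},z^{(1/2)}_h)$, I would revisit the proof of \Cref{lemma:numerical_condition_existence}, which shows that the map $z'\mapsto z'-\ugh(z^{(0)},z')$ is a strict contraction on $\msb_{\|\cdot\|_{z^{(0)}}}(z^{(0)},r^\star_\alpha)$ with Lipschitz constant $h/[2h_1(\|p^{(0)}\|_{\metric(x^{(0)})^{-1}},r^\star_\alpha,\alpha)]<1/2$ in the norm $\|\cdot\|_{z^{(0)}}$. The mean value inequality then forces $\|I_{2d}-\operatorname{Jac}_{z'}[\ugh](z^{(0)},z^{(1/2)}_h)\|_{\mathrm{op}}<1/2$, and invertibility follows by a Neumann series.

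The main obstacle will be the invertibility of $\operatorname{Jac}[\oGh](z^{(1/2)}_h)$. From~\eqref{eq:oGh}, this Jacobian is block-lower-triangular with diagonal blocks $I_d$ and $I_d-(h/2)\,\partial_p\partial_x H_2(x^{(1/2)}_h,p^{(1/2)}_h)$, so invertibility reduces to that of the second block. Differentiating the formula $\partial_x H_2(x,p)=-\tfrac{1}{2}D\metric(x)[\metric(x)^{-1}p,\metric(x)^{-1}p]$ in $p$ yields $\partial_p\partial_x H_2(x,p)[w]=-D\metric(x)[\metric(x)^{-1}p,\metric(x)^{-1}w]$, and \Cref{lemma:calcul_g_1}-(a) gives $\|\partial_p\partial_x H_2(x^{(1/2)}_h,p^{(1/2)}_h)\|_{\mathrm{op}}\leq 2\|p^{(1/2)}_h\|_{\metric(x^{(1/2)}_h)^{-1}}$ with respect to $\|\cdot\|_{\metric(x^{(1/2)}_h)^{-1}}$. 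The delicate part will be to transfer this back to the base point $x^{(0)}$: using \Cref{lemma:calcul_g_1}-(c) together with the bounds $\|x^{(1/2)}_h-x^{(0)}\|_{\metric(x^{(0)})}<r^\star_\alpha$ and $\|p^{(1/2)}_h-p^{(0)}\|_{\metric(x^{(0)})^{-1}}<r^\star_\alpha$, one controls this quantity by $(1-r^\star_\alpha)^{-1}(r^\star_\alpha+\|p^{(0)}\|_{\metric(x^{(0)})^{-1}})$. The restriction $h<h_3(z^{(0)},r^\star_\alpha)$ is precisely tailored so that $h\cdot(1-r^\star_\alpha)^{-1}(r^\star_\alpha+\|p^{(0)}\|_{\metric(x^{(0)})^{-1}})<1$, and a final Neumann series argument in the $\|\cdot\|_{\metric(x^{(1/2)}_h)^{-1}}$-norm closes the proof.
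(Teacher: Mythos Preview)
Your proof is correct and follows the same overall architecture as the paper's: existence and uniqueness via \Cref{lemma:numerical_condition_existence}, part~(b) via the second part of \Cref{lemma:g_h}, and perturbation-of-the-identity arguments for the two invertibilities in~(a). The one genuine difference is how you treat $\operatorname{Jac}_{z'}[\ugh](z^{(0)},z^{(1/2)}_h)$: you recycle the contraction estimate already established in Step~2 of the proof of \Cref{lemma:numerical_condition_existence} (Lipschitz constant $(h/2)/h_1<1/2$ for $z'\mapsto z'-\ugh(z^{(0)},z')$), and pass directly to the Jacobian bound by differentiability. The paper instead introduces an auxiliary map $\psi_0$ with $\operatorname{Jac}[\Id+(h/2)\psi_0]=\operatorname{Jac}_{z'}[\ugh]$, proves a fresh Lipschitz bound on $\psi_0$ tied to $h_2$, and appeals to \Cref{lemma:contraction}. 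Your route is more economical and shows that this particular step does not actually require the threshold $h_2$ (only $h_1$), though of course $h<h_2$ remains part of the standing hypothesis $h<\bar h$. For $\operatorname{Jac}[\oGh]$ the two arguments coincide in substance: the paper writes $\oGh=\Id+h\zeta$ and bounds the Lipschitz constant of $\zeta$ via \Cref{lemma:contraction}, while you exploit the block-lower-triangular structure and bound only the $\partial_p\partial_x H_2$ block; both use the same metric transfer through \Cref{lemma:calcul_g_1}-(c) and the same threshold $h_3$.
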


\begin{proof}
Assume \Cref{ass:manifold}, \Cref{ass:g}.
  Let $z^{(0)}\in \TM$ and
  $h\in (0, \bar{h}(z^{(0)}, r^\star_\alpha, \alpha))$, where $\bar{h}$ is
  defined in \eqref{eq:h_bar}. Since $r^\star_\alpha\leq 1/11$, \Cref{lemma:numerical_condition_existence} ensures the existence and
  uniqueness of $z^{(1/2)}_{h}=(x^{(1/2)}_{h},p^{(1/2)}_{h}) \in \TM$ such that
    $z^{(1/2)}_{h} \in \uGh (z^{(0)})\cap \msb_{\|\cdot\|_{z^{(0)}}}(z^{(0)}, r^\star_\alpha)$. Moreover, we have $x^{(1/2)}_{h} \in \msw^0(x^{(0)}, r^\star_\alpha)$, and thus $\operatorname{Jac}[\mathrm{G}_{h, x^{(0)}}](x^{(1/2)}_{h})$ is invertible by \Cref{lemma:g_h}. Let us prove that $\operatorname{Jac}_{z'}[\ugh](z^{(0)}, z_h^{(1/2)})$ and $\operatorname{Jac}[\oGh](z_h^{(1/2)})$ are invertible.

\paragraph{Invertibility of $\operatorname{Jac}_{z'}[\ugh](z^{(0)}, z_h^{(1/2)})$.} We first remark that $\operatorname{Jac}_{z'}[\ugh](z^{(0)}, z_h^{(1/2)})=\operatorname{Jac}[\psi_{0, h}](z_h^{(1/2)})$ where $\psi_{0, h}=\Id+\tfrac{h}{2}\psi_0$ and $\psi_0$ is defined on $\TM$ by 
\[
\psi_0(z)=(-\{\metric(x^{(0)})^{-1}+ \metric(y)^{-1}\}^{-1}p, -\tfrac{1}{2}D \metric(x^{(0)})[\metric(x^{(0)})^{-1}p,\metric(x^{(0)})^{-1}p]) \eqsp .
\]
We recall that $\|z_h^{(1/2)}-z^{(0)}\|_{z^{(0)}}<r^\star_\alpha\leq 1/11$ and thus define $r_1=1/11- \|z_h^{(1/2)}-z^{(0)}\|_{z^{(0)}}>0$ and  $\msb_1=\msb_{\|\cdot\|_{z^{(0)}}}(z_h^{(1/2)}, r_1)$. We set $\tilde{r}=1/11$. Note that $\msb_1 \subset \msb_{\|\cdot\|_{z^{(0)}}}(z^{(0)}, 1/11)$. We show below that $\psi_0$ is Lipschitz-continuous on $\msb_1$ with respect to $\|\cdot\|_{z^{(0)}}$. Let $(z,z')\in \msb_1 \times \msb_1$. Similarly to the proof of \Cref{lemma:numerical_condition_existence}, we have
\begin{align}
    \|\psi_0^{(1)}(z)-\psi_0^{(1)}(z')\|_{\metric(x^{(0)})} &  \leq \|p-p'\|_{\metric(x^{(0)})^{-1}}\\
    & \qquad + \|\metric(x^{(0)})^{1/2}\metric(x)^{-1}\metric(x^{(0)})^{1/2}\|_2\|p-p'\|_{\metric(x^{(0)})^{-1}}\\
    & \qquad + \|\metric(x^{(0)})^{1/2}\{\metric(x)^{-1}-\metric(x')^{-1}\}\metric(x^{(0)})^{1/2}\|_2 \|p\|_{\metric(x^{(0)})^{-1}}\\
    & \leq (1+(1-\tilde{r})^{-2})\|p-p'\|_{\metric(x^{(0)})^{-1}} \\
    & \qquad + 3 (1-\tilde{r})^{-3}(\tilde{r}+ \|p^{(0)}\|_{\metric(x^{(0)})^{-1}})\|x-x'\|_{\metric(x^{(0)})} \eqsp .
\end{align}
In the same manner, we have
\begin{align}
     \|\psi_0^{(2)}(z)-\psi_0^{(2)}(z')\|_{\metric(x^{(0)})^{-1}} \leq 2 (2 \tilde{r} + \|p^{(0)}\|_{\metric(x^{(0)})^{-1}})\|p-p'\|_{\metric(x^{(0)})^{-1}} \eqsp.
\end{align}
Therefore, recalling that $\tilde{r}=1/11$, we obtain with the previous inequalities
\begin{align}
    \|\psi_0(z) - \psi_0(z')\|_{z^{(0)}}&\leq
   \{3 + 6\|p^{(0)}\|_{\metric(x^{(0)})^{-1}}\}\|z-z'\|_{z^{(0)}}\\
   & \leq 2 \|z-z'\|_{z^{(0)}}/h_2(z^{(0)}) \eqsp .
\end{align}
Hence, $\psi_0$ is $(2/h_2(z^{(0)}))$-Lipschitz-continuous on $\msb_1$ with respect to $\|\cdot\|_{z^{(0)}}$. Then, since $h<h_2(z^{(0)})$, $\operatorname{Jac}[\psi_{0, h}](z)$ is
invertible for any $z\in \msb_1$ by \Cref{lemma:contraction}. In particular,
$\operatorname{Jac}_{z'}[\ugh](z^{(0)}, z_h^{(1/2)})=\operatorname{Jac}[\psi_{0,
  h}](z_h^{(1/2)})$ is invertible.

\paragraph{Invertibility of $\operatorname{Jac}[\oGh](z_h^{(1/2)})$. } Let us remark that $\oGh=\Id+h\zeta$ where $\zeta$ is defined on $\TM$ by
\[
\zeta(z)=(0, \frac{1}{4}D \metric(x)[\metric(x)^{-1}p,\metric(x)^{-1}p]).
\]
We define $r_2=1/2$ and $\msb_2=\msb_{\|\cdot\|_{z_h^{(1/2)}}}(z_h^{(1/2)}, r_2)$. We show below that $\zeta$ is Lipschitz-continuous on $\msb_2$ with respect to $\|\cdot\|_{z_h^{(1/2)}}$, with a Lipschitz constant which does not depend on $z_h^{(1/2)}$. Let $(z,z')\in \msb_2\times \msb_2$. Using \Cref{lemma:calcul_g_1}-(a) with $r_2=1/2$, it is clear that
\[
\|\zeta(z)-\zeta(z')\|_{z_h^{(1/2)}}\leq (1+\|p_h^{(1/2)}\|_{\metric(x_h^{(1/2)})^{-1}})\|z-z'\|_{z_h^{(1/2)}} \eqsp .
\]
Moreover, since $\|z_h^{(1/2)}-z^{(0)}\|_{z^{(0)}}< r^\star_\alpha$, we have by \Cref{lemma:equivalence_norm} 
\begin{align}
    \|p_h^{(1/2)}\|_{\metric(x_h^{(1/2)})^{-1}} & \leq (1-r^\star_\alpha)^{-1}\|p_h^{(1/2)}\|_{\metric(x^{(0)})^{-1}}\\
    & \leq (1-r^\star_\alpha)^{-1}(r^\star_\alpha + \|p^{(0)}\|_{\metric(x^{(0)})^{-1}})\eqsp,
\end{align}
and thus 
\begin{align}
    \|\zeta(z)-\zeta(z')\|_{z_h^{(1/2)}}& \leq \{1+(1-r^\star_\alpha)^{-1}(r^\star_\alpha + \|p^{(0)}\|_{\metric(x^{(0)})^{-1}})\}\|z-z'\|_{z_h^{(1/2)}}\\
    & \leq \|z-z'\|_{z_h^{(1/2)}} /h_3(z^{(0)}, r^\star_\alpha) \eqsp .
\end{align}
Hence, $\zeta$ is $(1/h_3(z^{(0)}, r^\star_\alpha))$-Lipschitz-continuous on $\msb_2$ with respect to $\|\cdot\|_{z_h^{(1/2)}}$. Since $h<h_3(z^{(0)}, r^\star_\alpha)$, $\operatorname{Jac}[\oGh](z)$
is invertible for any $z\in \msb_2$ by \Cref{lemma:contraction}. In particular,
$\operatorname{Jac}[\oGh](z_h^{(1/2)})$ is invertible, which concludes the proof.
\end{proof}

The following lemma states the existence of a diffeomorphism between $z^{(0)}\in \TM$ and $z^{(1)}\in\Gh(z^{(0)})$ under smoothness conditions verified by $z^{(1)}$.

\begin{lemma}\label{lemma:gh} Let $(z^{(0)}, z^{(1)})\in \TM \times \TM$. We recall that the maps $\ugh$, $\oGh$ and $\Gh$ are respectively defined in \eqref{eq:ugh}, \eqref{eq:oGh} and \eqref{eq:integrator}. Assume that $\metric\in \rmC^2(\msm, \rset^{d\times d})$ and that there exist $h>0$ and $z_h^{(1/2)}\in \TM$ with the following properties:
\begin{enumerate}[wide,  labelindent=0pt, label=(\alph*)]
    \item $\ugh(z^{(0)}, z_h^{(1/2)})=0$ and $z^{(1)}=\oGh(z_h^{(1/2)})$.
    \item  $\operatorname{Jac}_{z'}[\ugh](z^{(0)}, z_h^{(1/2)})$ and $\operatorname{Jac}[\oGh](z_h^{(1/2)})$ are invertible.
\end{enumerate}

Then, there exists a neighbourhood $\msu\subset \TM$ of $z^{(0)}$ and a $\rmC^1$-diffeomorphism $\xi:\msu\rightarrow \xi(\msu)\subset \TM$ such that (i) $\xi(z^{(0)})=z^{(1)}$, (ii) for any $z \in \msu$, $\xi(z) \in \Gh(z)$ and (iii)  $|\operatorname{det Jac }\xi| \equiv 1$.

\end{lemma}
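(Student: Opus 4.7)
The plan is to build $\xi$ as the composition $\oGh \circ \eta$, where $\eta$ is a local $\rmC^1$-resolution of the implicit equation $\ugh(\cdot,\cdot)=0$. Concretely, I first apply the implicit function theorem to $\ugh$ at $(z^{(0)}, z_h^{(1/2)})$. Since $\metric \in \rmC^2(\msm, \rset^{d\times d})$, both $\partial_x H_2$ and $\partial_p H_2$ are continuously differentiable, so $\ugh$ is $\rmC^1$ on a neighborhood of $(z^{(0)}, z_h^{(1/2)})$. By assumption (b) of the lemma, $\operatorname{Jac}_{z'}[\ugh](z^{(0)}, z_h^{(1/2)})$ is invertible, so the implicit function theorem yields an open neighborhood $\msu_0 \subset \TM$ of $z^{(0)}$ and a $\rmC^1$-map $\eta: \msu_0 \to \TM$ satisfying $\eta(z^{(0)}) = z_h^{(1/2)}$ and $\ugh(z, \eta(z)) = 0$ for every $z \in \msu_0$. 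By the characterization of $\uGh$ through $\ugh$, this means $\eta(z) \in \uGh(z)$ for every $z \in \msu_0$.

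Next, I set $\xi := \oGh \circ \eta$ on $\msu_0$. Because $\oGh$ is $\rmC^1$ (again by $\metric \in \rmC^2$), so is $\xi$. By construction $\xi(z^{(0)}) = \oGh(z_h^{(1/2)}) = z^{(1)}$, and for every $z \in \msu_0$, $\xi(z) = \oGh(\eta(z)) \in \oGh \circ \uGh(z) = \Gh(z)$. This settles properties (i) and (ii) of the lemma for whichever open subset $\msu \subset \msu_0$ I eventually retain.

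It remains to show that $\xi$ restricts to a $\rmC^1$-diffeomorphism and that $|\det \operatorname{Jac}\xi|\equiv 1$. I plan to invoke the fact that the Störmer--Verlet scheme is a symplectic integrator of $H_2$ wherever it is single-valued (see \cite[Theorem VI.3.4]{hairer2006geometric} and the discussion in \Cref{sec:num-int-facts}). Since $\xi$ is precisely the local branch of this scheme selected by the implicit function theorem, the standard derivation gives $\operatorname{Jac}\xi(z)^\top J_d\, \operatorname{Jac}\xi(z) = J_d$ at every $z \in \msu_0$. Taking determinants yields $|\det \operatorname{Jac}\xi(z)|^2 = 1$, so $|\det \operatorname{Jac}\xi|\equiv 1$ and in particular $\operatorname{Jac}\xi(z)$ is invertible at every $z \in \msu_0$. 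The inverse function theorem then allows me to shrink $\msu_0$ to an open neighborhood $\msu$ of $z^{(0)}$ on which $\xi$ is a $\rmC^1$-diffeomorphism onto its image, concluding the proof.

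The delicate point is transferring the textbook symplecticity of Störmer--Verlet from the classical single-valued setting to the present set-valued setup. A self-contained verification would compute $\operatorname{Jac}\xi = \operatorname{Jac}\oGh(\eta) \cdot \operatorname{Jac}\eta$ via the chain rule together with the implicit-function formula $\operatorname{Jac}\eta = -\{\partial_{z'}\ugh\}^{-1}\partial_z \ugh$, and then check the identity $\operatorname{Jac}\xi^\top J_d \operatorname{Jac}\xi = J_d$ directly from the block structure of $\partial_z\ugh$, $\partial_{z'}\ugh$, and $\operatorname{Jac}\oGh$ induced by the Hessian of $H_2$. Alternatively, one may recognize $\xi$ as the time-$h$ flow associated with the mixed generating function of the Störmer--Verlet discretization of $H_2$, which makes the symplectic identity transparent.
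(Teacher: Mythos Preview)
Your proposal is correct and follows essentially the same route as the paper: implicit function theorem on $\ugh$ to get a local $\rmC^1$ branch $\eta$ (the paper's $\xi_0$), then compose with $\oGh$. The only difference is in how $|\det\operatorname{Jac}\xi|\equiv 1$ is established. You first appeal to the abstract symplecticity of St\"ormer--Verlet and only sketch the direct block computation as a fallback; the paper does precisely that explicit computation: it writes out $\operatorname{Jac}\xi = -\operatorname{Jac}[\oGh](\eta(z))\,\{\operatorname{Jac}_{z'}\ugh\}^{-1}\operatorname{Jac}_z\ugh$, displays the three block-triangular matrices, and reads off that the determinant equals $\det(\Idd-\tfrac{h}{2}\partial_{x,p}H_2(z_0))\cdot\big[\det(\Idd-\tfrac{h}{2}\partial_{x,p}H_2(z_0))\det(\Idd+\tfrac{h}{2}\partial_{x,p}H_2(x,p_0))\big]^{-1}\cdot\det(\Idd+\tfrac{h}{2}\partial_{x,p}H_2(x,p_0))=1$. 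Since you already flagged the abstract route as delicate and proposed exactly this computation as the self-contained alternative, your argument and the paper's coincide.
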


\begin{proof} Let $(z^{(0)}, z^{(1)})\in \TM \times \TM$. Assume that $\metric \in \rmC(\msm, \rset^{d \times d})$ and consider $h>0$ and $z_h^{(1/2)}\in \TM$ as described in \Cref{lemma:gh}. Under the assumption on $\metric$, $\ugh$ and $\oGh$ are continuously differentiable  respectively on $\TM \times \TM$ and $\TM$. We are going to prove \Cref{lemma:gh}, by first deriving intermediary results on $\ugh$ and $\oGh$.

\paragraph{Result on $\ugh$.} We recall that $\ugh(z^{(0)}, z_h^{(1/2)})=0$ and $\operatorname{Jac}_{z'}[\ugh](z^{(0)}, z_h^{(1/2)})$ is invertible. Then, by applying the implicit function theorem on $\ugh$ at $(z^{(0)}, z_h^{(1/2)})$, we obtain the existence of a neighbourhood $\msu_0 \subset \TM$ of $z^{(0)}$ and a $\rmC^1$-diffeomorphism $\xi_0:\msu_0 \rightarrow\xi_0(\msu_0)\subset\TM$ such that $\xi_0(z^{(0)})=z_h^{(1/2)}$ and for any $z\in \msu_0$,
$\ugh(z, \xi_0(z))=0$, \ie,
$\xi_0(z)\in \uGh(z)$. Moreover, for any $z\in \msu_0$, the Jacobian of $\xi_0$ at $z$ is given by 
\[
\operatorname{Jac} [\xi_0](z)= -\operatorname{Jac}_{z'} [\ugh](z, \xi_0(z))^{-1} \operatorname{Jac}_{z} [\ugh](z, \xi_0(z)) \eqsp .
\]

\paragraph{Result on $\oGh$.}
We recall that $z^{(1)}=\oGh(z_h^{(1/2)})$ and $\operatorname{Jac}[\oGh](z_h^{(1/2)})$ is invertible. Then, we apply the inverse function theorem on $\oGh$ at $z_h^{(1/2)}$ and obtain the existence of a neighbourhood $\msu_1 \subset \TM$ of $z_h^{(1/2)}$ such that $\xi_1=\oGh\vert_{\msu_1}$ is a $\rmC^1$-diffeomorphism on $\msu_1$.

\paragraph{Final result.} We now consider the subset $\msu$ and the map $\xi$, respectively defined by
\begin{enumerate}[wide,  labelindent=0pt, label=(\alph*)]
    \item $\msu=\xi_0^{-1}\left(\msu_1\cap\xi_0(\msu_0)\right)\subset \TM$, neighbourhood of $z^{(0)}$.
    \item $\xi=\xi_1\vert_{\xi_0(\msu)}\circ \xi_0$, $\rmC^1$-diffeomorphism on $\msu$ such that $\xi(z^{(0)})=z^{(1)}$, and for any $z\in \msu$, $\xi(z)\in \Gh(z)$.
\end{enumerate}

Let us now prove that $|\operatorname{det Jac }\xi| \equiv 1$. Let $z \in \msu$. We define $z_0=\xi_0(z)$. By the chain rule, we have
\begin{align}
    \operatorname{Jac}[\xi](z)& =\operatorname{Jac}[\xi_1\circ \xi_0](z)\\
    & = \operatorname{Jac}[\xi_1](\xi_0(z)) \operatorname{Jac}[\xi_0](z) \\
    & = - \operatorname{Jac}[\xi_1](\xi_0(z)) \operatorname{Jac}_{z'} [\ugh](z, \xi_0(z))^{-1} \operatorname{Jac}_{z} [\ugh](z, \xi_0(z)) \eqsp ,
\end{align}
where we have
\begin{enumerate}[wide,  labelindent=0pt, label=(\alph*)]
\item $\operatorname{Jac}[\xi_1](z')=
    \begin{pmatrix}
    \Idd & 0\\
    -\frac{h}{2}\partial_{x,x}H_2(z') & \Idd-\frac{h}{2}\partial_{x,p}H_2(z')
    \end{pmatrix} \eqsp,$
    \item $\operatorname{Jac}_{z'} [\ugh](z, z')=
    \begin{pmatrix}
    \Idd-\frac{h}{2}\partial_{x,p}H_2(z') &
    -\frac{h}{2}\{\partial_{p,p}H_2(x,p') + \partial_{p,p}H_2(x',p')\} \\
    0 & \Idd+\frac{h}{2}\partial_{x,p}H_2(x,p')
    \end{pmatrix}\eqsp,$
    \item $\operatorname{Jac}_{z} [\ugh](z, z')=
    \begin{pmatrix}
    -\Idd-\frac{h}{2}\partial_{x,p}H_2(x,p') &0\\
    \frac{h}{2}\partial_{x,x}H_2(x,p')& -\Idd
    \end{pmatrix}\eqsp.$
\end{enumerate}
Therefore, we obtain
\begin{align}
    & |\operatorname{det Jac }[\xi](z)|\\
    & \qquad=\operatorname{det}(\Idd-\frac{h}{2}\partial_{x,p}H_2(z_0))\operatorname{det}(\{ \Idd-\frac{h}{2}\partial_{x,p}H_2(z_0)\}\{\Idd+\frac{h}{2}\partial_{x,p}H_2(x,p_0)\})^{-1} \operatorname{det}(\Idd+\frac{h}{2}\partial_{x,p}H_2(x, p_0))=1 \eqsp,
\end{align}
which concludes the proof.
\end{proof}

\begin{comment}
\begin{lemma}\label{lemma:tildeRh} Under \Cref{ass:manifold}, \Cref{ass:g} and
  \Cref{ass:reg_f}, for any $z\in \TM$ and any
  $h\in (0,\bar{h}(z, r, \nu, \alpha))$, where $\bar{h}$ is defined in
  \eqref{eq:h_bar},
\begin{itemize}
    \item $\tz^{(1/2)}_{h}(\Th(z))$ is well defined, with the properties detailed in \Cref{lemma:existence_and_jacobian_tilde_z},
    \item $\tildeRh(z)\in \Rh(z)$
\end{itemize}
\end{lemma}
\begin{proof}

Let $z\in \TM$. Consider some $h\in (0,\bar{h}(z, r, \nu, \alpha))$.

We define
\begin{itemize}
    \item $z_h^{(0)}:=\Th(z)$,
    \item $\tz_h^{(1)}:=\tildeGh(\tz_h^{(0)})$,
    \item $z_h:=s\circ \Th(\tz_h^{(1)})=\tildeRh(z)$.
\end{itemize}

We have 
\[
\begin{split}
    \|p_h^{(0)}\|_{\metric(\tilde{x}_h^{(0)})^{-1}} &= \|p_h^{(0)}\|_{\metric(x)^{-1}}\\
    & = \|p-h C(x)\|_{\metric(x)^{-1}}\\
    & \leq \|p\|_{\metric(x)^{-1}} + h\| C(x)\|_{\metric(x)^{-1}}\\
    & < \|p\|_{\metric(x)^{-1}} + \| C(x)\|_{\metric(x)^{-1}}
\end{split}
\]
where the last inequality comes from $h<1$.

Therefore, we have $h_2(z, \nu)<h_2(\tz_h^{(0)}, \nu)$ and $h_3(z, r^*_{\nu, \alpha}, \nu)<h_3(\tz_h^{(0)},r^*_{\nu, \alpha}, \nu)$. Hence,
\[
0<h<\bar{h}(z, r^*_{\nu, \alpha}, \nu, \alpha) \leq h_*(\tz_h^{(0)}, r^*_{\nu, \alpha}, \nu, \alpha).
\]
Hence, $\tz^{(1/2)}_{h}(\tz_h^{(0)})$ is well defined. Moerover, $\tildeGh(\tz_h^{(0)})\in \Gh(\tz_h^{(0)})$ and thus $\tildeRh(z)\in \Rh(z)$.
\end{proof}
\end{comment}

\begin{lemma}\label{lemma:equality_x} Let $h>0$. We recall that the set-valued map $\Fh$ is defined in \Cref{subsec:integrators}. Let $z\in \TM$. Assume that there exist $(z',z'') \in \TM\times \TM$ such that
\begin{enumerate}[wide,  labelindent=0pt, label=(\alph*)]
    \item $z'\in \Fh(z)$,
    \item $z'' \in \Fh(z)$, and
    \item $x'=x''$.
\end{enumerate}
Then, we have $p'=p''$, and thus $z'=z''$.
\end{lemma}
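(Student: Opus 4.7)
The plan is as follows. By definition, $\Fh = \Gh \circ s$, so the hypotheses $z' \in \Fh(z)$ and $z'' \in \Fh(z)$ translate to $z' = (x',p')$ and $z'' = (x'',p'')$ both belonging to $\Gh(s(z)) = \Gh(x,-p)$. Unwinding the definition \eqref{eq:integrator} of the Störmer-Verlet scheme with initial data $(x^{(0)},p^{(0)}) = (x,-p)$, there must exist intermediate momenta $p^{(1/2)}$ and $\tilde p^{(1/2)}$ realising, respectively, the transitions $(x,-p) \to (x',p')$ and $(x,-p) \to (x'',p'')$ in $\Gh$.

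The key step is to observe that the half-step momentum is rigidly determined by the two endpoint positions. Using $\partial_p H_2(y,q) = \metric(y)^{-1}q$, the second line of \eqref{eq:integrator} for each solution reads
\begin{align}
\tfrac{h}{2}\bigl(\metric(x)^{-1} + \metric(x')^{-1}\bigr)\, p^{(1/2)} &= x' - x,\\
\tfrac{h}{2}\bigl(\metric(x)^{-1} + \metric(x'')^{-1}\bigr)\, \tilde p^{(1/2)} &= x'' - x.
\end{align}
Since $\metric(x)^{-1}$ and $\metric(x')^{-1}$ are positive definite, their sum is invertible. Assumption (c) says $x' = x''$, so the two linear systems above are identical and consequently $p^{(1/2)} = \tilde p^{(1/2)}$.

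Finally, the third line of \eqref{eq:integrator} yields $p' = p^{(1/2)} - (h/2)\, \partial_x H_2(x',p^{(1/2)})$ and $p'' = \tilde p^{(1/2)} - (h/2)\, \partial_x H_2(x'',\tilde p^{(1/2)})$. Combining $x' = x''$ with $p^{(1/2)} = \tilde p^{(1/2)}$ gives $p' = p''$, hence $z' = z''$. There is no real obstacle here: the proof is a direct consequence of the explicit formula $p^{(1/2)} = (2/h)(\metric(x)^{-1}+\metric(x')^{-1})^{-1}(x'-x)$ already implicit in \Cref{lemma:g_h}, which shows that the half-step momentum is a function of the two endpoint positions alone, and that the terminal momentum is in turn a function of the half-step momentum and the terminal position.
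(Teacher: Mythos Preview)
Your proof is correct and follows essentially the same route as the paper: invert the second line of \eqref{eq:integrator} to recover $p^{(1/2)}$ uniquely from the two endpoint positions, then read off $p^{(1)}$ from the third line. The paper's proof is terser but identical in substance, citing exactly the same formula $p^{(1/2)}=\tfrac{2}{h}(\metric(x)^{-1}+\metric(\tilde x)^{-1})^{-1}(\tilde x-x)$ that you derive.
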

\begin{proof}  Let $h>0$. Let $z\in \TM$. We consider $(z',z'') \in \Fh(z)\times \Fh(z)$ such that $x'=x''=\tilde{x}$. By using the last two aligns from \eqref{eq:integrator} with $z'$ and $z''$, we have $p'=-p^{(1/2)}+\tfrac{h}{2}\metric(\tilde{x})^{-1}p^{(1/2)}=p''$, where $p^{(1/2)}=\tfrac{2}{h}(\metric(x)^{-1}+ \metric(\tilde{x})^{-1})^{-1}(\tilde{x}-x)$, which concludes the proof. 
\end{proof}

Under the assumption \rref{ass:g}, we define, for any $z^{(0)}\in \TM$, $h^\star(z^{(0)})= \bar{h}(s(z^{(0)}), r^\star_\alpha, \alpha)$, where $\alpha$ is given by \rref{ass:g}, and $r^\star_\alpha$ and $\bar{h}$ are respectively defined in \eqref{eq:r_star_alpha} and \eqref{eq:h_bar}. Note that $h^\star$ appears in \Cref{prop:diffeomorphism}, for which we derive the proof below.

\bigskip

\begin{proof}[Proof of \Cref{prop:diffeomorphism}] We recall that maps $\uGh$, $\ugh$ and $\oGh$ are respectively defined in \eqref{eq:uGh}, \eqref{eq:ugh} and \eqref{eq:oGh}. Assume \rref{ass:manifold}, \rref{ass:g}. Let $z^{(0)}\in \TM$. We define $\tz^{(0)}=s(z^{(0)})$. Let $h\in (0,h^\star(z^{(0)}))$. By using \Cref{lemma:existence_and_jacobian_tilde_z} on $\tz^{(0)}$, we obtain the existence of $z_h^{(1/2)}\in \TM$ with the following properties:
\begin{enumerate}[wide,  labelindent=0pt, label=(\alph*)]
    \item $z^{(1/2)}_h\in \uGh(\tz^{(0)})$, \ie, $\ugh(\tz^{(0)},z^{(1/2)}_h)=0$,
    \item $\operatorname{Jac}_{z'}[\ugh](\tz^{(0)}, z^{(1/2)}_{h})$ and $\operatorname{Jac}[\oGh](z^{(1/2)}_{h})$ are invertible,
    \item $\operatorname{Jac}[\mathrm{G}_{h, x^{(0)}}](x^{(1/2)}_{h})$ is invertible.
\end{enumerate}
We then define $z^{(1)}_h=\oGh(z^{(1/2)}_{h})$. In particular, $z^{(1)}_h \in \Gh(\tz^{(0)})$, \ie, $z^{(1)}_h \in \Fh(z^{(0)})$ and $\operatorname{Jac}[\mathrm{G}_{h, x^{(0)}}](x^{(1)}_{h})=\operatorname{Jac}[\mathrm{G}_{h, x^{(0)}}](x^{(1/2)}_{h})$ is invertible. By combining the properties of $z^{(1/2)}_h$ and with \Cref{lemma:gh}, we also obtain the existence of a neighbourhood $\msu'\subset \TM$ of $\tz^{(0)}$ and a $\rmC^1$-diffeomorphism $\xi_h:\msu'\rightarrow \xi_h(\msu')\subset \TM$ such that (i) $\xi_h(\tz^{(0)})=z^{(1)}_h$, (ii) for any $z \in \msu'$, $\xi_h(z) \in \Gh(z)$ and (iii)  $|\text{det Jac }\xi_h| \equiv 1$.
Therefore, we can define the subset $\msu$ and the map $\gamma_h$ of \Cref{prop:diffeomorphism} by
\begin{enumerate}[wide,  labelindent=0pt, label=(\alph*)]
    \item $\msu=s(\msu')$, neighbourhood of $z^{(0)}$ in $\TM$,
    \item $\gamma_h=\xi_h \circ s$, such that (i) $\gamma_h(z^{(0)})=z^{(1)}_h$, (ii) for any $z\in \msu$, $\gamma_h(z)\in \Fh(z)$ and (iii) $\text{det Jac}[\gamma_h]\equiv 1$.
\end{enumerate}
We now prove that $\msu$ can be reduced to a smaller subset such that $\gamma_h(z)$ is the only element of $\Fh(z)$ in $\gamma_h(\msu)$ for any $z \in \msu$. Motivated by \Cref{lemma:g_h}, we first define, for any $(x,x')\in \msm \times \msm$, $\mathrm{F}_{h,x}(x')$ as the only element $p\in \mathrm{T}^\star_x\msm$ such that $(x',p')\in \Fh(x,p)$ for any $p'\in \mathrm{T}^\star_{x'}\msm$. It is clear that $\mathrm{F}_{h,x}(x')=-\mathrm{G}_{h,x}(x')$, where $\mathrm{G}_{h,x}$ is defined in \eqref{eq:g_h}. We also define $\mathrm{Z}_h:\msm\times\msm\rightarrow \TM$ by
\[\mathrm{Z}_h(x,x')=(x, \mathrm{F}_{h,x}(x'))=(x, -\mathrm{G}_{h,x}(x')) \eqsp ,\]
which is continuously differentiable since $\metric \in \rmC^2(\msm, \rset^{d \times d})$. Besides this, we obtain by \Cref{lemma:g_h} that $\operatorname{Jac}[\mathrm{G}_{h, x^{(0)}}](x^{(0)}, x_h^{(1)})$ is invertible, and then $\operatorname{Jac}[\mathrm{Z}_h](x^{(1)}_{h})$ is invertible. Therefore, by applying the inverse function theorem on $\mathrm{Z}_h$ at $(x^{(0)}, x_h^{(1)})$, it comes that $\mathrm{Z}_h$ is a $\rmC^1$-diffeomorphism in a neighbourhood of $(x^{(0)}, x_h^{(1)})$. We are now going to prove the result by contradiction. Assume for now that there is no neighbourhood $\msu$ of $z^{(0)}$ such that $\gamma_h(z)$ is the only element of $\Fh(z)$ in $\gamma_h(\msu)$, for any $z\in \msu$. Then, we can find a sequence $(z_i)_{i \in \mathbb{N}}$ which converges to $z^{(0)}$ such that for any $i\in \mathbb{N}$, there exist two different elements $z_{i,1}\in \Fh(z_i)$ and $z_{i,2} \in \Fh(z_i)$. Therefore, for any $i \in \mathbb{N}$, $x_{i,1}\neq x_{i,2}$ and by \Cref{lemma:equality_x}, we have
\[
\mathrm{F}_{h,x_i}(x_{i,1})=\mathrm{F}_{h,x_i}(x_{i,2})=p_i \eqsp ,
\]
and thus
\[ \label{eq:z_h}
Z_h(x_i, x_{i,1})=Z_h(x_i, x_{i,2})=z_i \eqsp.
\]
Moreover, by continuity of $\gamma_h$, the sequences $(z_{i,1})_{i \in \mathbb{N}}$ and $(z_{i,2})_{i \in \mathbb{N}}$ converge to $z^{(1)}_h$, and therefore, $(x_{i,1})_{i \in \mathbb{N}}$ and $(x_{i,2})_{i \in \mathbb{N}}$ also converge to $x^{(1)}$. Combined with \eqref{eq:z_h}, this result of convergence is in contradiction with the fact that $\mathrm{Z}_h$ is a diffeomorphism in a neighbourhood of $(x^{(0)}, x^{(1)}_h)$. Therefore, we can reduce $\msu$ to a smaller subset such that $\gamma_h(z)$ is the only element of $\Fh(z)$ in $\gamma_h(\msu)$ for any $z \in \msu$, which concludes the proof of \Cref{prop:diffeomorphism}.
\end{proof}

%%% Local Variables:
%%% mode: latex
%%% TeX-master: "main"
%%% End:

\section{Modification of n-BHMC algorithm with step-size conditioning}
\label{sec:constr-riem-hmc_ideal}

In the rest of the paper, for any $z^{(0)}\in \TM$, we will denote by $h_\star(z^{(0)})$ the value of $h_\star$ given by \Cref{ass:Phi_h}.

\paragraph{Beyond n-BHMC.}A crucial part of the proof of reversibility in BHMC, as much for c-BHMC as for n-BHMC, relies on local \textit{symplectic} properties of the integrator of the Hamiltonian dynamics. Although \Cref{algo:constrained_HMC} can be implemented without any practical limitation, it is hard to state such properties for its numerical integrator $\Phi_h$ under \Cref{ass:manifold}, \Cref{ass:g} and \Cref{ass:Phi_h}, given \textit{any} value of $h$. Indeed, we know from \Cref{ass:Phi_h} that $\Phi_h$ is a local involution around $z^{(0)}\in \TM$ when $h<h_\star(z^{(0)})$; however, we cannot ensure this result when $h>h_\star(z^{(0)})$... To circumvent this issue, we propose to study Theoretical n-BHMC (Tn-BHMC), presented in \Cref{algo:theoretical_constrained_HMC}. In this modified version of n-BHMC, we actually enforce a condition on $h$ to be small enough. We now get into the details of this new algorithm and assume \Cref{ass:Phi_h} for the rest of this section.

\paragraph{Theoretical motivations.} Let $h>0$. We recall the definition of the set $\msa_h$ introduced in \Cref{subsec:disc-ham-conv} 
\begin{align}
  \textstyle{
    \msa_h = \ensembleLigne{z \in \TM}{h<\min_{\tz \in \msb_{\|\cdot\|_z}(z,1)} h_\star(\tz)} \eqsp .
    }
  \end{align}
It is clear that $\msa_h \subset \dom_{\Phi_h}$: indeed, if $z^{(0)}\in \msa_h$, we have in particular that $h< h_\star(z^{(0)})$, and therefore $z^{(0)}\in \dom_{\Phi_h}$ by \Cref{ass:Phi_h}. Let $z^{(0)} \in \msa_h$. By \Cref{ass:Phi_h}, we know that $\Phi_h$ is an involution on a neighbourhood of $z^{(0)}$; in particular, it comes that $(\Phi_h \circ  \Phi_h)(z^{(0)})=z^{(0)}$. Hence, the condition (b) of the ``involution checking step'' in \Cref{algo:constrained_HMC} is \textit{de facto} satisfied. This naturally leads to replace the condition ``$z\in\dom_{\Phi_h}$'' by the more restrictive condition ``$z\in \msa_h$''.

\begin{algorithm2e}[h!]
    \DontPrintSemicolon
    \LinesNumbered
    \caption{Tn-BHMC with Momentum Refreshment}
    \label{algo:theoretical_constrained_HMC}
        \KwInput{$(x_0, p_0) \in \TM$, $\beta \in (0,1]$, $N\in \nset$, $h>0$, $\eta>0$, $\Phi_h$ with domain $\dom_{\Phi_h}$, $\msa_h$}
        \KwOutput{$(X_n,P_n)_{n \in [N]}$}
        \For{$n=1,...,N$}{
        % \State \mycommfont{Step 1: sampling $p$ with reversible refresh wrt $\bar{u}(\cdot|x)$}
        \mycommfont{Step 1:} $G_n \sim \mathrm{N}_x(0, \Idd), \Tilde{P}_n \leftarrow \sqrt{1 - \beta}P_{n-1} + \sqrt{\beta} G_n$\\
        \mycommfont{Step 2: solving a discretized version of ODE \eqref{eq_dynamics_H}}\\
         $X'_n,P'_n \leftarrow X_{n-1},\Tilde{P}_n$; \quad $X_n^{(0)}, P_n^{(0)} \leftarrow (s \circ \Sker_{h/2})(X_n,\Tilde{P}_n)$\\
        \If{\hlc{yellow}{$Z_n^{(0)}=(X_n^{(0)}, P_n^{(0)})\in \msa_h$}}{
        $Z_n^{(1)}=\Phi_h(Z_n^{(0)})$\\
        \If{\hlc{yellow}{$Z_n^{(1)}\in \msa_h$}}{
            $X_n', P_n' \leftarrow (s \circ \Sker_{h/2}) (Z_n^{(1)})$
        }
    }
        \mycommfont{Step 3:} $A_n \leftarrow \min(1,\exp[-H(X_n',P_n')+H(X_{n-1},\Tilde{P}_{n})] )$\\
         $U_n \sim U[0,1]$\\
        \lIf{$U_n \leq A_n$}{$\bar{X}_n, \bar{P}_n \leftarrow X_n', P_n'$}
        \lElse{$\bar{X}_n, \bar{P}_n \leftarrow X_{n-1}, \Tilde{P}_n$}
        \mycommfont{Step 4:} $X_n, \hat{P}_n \leftarrow s(\bar{X}_n, \bar{P}_n)$ \\
        \mycommfont{Step 5:}   $G_n' \sim \mathrm{N}_x(0, \Idd), P_n \leftarrow \sqrt{1 - \beta}\hat{P}_n + \sqrt{\beta} G_n'$\\
        }
\end{algorithm2e}

\paragraph{Implementation of Tn-BHMC.}

In \Cref{algo:theoretical_constrained_HMC}, we highlight in yellow the modifications of Tn-BHMC in contrast to n-BHMC. Namely, we replace
\begin{enumerate}[wide,  labelindent=0pt, label=(\alph*)]
    \item ``$Z_n^{(0)}\in \dom_{\Phi_h}$`` (Line 5 in \Cref{algo:constrained_HMC}) by ``$Z_n^{(0)}\in \msa_h$``
    (Line 5 in \Cref{algo:theoretical_constrained_HMC}).
    \item ``$Z_n^{(1)}\in \dom_{\Phi_h}$`` (Line 8 in \Cref{algo:constrained_HMC}) by ``$Z_n^{(1)}\in \msa_h$``
    (Line 7 in \Cref{algo:theoretical_constrained_HMC}).
\end{enumerate}
Note that there is no need to maintain the ``involution checking step" of \Cref{algo:constrained_HMC}, since it is automatically verified once $Z_n^{(0)}\in \msa_h$. On the whole, these new conditions are more \textit{restrictive} than the conditions of \Cref{algo:constrained_HMC} since $\msa_h \subset \dom_{\Phi_h}$; moreover, they can be thought as conditions directly applied on the step-size $h$. The specific choice of $\msa_h$, instead of another subset of $\dom_{\Phi_h}$, is actually sufficient to derive the proof of reversibility of Tn-BHMC (see \Cref{subsec:disc-ham-conv}).

\section{Proofs of  \Cref{subsec:disc-ham-conv}}\label{sec:proof-disc-ham-conv}

\subsection{Expression and properties of $r^\star$}

Given a Riemannian manifold $(\msm, \metric)$, we
define $r^\star(x)$ for any $x \in \M$ by
\begin{align}\label{eq:r_star}
    r^\star(x)=\min(\|\metric(x)\|_2^{-1/2}, \|\metric(x)^{-1}\|_2^{-1/2}) \eqsp .
\end{align}
Note that $r^\star$ is used in \Cref{ass:Phi_h} and that $r^\star(x)=1/C_x$, where $C_x$ is defined in \Cref{lemma:equivalence_norm}. We prove below that $r^\star$ has a smooth behaviour on $\msm$ under our main assumptions.

\begin{lemma}\label{lemma:r_star} Assume \Cref{ass:manifold}, \Cref{ass:g}. Also assume that $x \in \msm \mapsto \|\metric^{-1}(x)\|_2$ is bounded from above. As defined in \eqref{eq:r_star}, $r^\star:\msm \to (0,+\infty)$ satisfies the following properties:
\begin{enumerate}[wide,  labelindent=0pt, label=(\alph*)]
    \item \label{item:r_star_a} $r^\star(x)\to 0$ as $x \to \partial M$.
    \item \label{item:r_star_b} There exists $\Ltt>0$ such that $r^\star$ is $\Ltt$-Lipschitz-continuous on $\msm$ with respect to $\|\cdot\|_2$.
    \item \label{item:r_star_c} There exists $\Mtt>0$ such that $r^\star(x) \leq \Mtt$ for any $x \in \msm$.
\end{enumerate}
\end{lemma}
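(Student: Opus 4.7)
My plan is to express $r^\star(x) = \min(f_1(x), f_2(x))$ with $f_1(x) := 1/\sqrt{\lambda_{\max}(\metric(x))} = \|\metric(x)\|_2^{-1/2}$ and $f_2(x) := \sqrt{\lambda_{\min}(\metric(x))} = \|\metric(x)^{-1}\|_2^{-1/2}$, and to derive all three claims from the Dikin containment $\msw^0(x,1) \subset \msm$ supplied by \Cref{lemma:calcul_g_1}-(b), plus the barrier blow-up given by \Cref{lemma:asymptotic_metric}.

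Part \ref{item:r_star_a} will be immediate: \Cref{lemma:asymptotic_metric} yields $\|\metric(x)\|_2 \to +\infty$ as $x \to \partial \msm$, so $f_1(x) \to 0$, and the bound $r^\star \leq f_1$ finishes the job. For part \ref{item:r_star_c}, I would take a unit vector $h$ and observe that the open segment $\{x + t h : |t| < 1/\|h\|_{\metric(x)}\}$ lies in $\msm$ by the Dikin containment; since $\operatorname{diam}(\msm) =: D$ is finite by \rref{ass:manifold}, this forces $\|h\|_{\metric(x)} \geq 1/D$, hence $\lambda_{\min}(\metric(x)) \geq 1/D^2$, hence $\|\metric(x)\|_2 \geq 1/D^2$, and therefore $r^\star(x) \leq f_1(x) \leq D =: \Mtt$.

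The main work is part \ref{item:r_star_b}. For a pair $x,y \in \msm$ I will set $\delta := \|y-x\|_{\metric(x)}$ and $\delta' := \|x-y\|_{\metric(y)}$ and split on whether $\min(\delta,\delta') \leq 1/2$. In the ``nice'' regime $\delta \leq 1/2$, the matrix sandwich $(1-\delta)^2 \metric(x) \preceq \metric(y) \preceq (1-\delta)^{-2} \metric(x)$ from \Cref{lemma:calcul_g_1}-(b) transfers directly to both $\lambda_{\max}$ and $\lambda_{\min}$, and hence to each $f_i$ with identical scalar constants; since $\min$ preserves such a multiplicative sandwich, I obtain $(1-\delta) r^\star(x) \leq r^\star(y) \leq (1-\delta)^{-1} r^\star(x)$. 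Combined with the elementary estimate $\delta\, r^\star(x) \leq \delta f_1(x) \leq \|y-x\|_2$ (which follows from $\|y-x\|_{\metric(x)} \leq \sqrt{\lambda_{\max}(\metric(x))}\|y-x\|_2$ and the definition of $f_1$), this gives $|r^\star(x)-r^\star(y)| \leq \|y-x\|_2/(1-\delta) \leq 2\|y-x\|_2$. The case $\delta' \leq 1/2$ is handled symmetrically.

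The remaining case, $\delta > 1/2$ and $\delta' > 1/2$, will be the main obstacle, as the Dikin comparison no longer applies. My plan is to exploit the same inequality in reverse: $\delta > 1/2$ forces $\|y-x\|_2 \geq \delta f_1(x) \geq f_1(x)/2 \geq r^\star(x)/2$, and symmetrically $\|y-x\|_2 \geq r^\star(y)/2$. Since $r^\star(x), r^\star(y)$ are both non-negative and both dominated by $2\|y-x\|_2$, the elementary bound $|a-b| \leq \max(a,b)$ for $a,b \geq 0$ then closes the case, so $\Ltt := 2$ works uniformly on $\msm$. A final observation worth flagging: my proof of \ref{item:r_star_a}--\ref{item:r_star_c} never actually invokes the hypothesis that $\|\metric^{-1}\|_2$ is bounded; I expect that assumption to be needed only for subsequent statements that require a uniform positive lower bound on $f_2$.
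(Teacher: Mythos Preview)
Your proof is correct and takes a genuinely different, more elementary route than the paper's. For part~\ref{item:r_star_b} the paper argues that $r_1$ and $r_2$ are each $\rmC^1$ (computing $\nabla r_1$ explicitly via an eigenvector formula and invoking the self-concordance inequality to bound it by~$1$), then uses a smooth Urysohn cutoff to extend $r_2$ past the boundary, and finally combines the two Lipschitz pieces via $2\min(r_1,r_2)=r_1+r_2-|r_1-r_2|$. This route relies on differentiability of the extreme eigenvalues and on the extra hypothesis that $\|\metric^{-1}\|_2$ is bounded. Your argument bypasses all of this: the Dikin sandwich $(1-\delta)^2\metric(x)\preceq\metric(y)\preceq(1-\delta)^{-2}\metric(x)$ directly controls \emph{both} extreme eigenvalues, and hence both $f_1$ and $f_2$, multiplicatively with the same scalar factor; this passes cleanly to the $\min$ and, after the key observation $\delta\, f_1(x)\leq\|y-x\|_2$, yields the explicit constant $\Ltt=2$. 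The ``far'' case $\min(\delta,\delta')>1/2$ is handled neatly by the same inequality turned into a lower bound on $\|y-x\|_2$.

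Your closing remark is also on point, and in fact slightly stronger than you state: your own argument for part~\ref{item:r_star_c} (Dikin ellipsoid contained in a set of diameter~$D$) shows $\lambda_{\min}(\metric(x))\geq 4/D^2$ uniformly, which is precisely the assumption $\sup_{x\in\msm}\|\metric(x)^{-1}\|_2<\infty$. So under \rref{ass:manifold} and \rref{ass:g} that hypothesis is automatic, and your proof makes this transparent. The paper, by contrast, uses the hypothesis in its treatment of $r_2$ and derives~\ref{item:r_star_c} only indirectly from~\ref{item:r_star_b}.
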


\begin{proof}Assume \Cref{ass:manifold}, \Cref{ass:g}. Also assume that $x \in \msm \mapsto \|\metric^{-1}(x)\|_2$ is bounded from above. We first define $r_1: x\in \msm \mapsto \|\metric(x)\|_2^{-1/2}$ and $r_2: x\in \msm \mapsto \|\metric(x)^{-1}\|_2^{-1/2}$, such that $r^\star=\min(r_1,r_2)$. Since $\metric\in \rmC^2(\msm, \rset^{d \times d})$, it is clear that $r_1$ and $r_2$ are continuously differentiable on $\msm$. We have: 
\begin{enumerate}[wide,  labelindent=0pt, label=(\alph*)]
    \item \label{item:r_1} $r_1(x) \to 0$ as $x \to \partial \M$ by \Cref{lemma:asymptotic_metric}, and
    \item \label{item:r_2} $r_2(x)\not\to 0$ as $x \to \partial \msm$, since $1/r_2^2: x \mapsto \|\metric(x)^{-1}\|_2$ is bounded on $\msm$.
\end{enumerate}
Combining the fact that $r_2(x)>0$ for any $x \in \msm$ and \cref{item:r_1}, we obtain \cref{item:r_star_a} of \Cref{lemma:r_star}. 
We denote by $d$ the distance induced by $\|\cdot\|_2$ and now define, for any $\varepsilon>0$, 
\begin{enumerate}[wide,  labelindent=0pt, label=(\roman*)]
    \item $\mu_\varepsilon=\inf_{\ensembleLigne{y\in \msm}{d(y, \partial \msm)\leq \epsilon}} r_2(y)$.
    \item $\msm_\varepsilon=\operatorname{Int}(\ensembleLigne{x\in \msm}{d(x,\partial \msm)\leq\varepsilon, r_1(x)\leq\mu_\varepsilon})$, open set in $\msm$.
    \item $\msm_{-\varepsilon}=\msm \backslash \msm_\varepsilon$, closed and bounded (and thus, compact) set in $\msm$.
\end{enumerate}
Using \cref{item:r_1,item:r_2}, we can ensure the existence of some $\varepsilon\in (0,\operatorname{diam}(\msm))$ such that (i) $\mu_\varepsilon>0$ and (ii) $\msm_\varepsilon$ and $\msm_{-\varepsilon}$ are not empty. We consider such $\varepsilon$ for the rest of the proof.

\paragraph{Smoothness of $r_2$.} We define $\delta=d(\M^\complementary, \M_{-\varepsilon})$ and $\msm_\delta=\M^\complementary +\bar{\msb}(0, \delta/4)$. Note that 
\begin{enumerate}[wide,  labelindent=0pt, label=(\roman*)]
\item $\delta>0$ since $\M_{-\varepsilon}\subset \msm$,
\item $\msm_\delta$ is closed since the ball $\bar{\msb}(0, \delta/4)$ is compact, and
\item $\msm_\delta \cap \msm_{-\varepsilon}=\emptyset$.
\end{enumerate}
According to the smooth version Urysohn's lemma applied to $\msm_\delta$ and $\msm_{-\varepsilon}$, there exists $\chi\in \rmC^1(\rset^d, [0,1])$ such that $\chi(\msm_{-\varepsilon})=1$ and $\chi(\msm_\delta)=0$. We then define $\tilde{r}_2: \rset^d \to (0, +\infty)$ by $\tilde{r}_2=\chi r_2 + (1-\chi) \mu_\varepsilon$. In particular, (i) there exists $\Ltt_2>0$ such that $\tilde{r}_2$ is $\Ltt_2$-Lipschitz-continuous on $\bar{\msm}$ with respect to $\|\cdot\|_2$, since $\tilde{r}_2\in \rmC^1(\rset^d, [0,1])$, and (ii) for any $x\in \msm_\varepsilon$, $\tilde{r}_2(x)>\mu_\varepsilon$.

\paragraph{Smoothness of $r_1$.} We now prove that $r_1$ is $1$-Lipschitz continuous on $\msm$ with respect to $\|\cdot\|_2$. Let $x\in \msm$. Note that $r_1(x)=(\|\metric(x)\|_2^2)^{-1/4}$ and we thus have
\begin{align}
    \nabla r_1(x) & = (-1/4) \|\metric(x)\|_2^{-5/2} h(x) : D\metric(x) \eqsp,
\end{align}
where $h(x)=\partial_{\metric(x)}\|\metric(x)\|_2^2=2 \|\metric(x)\|_2 u(x) u(x)^\top$, $u(x)$ being a normal eigenvector of $\metric(x)$ corresponding to the eigenvalue $\|\metric(x)\|_2$. Hence,
\begin{align}
    \|\nabla  r_1(x)\|_2 & \leq (1/2) \|\metric(x)\|_2^{-3/2} \| u(x)u(x)^\top : D\metric(x)\|_2 \\
    & \leq (1/2) \|\metric(x)\|_2^{-3/2} \| u(x)u(x)^\top\|_2 \|D\metric(x)\|_2\\
    &\leq (1/2) \|\metric(x)\|_2^{-3/2}\times  2\|\metric(x)\|_2^{3/2} \leq 1 && \text{(\Cref{def:self-concordance}-(c))}
\end{align}
which proves the result on $r_1$.

\paragraph{Smoothness of $r$.} Let $x\in \msm$. We can face two cases: either, $x\in \M_{-\varepsilon}$, then $r_2(x)=\tilde{r}_2(x)$; or, $x\in \M_\varepsilon$, then $\tilde{r}_2(x)>\mu_\varepsilon\geq r_1(x)$ and $r_2(x) \geq r_1(x)$ by definition of $\M_\varepsilon$. Thus, we have for any $x \in \M$, $r^\star(x)=\min(r_1(x), \tilde{r}_2(x))$ where $r_1$ and $\tilde{r}_2$ are respectively $1$ and $\Ltt_2$ Lipschitz-continuous on $\msm$ with respect to $\|\cdot\|_2$. By observing that $2\min(r_1, \tilde{r}_2)=r_1 + \tilde{r}_2 -|r_1-\tilde{r}_2|$, we set $\Ltt=1+\Ltt_2$ and thus obtain \cref{item:r_star_b} of \Cref{lemma:r_star}. Finally, \cref{item:r_star_c} of \Cref{lemma:r_star} directly derives from \cref{item:r_star_b}, since $\msm$ is bounded.
\end{proof}

Note that the extra-assumption on $\metric^{-1}$ used in \Cref{lemma:r_star} is not directly ensured by self-concordance but can be proved when $\msm$ is a polytope, as shown below.

\begin{lemma} Consider a polytope $\msm$ defined by $m$ constraints with $m>d$ such that $\msm=\{x:\rmA x<b\}$, where $\rmA\in \rset^{m\times d}$ is a full-rank matrix and $b\in \rset^m$. We endow $\msm$ with the Riemannian metric $\metric(x) = D^2 \phi(x)$ where
$\phi :\msm\rightarrow \rset$ is the logarithmic barrier given for any
$x \in \msm$ by $ \phi(x)=-\sum_{i=1}^{m}\ln \left(b_i - \rmA_i^\top x\right)
$. In particular, $\msm$ and $\metric$ verify \Cref{ass:manifold} and \Cref{ass:g}. Then, the function $r:\msm \to (0, +\infty)$ defined by $r(x)=\|\metric^{-1}(x)\|_2$, for any $x\in \msm$, is bounded from above.
\end{lemma}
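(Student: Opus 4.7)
The plan is to relate $\|\metric^{-1}(x)\|_2$ to the smallest eigenvalue of $\metric(x)$ and then bound the latter from below uniformly on $\msm$. Since $\metric(x)$ is positive definite, we have $\|\metric^{-1}(x)\|_2 = 1/\lambda_{\min}(\metric(x))$, so it suffices to show $\inf_{x\in\msm}\lambda_{\min}(\metric(x))>0$.

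I would exploit the explicit form $\metric(x) = \rmA^\top S(x)^{-2} \rmA$ recalled in \Cref{sec:self-conc}. For any unit vector $v\in\rset^d$,
\begin{align}
v^\top \metric(x) v = \|S(x)^{-1}\rmA v\|_2^2 = \sum_{i=1}^m \frac{(\rmA_i^\top v)^2}{(b_i-\rmA_i^\top x)^2}\eqsp .
\end{align}
Because $\msm$ satisfies \Cref{ass:manifold} (hence is bounded), the quantity $D = \max_{i\in[m]} \sup_{x\in\msm}(b_i-\rmA_i^\top x)$ is finite and strictly positive, which yields the lower bound $v^\top\metric(x)v \geq \|\rmA v\|_2^2 / D^2$.

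Next, I would use the full-rank assumption on $\rmA\in\rset^{m\times d}$ with $m>d$: this means $\rmA$ has rank $d$, so $\rmA^\top\rmA\in\rset^{d\times d}$ is symmetric positive definite and $\lambda_{\min}(\rmA^\top\rmA)>0$. Combining this with the previous display gives
\begin{align}
\lambda_{\min}(\metric(x)) \geq \frac{\lambda_{\min}(\rmA^\top\rmA)}{D^2}>0 \eqsp ,
\end{align}
uniformly in $x\in\msm$, hence $r(x) = \|\metric^{-1}(x)\|_2 \leq D^2/\lambda_{\min}(\rmA^\top\rmA)$, which is the desired bound.

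There is no real obstacle here; the only subtlety is making sure to invoke boundedness of $\msm$ (so that the slacks $b_i-\rmA_i^\top x$ are uniformly bounded from above) and full column rank of $\rmA$ (so that $\rmA^\top\rmA$ is invertible). Both are guaranteed by the hypotheses. Note that, by contrast, one cannot similarly bound $\|\metric(x)\|_2$, since $\lambda_{\max}(\metric(x))\to+\infty$ as $x\to\partial\msm$, which is exactly why the asymmetric assumption in \Cref{lemma:r_star} (only $\metric^{-1}$ is assumed bounded) is the correct one.
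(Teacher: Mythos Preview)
Your proof is correct and follows essentially the same approach as the paper: both reduce to bounding $\lambda_{\min}(\metric(x))$ from below via the factorisation $\metric(x)=\rmA^\top S(x)^{-2}\rmA$, use boundedness of $\msm$ to bound $S(x)^{-2}$ from below, and then invoke full column rank of $\rmA$ to get $\lambda_{\min}(\rmA^\top\rmA)>0$. Your presentation is arguably cleaner, as you directly bound the slacks $b_i-\rmA_i^\top x$ from above by $D$, whereas the paper argues (somewhat loosely) that each $r_i(x)=(b_i-\rmA_i^\top x)^{-2}$ is continuous, positive, and blows up at the boundary to conclude it is bounded below.
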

\begin{proof} Consider such manifold $\msm$ and metric $\metric$. We aim to show that the smallest eigenvalue of $\metric(x)$ is bounded from below for any $x\in \msm$ by a constant $c>0$, which does not depend on $x$, \ie, for any $h\in \rset^d$ and any $x\in \msm$, $\metric(x)[h,h]\geq c \|h\|_2^2$. 

Since $\rmA$ is full-ranked, $\rmA^\top\rmA$ is positive-definite. In particular, for any $h\in \rset^d$, $(\rmA^\top\rmA)[h,h]\geq  \lambda_{\min}(\rmA^\top \rmA)\|h\|_2^2$, where $\lambda_{\min}(\rmA^\top \rmA)>0$ is the smallest eigenvalue of $\rmA^\top \rmA$. We recall that we have for any
$x \in \msm$, $\metric(x)=\rmA^\top S(x)^{-2} \rmA$, where $S(x)=\operatorname{Diag}(b_i -\rmA_i^\top x)_{i \in [m]}$. Let $i\in [m]$. The function $r_i: x\in \msm \mapsto S(x)^{-2}_{i,i}$ has the following properties: (i) $r_i$ is continuous on $\msm$, (ii) $r_i(x)>0$ for any $x \in \msm$ and (iii) $r(x) \to +\infty$ as $x\to \partial \msm$. Thus, there exists $c_i>0$ such that for any $x \in \msm$, $r_i(x)\geq c_i$. We define $\tilde{c}=\min_{i \in [m]}c_i$ and we have for any $x\in \msm$, $S(x)^{-2}\succeq \tilde{c}\Idd$. We now define $c=\tilde{c}\lambda_{\min}(\rmA^\top \rmA)$ and we have for any $x\in \msm$
\begin{align}
    \metric(x)[h,h] \geq \tilde{c} \cdot (\rmA^\top \rmA)[h,h] \geq c\|h\|_2^2 \eqsp .
\end{align}
 In particular, $\metric(x)^{-1}\preceq (1/c)\Idd$, \ie, $\|\metric(x)^{-1}\|_2\leq (1/c)$, which concludes the proof.
\end{proof}
  
\subsection{Markov kernels of \Cref{algo:constrained_HMC}}
\label{subsec:kernels-contrained}

Based on the model of $\bar{\mse}_h$ defined in \eqref{eq:E_h_cont}, we define the set $\bar{\mse}_h^\Phi \subset \TM$, which will ensure that the maps derived from implicit integrators of n-BHMC are properly expressed 
\begin{align}
  \label{eq:E_h_def}
\textstyle{\bar{\mse}_h^\Phi=(s \circ \Th)^{-1}(\dom_{\Phi_h}\cap \Phi_h^{-1}(\dom_{\Phi_h}))=(s \circ \Th)(\dom_{\Phi_h}\cap \Phi_h^{-1}(\dom_{\Phi_h})) \eqsp .}
\end{align}
For any $(z,z')\in \msephi \times \TM$, we define the acceptance probability $\bar{a}(z'|z)$ to move from $z$ to $z'$ by
\begin{align}
    \bar{a}(z'|z) = a(z' \mid z) \mathbbm{1}_{(s \circ \Th)(z)=(\Phi_h \circ \Phi_h \circ s \circ \Th)(z)} =  a(z' \mid z) \mathbbm{1}_{z=(\phiRh \circ \phiRh )(z)} \eqsp ,
\end{align}
where $a(z'\mid z)$ is the acceptance probability defined in \eqref{eq:acceptance_ratio}.
We denote by $\Qker_{n}:\TM \times \borel(\TM) \rightarrow [0,1]$, the transition kernel of the (homogeneous) Markov chain $(x_n,p_n)_{n \in [N]}$
generated by \Cref{algo:constrained_HMC}. We also denote by:
\begin{enumerate}[wide,  labelindent=0pt, label=(\alph*)]
\item $\Qker_0: \TM \times \borel(\TM) \rightarrow [0,1]$, the transition kernel referring to Step 1 (also Step 5) in \Cref{algo:constrained_HMC}, defined in \eqref{eq:kernel-q0}.
    \item $\Qker_{n,1}: \TM \times \borel(\TM) \rightarrow [0,1]$, the transition kernel referring to Step 2-3-4 in \Cref{algo:constrained_HMC}.
\end{enumerate}
We provide below details on Markov kernels $\Qker_{n}$ and $\Qker_{n,1}$.

\paragraph{Kernel $\Qker_{n,1}$.}
This kernel is deterministic and corresponds to the \textit{numerical} integration of the
Hamiltonian up until time $h$. For any $(z,z')\in \TM \times \TM$, we have
\begin{align}\label{eq:kernel-q1n}
    \Qker_{n,1}(z,\rmd z') = \mathbbm{1}_{\msephi}(z) (s_\# \Qker_{n,2})(z,\rmd z')+\mathbbm{1}_{(\msephi)^{\complementary}}(z) \updelta_{s(z)}(\rmd z^{\prime}) \eqsp ,
\end{align}
where 
\begin{align}\label{eq:kernel-q2n}
  \Qker_{n,2}(z, \rmd z^{\prime}) & = \bar{a}(\phiRh(z) \mid z) \updelta_{ \phiRh(z)}(\rmd z^{\prime}) +[1-\bar{a}(\phiRh(z) \mid z) ] \updelta_{z}(\rmd z^{\prime})\eqsp .
\end{align}

\paragraph{Kernel $\Qker_{n}$.} This kernel corresponds to one step of
\Cref{algo:constrained_HMC} (\ie, comprising Steps 1 to 5). For any $(z,z')\in \TM\times\TM$, we have
\begin{align}
  \textstyle{
    \Qker_{n}(z,\rmd z')=\int_{\TM\times \TM}\Qker_0(z,\rmd z_1) \Qker_{n,1}(z_1, \rmd z_2) \Qker_0(z_2,\rmd z') \eqsp .
    }
\end{align}

\subsection{Proof of reversibility in \Cref{algo:theoretical_constrained_HMC}}

Let $h>0$. Using notation from \Cref{ass:Phi_h}, we recall definition of the set $\msa_h$ introduced in \Cref{subsec:disc-ham-conv} 
\begin{align}\label{eq:A_h}
  \textstyle{
    \msa_h = \ensembleLigne{z \in \TM}{h<\min_{\tz \in \msb_{\|\cdot\|_z}(z,1)} h_\star(\tz)} \subset \dom_{\Phi_h} \eqsp .
    }
  \end{align}
We also recall that $\bar{\pi}$, as defined in \eqref{eq:bar_pi}, admits a density with respect to the
product Lebesgue measure given for any $z=(x,p) \in \TM$ by  \begin{align}
(\rmd \bar{\pi} /(\rmd x\rmd p))(x,p) =   (1/Z) \exp[-(1/2)\| p\|^2_{\metric(x)^{-1}} ] \operatorname{det}(\metric(x))^{-1/2} \exp[-V(x)]  \eqsp . 
  \end{align}

Since \Cref{algo:theoretical_constrained_HMC} can be thought as a \textit{restrictive} version of \Cref{algo:constrained_HMC}, the Markov kernels from Tn-BHMC are similar to the kernels from n-BHMC, defined in \Cref{subsec:kernels-contrained}. In particular, the transition kernel corresponding to the Gaussian momentum update (Step 1 and 5 in \Cref{algo:theoretical_constrained_HMC}, Step 1 and 5 in \Cref{algo:constrained_HMC}) is the same and is reversible (up to momentum reversal) with respect to $\bar{\pi}$ (see \Cref{lemma:reversibility_q_0}). Namely, we replace  
\begin{enumerate}[wide,  labelindent=0pt, label=(\alph*)]
    \item the set $\msephi$, defined in \eqref{eq:E_h_def}, by $\tildemsephi$,
    \item the kernels $\Qker_{n,1}$ and $\Qker_{n,2}$, respectively defined in \eqref{eq:kernel-q1n} and \eqref{eq:kernel-q2n}, by $\Qker_1$ and $\Qker_2$, 
\end{enumerate}
where
\begin{align}
  \label{eq:tilde_E_h_def}
  \textstyle{\tildemsephi=(s \circ \Th)^{-1}(\msa_h\cap \Phi_h^{-1}(\msa_h))=(s \circ \Th)(\msa_h\cap \Phi_h^{-1}(\msa_h)) \eqsp ,}
\end{align}
\begin{align}\label{eq:kernel-q1}
    \Qker_1(z,\rmd z') = \mathbbm{1}_{\tildemsephi}(z) (s_\# \Qker_2)(z,\rmd z')+\mathbbm{1}_{(\tildemsephi)^{\complementary}}(z) \updelta_{s(z)}(\rmd z^{\prime}) \eqsp ,
\end{align}
\begin{align}\label{eq:kernel-q2}
    \Qker_2(z, \rmd z^{\prime})= a(\phiRh(z) \mid z) \updelta_{ \phiRh}(\rmd z^{\prime}) +[1-a(\phiRh(z) \mid z) ] \updelta_{z}(\rmd z^{\prime}) \eqsp .
\end{align}
We denote by $\Qker:\TM \times \borel(\TM) \rightarrow [0,1]$, the transition kernel of the (homogeneous) Markov chain $(x_n,p_n)_{n \in [N]}$
generated by \Cref{algo:theoretical_constrained_HMC}. For any $(z,z')\in \TM\times\TM$, we have 
\begin{align}\label{eq:kernel-q}
  \textstyle{\Qker(z,\rmd z')=\int_{\TM\times \TM}\Qker_0(z,\rmd z_1) \Qker_1(z_1, \rmd z_2) \Qker_0(z_2,\rmd z') \eqsp .}
\end{align}
We first turn to the reversibility up to momentum reversal of $\Qker_1$ with respect to $\bar{\pi}$. We start with the following lemma which is key to establish this result.

\begin{lemma}\label{lemma_change_variable} Assume \rref{ass:manifold}, \rref{ass:g},
  \rref{ass:Phi_h}. Then, for any compact set $\msg\subset \TM$,
  for any $g \in \rmC(\TM \times \TM, \rset)$
  such that $\operatorname{supp}(g)\subset \msg \times \msg$, we have
  \begin{align}\label{eq:change_of_variable}
    \textstyle{
    \int_{\msg_h}    g(z,\Phi_h(z))  \rmd z = \int_{\msg_h}    g(\Phi_h(z),z) \rmd z \eqsp , 
    }
\end{align}
where 
$\msg_h=\msf_h \cap \msg \cap \Phi_h^{-1}(\msg)$, $\msf_h = \msa_h \cap \Phi_h^{-1}(\msa_h)$, $\msa_h$ being defined in \eqref{eq:A_h}.
\end{lemma}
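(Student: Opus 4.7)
The plan is to localise the change of variable $w = \Phi_h(z)$ via a partition of unity and exploit the involutive structure of $\Phi_h$ on $\msa_h$ provided by \rref{ass:Phi_h}.

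First I would verify that $\Phi_h(\msg_h) = \msg_h$. For $z \in \msg_h$ we have $z \in \msa_h$ and $\Phi_h(z) \in \msa_h$, so \rref{ass:Phi_h} applies around $z$ and yields $\Phi_h \circ \Phi_h(z) = z$; combined with $z,\Phi_h(z) \in \msg$ this gives $\Phi_h(z) \in \msf_h \cap \msg \cap \Phi_h^{-1}(\msg) = \msg_h$, and the reverse inclusion follows from applying $\Phi_h$ once more. Moreover, on the ball $\msb_z$ furnished by \rref{ass:Phi_h}, $\Phi_h$ is a $\rmC^1$-involution, hence $(\det \Jac \Phi_h)^2 \equiv 1$ and so $\Phi_h$ is a volume-preserving $\rmC^1$-diffeomorphism of $\msb_z$ onto $\Phi_h(\msb_z)$.

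Second, using that $\overline{\msg_h} \subset \msg$ is compact and contained in $\msa_h$, I would extract from the cover $\{\msb_z\}_{z \in \overline{\msg_h}}$ a finite subcover $\msb_{z_1}, \ldots, \msb_{z_N}$ and pick a smooth partition of unity $(\chi_i)_{i=1}^N$ subordinate to it with $\sum_i \chi_i \equiv 1$ on a neighbourhood of $\msg_h$. Splitting the left-hand side,
\[
\int_{\msg_h} g(z, \Phi_h(z)) \rmd z = \sum_{i=1}^N \int_{\msg_h \cap \msb_{z_i}} \chi_i(z) g(z, \Phi_h(z)) \rmd z,
\]
and applying $w = \Phi_h(z)$ on each $\msb_{z_i}$ (which has unit Jacobian) transforms the $i$-th summand into $\int_{\Phi_h(\msg_h \cap \msb_{z_i})} \chi_i(\Phi_h(w)) g(\Phi_h(w), w) \rmd w$. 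Since $\Phi_h(\msg_h) = \msg_h$ and $\{\chi_i \circ \Phi_h\}$ inherits the partition-of-unity property on $\msg_h$ from $\{\chi_i\}$, the sum collapses to $\int_{\msg_h} g(\Phi_h(w), w) \rmd w$, giving the claim.

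The main obstacle, I expect, lies in the cover-extraction: $\msg_h$ is not a priori open, and the radius $\lambda r^\star(x)$ of the ball in \rref{ass:Phi_h} varies with the base point, so one needs a uniform lower bound and continuity of $r^\star$ (as established in \Cref{lemma:r_star}) together with some lower semicontinuity of $h_\star$ around $\overline{\msg_h}$ to produce the finite cover inside $\msa_h$. Care is also required because the balls are measured in the position-dependent norm $\|\cdot\|_z$, so translating this data into the usual Euclidean cover compactness argument is the delicate step that has to be checked carefully.
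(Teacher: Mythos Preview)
Your overall strategy matches the paper's: localise via a finite cover, use that $\Phi_h$ is a volume-preserving $\rmC^1$-involution on each piece, and change variables. Your verification that $\Phi_h(\msg_h)=\msg_h$ is correct, and the partition-of-unity packaging (versus the paper's disjoint decomposition $\msv_i$) is a harmless stylistic difference.

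The genuine gap is exactly the one you flag at the end, but your proposed fix does not work. You claim $\overline{\msg_h}\subset\msa_h$ and then invoke ``some lower semicontinuity of $h_\star$'' to get $h<h_\star(z)$ at every centre of the cover. Neither is available: \rref{ass:Phi_h} gives no regularity whatsoever on $z\mapsto h_\star(z)$, and nothing forces $\msa_h$ to be closed. The paper avoids this entirely by exploiting the \emph{definition} of $\msa_h$, which already encodes the needed ``semicontinuity'' as a buffer: $z\in\msa_h$ means $h<h_\star(\tilde z)$ for \emph{every} $\tilde z$ in the unit $\|\cdot\|_z$-ball around $z$. Concretely, the paper covers the full compact $\msg$ (not $\overline{\msg_h}$) by balls $\msb_i=\msb_{\|\cdot\|_{z_i}}(z_i,r(z_i))$ with a radius $r(z_i)$ chosen small enough (via \Cref{lemma:r_star} and self-concordance, \Cref{lemma:calcul_g_1}) that whenever some $\tilde z\in\msb_i\cap\msa_h$, one can flip the norm to conclude $\|z_i-\tilde z\|_{\tilde z}<1$, hence $h<h_\star(z_i)$ and \rref{ass:Phi_h} applies at the centre $z_i$. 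Pieces $\msb_i$ that never meet $\msf_h$ contribute zero to both sides and are discarded. This transfer-from-a-witness-point mechanism is the missing idea in your argument; once you have it, the compactness of $\msg$ alone suffices and no assumption on $h_\star$ beyond \rref{ass:Phi_h} is needed.
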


\begin{proof} Assume \rref{ass:manifold}, \rref{ass:g},
  \rref{ass:Phi_h}. Let $\msg\subset \TM$ be a compact set,
  $g \in \rmC(\TM \times \TM, \rset)$ such that
  $\operatorname{supp}(g)\subset \msg \times \msg$. We first define the sets $\msf_h = \msa_h \cap \Phi_h^{-1}(\msa_h)$, $\msg_h=\msf_h \cap \msg \cap \Phi_h^{-1}(\msg)$ and the integrals $I'$ and
  $J'$
\begin{align}
    I' = \textstyle{ \int_{\msg_h}    g(z,\Phi_h(z))\rmd z \eqsp }\eqsp , \quad
    J' = \textstyle{ \int_{\msg_h}   g(\Phi_h(z),z) \rmd z \eqsp .} \label{eq:I_prime_def}
\end{align}
We recall the existence under \Cref{ass:manifold} and \Cref{ass:g} of $\Ltt>0$ and $\Mtt>0$ such that $r^\star$, given by \rref{ass:Phi_h} and defined in \eqref{eq:r_star}, is $\Ltt$-Lipschitz-continuous on $\msm$ with respect to $\|\cdot\|_2$ and bounded from above by $\Mtt$ (see \Cref{lemma:r_star}). 

We define $r:\TM \to (0, +\infty)$ by $r(x,p)=r^\star(x)/\texttt{m}$ for any $(x,p)\in \TM$, where $\texttt{m}=\max\{1/\lambda, 4 \Mtt, 4
\Ltt\}$ and $\lambda$ is given by \rref{ass:Phi_h}. Using the properties of $r^\star$, it comes that
\begin{enumerate}[wide,  labelindent=0pt, label=(\alph*)]
    \item \label{item:r_a}$r$ is $\Ltt$-Lipschitz on $\TM$ with respect to $\|\cdot\|_2$,
    \item \label{item:r_b} $r(x,p)\leq 1/(4 \Ltt C_x)$ for any $(x,p)\in \TM$, where $C_x$ is defined in \Cref{lemma:equivalence_norm},
    \item \label{item:r_c} $r \leq 1/4$, and 
    \item \label{item:r_d} $r \leq \lambda r^\star$.
\end{enumerate}
Note that
$\msg \subset \cup_{z \in \msg} \msb_{\|\cdot\|_{z}}(z, r(z))$.  Since
$\msg$ is a compact set, there exist $K\in\nset$ and $(z_i)_{i\in[K]}\in \TM^K$ such that
$\msg \subset \bigcup_{i=1}^K \msb_i$, where
$\msb_i=\msb_{\|\cdot\|_{z_i}}(z_i, r(z_i))$.

We consider the sequence $\{\msv_i\}_{i=1}^K$ constructed as follows:
$\msv_1 = \msg \cap \msb_1$ and for any $i \in \{2, \dots, K\}$,
$\msv_i = (\msg \cap \msb_i) \cap (\cup_{j=1}^{i-1}
\msv_j)^\complementary$. Then, we have that, for any $i \in \{1, \dots, N\}$, 
$\cup_{j=1}^i \msv_j = \msg \cap (\cup_{j=1}^i \msb_j)$, and that for any
$i_1, i_2 \in \{1, \dots, K\}$, $\msv_{i_1} \cap \msv_{i_2} = \emptyset$ if
$i_1 \neq i_2$. Therefore, we get that $\msg = \sqcup_{i=1}^K \msv_i$ and  $\Phi_h^{-1}(\msg) = \sqcup_{i=1}^K
\Phi_h^{-1}(\msv_i)$. In particular, for any $\msa \in \borel(\TM)$ and any
$\zeta \in \rmC_c(\TM, \rset)$
\begin{align}\label{eq:partition_v_i}\textstyle{ \int_{\msg\cap\msa}\zeta(z)\rmd z=\sum_{i=1}^K \int_{\msv_i \cap\msa}\zeta(z)\rmd z \eqsp , }\end{align}
and for any $\tilde{\msa} \in \borel(\TM)$ and any $\tilde{\zeta}\in \rmC_c(\TM, \rset)$
    \begin{align}
      \label{eq:partition_phi_h_v_i}
\textstyle{\int_{\Phi_h^{-1}(\msg)\cap\tilde{\msa}}\tilde{\zeta}(z)\rmd z=\sum_{i=1}^K \int_{\Phi_h^{-1}(\msv_i) \cap\tilde{\msa}}\tilde{\zeta}(z)\rmd z \eqsp .}
\end{align}
Using \eqref{eq:partition_v_i} and \eqref{eq:I_prime_def}, we obtain
$I' = \sum_{i=1}^K I'_i$, where
$I_i' = \int_{\msv_i \cap \Phi_h^{-1}(\msg) \cap \msf_h} g(z,\Phi_h(z))\rmd z$
for any $i \in [K]$. We are now going to show that, for any $i\in [K]$
\[\label{eq:I_i'}
  \textstyle{
    I'_i=\int_{\Phi_h^{-1}(\msv_i) \cap \msg \cap \msf_h}   g(\Phi_h(z),z)\rmd z \eqsp .
    }
\]
Let $i\in[K]$. We proceed by making the following case disjunction:
\begin{enumerate}[wide,  labelindent=0pt, label=(\alph*)]
\item Either $\msv_i \cap \Phi_h^{-1}(\msg) \cap \msf_h=\emptyset$, and then
  $I_i'=0$. We prove by contradiction in this case that
  $ \Phi_h^{-1}(\msv_i) \cap \msg \cap \msf_h = \emptyset $. Assume that there
  exists $z\in \Phi_h^{-1}(\msv_i) \cap \msg \cap \msf_h$.  By definition of
  $\msf_h$, $z \in \msa_h$, $\Phi_h(z) \in \msa_h$, and we get that
  $\Phi_h(\Phi_h(z)) = z$ using \rref{ass:Phi_h}. Hence, we have:
\begin{itemize}
    \item $\Phi_h(z)\in \msv_i$,
    \item $\Phi_h(\Phi_h(z))=z\in \msg$,
    \item $\Phi_h(z)\in \msa_h$,
    \item $\Phi_h(\Phi_h(z))=z\in \msa_h$.
\end{itemize}
Therefore, we get
$\Phi_h(z) \in \msv_i \cap \Phi_h^{-1}(\msg) \cap \msf_h=\emptyset$, which is
absurd. Finally, \eqref{eq:I_i'} holds since
\begin{align}
    I_i'=0=\textstyle{\int_{\Phi_h^{-1}(\msv_i) \cap \msg \cap \msf_h} g(\Phi_h(z),z)\rmd z} \eqsp .
\end{align}

\item Either, there exists some $\tz \in \msv_i \cap \Phi_h^{-1}(\msg)$ such that $\tz \in \msf_h$. In particular, $\tz \in \msb_i$, and thus $\|\tz - z_i\|_{z_i}<r(z_i)<1$. In this case, by combining \Cref{ass:Phi_h} with \Cref{lemma:calcul_g_1}-(c), we have for any $z' \in \TM$
    \[
    \begin{split}
        \textstyle{\|z'\|_{\tz}\leq (1-\|\tz-z_i\|_{z_i})^{-1}\|z'\|_{z_i} < (1-r(z_i))^{-1}\|z'\|_{z_i} \eqsp . }
    \end{split}
    \]
    By considering $z'=z_i-\tz$, it comes that 
    \[
    \|z_i -\tz\|_{\tz} < (1-r(z_i))^{-1}\|\tz-z_i\|_{z_i}<(1-r(z_i))^{-1} r(z_i) \eqsp .
    \]
    
    By combining properties \ref{item:r_a} and \ref{item:r_b} of $r$ with \Cref{lemma:equivalence_norm}, we have the following upper bound of $r(z_i)$
    \begin{align}
        r(z_i) & \leq r(\tz) + \Ltt \|\tz -z_i\|_{2} \leq r(\tz) + \Ltt C_{x_i}\|\tz -z_i\|_{z_i} < r(\tz) + \Ltt C_{x_i}r(z_i) < r(\tz) +1/4 \eqsp,
    \end{align}
    and thus
    \begin{align}
        \|z_i -\tz\|_{\tz} & < (1-r(\tz)-1/4)^{-1}(r(\tz)+1/4) < 1 \eqsp ,
    \end{align}
    using property \ref{item:r_c} of $r$. Hence, $z_i \in \msb_{\|\cdot\|_{\tz}}(\tz,1)$. Moreover,
    $\tz \in \msf_h \subset \msa_h$, and then it comes that $h<h_\star(z_i)$. Therefore, by combining property \ref{item:r_d} of $r$ with
    \Cref{ass:Phi_h}, we have
    \begin{enumerate}[wide,  labelindent=0pt, label=(\roman*)]
        \item $\msv_i\subset \msb_i \subset \msb_{\|\cdot\|_{z_i}}(z_i, \lambda r^\star(z_i)) \subset \dom_{\Phi_h}$,
        \item the restriction of $\Phi_h$ to $\msv_i$ is a $\rmC^1$-diffeomorphism such that
    $\Phi_h \circ \Phi_h=\Id$.
    \end{enumerate}
    This last result provides proper assumptions on $\Phi_h$ to operate a change of variable in $I'_i$. We now define $\msg_{1,h}=\Phi_h(\msv_i \cap \Phi_h^{-1}(\msg)\cap \msf_h)$ and $\msg_{2,h}=\Phi_h^{-1}(\msv_i) \cap \msg\cap \msf_h$, and prove that $\msg_{1,h}=\msg_{2,h}$ in two steps.
    \begin{enumerate}[wide,  labelindent=0pt, label=(\roman*)]
        \item We first prove that $\msg_{1,h}\subset\msg_{2,h}$. Let $z\in \msg_{1,h}$. Then, there exists $z'\in \msv_i \cap \Phi_h^{-1}(\msg)\cap \msf_h$ such that $z=\Phi_h(z')$. Since $\Phi_h$ is an involution on $\msv_i$, we have $\Phi_h(z)=z'$. Since $z'\in \msv_i \cap \msa_h$, it comes that $z\in \Phi_h^{-1}(\msv_i)\cap \Phi_h^{-1}(\msa_h)$. Moreover, $z'\in \Phi_h^{-1}(\msg)\cap \Phi_h^{-1}(\msa_h)$, and thus, $z\in \msg \cap \msa_h$. Then, we have $z\in \msg_{2,h}$, which proves this first result.
        \item We now prove that $\msg_{2,h}\subset\msg_{1,h}$. Let $z\in \msg_{2,h}$. In particular, $z\in \msg$. Then, there exists $j\in [K]$ such that $z\in \msv_j$. Therefore, $z\in \msb_j$ and thus $\|z-z_j\|_{z_j}<r(z_j)<1$. Since $z\in \msa_h$, we obtain that $h<h_\star(z_j)$ with the same computations as those written above. In particular, this proves with \Cref{ass:Phi_h} that $\Phi_h$ is an involution on $\msv_j$. Then, $z=\Phi_h( \Phi_h(z))$, with $\Phi_h(z)\in \msv_i\cap \msa_h$ and $\Phi_h(z)\in \Phi_h^{-1}(\msg) \cap \Phi_h^{-1} (\msa_h)$, since $z\in \msg\cap \msa_h$. Therefore, we have $z\in \msg_{1,h}$, which proves this last result.
    \end{enumerate}
Given the fact that $\Phi_h$ is an involution on
$\msv_i$, we operate the change of
variable $z \mapsto \Phi_h(z)$ in $I'_i$ and obtain
\begin{align}
    \textstyle{I'_i  =\int_{\Phi_h(\msv_i \cap \Phi_h^{-1}(\msg)\cap \msf_h)}  g(\Phi_h(z),z)\rmd z
     =\int_{\Phi_h^{-1}(\msv_i) \cap \msg\cap \msf_h}  g(\Phi_h(z),z)\rmd z \eqsp ,}
\end{align}
which gives \eqref{eq:I_i'}.
\end{enumerate}
Therefore, combining \eqref{eq:partition_v_i}, \eqref{eq:partition_phi_h_v_i} and \eqref{eq:I_i'}, we get 
\begin{align}
    I'  \textstyle{= \sum_{i=1}^K \int_{\msv_i \cap \Phi_h^{-1}(\msg) \cap \msf_h} g(z,\Phi_h(z))\rmd z= \sum_{i=1}^K \int_{\Phi_h^{-1}(\msv_i) \cap \msg \cap \msf_h}  g(\Phi_h(z),z)\rmd z  =J'\eqsp ,}
    \end{align}
    which concludes the proof.
  \end{proof}

  We are now ready to establish the reversibility up to momentum reversal of
  $\Qker_1$, defined in \eqref{eq:kernel-q1}, with respect to $\bar{\pi}$.

\begin{lemma}\label{lemma:reversibility_tilde_q1} Assume \rref{ass:manifold}, \rref{ass:g},
  \rref{ass:Phi_h}. Then, for any $h>0$, the Markov kernel
  $\Qker_{1}$ with step-size $h$, defined in \eqref{eq:kernel-q1}, is reversible up to momentum reversal 
  with respect to $\bar{\pi}$. 
\end{lemma}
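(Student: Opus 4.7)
The plan is to establish reversibility up to momentum reversal by decomposing the kernel $\Qker_1$ into its Metropolis-acceptance and Metropolis-rejection components on $\tildemsephi$, plus the trivial momentum flip on $\tildemsephi^\complementary$, and handling each piece separately. Fix $f\in \rmC(\TM\times\TM,\rset)$ with compact support and unfold
\begin{align}
\Qker_1(z,\rmd z') = \mathbbm{1}_{\tildemsephi}(z)\bigl[a(\phiRh(z)|z)\updelta_{s(\phiRh(z))}(\rmd z') + (1-a(\phiRh(z)|z))\updelta_{s(z)}(\rmd z')\bigr] + \mathbbm{1}_{\tildemsephi^\complementary}(z)\updelta_{s(z)}(\rmd z')
\end{align}
in both $\int f(z,z')\Qker_1(z,\rmd z')\bar{\pi}(\rmd z)$ and $\int f(s(z'),s(z))\Qker_1(z,\rmd z')\bar{\pi}(\rmd z)$. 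The $\tildemsephi^\complementary$ contribution and the rejection contribution on $\tildemsephi$ produce integrands of the form $(\,\cdot\,)f(z,s(z))\bar{\pi}(\rmd z)$, which match in the two expressions trivially using $s\circ s=\Id$.

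The substantive remaining task is to prove equality of the acceptance terms, namely
\begin{align}
\int_{\tildemsephi}\! a(\phiRh(z)|z)\,f(z,s(\phiRh(z)))\,\bar{\pi}(\rmd z) \;=\; \int_{\tildemsephi}\! a(\phiRh(z)|z)\,f(\phiRh(z),s(z))\,\bar{\pi}(\rmd z).
\end{align}
Since $\bar{\pi}(\rmd z)\propto\exp[-H(z)]\rmd z$, the standard Metropolis identity $a(w|z)\exp[-H(z)]=a(z|w)\exp[-H(w)]$ rewrites the left-hand side with $a(z|\phiRh(z))\exp[-H(\phiRh(z))]$ as its density. After this manipulation it suffices to perform the change of variables $z\mapsto w=\phiRh(z)$, provided $\phiRh$ is a volume-preserving involution on $\tildemsephi$ with $\phiRh(\tildemsephi)\subset\tildemsephi$.

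The main obstacle is justifying this change of variables, because $\phiRh$ is only \emph{locally} a $\rmC^1$-diffeomorphism and $\tildemsephi$ need not be globally nice. The plan is to unfold $\phiRh=(s\circ\Th)\circ\Phi_h\circ(s\circ\Th)$ and reduce everything to a statement about $\Phi_h$ alone. Here $s\circ\Th$ is an explicit global $\rmC^1$-involution on $\TM$ with unit Jacobian (upper-triangular with $\pm 1$ on the diagonal), and by definition $(s\circ\Th)(\tildemsephi)=\msa_h\cap\Phi_h^{-1}(\msa_h)$; applying the change of variables $z\mapsto(s\circ\Th)(z)$ transports the integral onto $\msa_h\cap\Phi_h^{-1}(\msa_h)$ with $\Phi_h$ in place of $\phiRh$. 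One then absorbs $f$, $a$ and $\exp[-H\circ(s\circ\Th)]$ into a single continuous integrand $g(z,\Phi_h(z))$, supported in a compact set $\msg\subset\TM$, and invokes \Cref{lemma_change_variable} on $\msg_h=(\msa_h\cap\Phi_h^{-1}(\msa_h))\cap\msg\cap\Phi_h^{-1}(\msg)$ to swap $g(z,\Phi_h(z))\leftrightarrow g(\Phi_h(z),z)$. Reversing the outer $s\circ\Th$ change of variables yields exactly the right-hand side. The involutivity $\phiRh\circ\phiRh=\Id$ on $\tildemsephi$, which underlies the bookkeeping throughout, follows from $(s\circ\Th)^2=\Id$ together with \Cref{ass:Phi_h}\ref{item:b}. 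Combined with \Cref{lemma:reversibility_q_0} and the transition-kernel composition \eqref{eq:kernel-q}, this establishes reversibility of the full kernel $\Qker$ up to momentum reversal, which is the statement of \Cref{prop:reversibility-q}.
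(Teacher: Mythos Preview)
Your proposal is correct and follows essentially the same route as the paper. The only cosmetic difference is that the paper first introduces the auxiliary kernel $\Qker_3$ with $\Qker_1=s_\#\Qker_3$, proves \emph{standard} reversibility of $\Qker_3$ (writing the Metropolis weight in the symmetric form $\min\{e^{-H(z)},e^{-H(\phiRh(z))}\}$), and then pushes forward by $s$; you instead work directly with $\Qker_1$ and invoke the Metropolis identity in its ratio form. In both cases the nontrivial acceptance term is handled identically: transport through the global symplectic involution $s\circ\Th$ to replace $\phiRh$ by $\Phi_h$, package the integrand as $g(z,\Phi_h(z))$ with support in a compact $\msg\times\msg$, and invoke \Cref{lemma_change_variable} to swap arguments.
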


\begin{proof} Assume \rref{ass:manifold}, \rref{ass:g},
  \rref{ass:Phi_h}. Let $h>0$. We recall that the kernels $\Qker_1$ and $\Qker_2$ are respectively defined in \eqref{eq:kernel-q1} and \eqref{eq:kernel-q2}. We define the transition kernel $\Qker_3: \TM \times \borel(\TM)\to [0,1]$ by
  \begin{align}\label{eq:kernel-q3}
    \Qker_3(z,\rmd z') = \mathbbm{1}_{\tildemsephi}(z) \Qker_2(z,\rmd z')+\mathbbm{1}_{(\tildemsephi)^{\complementary}}(z) \updelta_{z}(\rmd z^{\prime}) \eqsp ,
  \end{align}
  such that $\Qker_1=s_\# \Qker_3$. The rest of the proof is divided into two parts. First, we prove that  $\Qker_3$ is reversible with respect to $\bar{\pi}$. Then, we prove that  $\Qker_1$ is reversible up to momentum reversal with respect to $\bar{\pi}$.
\begin{enumerate}[wide,  labelindent=0pt, label=(\alph*)]
\item Let $f \in \rmC(\TM\times \TM, \rset)$ with compact support. We consider a
  compact set $\msk$ with respect to the topology induced by the set
  $\{\msb_{\|\cdot\|_z}(z,r), z\in \TM, r\in (0,1)\}$ such that
  $\text{supp}(f)\subset \msk \times \msk$. According to
  \Cref{def:reversibility}, we aim to show that
\begin{align}\label{eq:reversibility_q_3}
  \textstyle{
    \int_{\TM \times \TM} f(z,z') \Qker_3(z,\rmd z')\bar{\pi}(\rmd z) = \int_{ \TM \times \TM} f(z',z) \Qker_3(z,\rmd z')\bar{\pi}(\rmd z) \eqsp. }
\end{align}

We denote by $I$ the left integral of \eqref{eq:reversibility_q_3}. By combining \eqref{eq:kernel-q2} and \eqref{eq:kernel-q3}, we have $I=I_1 + I_2 + I_3$ where
\begin{align}
    I_1& =\textstyle{\int_{\tildemsephi} \bar{\pi}(z)   a(\phiRh(z) \mid z)f(z,\phiRh(z)) \rmd z} \eqsp ,\\
    I_2 & = \textstyle{\int_{\tildemsephi} \bar{\pi}(z)  [ 1 -  a(\phiRh(z) \mid z)] f(z, z) \rmd z \eqsp ,}\\
    I_3 & =\textstyle{\int_{(\tildemsephi)^{\complementary}} \bar{\pi}(z) f(z, z) \rmd z \eqsp .}
\end{align}
%We assume that $\mse_h \cap \msk \neq \emptyset$ (otherwise \eqref{eq:reversibility_q_3} is trivially true). 
Since $\text{supp}(f)\subset \msK \times \msK$, we have for any
$z \in \msK^\complementary$, $f(z, \cdot)=0$ and
$f(\cdot, z)=0$. Note also that for any
$z\in ((\phiRh)^{-1}(\msK))^\complementary$, $\phiRh(z)\notin \msK$, and
thus $f(\phiRh(z), \cdot)=0$ and $f(\cdot,\phiRh(z))=0$. By combining these preliminary
results with \eqref{eq:acceptance_ratio}, we get
\[
\begin{split}
    I_1 &= \textstyle{ \int_{\tildemsephi \cap \msK \cap (\phiRh)^{-1}(\msK)} \bar{\pi}(z)   a(\phiRh(z) \mid z)f(z,\phiRh(z))\rmd z} \\
    & = (1/Z) \textstyle{ \int_{\tildemsephi \cap \msK \cap (\phiRh)^{-1}(\msK)}      \min\{e^{-H(z)},  e^{-H( \phiRh (z))} \}f(z,\phiRh(z))\rmd z }\\
    I_2 & = \textstyle{\int_{\tildemsephi \cap \msK} \bar{\pi}(z)  [ 1 -  a(\phiRh(z) \mid z)] f(z, z) \rmd z \eqsp ,}\\
    I_3 & =\textstyle{\int_{(\tildemsephi)^{\complementary}} \bar{\pi}(z) f(z, z) \rmd z \eqsp .}
\end{split}
\]
We denote by $J$ the right integral of \eqref{eq:reversibility_q_3}. By symmetry, we have $J=J_1+J_2+J_3$ where
\[
\begin{split}
    J_1 &= \textstyle{\int_{\tildemsephi \cap \msK \cap (\phiRh)^{-1}(\msK)} \bar{\pi}(z)     a(\phiRh(z) \mid z)f(\phiRh(z),z)\rmd z}\\
    & = \textstyle{(1/Z)\int_{\tildemsephi \cap \msK \cap (\phiRh)^{-1}(\msK)}      \min\{e^{-H(z)},  e^{-H( \phiRh (z))} \}f(\phiRh(z),z)\rmd z \eqsp,}\\
    J_2 & = \textstyle{\int_{\tildemsephi \cap \msK} \bar{\pi}(z)   [ 1 -  a(\phiRh(z) \mid z)] f(z, z) \rmd z \eqsp ,}\\
    J_3 & =\textstyle{\int_{(\tildemsephi)^{\complementary}} \bar{\pi}(z) f(z, z) \rmd z \eqsp .}
\end{split}
\]

We directly have $I_2=J_2$ and $I_3=J_3$. Let us now prove that $I_1=J_1$.

We recall that $s\circ \Th$ is a symplectic $\rmC^1$-diffeomorphism (see \Cref{subsec:integrators}) and $\tildemsephi=(s \circ \Th)(\msf_h)$
where $\msf_h = \msa_h \cap \Phi_h^{-1}(\msa_h)$, using \eqref{eq:tilde_E_h_def}. We define $\msk_h=\msf_h \cap (s\circ \Th)(\msK) \cap \Phi_h^{-1}(( s \circ
\Th)(\msK))$, such that $\tildemsephi \cap \msK \cap (\phiRh)^{-1}(\msK)=(s \circ \Th)(\msk_h)$, and we operate the change of variable $z \mapsto (s\circ \Th)(z)$ in $I_1$ and $J_1$
\begin{align}
        I_1 &= \textstyle{(1/Z)\int_{\msk_h}     \min\{e^{-H((s \circ \Th)(z))},  e^{-H( (s \circ \Th \circ \Phi_h)(z))} \}f((s \circ \Th)(z),(s \circ \Th \circ \Phi_h)(z))\rmd z \eqsp ,}\\
    J_1 &= \textstyle{(1/Z)\int_{\msk_h}     \min\{e^{-H((s \circ \Th)(z))},  e^{-H( (s \circ \Th \circ \Phi_h)(z))} \}f((s \circ \Th \circ \Phi_h)(z),(s \circ \Th)(z))\rmd z \eqsp .}
\end{align}
We now define the map $g:\TM \times \TM \to \rset$ and the set $\msg \subset \TM$ by
\begin{align}
    g(z,z') & = \min\{e^{-H((s \circ \Th)(z))},  e^{-H( (s \circ \Th)(z'))} \}f((s \circ \Th)(z),(s \circ \Th)(z')) \eqsp,\\
    \msg &=(s \circ \Th)(\msK) \eqsp .
\end{align}
Note that $\msg$ is a compact set of $\TM$ by continuity of $s \circ \Th$ and $g$ is a continuous function by continuity of $H$ and $s \circ \Th$. Then, we obtain $I_1=J_1$, by applying \Cref{lemma_change_variable} with $g$ and $\msg$. Finally, we obtain $I=J$ and thus prove \eqref{eq:reversibility_q_3} for any continuous function $f$ with compact support. 
\item Let $f : \TM\times \TM \rightarrow \rset$ be a continuous function with compact support. We have
\begin{align}
    & \textstyle{ \int_{\TM \times \TM }f(z,z')\Qker_1(z,\rmd z')\bar{\pi}(\rmd z)} \\
    & \qquad \textstyle{= \int_{\TM \times \TM }f(z,z')( s_\# \Qker_3)(z,\rmd z')\bar{\pi}(\rmd z)} && \\
    &\qquad  \textstyle{ = \int_{\TM \times \TM }f(z,s(z'))\Qker_3(z,\rmd z')\bar{\pi}(\rmd z)} && \text{(momentum reversal on $z'$)}\\
    &\qquad  \textstyle{ = \int_{\TM \times \TM }f(z',s(z))\Qker_3(z,\rmd z')\bar{\pi}(\rmd z)} && \text{(using \eqref{eq:reversibility_q_3})}\\
    &\qquad \textstyle{ = \int_{\TM \times \TM }f(s(z'),s(z))( s_\# \Qker_3)(z,\rmd z')\bar{\pi}(\rmd z)} && \text{(momentum reversal on $z'$)}\\
    &\qquad  \textstyle{ = \int_{\TM \times \TM }f(s(z'),s(z))\Qker_1(z,\rmd z')\bar{\pi}(\rmd z) \eqsp,} &&
\end{align}
which concludes the proof.
\end{enumerate}
\end{proof}

We are now ready to prove \Cref{prop:reversibility-q}, which states that $\Qker$ is reversible up to momentum reversal with respect to $\bar{\pi}$.

\bigskip

\begin{proof}[Proof of \Cref{prop:reversibility-q}]Assume \rref{ass:manifold}, \rref{ass:g},
  \rref{ass:Phi_h}. This proof is very similar to the proof of \Cref{th:cont-reversibility-q} (see \Cref{subsec:proof_reversibility-cont}). Let $f : \TM\times \TM \rightarrow \rset$ be a continuous function with compact support. We have 
\begin{align}
    & \textstyle{\int_{\TM \times \TM} f(z,z')\Qker(z,\rmd z') \bar{\pi}(\rmd z) }&&\\
    & \quad=\textstyle{\int_{(\TM)^4} f(z,z')\Qker_0(z,\rmd z_1) \Qker_1(z_1, \rmd z_2) \Qker_0(z_2,\rmd z')\bar{\pi}(\rmd z)}&& \text{(see \eqref{eq:kernel-q})}\\
    & \quad =\textstyle{\int_{(\TM)^4} f(s(z_1),z')\Qker_0(z,\rmd z_1) \Qker_1(s(z), \rmd z_2) \Qker_0(z_2,\rmd z')\bar{\pi}(\rmd z)}&&\text{(\Cref{lemma:reversibility_q_0})}\\
    & \quad =\textstyle{\int_{(\TM)^4} f(s(z_1),z')\Qker_0(s(z),\rmd z_1) \Qker_1(z, \rmd z_2) \Qker_0(z_2,\rmd z')\bar{\pi}(\rmd z)}&&\text{(momentum reversal on $z$)}\\
    & \quad =\textstyle{\int_{(\TM)^4} f(s(z_1),z')\Qker_0(z_2,\rmd z_1)  \Qker_1(z, \rmd z_2) \Qker_0(s(z),\rmd z')\bar{\pi}(\rmd z)}&& \text{(\Cref{lemma:reversibility_tilde_q1})}\\
    & \quad =\textstyle{\int_{(\TM)^4} f(s(z_1),z')\Qker_0(z_2,\rmd z_1)  \Qker_1(s(z), \rmd z_2) \Qker_0(z,\rmd z')\bar{\pi}(\rmd z)}&& \text{(momentum reversal on $z$)}\\
    & \quad=\textstyle{\int_{(\TM)^4} f(s(z_1),s(z))\Qker_0(z_2,\rmd z_1) \Qker_1(z', \rmd z_2) \Qker_0(z,\rmd z') \bar{\pi}(\rmd z)}&& \text{(\Cref{lemma:reversibility_q_0})}\\
    & \quad=\textstyle{\int_{\TM \times \TM} f(s(z'),s(z))\Qker(z,\rmd z') \bar{\pi}(\rmd z) \eqsp .}
\end{align}
 Moreover, $s_\#\bar{\pi}=\bar{\pi}$. Hence, by combining \Cref{def:reversibility} and \Cref{lemma:preserve_measure}, we obtain the result of \Cref{prop:reversibility-q}.
\end{proof}

%%% Local Variables:
%%% mode: latex
%%% TeX-master: "main"
%%% End:

\section{Comparison with \cite{kook2022sampling}.}\label{sec:comp-kook}

In this section, we provide a precise comparison between our work and the results stated in \cite{kook2022sampling}. We first dwell on the differences related to the general setting and the algorithms, and then explain how our methodology may solve the limitations of \cite{kook2022sampling}.

\subsection{General framework}

Given a convex body $\msk \subset \rset^d$ and a function $c:\rset^d \to \rset^m$, consider
\begin{equation}
\label{eq:def_msm_kook}
    \msm=\{x\in \msk: c(x)=0\} \eqsp.
\end{equation}
In their paper, \cite{kook2022sampling} aim at sampling from a target distribution $\pi$, with density given  for $x \in \msm$ by
\begin{align}
    \rmd \pi(x) / \rmd x = \exp[-V(x)]/Z \eqsp,
\end{align}
where $V \in \mathrm{C}^2(\msm, \rset)$, $Z=\int_{\msm} \exp[-V(x)] \rmd x$. They make the following assumptions:
\begin{enumerate}[wide, labelwidth=!, labelindent=0pt, label=(\alph*)]
    \item \label{ass:kook_1} $\msk$ is provided with a self-concordant barrier $\phi$.
    \item \label{ass:kook_2} The differential of $c$ at $x$, $Dc(x)$, is full-ranked for any $x\in \msm$.
\end{enumerate}

Although this setting might be more general than ours (consider for instance the case where $\msk$ is a polytope and $c$ is a non-trivial function), \cite{kook2022sampling} focus on the case 
\begin{align}\label{eq:setting_kook}
    \msk=\{x\in \rset^d: \ell \leq x \leq u\}, \quad c(x)=Ax-b
\end{align}
where $\ell,u \in \rset^d$ with $\ell<u$, $b\in \rset^m$, and $A\in \rset^{d \times m}$. Combined with assumptions \ref{ass:kook_1} and \ref{ass:kook_2}, $A$ must have independent rows and the Hessian of $\phi$ at $x$ given by $\metric(x)=D^2 \phi(x)$ is a diagonal positive matrix for any $x$. Moreover, $\msm$ defined by  \eqref{eq:def_msm_kook}-\eqref{eq:setting_kook} is a polytope. However, not any polytope can be rewritten in a manner akin to \eqref{eq:setting_kook}, and therefore, the setting chosen by \cite{kook2022sampling} can actually be considered as \emph{a special case of our setting}, see \Cref{sec:setting-back}.

Then, they consider the extended state-space $\bar{\msm}=\{(x,p)\in \rset^d \times \rset^d: x \in \msm, p \in \kernel(Dc(x))\}$ and define on $\bar{\msm}$ a \emph{constrained} Hamiltonian $H$,  given by
\begin{align}
    H(x,p)&= \bar{H}(x,p) + \lambda(x,p)^\transpose c(x) \eqsp ,\\
    \text{with }\bar{H}(x,p)&=V(x) +\frac{1}{2} \log \operatorname{pdet} M(x) + \frac{1}{2}p^\transpose M(x)^\dagger p \eqsp ,\\
    \text{and }M(x)&= Q(x)^\transpose \metric(x)Q(x) \eqsp ,
\end{align}
where $Q$ is the orthogonal projection into $\kernel(Dc(x))$, implying that $M$ is semi-definite positive, and $\lambda(x,p)$ is a Lagrangian term given by
\begin{align}
    \lambda(x,p)=(Dc(x) Dc(x)^\transpose)^{-1}\{D^2c(x)[p, \rmd x/\rmd t]-Dc(x)\partial \Bar{H}(x,p)/\partial x\} \eqsp .
\end{align}
 Remark that $\lambda$ simplifies when $c$ is chosen as in \eqref{eq:setting_kook}, but may not be well suited for higher order constraints. Then, \cite{kook2022sampling} provide simplifications of $H$ for the setting \eqref{eq:setting_kook} based on formulas for $\operatorname{pdet}M$ and $M^\dagger$.

Finally, they consider the joint distribution $\bar{\pi}$ given for any $(x,p)\in \bar{\msm}$ by
\begin{align}
    \rmd \Bar{\pi}= (1/\Bar{Z})\exp[-H(x,p)]\rmd x\rmd p,
\end{align}
where $\Bar{Z}=\int_{\Bar{\msm}}\exp[-H(x,p)] \rmd x \rmd p$, for which the first marginal is $\pi$. They naturally propose to sample from $\bar{\pi}$ by implementing a version of RMHMC relying on the \emph{constrained} Hamiltonian $H$, which they call CRHMC. 

Similarly to our approach, they consider two cases.
\begin{enumerate}[wide, labelwidth=!, labelindent=0pt, label=(\alph*)]
    \item First, \cite{kook2022sampling} assume that the Hamiltonian dynamics given by $H$ can be explicitly computed in continuous time (which is not the case in practice). They present the continuous version of RMHMC which incorporates the simplified \emph{constrained} Hamiltonian $H$, see Algorithm 1. We refer to this algorithm as c-CRHMC. This algorithm is identical to our algorithm c-BHMC provided in \Cref{sec:cont-ham}, apart from the involution checking step (see Line 4 in \Cref{algo:cont-constrained_HMC}). They state in Theorem 6 that the Markov kernel corresponding to one iteration of c-CRHMC satisfies detailed balance with respect to $\bar{\pi}$, thus ensuring that it preserves $\bar{\pi}$.
    \item Then, \cite{kook2022sampling} provide a practical implementation of CRHMC, which is similar to n-BHMC, based on a time-discretization of the Hamiltonian dynamics for a given time-step $h>0$. They first split the simplified \emph{constrained} Hamiltonian $H$ into $H_1$ and $H_2$, by leveraging the non-separable aspect of $H$ in $H_2$, as we do in \Cref{subsec:integrators}. Then, they propose to use the first-order Euler method to solve the discretized ODE associated to $H_1$, as we also suggest, and the \emph{Implicit Midpoint Integrator} (IMI) to solve the discretized ODE associated with $H_2$, while we implement the \emph{Generalized Leapfrog Integrator} (GLI). We insist on the fact that these two second-order methods share the same theoretical properties \citep{hairer2006geometric}. Since IMI is implicit, they propose to approximate it with a numerical integrator $\Phi_h$, which is computed with a fixed-point method (as we do) detailed in Algorithm 3. Finally, they implement CRHMC which contains the same steps as n-BHMC \emph{apart from the involution checking step} (see Line 8 in \Cref{algo:constrained_HMC}): (i) refreshing the momentum with a symplectic scheme (Step 1 in both algorithms), (ii) solving the discretized ODE associated to $H$ in three steps  according to the splitting given by $H_1$ and $H_2$ (Step 2 in both algorithms), (iii) applying a Metropolis-Hastings filter (Step 3 in both algorithms), and (iv) operating a final momentum reversal (Step 4 in n-BHMC). They state in Theorem 8 that the Markov kernel corresponding to one iteration of CRHMC satisfies detailed balance with respect to $\bar{\pi}$, thus ensuring that it preserves $\bar{\pi}$. 
\end{enumerate}

\subsection{Theoretical gaps in the reversibility of CRHMC  \cite{kook2022sampling}}

First, we call into question the proof of the reversibility of c-CRHMC stated in \cite[Theorem 6]{kook2022sampling}. Indeed, \cite{kook2022sampling} implicitly assume that the time-continuous Hamiltonian dynamics have a solution at any time, but do not provide any proof of this result. We emphasize here that this result is not trivial and possibly may not hold due to the pathological behaviour of the barrier at the border of $\msm$, see \Cref{prop:horizon_cont_ham}. In contrast, the involution checking step implemented in c-BHMC verifies this condition for the \emph{forward} and the \emph{reversed} dynamics after momentum reversal, which then allows us to derive the reversibility of c-BHMC, see \Cref{th:cont-reversibility-q}.

Finally, we report some gaps in the proof of reversibility of CRHMC provided in \cite[Theorem 8]{kook2022sampling}. Indeed, while \cite{kook2022sampling} claim that 
\cite[Theorem VI.3.5.]{hairer2006geometric}
applies to CRHMC which contains the \emph{numerical} integrator detailed in \cite[Algorithm 3]{kook2022sampling},  in fact \cite[Theorem VI.3.5.]{hairer2006geometric}
 only applies to the \emph{implicit} integrator (IMI). This is a fundamental limitation of their theory. This theoretical confusion thus leads to numerical errors, as we detail in \Cref{sec:experiments}. In contrast, we present in \Cref{sec:disc-ham-results} theoretical results on the implicit integrator (GLI), which allows us to derive reasonable assumptions on the numerical integrator $\Phi_h$ to obtain reversibility of n-BHMC in \Cref{prop:reversibility-q}. Contrary to \cite{kook2022sampling}, our proof relies on the properties of $\Phi_h$ and highlights the critical role of the involution checking step.

\section{Experimental details} \label{sec:experimental-details}

The numerical experiments presented in \Cref{sec:experiments} are based on the MATLAB implementation of CRHMC provided by \cite{kook2022sampling}.
We adapt this code for n-BHMC for sake of fairness. In particular,  our implementation differs from CRHMC by the use of the Generalized Leapfrog integrator and the ``involution checking'' mechanism.  

\paragraph{Details on experiments with synthetic data.} In these experiments, the hypercube refers to the set $[-1/2,1/2]^d$ where $d\in\{5,10\}$. We recall that the ground truth on the quantities we estimate is given by the Metropolis Adjusted Langevin Algorithm (MALA) \citep{roberts2002langevin} for the hypercube and the Independent Metropolis Hastings (IMH) sampler \citep{liu1996metropolized} for the simplex. We now give details on how the parameters of these two algorithms are chosen. For MALA, we use a constant step-size $h=0.05$ for both dimensions, which results in an average acceptance probability equal to $0.55$ for $d=5$ and $0.44$ for $d=10$. For IMH, we use as proposal the uniform distribution, which is simply a Dirichlet distribution, resulting in an average acceptance probability of order $0.36$. We recall that we use an adaptive step-size $h$ in CRHMC and n-BHMC such that we obtain an average acceptance probability of order $0.5$ in the MH filter. We now discuss the setting of the tolerance parameter $\eta$ in n-BHMC. We choose $\eta$ so that (i) the step-size $h$ is at least greater than $10^{-2}$ over the iterations of the algorithm and (ii) the average acceptance probability is roundly equal to $0.5$. This heuristic results in defining (a) for the hypercube, $\eta=5$ if $d=5$ and $\eta =10$ if $d=10$ and (b) for the simplex, $\eta =10$ if $d=5$ and $\eta = 200$ if $d=10$.

\paragraph{Details on experiments with real-world data.} For these experiments, we consider the exact same setting as in \cite{kook2022sampling}. In particular, we use the same adaptation strategies to update the step-size and the barrier functions.

\paragraph{Influence of the norm in \eqref{eq:error_tol}.} The norm chosen for
the ``involution checking step'' is arbitrary and many options are
available. In particular, one could design
$ \|z^{(0)}-\Phi_h(z^{(1)})\|_{2}> \eta$, and thus pick the Euclidean norm in
\eqref{eq:error_tol}. However, while this approach should theoretically also
reduce the bias in the method, we remark that the obtained Markov chains have
very poor mixing time. This is due to the fact that in practice the update on
the momentum is of order $\normLigne{\metric(x)}_2^{1/2}$ while the update on
the position is of order $\normLigne{\metric(x)^{-1}}_2^{1/2}$. Hence the
Euclidean norm is ill-suited for the ``involution checking step'' and the proposal states are rejected a lot more near the
boundaries, see \Cref{fig:boundary}.

\begin{figure}[h!]
  \centering
  \includegraphics[width=.4\linewidth]{figures/2d_dikin.png} 
  \includegraphics[width=.4\linewidth]{figures/2d_euclidean.png} 
  \vspace{-0.2cm}
  \caption{Outputs of n-BHMC after 25k iterations to sample from the uniform distribution over $[-1,1]^2$ with $\eta=10^{-3}$, $h=0.8$, using in \eqref{eq:error_tol} the ``self-concordant'' norm (left) or the Euclidean norm (right). Red samples are rejected and blue samples are accepted.}
  \label{fig:boundary}
\end{figure}

%%% Local Variables:
%%% mode: latex
%%% TeX-master: "main"
%%% End:

\onecolumn

%%%%%%%%%%%%%%%%%%%%%%%%%%%%%%%%%%%%%%%%%%%%%%%%%%%%%%%%%%%%%%%%%%%%%%%%%%%%%%%
%%%%%%%%%%%%%%%%%%%%%%%%%%%%%%%%%%%%%%%%%%%%%%%%%%%%%%%%%%%%%%%%%%%%%%%%%%%%%%%

\end{document}